\newcommand{\dee}{\mathop{\mathrm{d}\!}}
\newcommand{\smiddle}{\mathrel{}|\mathrel{}} %
\def\balign#1\ealign{\begin{align}#1\end{align}}
\def\baligns#1\ealigns{\begin{align*}#1\end{align*}}
\def\balignat#1\ealign{\begin{alignat}#1\end{alignat}}
\def\balignats#1\ealigns{\begin{alignat*}#1\end{alignat*}}
\def\bitemize#1\eitemize{\begin{itemize}#1\end{itemize}}
\def\benumerate#1\eenumerate{\begin{enumerate}#1\end{enumerate}}
\newenvironment{talign*}
 {\csname align*\endcsname}
 {\endalign}
\newenvironment{talign}
 {\csname align\endcsname}
 {\endalign}
\def\balignst#1\ealignst{\begin{talign*}#1\end{talign*}}
\def\balignt#1\ealignt{\begin{talign}#1\end{talign}}
\let\originalleft\left
\let\originalright\right
\renewcommand{\left}{\mathopen{}\mathclose\bgroup\originalleft}
\renewcommand{\right}{\aftergroup\egroup\originalright}
\def\tinycitep*#1{{\tiny\citep*{#1}}}
\def\tinycitealt*#1{{\tiny\citealt*{#1}}}
\def\tinycite*#1{{\tiny\cite*{#1}}}
\def\smallcitep*#1{{\scriptsize\citep*{#1}}}
\def\smallcitealt*#1{{\scriptsize\citealt*{#1}}}
\def\smallcite*#1{{\scriptsize\cite*{#1}}}
\def\mbf#1{\mathbf{#1}}
\def\reals{\mathbb{R}} %
\def\<{\left\langle} %
\def\>{\right\rangle}
\newcommand{\boldone}{\mbf{1}} %
\newcommand{\ident}{\mbf{I}} %
\newcommand{\onenorm}[1]{\norm{#1}_1} %
\newcommand{\twonorm}[1]{\norm{#1}_2} %
\newcommand{\infnorm}[1]{\norm{#1}_{\infty}} %
\newcommand{\fronorm}[1]{\norm{#1}_{\text{F}}} %
\newcommand{\binner}[2]{\left\langle{#1},{#2}\right\rangle} %
\def\Earg#1{\E\left[{#1}\right]}
\def\bigO{\mathcal{O}}
\def\P{\mbb{P}} %
\def\Parg#1{\P\left({#1}\right)}
\DeclareSymbolFont{rsfs}{U}{rsfs}{m}{n}
\DeclareSymbolFontAlphabet{\mathscrsfs}{rsfs}
\newcommand{\Unif}{\textnormal{Unif}}
\newcommand{\grad}{\nabla}
\providecommand{\diag}{\mathop\mathrm{diag}}
\def\rank#1{\mathrm{rank}({#1})}
\newtheorem{theorem}{Theorem}
\newtheorem{lemma}[theorem]{Lemma}
\newtheorem{corollary}[theorem]{Corollary}
\newtheorem{definition}[theorem]{Definition}
\renewenvironment{proof}{\noindent\textbf{Proof.}\hspace*{.3em}}{\qed \vspace{.1in}}
\newenvironment{proof-sketch}{\noindent\textbf{Proof Sketch}
  \hspace*{1em}}{\qed\bigskip\\}
\newenvironment{proof-idea}{\noindent\textbf{Proof Idea}
  \hspace*{1em}}{\qed\bigskip\\}
\newenvironment{proof-of-lemma}[1][{}]{\noindent\textbf{Proof of Lemma {#1}}
  \hspace*{1em}}{\qed\\}
\newenvironment{proof-of-theorem}[1][{}]{\noindent\textbf{Proof of Theorem {#1}}
  \hspace*{1em}}{\qed\\}
\newenvironment{proof-attempt}{\noindent\textbf{Proof Attempt}
  \hspace*{1em}}{\qed\bigskip\\}
\newenvironment{remark}{\noindent\textbf{Remark.}
  \hspace*{0em}}{\smallskip}%
\newtheorem{proposition}[theorem]{Proposition}
\newtheorem{assumption}{Assumption}
\newenvironment{manualassumption}[1]{%
  \manualassumptioninner
}{\endmanualassumptioninner}
\newtheorem*{assumption*}{Assumptions}
\theoremstyle{definition}
\def\bw{\boldsymbol{w}}
\def\bW{\boldsymbol{W}}
\def\bU{\boldsymbol{U}}
\def\bu{\boldsymbol{u}}
\def\bx{\boldsymbol{x}}
\def\bb{\boldsymbol{b}}
\def\ba{\boldsymbol{a}}
\def\bV{\boldsymbol{V}}
\def\bv{\boldsymbol{v}}
\def\bM{\boldsymbol{M}}
\def\prins{\mathbb{S}}
\def\bomega{\boldsymbol{\omega}}
\def\bGamma{\boldsymbol{\Gamma}}
\def\mysigma{\sigma_{\ba,\bb}(\bW\bx)}
\def\mysigmap{\sigma'_{\ba,\bb}(\bW\bx)}
\def\mysigmapp{\sigma''_{\ba,\bb}(\bW\bx)}
\def\H{\boldsymbol{\mathcal{H}}}
\def\D{\boldsymbol{\mathcal{D}}}
\def\Esarg#1{\mathbb{E}_S\left[#1\right]}
\def\frobinner#1#2{\left\langle{#1},{#2}\right\rangle_{\text{F}}}
\def\thmfronorm#1{\norm{#1}_{\text{\emph{F}}}}
\def\Radem{\mathfrak{R}}
\def\Risk{R}
\def\RiskTr{R}
\def\RiskReg{\mathcal{R}}
\def\vect{\operatorname{vec}}
\def\Huber{\ell_\text{H}}
\date{}
\begin{document}

\title{Neural Networks Efficiently Learn Low-Dimensional\\ Representations with SGD}

 \author{
 Alireza Mousavi-Hosseini\thanks{
  Department of Computer Science at
  University of Toronto, and Vector Institute, \texttt{mousavi@cs.toronto.edu}
 }
 \ \ \ \ \ \
 Sejun Park\thanks{
  Department of Artificial Intelligence at
  Korea University, \texttt{sejun.park000@gmail.com}
 }
 \ \ \ \ \ \ 
 Manuela Girotti\thanks{
 Department of Mathematics and Computing Science at
 Saint Mary’s University, \texttt{manuela.girotti@smu.ca}
 }
 \\
 Ioannis Mitliagkas\thanks{
Department of Computer Science at
Universit\'e de Montr\'eal, and Mila Institute, \texttt{ioannis@iro.umontreal.ca}
 }
 \ \ \ \ \ \
 Murat A. Erdogdu\thanks{
  Department of Computer Science at
  University of Toronto, and Vector Institute, \texttt{erdogdu@cs.toronto.edu}
 }
}

\maketitle

\begin{abstract}
We study the problem of training a two-layer neural network (NN) of arbitrary width using stochastic gradient descent (SGD) where the input $\boldsymbol{x}\in \mathbb{R}^d$ is Gaussian and the target $y \in \mathbb{R}$ follows a multiple-index model, i.e., $y=g(\langle\boldsymbol{u_1},\boldsymbol{x}\rangle,\hdots,\langle\boldsymbol{u_k},\boldsymbol{x}\rangle)$ with a noisy link function $g$. We prove that the first-layer weights of the NN converge to the $k$-dimensional \emph{principal subspace} spanned by the vectors $\boldsymbol{u_1},\hdots,\boldsymbol{u_k}$ of the true model, when online SGD with weight decay is used for training. This phenomenon has several important consequences when $k \ll d$. First, by employing uniform convergence on this smaller subspace, we establish a generalization error bound of $\mathcal{O}(\sqrt{{kd}/{T}})$ after $T$ iterations of SGD, which is independent of the width of the NN. We further demonstrate that, SGD-trained ReLU NNs can learn a single-index target of the form $y=f(\langle\boldsymbol{u},\boldsymbol{x}\rangle) + \epsilon$ by recovering the principal direction, with a sample complexity linear in $d$ (up to log factors), where $f$ is a monotonic function with at most polynomial growth, and $\epsilon$ is the noise. This is in contrast to the known $d^{\Omega(p)}$ sample requirement to learn any degree $p$ polynomial in the kernel regime, and it shows that NNs trained with SGD can outperform the neural tangent kernel at initialization. Finally, we also provide compressibility guarantees for NNs using the approximate low-rank structure produced by SGD.
\end{abstract}

\section{Introduction}
The task of learning an unknown statistical (teacher) model using data is fundamental in many areas of learning theory. 
There has been a considerable amount of research dedicated to this task, especially when the trained (student) model is a neural network (NN), providing precise and non-asymptotic guarantees in various settings~\cite{zhong2017recovery,goldt2019dynamics,ba2019generalization,sarao2020optimization,zhou2021local,akiyama2021learnability,abbe2022merged,ba2022high-dimensional,damian2022neural,veiga2022phase}. 
As evident from these works, explaining the remarkable learning capabilities of NNs requires arguments beyond the classical learning theory~\cite{zhang2021understanding}. 

The connection between NNs and kernel methods has been particularly useful towards this expedition~\cite{jacot2018neural,chizat2019lazy}. %
In particular, a two-layer NN with randomly initialized and untrained weights is an example of a random features model~\cite{rahimi2007random}, 
and regression on the second layer
captures several interesting phenomena that NNs exhibit in practice~\cite{louart2018random,mei2022generalization}, e.g.\ \emph{cusp} in the learning curve.
However, NNs also inherit favorable characteristics from the optimization procedure~\cite{ghorbani2019limitations,allen-zhu2019what,yehudai2019power,li2020learning,refinetti2021classifying}, which cannot be captured by associating NNs with regression on random features.
Indeed, recent works have established a separation between NNs and kernel methods, relying on the emergence of representation learning
as a consequence of gradient-based training~\cite{abbe2022merged,ba2022high-dimensional,barak2022hidden,damian2022neural}, which often exhibits a natural bias towards low-complexity models.

A theme that has emerged repeatedly in modern learning theory is the implicit regularization effect provided by the training dynamics~\cite{neyshabur2014search}. The work by \cite{soudry2018implicit} has inspired an abundance of recent works focusing on the implicit bias of gradient descent favoring, in some sense, \emph{low-complexity} models, e.g.\ by achieving min-norm and/or max-margin solutions despite the lack of any explicit regularization~\cite{gunasekar2018implicit, li2018algorithmic, ji2019implicit, gidel2019implicit, chizat2020implicit, pesme2021implicit}. However, these works mainly consider linear models or unrealistically wide NNs, 
and the notion of reduced complexity as well as its implications on generalization varies. 
A concrete example in this domain is \emph{compressiblity} and its connection to generalization~\cite{arora2018stronger,suzuki2020compression}. 
Indeed, when a trained NN can be compressed into a smaller NN with similar prediction behavior, 
the resulting models exhibit similar generalization performance. Thus, 
the model complexity of the original NN can be explained by the smaller complexity of the compressed one, 
which is classically linked to better generalization.

\begin{wrapfigure}{r}{0.45\textwidth}  
\label{fig:ps_conv}
\vspace{-7.4mm} 
\centering  
\includegraphics[width=0.44\textwidth]{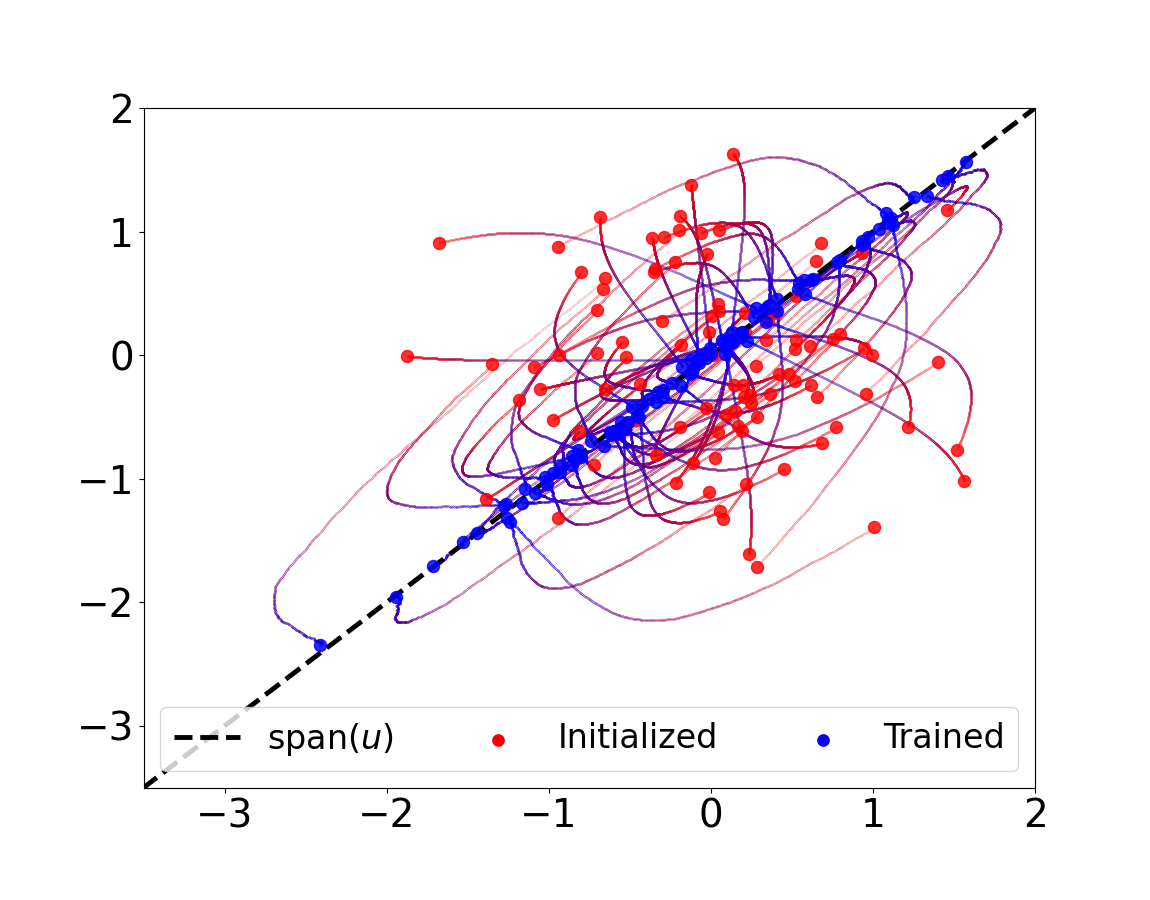}  
\vspace{-5.mm} 
\caption{\small Two-layer ReLU network with $m\!=\!1000$, $d\!=\!2$ is trained to recover a $\tanh$ single-index model via SGD with weight decay. Initial neurons (red) converge to the principal subspace. 10\% of student neurons are visualized.}
\label{fig:feature-learning}       
\vspace{-7mm}   
\end{wrapfigure}
In this paper, we demonstrate the emergence of low-complexity structures during the training procedure.
More specifically, we consider training a two-layer student NN with arbitrary width $m$ where the input $\boldsymbol{x}\in \mathbb{R}^d$ is Gaussian and the target $y \in \mathbb{R}$ follows a multiple-index teacher model, i.e. $y=g(\langle\boldsymbol{u_1},\boldsymbol{x}\rangle,\hdots,\langle\boldsymbol{u_k},\boldsymbol{x}\rangle;\epsilon)$ with a link function $g$ and a noise $\epsilon$ independent of the input. In this setting, we prove that the first-layer weights trained by online stochastic gradient descent (SGD) with weight decay converge to the $k$-dimensional subspace spanned by the weights of the teacher model, $\spn(\boldsymbol{u_1},\hdots,\boldsymbol{u_k})$, which we refer to as the \emph{principal subspace}. Our primary focus is the case where the target values depend only on a few important directions along the input, i.e. $k \ll d$, which induces a low-dimensional structure on the SGD-trained first-layer weights, whose impact on generalization is profound.

First, convergence to the principal subspace leads to an improved bound on the generalization gap for SGD, which is independent of the width of the NN. Further, in the specific case of learning a single-index target with a ReLU student network using online SGD, we show that convergence to the principal subspace leads to useful features that improve upon the initial random features. In particular, we prove that NNs can learn certain degree-$p$ polynomial targets with a number of samples linear in the input dimension $d$ using online SGD, while it is known that learning a degree $p$ polynomial with any rotationally invariant kernel, including the neural tangent kernel (NTK) at initialization, requires $d^{\Omega(p)}$ samples \cite{donhauser2021rotational}. 

We summarize our contributions as follows.
\begin{itemize}[itemsep=1pt,leftmargin=15pt]
    \item We show in Theorem \ref{thm:sgd} that NNs learn low-dimensional representations by proving that the iterates of online SGD on the first layer of a two-layer NN with width $m$
    converge to $\sqrt{m}\varepsilon$ neighborhood of the principal subspace after $\bigO(d/\varepsilon^2)$ iterations, 
    with high probability. The error tolerance of $\sqrt{m}\varepsilon$ is sufficient to guarantee that the risk of SGD iterates and that of its orthogonal projection to the principal subspace are within $\mathcal{O}(\varepsilon)$ distance.
    
    \item We demonstrate the impact of learning low-dimensional representations with three applications.
    \begin{itemize}[noitemsep,leftmargin=10pt]
    \item For a single-index target $y=f(\binner{\bu}{\bx}) + \epsilon$ with a monotonic link function $f$ where $f''$ has at most polynomial growth, we prove in Theorem \ref{thm:learnability} that ReLU networks of width $m$ can learn this target after $T$ iterations of SGD with an excess risk estimate of $\tilde{\bigO}(\sqrt{d/T + 1/m})$, with high probability (see the illustration in Figure~\ref{fig:feature-learning}). In particular, the number of samples is almost linear in the input dimension $d$, even when $f$ is a polynomial of any (fixed) odd degree $p$.
    \item Based on a uniform convergence argument on the principal subspace,
    we prove in Theorem \ref{thm:generalization} that $T$ iterations of SGD will produce a model with generalization error of $\bigO(\sqrt{kd/T})$, with high probability. Remarkably, this rate is independent of the width $m$ of the NN, even in the case $k \asymp d$ where the target is any function of the input, and not necessarily low-dimensional.
    \item Finally, we provide a compressiblity result directly following from the low-dimensionality of the principal subspace.
    We prove that $T$ iterations of SGD produce first-layer weights that are compressible to rank-$k$ with a risk deviation of $\mathcal{O}(\sqrt{d/T})$, with high probability. 
    \end{itemize}
\end{itemize}
The rest of the paper is organized as follows. We discuss the notation and the related work in the remainder of this section. We describe the problem formulation and preliminaries in Section~\ref{sec:setup}, and provide an analysis for the warm-up case of population gradient descent in Section~\ref{sec:population_gd}. Our main result on SGD is presented in Section~\ref{sec:sgd}.
We discuss three implications of our main theorem in Section~\ref{sec:apps}, where we provide results on learnability, generalization gap, and compressibility in Sections~\ref{sec:learnibility}, \ref{sec:gap}, and \ref{sec:compress}, respectively. We finally conclude with a brief discussion in Section~\ref{sec:conclusion}.

\textbf{Notation.}
For vectors $\bv$ and $\bu$, we use
$\binner{\bv}{\bu}$ and
 $\bv \circ \bu$ to denote their Euclidean inner product and the element-wise product,
and $\lVert\, \bv \, \rVert_p$ and $\diag(\bv)$ to denote the $L_p$-norm
and the diagonal matrix whose diagonal entries are $\bv$. 
For matrices $\bV$ and $\bW$, let $\binner{\bV}{\bW}_\text{F}$, $\fronorm{\bV}$, and $\twonorm{\bV}$ denote the Frobenius inner product, Frobenius norm, and the operator norm, respectively. 
For an activation $\sigma : \reals \to \reals$, $\sigma'$ and $\sigma''$ denote its first and second (weak) derivatives, applied element-wise for vector inputs. For a loss $\ell : \reals^2 \to \reals$, we use $\partial_i\ell$ and $\partial^2_{ij}\ell$ to denote partial derivatives with respect to $i$th and $j$th inputs, $i,j\in\{1,2\}$. $\nabla$ represents the gradient operator, 
and we use $\grad \ell$ to denote $\grad_{\bW}\ell$ when it is clear from the context. For quantities $a$ and $b$, $a \lesssim b$ implies $a \leq Cb$ for some absolute constant $C > 0$, and $a \asymp b$ implies both $a \gtrsim b$ and $a \lesssim b$. Finally, $\Unif(A)$ is the uniform distribution over a set $A$ and $\mathcal{N}(0, \ident_d)$ is the $d$-dimensional isotropic Gaussian distribution.
\subsection{Related work}
\textbf{Training dynamics of NNs.} 
Several works have demonstrated learnability in a special case of teacher-student setting where the teacher model is \emph{similar} to the student NN being trained~\cite{zhong2017recovery, brutzkus2017globally, li2017convergence, zhang2019learning, zhou2021local}. This setting has also been studied through the lens of loss landscape \cite{safran2021effects} and optimization over measures \cite{akiyama2021learnability}. We stress that our results work under misspecification and hold for generic teacher models that are not necessarily NNs with similar architecture to the student.

Two scaling regimes of analysis have seen a surge of recent interest. 
In the regime of lazy training~\cite{chizat2019lazy}, the parameters hardly move from initialization
and the NN does not learn useful features, behaving like a kernel method~\cite{jacot2018neural, du2018gradient, allen-zhu2019convergence, arora2019fine-grained, oymak2020toward}. 
However, many works have shown that deep learning is more powerful than kernel models~\cite{yehudai2019power, ghorbani2020when, geiger2019disentangling}, establishing a clear separation between them;
thus, several important characteristics of NNs cannot be captured in the lazy training regime~\cite{ghorbani2019limitations}, even though it might still perform better than feature learning in certain low-dimensional settings~\citep{petrini2022learning}.
In the other scaling regime, gradient descent on infinitely wide NNs reduces to Wasserstein gradient flow, which is known as the mean-field regime  where feature learning is possible~\cite{chizat2018global, rotskoff2018neural, mei2019mean-field,nitanda2022convex,chizat2022mean}. Closer to our results, the concurrent work of \cite{hajjar2022symmetries} shows that low-dimensional targets induce low-dimensional dynamics on mean-field NNs. However, these results mostly hold for infinite or very wide NNs, and quantitative guarantees are difficult to obtain in this regime. Our setting is different from both of these regimes,
as we allow for NNs of arbitrary width without excessive overparameterization.

\textbf{Feature learning with multiple-index teacher models.}
The task of learning a target of an unknown low-dimensional function of the input is fundamental in statistics~\cite{li1989regression}. Several recent works in learning theory literature have also focused on this problem, with an aim to demonstrate NNs can learn useful feature representations, outperforming kernel methods~\cite{bauer2019deep, ghorbani2020when}. 
In particular, \cite{abbe2022merged} studies the necessary and sufficient conditions for learning with sample complexity linear in $d$ with inputs on the hypercube, in the mean-field limit. Closer to our setting are the recent works~\cite{ba2022high-dimensional, damian2022neural, barak2022hidden} which demonstrate a clear separation between NNs and kernel methods, leveraging the effect of representation learning. However, their analysis considers a single (full) gradient step on the first-layer weights followed by training the second-layer parameters. In contrast, we consider training using online SGD with many iterations, which induces essentially different learning dynamics.

\textbf{Generalization bounds for SGD.}
A popular algorithm-dependent approach for studying generalization is through algorithmic stability \cite{bousquet2002stability, feldman2018generalization, bousquet2020sharper}, which has been used to study the generalization behavior of gradient-based methods 
in various settings~\cite{hardt2016train,bassily2020stability, farghly2021time, kozachkov2022generalization}.
Other approaches include studying the low-dimensional structure of the trajectory \cite{simsekli2020hausdorff,park2022generalization} or the invariant measure of continuous-time approximations of SGD~\cite{camuto2021fractal}, and employing information-theoretic tools \cite{neu2021information}.
Among these, \cite{barsbey2021heavy} show that SGD  yields compressible networks; however, they assume that the mean-field approximation holds, together with that the SGD iterates converge to a heavy-tailed distribution.

\section{Preliminaries: Neural Networks and the Principal Subspace}
\label{sec:setup}
For an input $\boldsymbol x \in \reals^d$, we consider training a two-layer neural network (NN) with $m$ neurons
\begin{equation}\label{eq:two-layer}
    \hat{y}(\bx;\bW,\ba,\bb) = \sum_{i=1}^{m}a_i\sigma(\binner{\bw_i}{\bx} + b_i),
\end{equation}
where $\sigma$ is the activation function, $\{\bw_i\}_{1\leq i\leq m}$ are the first-layer weights collected in the rows of the matrix $\bW \in \reals^{m \times d}$, $\bb \in \reals^m$ is the bias, and
$\ba \in \reals^m$ is the second-layer weights.
We assume $\bx \sim \mathcal{N}(0, \ident_d)$ and the target is generated from a multiple-index (teacher) model given by
\begin{equation}\label{eq:multi-index}
    y = g(\binner{\bu_1}{\bx}, \hdots, \binner{\bu_k}{\bx};\epsilon),
\end{equation}
for a weakly differentiable link function $g : \mathbb{R}^{k+1} \to \mathbb{R}$ and a noise $\epsilon$. Throughout the paper, the noise $\epsilon$ is assumed to be independent from the input $\bx$, and our framework covers the special noiseless case where $\epsilon = 0$. While our results remain valid regardless of how $k$ and $d$ compare, they are most insightful when $k \ll d$; thus, we specifically consider this regime when interpreting the results.
We also collect the teacher weights $\{\bu_i\}_{1\leq i \leq k}$ in the rows of the matrix $\bU \in \reals^{k \times d}$ and use $y=g(\bU\bx;\epsilon)$ for simplicity.

For a given loss function $\ell(\hat{y},y)$, we consider the population and the empirical risks
\begin{equation*}
    \RiskTr(\bW\!,\ba,\bb) \coloneqq \Earg{\ell(\hat{y}(\bx;\bW\!,\ba,\bb),y)}\ \ \text{ and }\ \  \hat{\RiskTr}(\bW\!,\ba,\bb) \coloneqq \frac{1}{T}\sum_{t=0}^{T-1}\ell(\hat{y}(\bx^{(t)};\bW\!,\ba,\bb),y^{(t)}),
\end{equation*}
where the expectation is over the data distribution. Similarly, for some $\tau \geq 1$, 
the truncated loss is defined as $\ell_\tau(\hat{y},y) \coloneqq \ell(\hat{y},y) \land \tau$
with the corresponding risks $\RiskTr_\tau$ and $\hat\RiskTr_\tau$, both of which are used in Section \ref{sec:apps} to obtain sharp high probability statements.
In the warm-up case, we consider the $L_2$-regularized population risk with a penalty parameter $\lambda \geq 0$, defined as
\begin{equation}
    \RiskReg_\lambda(\bW,\ba,\bb) \coloneqq \Risk(\bW,\ba,\bb) + \frac{\lambda}{2}\fronorm{\bW}^2.
    \label{eq:reg_risk}
\end{equation}
To minimize \eqref{eq:reg_risk}, we use stochastic gradient descent (SGD) over the first-layer weights,
where
we are interested in the convergence of iterates to the \emph{principal subspace} defined by
the teacher weights
$$\prins(\bU) \coloneqq \spn(\bu_1,\hdots,\bu_k)^m = \{\boldsymbol{C}\bU : \boldsymbol{C} \in \mathbb{R}^{m \times k}\}.$$
Notice that the principal subspace satisfies $\prins(\bU) \subseteq \mathbb{R}^{m\times d}$, and its dimension is $mk$ as opposed to the ambient dimension of $md$, with any matrix in this subspace having rank at most $k$.
For any vector $\bv \in \reals^d$, we let $\bv_\parallel$ denote the orthogonal projection of $\bv$ onto $\spn(\bu_1,\hdots,\bu_k)$ and $\bv_\perp \coloneqq \bv - \bv_\parallel$. Similarly, for a matrix $\bW \in \reals^{m \times d}$, we define $\bW_\parallel$ and $\bW_\perp$ by applying the projection to each row.

We make the following assumption on the data generating process.
\begin{assumption}[Student-teacher setup]\label{assump:general}
The student model is a two-layer NN~\eqref{eq:two-layer} trained over the data set $\{(\bx^{(i)},y^{(i)})\}_{i\geq 1}$, where the target values $y^{(i)}$ are generated according to the teacher model \eqref{eq:multi-index} and
the inputs satisfy $\bx^{(i)}\overset{\emph{\text{iid}}}{\sim} \mathcal{N}(0, \ident_d)$.
The link function $g(\cdot, \ldots,\cdot ; \epsilon)$ is weakly differentiable (see e.g. {\cite[Sec.~5.2.1]{Evans}} for definition) for any fixed $\epsilon$.
\end{assumption}
The Gaussian input is a rather standard assumption in the literature, especially in recent works that consider the student-teacher setup; see e.g. \cite{safran2021effects, zhou2021local,damian2022neural}.
The multiple-index teacher model~\eqref{eq:multi-index} can encode a broad class of input-output relations through the non-linear link function. For example, the teacher model can be a multi-layer fully-connected NN with arbitrary depth and width, as long as its activation functions are weakly differentiable.
The smoothness properties of the activation $\sigma$ play an important role in our analysis. As such,
we consider two scenarios, with different requirements on the loss function.

\begin{manualassumption}{2.A}[Smooth activation]
The activation function $\sigma$ satisfies $\abs{\sigma(z)},\abs{\sigma'(z)},\abs{\sigma''(z)}\leq1$ for all $z\in\mathbb R$, the loss is $\ell(\hat{y},y) = \tfrac{1}{2}(\hat{y}-y)^2$ for simplicity, and
$y$ satisfies $\abs{y} \leq K$ almost surely. 

\label{assump:smooth}
\end{manualassumption}

\begin{manualassumption}{2.B}[ReLU activation]
The activation function $\sigma$ is $\sigma(z) = \max(z,0)$ for $z\in\mathbb{R}$. The loss satisfies $0 \leq \partial^2_1\ell(\hat{y},y) \leq 1$, $\abs{\partial_1\ell(\hat{y},y)} \leq 1$, and $\abs{\partial^2_{12}\ell(\hat{y},y)} \leq 1$. \label{assump:relu}
\end{manualassumption}

Commonly used activations such as sigmoid and tanh satisfy Assumption \ref{assump:smooth}. 
For ReLU activation in Assumption~\ref{assump:relu}, we choose $\sigma'(z) = \boldone(z \geq 0)$ as its weak derivative. 
We highlight that Assumption \ref{assump:relu} is satisfied by common Lipschitz and convex loss functions such as the Huber loss 
\begin{equation}\label{eq:huber}
    \ell_\text{H}(\hat{y}-y) \coloneqq \begin{cases}\tfrac{1}{2}(\hat{y}-y)^2 &\text{ if }\ \abs{\hat{y}-y} \leq 1 \\\abs{\hat{y} - y} - \tfrac{1}{2} &\text{ if }\  \abs{\hat{y} - y} > 1,\end{cases}
\end{equation}
as well as the logistic loss $\ell_\text{L}(\hat{y},y) \coloneqq \log(1+e^{-\hat{y}y})$, up to appropriate scaling constants.

\subsection{Warm-Up: Population Gradient Descent}
\label{sec:population_gd}
In this section, we study the dynamics of population gradient descent (PGD) to motivate our investigation of the more practically relevant case of SGD. When initialized from $\bW^0$, PGD with a fixed step size $\eta$ will update the current iterate $\bW^t$ according to the update rule
\begin{equation}
    \label{eq:gd}
    \bW^{t+1} = \bW^t - \eta \grad_{\bW} \RiskReg_\lambda(\bW^t),
\end{equation}
We use the following initialization throughout the paper.
\begin{manualassumption}{3}[Initialization]
For all $1 \leq i \leq m$, $1 \leq j \leq d$, we initialize the NN weights and biases with $\sqrt{d}W^0_{ij} \overset{\emph{\text{iid}}}{\sim} \mathcal{N}(0,1)$, $ma^0_i \overset{\emph{\text{iid}}}{\sim} \Unif([-1,1])$, and $b^0_i \overset{\emph{\text{iid}}}{\sim} \Unif(\{-1,1\})$.\label{assump:init}
\end{manualassumption}
While this initialization is standard in the mean-field regime, we only use it to simplify the exposition. Indeed, we can initialize $\bW$ and $\ba$ with any scheme that guarantees $\fronorm{\bW} \lesssim \sqrt{m}$ and $\infnorm{\ba} \lesssim m^{-1}$ with high probability. Further, initialization of $\bb$ mostly matters in the analysis of ReLU activation.

Next, we show that the population gradient admits a certain decomposition which plays a central role in our analysis. 
For smooth activations, the below result is a remarkable consequence of Stein's lemma,
which provides a certain alignment between
the true statistical model (teacher) and the model being trained (student),
which has profound impact on the learning dynamics.
We generalize this result for ReLU through a sequence of smooth approximations (see Appendix \ref{appendix:proof_lemma_pgd} for details). 
\begin{lemma}
\label{lem:population_gradient}
\!Under Assumptions \emph{\ref{assump:general}\&\ref{assump:smooth}} or \emph{\ref{assump:general}\&\ref{assump:relu}}, the gradient of the population risk can be written as\!
\begin{equation}
    \nabla_{\bW} \RiskReg_\lambda(\bW) = (\H(\bW) + \lambda \ident_m)\bW + \D(\bW)\bU,
    \label{eq:population_gradient}
\end{equation}
for some $\H(\bW) \in \reals^{m \times m}$ and $\D(\bW) \in \reals^{k \times d}$ (with explicit forms provided in Appendix~\ref{appendix:proof_lemma_pgd}).
\end{lemma}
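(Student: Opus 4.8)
The plan is to compute $\nabla_{\bW}\Risk(\bW)$ directly and extract from it the stated structure, with the key tool being Stein's lemma (Gaussian integration by parts) to move the derivative off the noisy target $y=g(\bU\bx;\epsilon)$ and onto the Gaussian density, thereby revealing the dependence on $\bU$. First I would write, for the squared loss under Assumption~\ref{assump:smooth}, $\nabla_{\bw_i}\Risk = \E\!\left[(\hat y(\bx)-y)\,a_i\,\sigma'(\binner{\bw_i}{\bx}+b_i)\,\bx\right]$. The term involving $\hat y(\bx)$ only contains the student parameters, so it contributes something of the form $(\text{matrix depending on }\bW)\,\bW$ — more precisely, expanding $\hat y(\bx)=\sum_j a_j\sigma(\binner{\bw_j}{\bx}+b_j)$ and applying Stein's lemma to each $\E[\sigma(\binner{\bw_j}{\bx}+b_j)\sigma'(\binner{\bw_i}{\bx}+b_i)\bx]$ pulls out a $\bw_j$ (and a $\bw_i$ from the diagonal second-derivative term), so after collecting indices this whole piece equals $\H(\bW)\bW$ for an explicit $m\times d$ matrix $\H(\bW)$ (in fact with a natural $m\times m$ Gram-type matrix acting on the rows). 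The cross term $-\E[y\,a_i\sigma'(\binner{\bw_i}{\bx}+b_i)\bx]$ is where the teacher enters: since $y$ depends on $\bx$ only through $\bU\bx$, Stein's lemma gives $\E[y\,\sigma'(\binner{\bw_i}{\bx}+b_i)\bx]$ as a linear combination of $\bw_i$ (the "$\sigma''$" piece, which gets absorbed into $\H$) and of $\bU^\top(\text{something})$, because $\nabla_{\bx}\,\E[g(\bU\bx;\epsilon)\mid \bx]$ lands in the row space of $\bU$. Collecting the latter over $i=1,\dots,m$ yields $\D(\bW)\bU$ with $\D(\bW)\in\reals^{k\times d}$ — wait, one should be careful: $\D(\bW)$ multiplies $\bU\in\reals^{k\times d}$ on the right, so $\D(\bW)$ must be $m\times k$; I would follow the paper's stated shapes and simply track dimensions carefully so the final identity $\nabla\RiskReg_\lambda(\bW)=(\H(\bW)+\lambda\ident_m)\bW+\D(\bW)\bU$ type-checks (the $\lambda\ident_m\bW$ term being immediate from differentiating $\tfrac{\lambda}{2}\fronorm{\bW}^2$).

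For the smooth case, the main technical points to verify are: (i) that Stein's lemma applies, i.e. the relevant expectations are finite and the integrability/growth conditions hold — this is where Assumption~\ref{assump:smooth}'s bounds $|\sigma|,|\sigma'|,|\sigma''|\le 1$ and $|y|\le K$ do the work, together with weak differentiability of $g$ from Assumption~\ref{assump:general}; and (ii) that differentiation under the expectation is justified so that $\nabla_{\bw_i}\Risk$ has the claimed form in the first place. Both are routine given the uniform bounds, so I would state them and move on rather than belabor them.

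The genuinely harder part is the ReLU case under Assumption~\ref{assump:relu}: ReLU has $\sigma''=0$ as a function but the naive weak second derivative is a Dirac mass, and more importantly $\sigma$ itself is unbounded, so I cannot apply Stein's lemma to $\sigma(\binner{\bw_j}{\bx}+b_j)$ verbatim against an unbounded factor without care, and the loss in Assumption~\ref{assump:relu} is a general convex loss rather than the square. The plan here, as the excerpt flags, is a smoothing argument: replace $\sigma$ by a family $\sigma_\delta\in C^2$ with $\sigma_\delta\to\sigma$, $\sigma_\delta'\to\sigma'$ pointwise and uniformly bounded derivatives on compacts, derive the decomposition for each $\sigma_\delta$ exactly as in the smooth case (now also handling the general loss by writing the $\bW$-gradient as $\E[\partial_1\ell(\hat y,y)\,a_i\sigma_\delta'(\cdot)\bx]$ and applying Stein's lemma to the product, using $|\partial_1\ell|\le1$, $|\partial_{12}^2\ell|\le1$, $0\le\partial_1^2\ell\le1$ for the integrability and for the new terms that $\partial_1\ell$ contributes), and then pass to the limit $\delta\to0$. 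The dominated-convergence step needs a uniform integrable envelope for $\partial_1\ell(\hat y_\delta,y)\,\sigma_\delta'(\binner{\bw_i}{\bx}+b_i)\,\bx$ and for the Stein-generated terms; the linear-in-$\bx$ growth against Gaussian measure and the uniform $\ell$- and $\sigma_\delta'$-bounds supply this, and one checks that $\H(\bW)$ and $\D(\bW)$ converge to well-defined limits (the $\sigma''$ contribution, which would naively blow up, instead converges to a finite density-weighted expectation — this is exactly the ReLU analogue of the $\sigma''$ term and is where I expect the bookkeeping to be most delicate). I would present the smooth computation in full in the appendix, then the smoothing/limit argument, and record the explicit forms of $\H(\bW)$ and $\D(\bW)$ that drop out.
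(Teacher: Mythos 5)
Your proposal matches the paper's proof in all essential respects: both apply Stein's lemma to pull the $\bx$-derivative into the integrand, split the result into contributions proportional to rows of $\bW$ (from $\nabla_\bx\hat y$ and $\nabla_\bx\sigma'$, giving $\H(\bW)\bW$) and a contribution proportional to $\bU^\top\nabla g$ (giving $\D(\bW)\bU$), and both handle ReLU by smooth approximation (the paper uses softplus $\sigma_\iota(z)=\iota^{-1}\log(1+e^{\iota z})$) followed by dominated convergence, with the delicate point being exactly the one you flagged, namely that $\lim_{\iota\to\infty}\E[\sigma_\iota''(\binner{\bw}{\bx}+b)]$ remains finite. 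Your observation that the stated shapes cannot be right is also correct: the appendix's explicit forms give $\H(\bW)\in\reals^{m\times m}$ and $\D(\bW)\in\reals^{m\times k}$, so the lemma statement contains a typo.
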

 Notice that the subset of critical points $\{\bW_* : \nabla \RiskReg_\lambda(\bW_{\!\!*})=0\}$ for which $\H(\bW_{\!\!*}) + \lambda\ident_m$ is invertible belongs to the principal subspace, i.e. $\bW_{\!\!*} \in \prins(\bU)$.
Further, if we initialize PGD \eqref{eq:gd} within the principal subspace, i.e. $\bW^0 \in \prins(\bU)$, the subsequent iterates for $t>0$ remain in this subspace. 
 
In statistics literature, the setting we consider is often termed as \emph{model misspecification}, i.e.\
the teacher model generating the data and the student model being trained are different.
Proposition~\ref{prop:gd_general_m} states a general result that, as long as the target depends on certain directions, 
PGD will produce weights in their span, despite the possible mismatch between the two models. 
We highlight that the classical results on dimension reduction, e.g. \cite{li1989regression, li1991sliced}, rely on a similar principle, which was also used for designing optimization algorithms, see e.g.~\cite{erdogdu2016scaled,erdogdu2015newton,goldstein2019non,erdogdu2019scalable}.
However, we are interested in the profound implications of the emerging low-dimensional structure~\cite{wright2022high}, specifically in the setting of NNs trained with SGD.

The following result, proved in Appendix \ref{appendix:proof_thm_gd_general_m},
demonstrates the algorithmic implications of Lemma~\ref{lem:population_gradient} for the simplistic case of PGD, and shows that the iterates converges to the principal subspace.

\begin{proposition}
\label{prop:gd_general_m}
Consider running $T$ PGD iterations~\eqref{eq:gd} with an initialization satisfying Assumption~\ref{assump:init}
and a step size $\eta>0$.
For any $\gamma > 0$, choose $\lambda = \tfrac{\tilde{\lambda}}{m}$ and $\eta = m\tilde{\eta}$ based on the following.

\vspace{-.12in}
\begin{enumerate}[noitemsep,leftmargin=15pt]
    \item \textbf{Smooth activation.} Under Assumptions \emph{\ref{assump:general}\&\ref{assump:smooth}},
    let $\tilde{\lambda}  \geq  1 + \gamma + \smash{\sqrt{1 + 2\gamma + 2\Risk(\bW^0)}}$ and $\tilde{\eta} \lesssim \big(\tilde{\lambda} +  \tfrac{\tilde{\lambda}\varsigma}{\gamma} + \varsigma\big)^{-1}$ where $\varsigma \coloneqq \E[{\twonorm{\bU^\top\grad g(\bU\bx;\epsilon)}}]$.
    \item \textbf{ReLU activation.} Under Assumptions \emph{\ref{assump:general}\&\ref{assump:relu}}, let
    $\tilde{\lambda}  \geq \gamma + 2\smash{\sqrt{\frac{2}{\e\pi}}}$ and $\tilde{\eta} \lesssim \tilde{\lambda}^{-1}$.
\end{enumerate}
\vspace{-2mm}
Then, with probability at least $1-e^{-Cmd}$ over the initialization, where $C$ is an absolute constant, the iterates of PGD satisfy
\begin{equation}\label{eq:pgd-conv}
\thmfronorm{\bW^T_\perp}  \leq (1-\tilde{\eta}\gamma)^T\thmfronorm{\bW^0_\perp}.
\end{equation}
\end{proposition}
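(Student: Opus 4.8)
The plan is to exploit the gradient decomposition of Lemma~\ref{lem:population_gradient} directly: since the term $\D(\bW)\bU$ already lies in the principal subspace $\prins(\bU)$ (every row is a linear combination of $\bu_1,\dots,\bu_k$), projecting the PGD update onto the orthogonal complement kills it entirely. Writing $P_\perp$ for the row-wise orthogonal projection onto $\spn(\bu_1,\dots,\bu_k)^\perp$, one gets from \eqref{eq:gd} and \eqref{eq:population_gradient} that
\[
\bW^{t+1}_\perp = \bW^t_\perp - \eta\,\big(\H(\bW^t) + \lambda\ident_m\big)\bW^t\,P_\perp .
\]
The key structural point to establish is that the map $\bW \mapsto (\H(\bW)+\lambda\ident_m)\bW$ interacts with $P_\perp$ in a way that makes the perpendicular component contract: ideally one shows $\fronorm{\bW^{t+1}_\perp} \le (1 - \tilde\eta\gamma)\fronorm{\bW^t_\perp}$, and then \eqref{eq:pgd-conv} follows by iterating. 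So the proof is really an induction on $t$, where at each step I need a one-step contraction estimate on $\fronorm{\bW_\perp}$.

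The first step is to record the explicit forms of $\H(\bW)$ and $\D(\bW)$ from Appendix~\ref{appendix:proof_lemma_pgd} and, crucially, to control $\twonorm{\H(\bW)}$ (or the relevant quadratic form) uniformly over the iterates — this is where the smoothness assumptions enter. Under Assumption~\ref{assump:smooth}, the bounds $|\sigma|,|\sigma'|,|\sigma''|\le 1$ together with $|y|\le K$ should give $\opnorm{\H(\bW)} \lesssim$ (something involving $\varsigma$ and $\Risk(\bW^0)$), and under Assumption~\ref{assump:relu} the analogous bound with the explicit constant $2\sqrt{2/(\e\pi)}$ coming from $\E|\binner{\bw}{\bx}|$-type Gaussian computations for ReLU. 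The second step is the induction itself: one needs to know that the iterates stay in a region where these bounds hold. The $L_2$ regularization is doing the work here — choosing $\tilde\lambda$ large enough forces $\fronorm{\bW^t}$ to stay $\lesssim\sqrt{m}$ (a Lyapunov/energy argument on $\fronorm{\bW^t}^2$, using that $\RiskReg_\lambda$ is coercive and the regularized gradient points inward when $\fronorm{\bW}$ is large), and the initialization in Assumption~\ref{assump:init} guarantees $\fronorm{\bW^0}\lesssim\sqrt{m}$ and $\fronorm{\bW^0_\perp}\lesssim\sqrt{m}$ with probability $1 - e^{-Cmd}$ (standard Gaussian concentration for $\chi^2$ with $md$ degrees of freedom, which is where the high-probability statement comes from). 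The third step is the contraction: with $\opnorm{\H(\bW^t)}$ bounded, write
\[
\fronorm{\bW^{t+1}_\perp} \le \opnorm{\ident_m - \eta(\H(\bW^t)+\lambda\ident_m)}\,\fronorm{\bW^t_\perp},
\]
and verify that the step-size choice $\tilde\eta \lesssim (\tilde\lambda + \tilde\lambda\varsigma/\gamma + \varsigma)^{-1}$ (resp.\ $\tilde\eta\lesssim\tilde\lambda^{-1}$) makes $\eta(\H+\lambda\ident_m)$ a contraction whose spectrum lies in $[\tilde\eta\gamma,\ 1)$ after rescaling by $m$ — the lower bound $\tilde\lambda\ge 1 + \gamma + \sqrt{\cdots}$ being exactly what guarantees $\H + \lambda\ident_m \succeq \gamma\ident_m$ so the factor is at most $1 - \tilde\eta\gamma$.

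The main obstacle I anticipate is the uniform control of $\H(\bW^t)$ along the trajectory: $\H$ depends nonlinearly on the current iterate, so the contraction and the norm-boundedness are coupled and must be proved simultaneously by a joint induction — at step $t$ one assumes $\fronorm{\bW^t}\lesssim\sqrt{m}$, deduces the operator-norm bound on $\H(\bW^t)$, uses it both to get $\fronorm{\bW^{t+1}}\lesssim\sqrt{m}$ (closing the energy bound) and to get the perpendicular contraction. Getting the constants to line up so that the \emph{same} $\tilde\lambda$ and $\tilde\eta$ work for both halves of the induction, and matching the precise thresholds stated in the proposition (the $\sqrt{1+2\gamma+2\Risk(\bW^0)}$ term, the $2\sqrt{2/(\e\pi)}$ constant), is the delicate part; everything else is Gaussian concentration at initialization plus routine operator-norm bookkeeping. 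One subtlety worth flagging: for ReLU, $\sigma''$ is a distribution, so the expressions for $\H,\D$ and the bound on $\opnorm{\H}$ must be justified through the smooth-approximation argument already invoked in Lemma~\ref{lem:population_gradient} rather than by naive differentiation.
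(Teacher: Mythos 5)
Your skeleton is the right one: project the PGD recursion using Lemma~\ref{lem:population_gradient}, note that $\D(\bW)\bU$ lies in $\prins(\bU)$ so projection onto the orthogonal complement annihilates it, giving $\bW^{t+1}_\perp = \bigl(\ident_m - \eta(\H(\bW^t)+\lambda\ident_m)\bigr)\bW^t_\perp$, and then conclude the contraction from $0 \preceq \ident_m - \eta(\H(\bW^t)+\lambda\ident_m)\preceq(1-\tilde\eta\gamma)\ident_m$. The high-probability event $\fronorm{\bW^0}\lesssim\sqrt{m}$ from $\chi^2$ concentration is also correct. However, the mechanism you propose for controlling $\H(\bW^t)$ along the trajectory does not match the paper's and, more importantly, will not reproduce the threshold $\tilde\lambda \geq 1 + \gamma + \sqrt{1+2\gamma+2\Risk(\bW^0)}$ stated in the proposition.

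The issue is that the trajectory-dependent part of the $\H$ bound (Lemma~\ref{lem:H_bounded}) is controlled by $\sqrt{2\Risk(\bW)}$, not by $\fronorm{\bW}$. Your proposal takes $\fronorm{\bW^t}^2$ as the Lyapunov function. Under Assumption~\ref{assump:smooth} the activation is bounded, so $\Risk(\bW)\lesssim(\onenorm{\ba}+K)^2$ uniformly and no norm control is actually needed to bound $\H$ --- but this crude uniform bound yields a $\tilde\lambda$ condition of order $\gamma + 1 + K$, not the sharper, $\Risk(\bW^0)$-dependent threshold in the statement. The paper instead takes $\RiskReg_\lambda(\bW^t)$ itself as the Lyapunov quantity: Lemma~\ref{lem:hess_bounded} and Lemma~\ref{lem:smoothness_bound} give a one-step descent estimate, one shows by induction that $\RiskReg_\lambda(\bW^t)$ is non-increasing, and therefore $\Risk(\bW^t)\leq\RiskReg_\lambda(\bW^0)$. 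Plugging this into Lemma~\ref{lem:H_bounded} gives $\H(\bW^t)+\lambda\ident_m\succeq(\gamma/m)\ident_m$ provided $\lambda \geq \gamma/m + \beta_2\infnorm{\ba}\sqrt{2\RiskReg_\lambda(\bW^0)}$, and because $\RiskReg_\lambda(\bW^0)$ contains $\lambda$ this is a quadratic in $\lambda$ whose solution is exactly the stated $\tilde\lambda$ threshold. Your ``gradient points inward'' coercivity heuristic is also mildly circular: to show the inward-pointing inequality $\frobinner{\grad\RiskReg_\lambda(\bW)}{\bW}\geq 0$ for large $\fronorm{\bW}$ you already need the lower spectral bound on $\H+\lambda\ident_m$, which is the thing you were trying to establish; the joint induction you allude to can close this, but only with the cruder constants. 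The paper's norm bound (Lemma~\ref{lem:gd_bounded_iterates}) is \emph{derived from} the risk monotonicity and is used only to control $\fronorm{\grad\RiskReg_\lambda(\bW^t)}$ inside the descent lemma, not to bound $\H$. Finally, for ReLU your coupling concern does not arise at all: the smooth-approximation argument you correctly flag produces the trajectory-independent estimate \eqref{eq:est_H}, so $\H(\bW)+\lambda\ident_m\succeq(\gamma/m)\ident_m$ holds for all $\bW$ whenever $\tilde\lambda\geq\gamma+2\sqrt{2/(\e\pi)}$, and no Lyapunov argument is needed --- that case is strictly easier.
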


A few remarks are in order. First and most importantly, PGD iterates converge to the principal subspace as $T\to\infty$ and this phenomenon is mainly due to the alignment provided by Lemma~\ref{lem:population_gradient}. Indeed in the limit, \eqref{eq:pgd-conv} provides sparsity in the basis of principal subspace, and it is widely known that $L_2$-regularization does not have a similar sparsity effect (unless $\lambda\to\infty$) in contrast to its $L_1$ counterpart.
Thus, the choice of $\lambda$ in Proposition~\ref{prop:gd_general_m}
will lead to non-trivial orthogonal projections $\bW_\parallel^T$ in general, as we will demonstrate in Section~\ref{sec:learnibility} with a learnability result. 
However, without $L_2$-regularization, i.e. $\lambda = 0$, it is possible to converge to a critical point $\bW^*$ for which $\H(\bW^*)$ is not invertible; hence, the weights are likely to be outside of the principal subspace in this case (cf. Figures~\ref{fig:feature-learning}\&\ref{fig:no-ridge}). It is worth emphasizing that the penalty level used in the above proposition still allows for non-convexity as demonstrated with an example in Appendix \ref{appendix:example}.
We finally remark that, as evident from the proof, Proposition \ref{prop:gd_general_m} remains valid for unbounded smooth activations.

\section{Convergence of Stochastic Gradient Descent}
\label{sec:sgd}
We now consider stochastic gradient descent (SGD) in the online setting where at each iteration $t$,
we have access to a new data point $(\bx^{(t)},y^{(t)})$ drawn independently of the previous samples from the same distribution.
We update the first-layer weights $\bW^t$ with a (possibly) time varying step size $\eta_t$ and a weight decay, according to the update rule
\begin{equation}
    \label{eq:sgd}
    \bW^{t+1} = (1-\eta_t\lambda)\bW^t - \eta_t \grad_{\bW} \ell(\hat{y}(\bx^{(t)};\bW^t,\ba,\bb),y^{(t)}).
\end{equation}
The above algorithm can be used to minimize the population risk~\eqref{eq:reg_risk} in practice~\cite{polyak1992acceleration}, even in certain non-convex landscapes~\cite{yu2021analysis}.
As we demonstrate next, SGD still preserves several important characteristics of its population counterpart, PGD.

\begin{theorem}\label{thm:sgd}
Consider running $T$ SGD iterations~\eqref{eq:sgd} over samples satisfying Assumption~\emph{\ref{assump:general}}, with an initialization satisfying Assumption~\ref{assump:init}, and using the following step size schedules.

\vspace{-.12in}
\begin{enumerate}[noitemsep,leftmargin=15pt]
    \item \textbf{Constant step size.} Under Assumption~\emph{\ref{assump:smooth}}, choose the constant step size $\eta_t = \frac{2m\log(T)}{\gamma T}$. For any $\gamma > 0$, let $\tilde\lambda \geq 1 + \gamma  + \smash{\sqrt{1 + 2\gamma + 2\Risk(\bW^0) + \frac{C\log(T)^2d}{\gamma^2T}}}$ where $C$ is an absolute constant, and suppose $T \gtrsim (1/\delta)^{C/d} \lor \tfrac{\tilde{\lambda}}{\gamma}\log\tfrac{\tilde{\lambda}}{\gamma}$.
    Then, for a penalty $\lambda = \tfrac{\tilde{\lambda}}{m}$, with probability at least $1-\delta$,
    \begin{equation}
    \frac{\thmfronorm{\bW^T_\perp}}{\sqrt{m}} \lesssim \sqrt{\frac{\log(T)(d+\log(1/\delta))}{\gamma^2T}},
    \end{equation}
    \item \textbf{Decaying step size.} Let $\zeta \coloneqq \Earg{\abs{y}}+1$ under Assumption \ref{assump:smooth} and $\zeta \coloneqq 2\sqrt{{2}/{e\pi}}$ under Assumption \ref{assump:relu}.
    Choose the step size $\eta_t = m\frac{2(t+t^*) + 1}{\gamma(t+t^*+1)^2}$, $\tilde{\lambda} \geq \gamma + \zeta$, and $t^* \asymp \frac{\tilde{\lambda}}{\gamma}$ for any $\gamma > 0$. Then, for $\lambda = \tfrac{\tilde{\lambda}}{m}$, with probability at least $1-\delta$,
    \begin{equation}
        \frac{\thmfronorm{\bW^T_\perp}}{\sqrt{m}} \lesssim \sqrt{\frac{d + \log(1/\delta)}{\gamma^2T}}\label{eq:high_prob_bound},
    \end{equation}
\end{enumerate}
\vspace{-.1in}
whenever $m \gtrsim \log(1/\delta)$ and $T \gtrsim \tfrac{\tilde{\lambda}^2}{d + \log(1/\delta)}$.
\end{theorem}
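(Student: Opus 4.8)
The engine of the argument is Lemma~\ref{lem:population_gradient}: it lets us show that, once the SGD update is projected onto the orthogonal complement of the principal subspace, what remains is a contractive linear recursion driven by mean-zero noise. Concretely, set $\boldsymbol{\xi}^t \coloneqq \grad\ell(\hat{y}(\bW^t\bx^{(t)}+\bb),y^{(t)}) - \grad\Risk(\bW^t)$; since $(\bx^{(t)},y^{(t)})$ is drawn independently of $\bW^t$ we have $\E[\boldsymbol{\xi}^t\mid\mathcal{F}_t]=0$ for the filtration $\mathcal{F}_t\coloneqq\sigma(\bx^{(0)},\dots,\bx^{(t-1)})$, and the update~\eqref{eq:sgd} reads $\bW^{t+1}=\bW^t-\eta_t\big(\grad\RiskReg_\lambda(\bW^t)+\boldsymbol{\xi}^t\big)$. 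Projecting each row onto $\spn(\bu_1,\dots,\bu_k)^\perp$, the term $\D(\bW^t)\bU$ of Lemma~\ref{lem:population_gradient} vanishes (its rows lie in $\prins(\bU)$), while $\H(\bW^t)$ acts by left multiplication on the $m$ neurons and hence commutes with the coordinate-side projection; this gives the closed recursion
\begin{equation*}
\bW^{t+1}_\perp=\big(\ident_m-\eta_t(\H(\bW^t)+\lambda\ident_m)\big)\bW^t_\perp-\eta_t\,\boldsymbol{\xi}^t_\perp .
\end{equation*}
Everything reduces to controlling $r_t\coloneqq\fronorm{\bW^t_\perp}$ through this recursion.

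\textbf{Step 1 (contraction) and Step 2 (noise variance).} I need two spectral facts about $\bM_t\coloneqq\H(\bW^t)+\lambda\ident_m$: a uniform lower bound $\bM_t\succeq(\gamma/m)\ident_m$, and the step-size smallness $\eta_t\bM_t\preceq\ident_m$; together they give $\fronorm{(\ident_m-\eta_t\bM_t)\bW^t_\perp}\le(1-\tilde{\eta}_t\gamma)r_t$ with $\tilde{\eta}_t\coloneqq\eta_t/m$. For ReLU (Assumption~\ref{assump:relu}) the explicit form of $\H$ from Appendix~\ref{appendix:proof_lemma_pgd} has operator norm of order $\zeta/m$ regardless of $\bW^t$, so the lower bound holds deterministically once $\tilde{\lambda}\ge\gamma+\zeta$. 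For the smooth case (Assumption~\ref{assump:smooth}) the relevant lower bound on $\H(\bW^t)$ degrades with $\Risk(\bW^t)$ through the residual $\hat y-y$, so I first establish a trajectory bound: with probability $\ge1-\delta/2$, $\Risk(\bW^t)$ — equivalently $\fronorm{\bW^t}^2/m$ — stays within an additive $O\!\big(\tfrac{\log(T)^2 d}{\gamma^2 T}\big)$ of its initial value for all $t\le T$, the weight-decay term supplying the mean reversion that makes this uniform control possible (this is the origin of that slack term in the condition on $\tilde\lambda$). The second ingredient is the conditional variance $\E[\fronorm{\boldsymbol{\xi}^t_\perp}^2\mid\mathcal{F}_t]$: because $\boldsymbol{\xi}^t_\perp$ only sees the $(d-k)$-dimensional component $\bx^{(t)}_\perp$ and the gradient factorizes as a bounded activation-derivative factor times $\ba(\bx^{(t)})^{\!\top}$, Gaussian concentration gives $\E[\fronorm{\boldsymbol{\xi}^t_\perp}^2\mid\mathcal{F}_t]\lesssim d/m$ up to a self-bounding term $o(1)\,r_t^2$ that is absorbed into the contraction (this uses $\abs{y}\le K$ in the smooth case and $\abs{\partial_1\ell}\le1$ in the ReLU case).

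\textbf{Steps 3--4 (telescoping, then high probability).} Squaring the recursion and conditioning kills the cross term, leaving
\begin{equation*}
\E[r_{t+1}^2\mid\mathcal{F}_t]\le(1-\tilde{\eta}_t\gamma)^2 r_t^2+C\,\eta_t^2\,d/m .
\end{equation*}
The step sizes are chosen precisely so that this telescopes: for the decaying schedule $1-\tilde{\eta}_t\gamma=\big(\tfrac{t+t^*}{t+t^*+1}\big)^2$, hence $\prod_{s=t+1}^{T-1}(1-\tilde{\eta}_s\gamma)$ collapses and $\sum_{t}\eta_t^2\prod_{s>t}(1-\tilde{\eta}_s\gamma)^2\asymp m^2/(\gamma^2 T)$; the constant schedule behaves the same way up to the extra $\log T$. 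Unrolling yields $\E[r_T^2]\lesssim md/(\gamma^2 T)$, the initial term being killed by the product of contraction factors using $\fronorm{\bW^0}^2\lesssim m$ from Assumption~\ref{assump:init}. To upgrade to a $1-\delta$ statement I track the rescaled process $M_t\coloneqq r_t^2\big/\prod_{s<t}(1-\tilde{\eta}_s\gamma)^2$, a submartingale whose drift is exactly the summed variance terms; a Freedman/Bernstein inequality for the cross-term martingale — whose conditional variances are $\lesssim\eta_t^2 r_t^2(d/m)$ and are themselves bounded on the event that the $r_s$'s stay controlled — supplies the $\log(1/\delta)$ factor (and, for the constant schedule, an extra $\log T$). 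Controlling the sub-exponential fluctuations of $\fronorm{\boldsymbol{\xi}^t_\perp}^2$ along the trajectory, and intersecting with the Step-1 trajectory event and the initialization event of Assumption~\ref{assump:init}, produces the side conditions $m\gtrsim\log(1/\delta)$, $T\gtrsim\tilde\lambda^2/(d+\log(1/\delta))$, and (constant schedule) $T\gtrsim(1/\delta)^{C/d}$; dividing by $\sqrt m$ gives the stated bounds.

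\textbf{Main obstacle.} The crux is the circular dependence in Step 1 for smooth activations: the contraction rate that forces $r_t\to0$ needs the lower bound on $\bM_t$, which needs $\Risk(\bW^t)$ (hence $\fronorm{\bW^t}$) to stay controlled over the \emph{entire} trajectory, yet online SGD offers no monotone descent of the regularized risk — so one must run a simultaneous high-probability induction on $\sup_{s\le t}\fronorm{\bW^s}$ and on $r_t$ at once, carefully tracking how the two martingales interact. A secondary difficulty is making the self-bounded variance estimate of Step 2 sharp enough that the absorbed $o(1)\,r_t^2$ term does not spoil the $\sqrt{d/T}$ rate, and, in the ReLU case, transferring the explicit forms of $\H$ and $\D$ through the smooth-approximation construction behind Lemma~\ref{lem:population_gradient}.
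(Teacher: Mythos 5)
Your structural decomposition is exactly the paper's starting point (Lemma~\ref{lem:principal_rec_mgf}): project the SGD update onto $\spn(\bu_1,\dots,\bu_k)^\perp$, so that $\D(\bW^t)\bU$ vanishes and $\bW^{t+1}_\perp=(\ident_m-\eta_t(\H(\bW^t)+\lambda\ident_m))\bW^t_\perp-\eta_t\bGamma^t_\perp$. The divergence is in what you do with this recursion, and there the proposal has gaps that would either weaken the theorem or leave the argument incomplete.

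First, you misdiagnose where trajectory control is needed. You treat ``smooth'' as a single case requiring a uniform-in-$t$ bound on $\Risk(\bW^t)$, and attribute the slack term $\tfrac{C\log(T)^2d}{\gamma^2T}$ in $\tilde\lambda$ to it. But that slack appears only in the constant-step-size condition. For decaying step size the theorem requires only $\tilde\lambda\ge\gamma+\zeta$ with $\zeta=\Earg{|y|}+1$, and the paper obtains the needed lower bound $\H(\bW)+\lambda\ident_m\succeq\tfrac{\gamma}{m}\ident_m$ \emph{uniformly in $\bW$} by bounding $|\hat y-y|\le|\hat y|+|y|\le\beta_0\onenorm{\ba}+K$ (Assumption~\ref{assump:smooth} gives $|y|\le K$ and bounded $\sigma$), not by bounding $|\hat y-y|\le\sqrt{2\Risk(\bW^t)}$. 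Following your plan in the decaying case would force a $\widetilde{O}(d/T)$ correction into $\tilde\lambda$ and prove a strictly weaker statement. (Your parenthetical ``$\Risk(\bW^t)$ --- equivalently $\fronorm{\bW^t}^2/m$'' is also not an equivalence: what the paper controls is $\RiskReg_\lambda(\bW^t)$, which dominates $\fronorm{\bW^t}^2$ only through the $\lambda$-weighted penalty.)

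Second, your high-probability upgrade via the rescaled process $M_t$ and Freedman/Bernstein faces a circularity that you flag but do not close: the cross-term martingale has conditional variance $\lesssim\eta_t^2 r_t^2\,d/m$, so bounding the martingale requires control on $\sup_{s\le t}r_s$, which is precisely the quantity being estimated. Making this rigorous needs a stopping-time truncation (or a simultaneous induction over $r_t$ and the trajectory event), and that is where the real work of the proof lies. The paper sidesteps this entirely by running the recursion \emph{directly on the moment generating function} $\Earg{\boldone_{A_t}e^{s\fronorm{\bW^t_\perp}^2}}$: after Cauchy--Schwarz, the self-referential contribution of the cross term materializes as a factor $(1+Cs\eta_t^2\kappa^2)$ multiplying $s(1-\tilde\eta_t\gamma)^2$, which is absorbed into $(1-\tilde\eta_t\gamma)$ by the restriction $s\lesssim\gamma/(m\eta_t\kappa^2)$. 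This MGF recursion is the substitute for your stopping-time argument and is the step your proposal is missing.

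Third, the ``self-bounding $o(1)\,r_t^2$'' term you fold into the contraction does not occur here: $\fronorm{\grad\ell(\hat y,y)}\le\varkappa\beta_1\twonorm{\ba}\twonorm{\bx}$ uniformly in $\bW^t$, because $|\partial_1\ell|\le\varkappa$ and $|\sigma'|\le\beta_1$ by assumption, so the stochastic gradient noise has conditional second moment $\lesssim\kappa^2 d$ with no dependence on $\fronorm{\bW^t_\perp}$. This is harmless but spurious complication, and it obscures that the only source of self-reference in the recursion is the cross term, which is exactly what the MGF parameter restriction handles.
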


\begin{remark}
Our results are most insightful when $\gamma \asymp 1$ with respective rates of $\widetilde{\mathcal{O}}(\sqrt{d/T})$ and $\mathcal{O}(\sqrt{d/T})$ in the constant and decreasing step size settings; this scaling allows efficient learning of certain targets (see Section~\ref{sec:learnibility}). Indeed, choosing a large $\gamma$ may significantly restrict the learnability properties and will result in underfitting. However, if we ignore the underfitting issue, one can get the fastest convergence rate by choosing $\gamma \asymp \sqrt{T(d+\log(1/\delta))}$, from which we obtain ${\lVert W^T_\perp\rVert_F}/{\sqrt{m}} \lesssim {1}/{T}$. We also note that the convergence rate is stated in the normalized distance to the principal subspace, i.e.\ ${\fronorm{\bW^T_\perp}}/{\sqrt{m}} \leq \varepsilon$, as this is sufficient to guarantee that the risk of $\bW^T$ and its orthogonal projection $\bW^T_\parallel$ are within $\bigO(\varepsilon)$ distance.
\end{remark}

The above result states that, with a number of samples linear in the input dimension $d$,
SGD is able to produce iterates that are in close proximity to the principal subspace;
thus, it efficiently learns (approximately) low-dimensional weights, exhibiting an implicit bias towards low-complexity models. While prior works have established that NNs adapt to low-dimensional manifold structures in the data in some contexts~\citep{chen2019nonparametric, buchanan2021deep, wang2021deep}, our result has a different nature.
More specifically, the interplay between two forces is in effect here.
The most important one is
the linear relationship between the first-layer weights and the input
in both student and teacher models together with the input distribution.
The alignment described in Lemma~\ref{lem:population_gradient}
yields \emph{sparsified} weights in a basis defined by the teacher network,
effectively reducing the dimension from $d$ to $k$.
The second force is the explicit $L_2$-regularization. We emphasize that $L_2$-regularization does not play the main role in this sparsification;
even though it may provide shrinkage to zero, $L_2$ penalty will in general produce non-sparse solutions.
However, it is still required as remarked in the previous section;
the explicit $L_2$ regularization ensures that SGD avoids critical points outside of the principal subspace.

Notice that Theorem~\ref{thm:sgd} provides convergence estimates only to the principal subspace, and does not have any implications on the convergence behavior of the orthogonal projection $\bW_\parallel$ within this region.
In the next section, we show that the implied low-dimensional structure is sufficient
to provide guarantees on the generalization error, learnability, and compressibility of SGD.

The proof of Theorem \ref{thm:sgd} is provided in Appendix~\ref{appendix:proofs_sgd}, and is based on a recursion on the moment generating function of $\fronorm{\bW^t_\perp}$.
We note that, as in Proposition \ref{prop:gd_general_m}, regularization in Theorem \ref{thm:sgd} does not imply (strong) convexity in general, which we demonstrate in a non-convex example in Appendix~\ref{appendix:example}.

\section{Implications of Low-Dimensionality}
\label{sec:apps}
\subsection{Learning Single-Index Targets}\label{sec:learnibility}
An essential characteristic of NNs is their ability to learn useful representations, which allows them to adapt to the underlying misspecified statistical model. Although this fundamental property has been the guiding principle in all empirical studies, it was mathematically proven only recently for gradient-based training~\cite{abbe2022merged,ba2022high-dimensional, barak2022hidden, damian2022neural, frei2022random}; see also a survey of prior works in \cite{malach2021quantifying}. 
Our results in the previous section
are in the same spirit, establishing the convergence of
SGD to the principal subspace which is indeed a span of useful directions associated with the target function being learned. 
As such, we leverage the learned low-dimensional representations to demonstrate that SGD is capable of learning a target function of the form $y=f(\binner{\bu}{\bx}) + \epsilon$ with a number of samples linear in $d$ (up to logarithmic factors).
For simplicity, we work with the Huber loss below; however, we can accommodate any Lipschitz and convex loss at the expense of a more detailed analysis.
\begin{algorithm}[h]
\begin{algorithmic}[1]
\REQUIRE $\ba^0,\bb^0 \in \reals^m$, $\bW^0 \in \reals^{m \times d}$, $\{(\bx^{(t)},y^{(t)})\}_{0\leq t\leq T-1}$, $(\eta_t)_{t \geq 0}$, $(\eta'_t)_{t\geq 0}$, $\lambda$, $\lambda'$, $\Delta$.
\FOR{$t=0,...,T-1$}
    \STATE $\bW^{t+1} = (1-\eta_t\lambda)\bW^t - \eta_t\grad_{\bW}\ell(\hat{y}(\bx^{(t)};\bW^t,\ba^0,\bb^0),y^{(t)})$.
\ENDFOR
\STATE Let $b_j \overset{\emph{\text{iid}}}{\sim} \Unif(-\Delta,\Delta)$ for $1 \leq j\leq m$.
\FOR{$t=0,...,T'-1$}
    \STATE Sample $i_t \sim \Unif\{0,...,T-1\}$.
    \STATE $\ba^{t+1} = (1-\eta'_t\lambda')\ba^t - \eta'_t\grad_{\ba}\ell(\hat{y}(\bx^{(i_t)};\bW^T,\ba^t,\bb),y^{(i_t)})$
\ENDFOR
\RETURN $(\bW^T,\ba^{T'},\bb)$.
\end{algorithmic}
\caption{Training a two-layer ReLU network with SGD.}
\label{alg:train}
\end{algorithm}

In the sequel, we use Algorithm~\ref{alg:train} and train the first layer of the NN with online SGD using $T$ data samples. Then, we randomly choose the biases and run $T'$ SGD iterations on the second layer using the same data samples used to train the first layer. Thus, the overall sample complexity is $T$ whereas the total number of SGD iterations performed is $T+T'$. We highlight that the recent works \cite{ba2022high-dimensional, barak2022hidden, damian2022neural} perform only one gradient step on the first layer weights, whereas in Algorithm~\ref{alg:train}, we train the entire NN with SGD.

\begin{theorem}
\label{thm:learnability}
Suppose that the data is from a single-index model $y = f(\binner{\bu}{\bx}) + \epsilon$ with a monotone differentiable $f$ and $\nu$-sub-Gaussian noise $\epsilon$,
and Assumptions \emph{\ref{assump:general}\&\ref{assump:relu}} hold.
Further, let $\twonorm{\bu} = 1$, $\abs{f(0)} < 1$, and consider the Huber loss \eqref{eq:huber} for simplicity. Consider running Algorithm \emph{\ref{alg:train}} with the initialization $0 < a_j^0 = a \lesssim 1/m$, $0 < b_j^0 = b \lesssim 1$, and $\bw^0_j = \bw^0 \sim \mathcal{N}(0,\tfrac{1}{d} \ident_d)$ for all $j$ with the hyper-parameters $\lambda = \tfrac{\tilde{\lambda}}{m} = \tfrac{\gamma}{m} + \frac{2a}{b}\textstyle \smash{\sqrt{\frac{2}{e\pi}}}$ for any $\gamma \asymp 1$, $\eta_t = m\frac{2(t^* + t) + 1}{\gamma (t^* + t + 1)^2}$ with $t^* \asymp \gamma^{-1}$, $\eta'_t = \frac{2t + 1}{\lambda'(t + 1)^2}$, and $\Delta \asymp \textstyle \sqrt{\log(T/\delta)}$. Then, for $T \gtrsim (d + \log(\tfrac{1}{\delta}))\lor(\tfrac{\tilde\lambda}{\gamma d}\log(\tfrac{m}{\delta}))$, some $\lambda' > 0$ (see \eqref{eq:lambdap}), and sufficiently large $T'$ (see \eqref{eq:Tp}), with probability at least $1-\delta$,
\begin{equation}
    \RiskTr_\tau(\bW^T,\ba^{T'},\bb) - \Earg{\ell_{\emph{\text{H}}}(\epsilon)} \lesssim \Delta_*^2\left\{\sqrt{\frac{\log(T/\delta)}{m}} + \sqrt{\frac{d + \log(1/\delta)}{T}}\right\} + \nu\sqrt{\frac{\log(1/\delta)}{T}},
\end{equation}
where $\Delta_*$, defined in \eqref{eq:delta_star}, is $\poly(\log(\tfrac{T}{\delta}))$  when  $f''$ has at most polynomial growth.
\end{theorem}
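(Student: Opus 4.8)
\textbf{Proof plan for Theorem~\ref{thm:learnability}.}

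The plan is to decompose the excess truncated risk $\RiskTr_\tau(\bW^T,\ba^{T'},\bb) - \Earg{\ell_{\text{H}}(\epsilon)}$ into three pieces: (i) the approximation error incurred by restricting the first-layer weights to a neighborhood of the principal subspace (here a single direction $\bu$), (ii) the optimization error of the second-layer SGD on a fixed feature map $\bx \mapsto \sigma(\bW^T\bx+\bb)$, and (iii) the generalization gap for the second-layer SGD. Piece (i) is where Theorem~\ref{thm:sgd} (decaying step size, ReLU case) enters: with the stated choice of $\lambda$, $\eta_t$, $t^*$, it guarantees $\fronorm{\bW^T_\perp}/\sqrt m \lesssim \sqrt{(d+\log(1/\delta))/T}$ with probability $1-\delta$, so $\bW^T$ is within $\sqrt m \varepsilon$ of $\prins(\bu)$ with $\varepsilon \asymp \sqrt{(d+\log(1/\delta))/T}$. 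First I would use this, together with local Lipschitzness of the risk (constant $\bigO(1/\sqrt m)$, as in the Remark following Theorem~\ref{thm:sgd}) and of the second-layer objective, to reduce to analyzing the network with first layer replaced by $\bW^T_\parallel$, paying only $\bigO(\varepsilon)$ — but more care is needed because we also need the \emph{direction} $\bw^0_\parallel/\|\bw^0_\parallel\|$ to align with $\bu$, not merely to lie in its span; since all neurons share the same initialization $\bw^0$ and the update direction is (after projection) a multiple of $\bu$, the parallel component stays a fixed scalar multiple of $\bu$ throughout training, so $\bW^T_\parallel = \bc\,\bu^\top$ for a vector $\bc \in \reals^m$, and I would track $\|\bc\|$ and $\bW^T_\parallel$'s sign/magnitude via the scalar recursion obtained by projecting \eqref{eq:sgd} onto $\bu$.

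Next I would handle the approximation–expressivity step: show there exist second-layer weights $\ba^\star$ (with controlled norm) such that $\sum_j a^\star_j \sigma(c_j \binner{\bu}{\bx} + b_j)$ approximates $f(\binner{\bu}{\bx})$ up to $\bigO(1/\sqrt m)$ in $L_2$, uniformly over $\abs{\binner\bu\bx}\lesssim\Delta$ on the truncated problem. This is a one-dimensional ReLU approximation of the monotone $f$ on $[-\Delta,\Delta]$: with biases $b_j \sim \Unif(-\Delta,\Delta)$ and a common sign, a random-feature/Monte-Carlo argument gives that $m$ ReLU units with random knots approximate any $C^2$ function on $[-\Delta,\Delta]$ with $L_\infty$ error $\bigO(\Delta^2 \sup_{|z|\lesssim\Delta}|f''(z)| \cdot m^{-1/2})$ in expectation over the knots, which explains the $\Delta_*^2/\sqrt m$ term; here monotonicity of $f$ and the fixed sign of $a$ are what make the second-derivative (curvature) the relevant quantity and keep $\|\ba^\star\|_1$ bounded. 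I would then invoke convexity of the Huber loss in $\ba$ to argue the $T'$-step second-layer SGD (with step size $\eta'_t = (2t+1)/(\lambda'(t+1)^2)$ and $\ell_2$-regularization $\lambda'$) converges at rate $\bigO(1/(\lambda' T'))$ to the regularized optimum, and choose $\lambda'$ (cf.\ \eqref{eq:lambdap}) and $T'$ large (cf.\ \eqref{eq:Tp}) so that the optimization error plus the regularization bias is dominated by the other terms.

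Finally I would control the generalization gap of the second-layer training: since $\bW^T$ is (with high probability) in a $\sqrt m\varepsilon$-ball around the rank-one set $\{\bc\bu^\top\}$ and $\|\ba^{T'}\|$ is bounded by the SGD dynamics, the function class swept out by the second layer is effectively a bounded-norm linear class over the $k=1$-dimensional feature $\binner\bu\bx$ perturbed by an $\bigO(\varepsilon)$ term, so a standard Rademacher/uniform-convergence bound on this low-dimensional truncated class gives a $\bigO(\sqrt{(d+\log(1/\delta))/T})$ deviation (the $d$ entering through the $\varepsilon$-dependence on $\bW^T_\perp$ and a union bound over a net, not through the width $m$); this is combined with the high-probability bias estimate on $|\binner\bu{\bx^{(t)}}|\lesssim\Delta$ for all $t$ (sub-Gaussian tail, union bound, explaining $\Delta\asymp\sqrt{\log(T/\delta)}$) so that truncation at level $\tau$ is harmless. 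Assembling (i)+(ii)+(iii) and absorbing lower-order terms into the stated $\poly(\log(T/\delta))$ factor $\Delta_*^2$ yields the bound. The main obstacle I anticipate is step (i): cleanly propagating the subspace-proximity guarantee of Theorem~\ref{thm:sgd} through to a statement about the \emph{trained and then frozen} feature map — in particular showing that freezing $\bW^T$ and subsequently optimizing only $\ba$ does not lose the alignment, and quantifying how the residual $\bW^T_\perp$ of size $\sqrt m\varepsilon$ contaminates the one-dimensional approximation uniformly over the $\Delta$-truncated input range, rather than just in expectation — since the approximation argument is naturally $L_2(\Gsn)$ but the risk bound needs it against the actual (truncated, high-probability) data.
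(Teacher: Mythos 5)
Your plan follows the paper's own proof closely: Theorem~\ref{thm:sgd} for subspace proximity, a scalar recursion for the parallel component's sign and magnitude (the paper's Lemma~\ref{lem:innerp_positive}), a random-bias ReLU Monte-Carlo approximation of $f$ on $[-\Delta,\Delta]$ (Lemmas~\ref{lem:integral_approx}--\ref{lem:finite_neuron_approx}), a strongly convex SGD analysis for the frozen-feature second layer (Lemma~\ref{lem:sgd_strongly_convex}), and an appeal to Theorem~\ref{thm:generalization} for the generalization gap.

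One misattribution is worth correcting. You place monotonicity of $f$ (and the fixed sign of $a$) in the approximation step, claiming it is what makes curvature the relevant quantity and keeps $\|\ba^\star\|_1$ bounded. It is not: Lemmas~\ref{lem:integral_approx}--\ref{lem:finite_neuron_approx} hold for any twice-differentiable target, and the constructed $a^\star_j \propto f''(-b_j/\alpha_j)$ carries no definite sign. Monotonicity enters exactly in the scalar recursion you describe, i.e.\ Lemma~\ref{lem:innerp_positive}: there the drift in $\langle \bw^t,\bu\rangle$ is $-\eta_t\, a\,\mathbb{E}\bigl[\partial^2_{12}\ell\cdot\sigma'(\langle\bw^t,\bx\rangle+b)\,f'(\langle\bu,\bx\rangle)\bigr]$, which has a uniform sign and magnitude $\asymp a$ precisely because $f'$ is one-signed, and this is what drives $|\langle\bw^t,\bu\rangle|$ away from zero rather than letting the parallel component collapse while $\bw^t_\perp\to 0$. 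You correctly flag the need for this alignment as the main obstacle, but attach the hypothesis that resolves it to a step where it is not needed. A second, more cosmetic point: the paper's finite-neuron approximation (Lemma~\ref{lem:finite_neuron_approx}) is a pointwise Hoeffding bound subsequently averaged over the $T$ fixed samples, not a uniform $L_\infty$ bound over $[-\Delta,\Delta]$ as you state; the pointwise version suffices and avoids a chaining step.
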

This result implies that a ReLU NN trained with SGD can learn any monotone polynomial with a sample complexity linear in the input dimension $d$, up to logarithmic factors.
Indeed, this is consistent with the work of~\cite{benarous2021online}; they establish
sharp sample complexity of $\tilde{\mathcal{O}} (d^{1 \vee (\mathcal{I}-2)})$ to learn a target with online SGD using the same activation $f$ in the student network, where $\mathcal{I}$ is the \emph{information exponent} ($\mathcal{I}=1$ in the above case due to the monotonicity of $f$). Despite assuming the link function $f$ is known, we highlight that their setting covers $\mathcal{I}\geq 1$, whereas Theorem~\ref{thm:learnability} is a proof of concept to demonstrate the learnability implications of convergence to the principal subspace, even when $f$ is unknown. Building on their work, the concurrent work of~\cite{bietti2022learning} also proves learnability for unknown single-index targets with $\mathcal{I} \geq 1$, albeit with a sample complexity of $d^2$ for $\mathcal{I} = 1$ when training ReLU students.
Nonparametric regression with NNs has also been considered within the NTK framework~\citep{hu2021regularization, kuzborskij2021nonparametric}, but our result holds beyond this regime as $m$ grows with $\poly\log(T/\delta)$, in contrast to the $\poly(T)$ requirement of the NTK regime. Additionally, learning any degree $p$ polynomial using rotationally invariant kernels requires $d^{\Omega(p)}$ samples for a variety of input distributions including isotropic Gaussian \citep{donhauser2021rotational}; thus, our result shows that SGD is able to efficiently learn a target function where kernel methods cannot. For polynomial targets, \cite{damian2022neural} consider training the first-layer weights with one gradient descent step with a carefully chosen weight decay, and obtain a sample complexity of $d^2$ to learn any degree $p$ polynomial depending on a few directions. Finally, \cite{chen2020learning} propose a method that can train NNs to learn such polynomials with sample complexity linear in $d$; yet, their algorithm is not a simple variant of SGD and requires a non-trivial \emph{warm-start} initialization.

The proof of Theorem \ref{thm:learnability}, detailed in Appendix~\ref{appendix:proof_learn}, relies on the fact that after training the first layer, the weights will align with the true direction $\bu$. Then, similarly to \cite{damian2022neural} we construct an optimal $\ba^*$ with $\abs{a^*_i} \asymp m^{-1}$ for every $i$ with a small empirical risk, and employ the generalization bound of Theorem \ref{thm:generalization} to achieve a rate estimate on the population risk.

\subsection{Generalization Gap} \label{sec:gap}
For a given learning algorithm, the gap between its empirical and population risks 
is termed as the \emph{generalization gap} (not to be confused with excess risk), and establishing convergence estimates for this quantity
is a fundamental problem in learning theory.
Classical results rely on uniform convergence over the feasible domain containing the weights;
thus, they apply to any learning algorithm including SGD~\cite{neyshabur2018the}. However, these bounds often diverge with the width of the NN, yielding vacuous estimates in the overparameterized regime~\cite{zhang2021understanding}. 
To alleviate this, recent works considered establishing estimates for a specific learning algorithm; see e.g.~\cite{hardt2016train,soudry2018implicit,yun21,park2022generalization}.

Here, we are interested in deriving an estimate for the generalization gap over the SGD-trained first-layer weights, 
which holds uniformly over the second layer weights and biases. More specifically, we study, after $T$ iterations of SGD~\eqref{eq:sgd} initialized with $(\bW^0,\ba^0,\bb^0)$, the following quantity
\begin{equation*}
    \mathcal{E}(\bW^T) \!\coloneqq \textstyle{\sup_{S}}\RiskTr_\tau(\bW^T\!\!,\ba,\bb) - \hat{R}_\tau(\bW^T\!\!,\ba,\bb)
    \ \ \text{ with }\ \ S \coloneqq \{\ba , \bb \in \reals^m\! : \twonorm{\ba} \leq\! \tfrac{r_a}{\sqrt{m}}, \infnorm{\bb}\! \leq r_b\},
\end{equation*}
\vspace{-.24in}

where the scaling ensures $\hat{y} = \bigO(1)$ when $\twonorm{\bw_j} \asymp 1$, which is the setting considered in Theorem~\ref{thm:learnability}. 
We state the following bound on $\mathcal{E}(\bW^T)$; the proof is provided in Appendix~\ref{appendix:proofs_sgd_generalization},  and it is based on a covering argument over the smaller dimensional principal subspace implied by Theorem~\ref{thm:sgd}.

\begin{theorem}
\label{thm:generalization}
Consider the setting of Theorem \ref{thm:sgd} with either decreasing or constant step size. For any $\delta > 0$, if $T \gtrsim (d + \log(1/\delta))\lor(\tfrac{\kappa\tilde\lambda}{\gamma d}\log(m/\delta))$, then with probability at least $1-\delta$, 
\begin{align}
\mathcal{E}(\bW^T) \lesssim \tau r_a\left\{\sqrt{\frac{\kappa(d + \log(1/\delta))}{\gamma^2T}} + (r_b + {\tilde{\lambda}}^{-1})\sqrt{\frac{dk}{T}}\right\},
\end{align}
where we let $\kappa = 1$ for decreasing step size and $\kappa = \log(T)$ for constant step size.
\end{theorem}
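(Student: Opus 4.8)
The plan is to reduce $\mathcal{E}(\bW^T)$ to a uniform‑convergence problem over the low‑dimensional class of predictors whose first layer lives in (a bounded ball of) the principal subspace $\prins(\bU)$, using Theorem~\ref{thm:sgd} to control the out‑of‑subspace part. The starting point is the decomposition, valid for every $(\ba,\bb)\in S$ (arguments suppressed),
\begin{equation*}
\hat{R}_\tau(\bW^T)-R_\tau(\bW^T)
=\underbrace{\hat{R}_\tau(\bW^T)-\hat{R}_\tau(\bW^T_\parallel)}_{\mathrm{(i)}}
+\underbrace{\hat{R}_\tau(\bW^T_\parallel)-R_\tau(\bW^T_\parallel)}_{\mathrm{(ii)}}
+\underbrace{R_\tau(\bW^T_\parallel)-R_\tau(\bW^T)}_{\mathrm{(iii)}}.
\end{equation*}
Since $\ell_\tau$ is Lipschitz in its first argument (with constant $1$ under Assumption~\ref{assump:relu}, and $\lesssim\sqrt{\tau}$ under Assumption~\ref{assump:smooth}, because the square loss truncated at $\tau$ has slope at most $\sqrt{2\tau}$) and the activation is $1$‑Lipschitz, terms (i) and (iii) are perturbations of the form $\tfrac1T\sum_t\sum_i|a_i||\langle(\bw^T_i)_\perp,\bx^{(t)}\rangle|$ and $\E_{\bx}\sum_i|a_i||\langle(\bw^T_i)_\perp,\bx\rangle|$. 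Using $\twonorm{\ba}\le r_a/\sqrt m$, Cauchy–Schwarz over neurons, $\E_{\bx}\twonorm{\bW^T_\perp\bx}\le\fronorm{\bW^T_\perp}$ for the fresh sample in the population risk, and $\sum_t\twonorm{\bW^T_\perp\bx^{(t)}}^2=\|\bW^T_\perp X\|_{\mathrm F}^2\le\fronorm{\bW^T_\perp}^2\|X\|_{\mathrm{op}}^2\lesssim T\fronorm{\bW^T_\perp}^2$ for the empirical one (the Gaussian design matrix $X=[\bx^{(0)}\,\cdots\,\bx^{(T-1)}]$ satisfies $\|X\|_{\mathrm{op}}^2\lesssim T$ w.h.p.\ in the regime $T\gtrsim d+\log(1/\delta)$ assumed here), both (i) and (iii) are $\lesssim\tau\,(r_a/\sqrt m)\,\fronorm{\bW^T_\perp}$. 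Theorem~\ref{thm:sgd} then gives $\fronorm{\bW^T_\perp}/\sqrt m\lesssim\sqrt{\kappa(d+\log(1/\delta))/(\gamma^2T)}$ (with $\kappa=\log T$ for the constant and $\kappa=1$ for the decaying step size), which is exactly the first term of the claimed bound.

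It remains to bound term (ii). Because $\bW^T_\parallel$ is itself data‑dependent, I first pass to a deterministic class: unrolling the recursion~\eqref{eq:sgd}, bounding the per‑step gradient by $\twonorm{\ba^0}\twonorm{\bx^{(t)}}\lesssim\sqrt{d/m}$ (uniformly over $t\le T$ after a union bound on $\twonorm{\bx^{(t)}}$) and exploiting the contractive factor $1-\eta_t\lambda$ from weight decay, yields an a‑priori bound $\fronorm{\bW^T}\lesssim\sqrt{md}/\tilde\lambda$ w.h.p. Hence (ii) is at most $\sup\{\hat{R}_\tau(\bW,\ba,\bb)-R_\tau(\bW,\ba,\bb):\bW\in\prins(\bU),\ \fronorm{\bW}\lesssim\sqrt{md}/\tilde\lambda,\ (\ba,\bb)\in S\}$, a quantity no longer depending on which data realization produced $\bW^T$. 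I control this by symmetrization, Talagrand's contraction lemma (the loss class inherits the empirical Rademacher complexity of the predictor class up to the Lipschitz constant of $\ell_\tau$), and McDiarmid's inequality for the $\tau$‑bounded loss (contributing a $\tau\sqrt{\log(1/\delta)/T}$ term that is absorbed into the first term of the bound). The structural heart of the estimate is that for $\bW=\boldsymbol{C}\bU\in\prins(\bU)$ one has $\bW\bx=\boldsymbol{C}(\bU\bx)$, so the predictor depends on $\bx$ only through the $k$‑dimensional vector $\bz=\bU\bx\sim\mathcal{N}(0,\ident_k)$; writing the empirical Rademacher complexity of $\{\bx\mapsto\langle\ba,\sigma(\bW\bx+\bb)\rangle\}$ as $\tfrac{r_a}{\sqrt m\,T}\,\E_\varepsilon\sup_{\boldsymbol{C},\bb}\twonorm{\sum_t\varepsilon_t\sigma(\boldsymbol{C}\bz^{(t)}+\bb)}$, splitting off the bias‑only contribution $\sigma(\bb)$ (which is $\lesssim r_ar_b/\sqrt T$) and applying a vector contraction inequality together with Cauchy–Schwarz over neurons against the Frobenius budget $\fronorm{\boldsymbol{C}}=\fronorm{\bW_\parallel}$ gives $\lesssim\tfrac{r_a}{\sqrt m\,T}\fronorm{\bW^T_\parallel}\sqrt{m\sum_t\twonorm{\bz^{(t)}}^2}\lesssim\tfrac{r_a}{\sqrt{mT}}\fronorm{\bW^T_\parallel}\sqrt{k}$, since $\twonorm{\bz^{(t)}}^2\approx k$. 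Plugging $\fronorm{\bW^T_\parallel}\le\fronorm{\bW^T}\lesssim\sqrt{md}/\tilde\lambda$ produces the $r_a\tilde\lambda^{-1}\sqrt{dk/T}$ piece, the bias term gives the $r_br_a\sqrt{dk/T}$ piece (bounding $r_ar_b/\sqrt T\le r_ar_b\sqrt{dk/T}$), and the overall factor $\tau$ on both terms follows by taking the worst Lipschitz/range constant over the two activation regimes (recall $\tau\ge1$).

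I expect the main obstacle to be making term (ii) genuinely \emph{width‑independent}: a naive covering of the $m\times d$ weight matrix reintroduces both $m$ and $d$ in the metric entropy, so one must use the change of variables $\bz=\bU\bx$ (reducing the effective input dimension to $k$) and the Frobenius‑norm budget on $\bW_\parallel$ in concert with a contraction argument that "integrates out" the $m$ hidden units—the bias term requires separate care because positive homogeneity no longer applies cleanly once a bias is present. Two supporting points are also somewhat delicate: establishing the a‑priori bound $\fronorm{\bW^T}\lesssim\sqrt{md}/\tilde\lambda$ uniformly over $t\le T$ (the weight‑decay contraction is essential to prevent a logarithmic blow‑up, and the gradient bound must be made uniform via a union bound on $\twonorm{\bx^{(t)}}$), and term (i), where $\bW^T_\perp$ is correlated with the very samples appearing in $\hat{R}_\tau$, which is why the deterministic operator‑norm bound $\|X\|_{\mathrm{op}}\lesssim\sqrt T$ (valid for $T\gtrsim d$) is invoked instead of a law‑of‑large‑numbers argument.
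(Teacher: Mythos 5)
Your decomposition of $\hat{R}_\tau(\bW^T)-R_\tau(\bW^T)$ into the three terms, the Lipschitz argument for (i) and (iii) using $\fronorm{\bW^T_\perp}$ from Theorem~\ref{thm:sgd}, and the structural reduction of (ii) to a Rademacher‑complexity problem over the $k$‑dimensional inputs $\bz=\bU\bx$ together with a symmetrization/McDiarmid finish are all exactly the paper's route. However, there is a genuine gap in your Rademacher estimate for (ii): the combination of Cauchy--Schwarz over neurons (using $\twonorm{\ba}\le r_a/\sqrt{m}$) with a \emph{Frobenius} budget $\fronorm{\boldsymbol{C}}\lesssim\sqrt{md}/\tilde\lambda$ does \emph{not} produce a width‑independent bound. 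Tracking your own computation, $\tfrac{r_a}{\sqrt m\,T}\fronorm{\boldsymbol{C}}\sqrt{m\sum_t\twonorm{\bz^{(t)}}^2}\asymp\tfrac{r_a}{\sqrt m\,T}\fronorm{\boldsymbol{C}}\sqrt{mTk}=r_a\fronorm{\boldsymbol{C}}\sqrt{k/T}$, not the $\tfrac{r_a}{\sqrt{mT}}\fronorm{\boldsymbol{C}}\sqrt k$ you wrote (an extraneous $1/\sqrt m$ appeared in your last simplification). Plugging $\fronorm{\boldsymbol{C}}\lesssim\sqrt{md}/\tilde\lambda$ into the correct expression gives $r_a\sqrt{mdk/T}/\tilde\lambda$, which still carries a $\sqrt m$ and is therefore \emph{not} width‑independent; the error is masked only by the sign slip.

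The paper avoids this by working with a \emph{per‑neuron} a‑priori bound rather than a Frobenius one. Lemma~\ref{lem:sgd_singleparam_bounded} (an MGF recursion on $\twonorm{\bw^t_j}$ using the weight‑decay contraction) gives $\twonorm{\bw^T_j}\lesssim 1+\kappa_\infty\sqrt d/\lambda$ simultaneously for all $j$, which after the change of variables $\tilde\bw_j=\bU^\dagger(\bw^T_j)_\parallel$ yields a budget $\twonorm{\tilde\bw_j}\le r_{\tilde\bw}$ uniform in $j$ with $r_{\tilde\bw}\fronorm{\bU}\lesssim\sqrt{dk}/\tilde\lambda$. In the Rademacher step (Lemma~\ref{lem:rademacher}), the pairing is then done via H\"older: $\ba^\top v\le\onenorm{\ba}\infnorm{v}$ with $\onenorm{\ba}\le\sqrt m\,\twonorm{\ba}\le r_a$, which collapses the $m$‑dimensional sup to a \emph{one‑neuron} sup under the per‑neuron $\ell_2$ budget; a 1D Talagrand contraction and Cauchy--Schwarz against $\sum_t\xi_t\bz^{(t)}$ then give $\Radem\lesssim r_a(r_{\tilde\bw}\fronorm{\bU}+r_b)/\sqrt T$ with no $m$‑dependence. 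To repair your argument you would either need to (a) derive the per‑neuron bound $\twonorm{\bw^T_j}\lesssim\sqrt d/\tilde\lambda$ (which follows by running your recursion argument on a single row, noting the row‑gradient is $\lesssim\infnorm{\ba}\twonorm{\bx}\lesssim\sqrt d/m$, not the full Frobenius gradient) and then switch from Cauchy--Schwarz to H\"older so the $m$‑neuron sup degenerates to a 1D one; or (b) make the vector contraction actually exploit the per‑neuron constraint, which the Frobenius ball alone cannot do since it permits one row to carry the full budget.
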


The above bound is independent of the width $m$ of the NN, and only grows with the dimension of the input space $d$ and that of the principal subspace $k$; thus, producing non-vacuous estimates in the overparametrized regime where $m$ is large. Further, the bound is stable in the number of SGD iterations $T$, that is, it converges to zero as $T\to \infty$.
We remark that generalization bounds for SGD that rely on algorithmic stability are optimal for strongly convex objectives~\cite{hardt2016train}; yet, they lead to unstable diverging bounds in non-convex settings as $T\to\infty$. As such, these techniques often require \emph{early stopping}, which is clearly not needed in our result.

\subsection{Compressibility} \label{sec:compress}
NNs exhibit compressibility features in empirical studies,
which is known to be associated with better generalization.
Under the assumption that the trained network is compressible, several works established generalization bounds, see e.g.~\cite{arora2018stronger, suzuki2020compression}.
However, a theoretical justification of this assumption, specifically for a NN trained with SGD, was missing.
Indeed, Theorem~\ref{thm:sgd} provides a concrete answer to this question;
since the SGD iterate $\bW^T$ converges to a low-rank matrix, the resulting weights are compressible. 
More precisely, let $\pi_k : \reals^{m\times k} \to \reals^{m \times k}$ be the low-rank approximation operator defined by 
$$\pi_k(\bW) \coloneqq \argmin_{\{\bW' : \rank{\bW'} \leq k\}}\fronorm{\bW-\bW'}.$$ As $\bW^T_\parallel$ lies in the principal subspace, it has rank at most $k$. Thus, we can write
$$\fronorm{\bW^T - \pi_k(\bW^T)} \leq \fronorm{\bW^T - \bW^T_\parallel} =\fronorm{\bW^T_\perp} ,$$ and the following is an immediate consequence of the bound in Theorem~\ref{thm:sgd}, that is $\fronorm{\bW^T_\perp}/\sqrt{m} \lesssim \sqrt{d/T}$, combined with the Lipschitzness of $\RiskTr_\tau(\bW)$, which we prove in Lemma \ref{lem:risk_lip}.
\begin{proposition}
Consider the setting of Theorem \ref{thm:sgd} with either decreasing or constant step size. Then, with probability at least $1-\delta$,
\begin{equation}
    \RiskTr_\tau(\pi_k(\bW^T),\ba,\bb) - \RiskTr_\tau(\bW^T,\ba,\bb) \lesssim \frac{\tau\kappa}{\gamma}\sqrt{\frac{d + \log(1/\delta)}{T}},
\end{equation}
where we let $\kappa = 1$ for decreasing step size and $\kappa = \sqrt{\log(T)}$ for constant step size.
\end{proposition}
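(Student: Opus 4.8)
The plan is to chain three facts: the variational characterization of the best rank-$k$ approximation, the observation that $\bW^T_\parallel$ is itself a rank-$k$ matrix, and the Lipschitz continuity of the truncated risk established in Lemma~\ref{lem:risk_lip}.

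First I would note that $\bW^T_\parallel \in \prins(\bU)$, and every matrix in $\prins(\bU)$ has rank at most $k$; hence $\bW^T_\parallel$ is a feasible point for the minimization problem defining $\pi_k(\bW^T)$. By optimality of $\pi_k(\bW^T)$,
\[
\fronorm{\bW^T - \pi_k(\bW^T)} \;\le\; \fronorm{\bW^T - \bW^T_\parallel} \;=\; \fronorm{\bW^T_\perp}.
\]
Next, Lemma~\ref{lem:risk_lip} provides a Lipschitz constant of order $\tau/\sqrt{m}$ for the map $\bW \mapsto \RiskTr_\tau(\bW,\ba,\bb)$ with respect to the Frobenius norm, the $1/\sqrt{m}$ reflecting the $\twonorm{\ba} \lesssim 1/\sqrt{m}$ scaling of the second layer. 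Applying this bound at $\bW^T$ and at $\pi_k(\bW^T)$ and combining with the previous display gives
\[
\RiskTr_\tau(\pi_k(\bW^T),\ba,\bb) - \RiskTr_\tau(\bW^T,\ba,\bb) \;\lesssim\; \frac{\tau}{\sqrt{m}}\,\fronorm{\bW^T - \pi_k(\bW^T)} \;\le\; \frac{\tau}{\sqrt{m}}\,\fronorm{\bW^T_\perp}.
\]

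Finally, I would invoke Theorem~\ref{thm:sgd} on its high-probability event: $\fronorm{\bW^T_\perp}/\sqrt{m} \lesssim \gamma^{-1}\sqrt{(d+\log(1/\delta))/T}$ for the decreasing step size, and $\fronorm{\bW^T_\perp}/\sqrt{m} \lesssim \gamma^{-1}\sqrt{\log(T)(d+\log(1/\delta))/T}$ for the constant step size, each valid with probability at least $1-\delta$. Substituting into the display above yields $\tau\kappa\gamma^{-1}\sqrt{(d+\log(1/\delta))/T}$ with $\kappa=1$ and $\kappa=\sqrt{\log T}$ respectively, as claimed. There is essentially no obstacle here: every step is a direct application of a result already in hand. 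The only point requiring care is that the Lipschitz estimate of Lemma~\ref{lem:risk_lip} applies uniformly on a region containing both $\bW^T$ and $\pi_k(\bW^T)$ — which is immediate since these two matrices differ by $\fronorm{\bW^T_\perp}$, a quantity already controlled by Theorem~\ref{thm:sgd} — and that the $\ba$-dependence of that constant is consistent with the $\tau/\sqrt{m}$ scaling used above.
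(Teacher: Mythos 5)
Your proof is correct and matches the paper's argument exactly: $\bW^T_\parallel$ is a feasible rank-$k$ competitor so $\fronorm{\bW^T - \pi_k(\bW^T)} \leq \fronorm{\bW^T_\perp}$, then combine Lemma~\ref{lem:risk_lip}'s Lipschitz constant $\lesssim \tau\twonorm{\ba} \lesssim \tau/\sqrt{m}$ with the high-probability bound on $\fronorm{\bW^T_\perp}/\sqrt{m}$ from Theorem~\ref{thm:sgd}. One minor note: the caveat at the end about the Lipschitz estimate holding only locally is unnecessary, since Lemma~\ref{lem:risk_lip} in fact gives a global Lipschitz bound on $\bW \mapsto \RiskTr_\tau(\bW)$ for all pairs $\bW, \bW' \in \reals^{m\times d}$.
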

The above result demonstrates that the low-dimensionality exhibited by the trained NN provides a rate of $\mathcal{O}(\sqrt{d/T})$ for the gap between its population risk and that of its compressed version.
The bound is independent of both the width $m$ and the number of indices $k$ of the teacher model as it merely follows by the Lipschitzness of the population risk and the remark after Theorem~\ref{thm:sgd}.
Finally, we highlight that
\cite{suzuki2020compression} provides generalization bounds by assuming a near low-rank structure for the weight matrix, namely that its $j$th singular value decays proportional to $j^{-\alpha}$ for some $\alpha > {1}/{2}$. However, this condition imposes a structure quite different than what we proved in Theorem~\ref{thm:sgd}.

\section{Conclusion}
\label{sec:conclusion}
\begin{wrapfigure}{r}{0.45\textwidth}  
\vspace{-6.mm} 
\centering  
\includegraphics[width=0.45\textwidth,trim={1.5cm 1.5cm 2cm 2cm},clip]{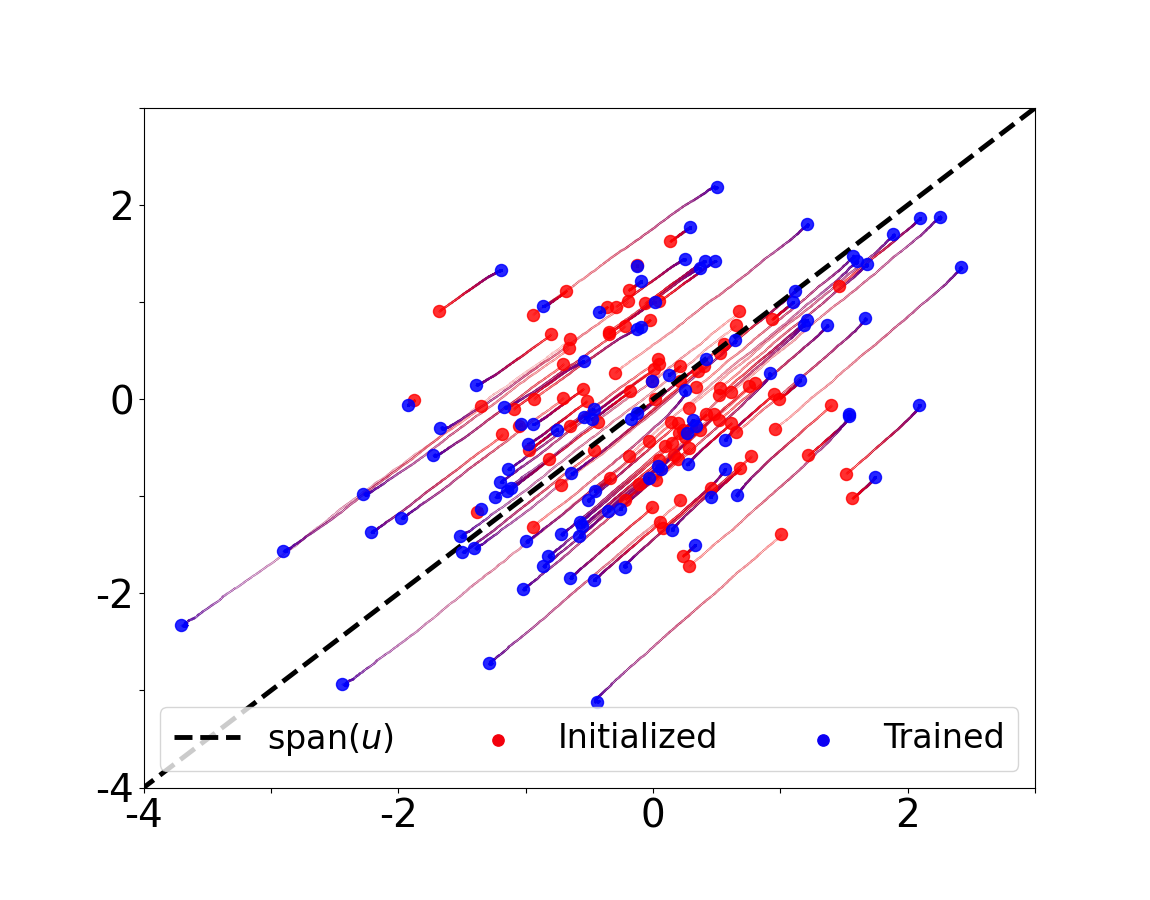}  
\vspace{-2mm} 
\caption{\small Neurons fail to converge to the principal subspace without weight decay, in the same experimental setup used in Figure \ref{fig:feature-learning}.\label{fig:no-ridge}}  
\vspace{-2mm}   
\end{wrapfigure}
We studied the dynamics of SGD with a weight decay on two-layer NNs, and proved that under a multiple-index teacher model, the first-layer weights converge
to the principal subspace, 
i.e.\ the span of the weights of the teacher. 
This phenomenon is of particular interest when the target depends on the input along a few important directions. In this setting, we proved novel generalization bounds for SGD via uniform convergence on the low-dimensional principal subspace. Further, we proved that two-layer ReLU networks can learn a single-index target with a monotone link that has at most polynomial growth, using online SGD, with a number of samples almost linear in $d$. Thus, as an implication of low-dimensionality, we established a separation between kernel methods and trained NNs where the former suffers from the curse of dimensionality. Finally, we demonstrated that the approximate low-rank structure of the first-layer weights leads to certain compressibility guarantees for NNs trained with online SGD and with appropriate weight decay. 

Two principal forces are responsible for the emergence of the low-dimensional structure. 
The main one is the linear interaction between the Gaussian input and the first-layer weights in both student and teacher models. The secondary one is the weight decay which allows SGD to avoid critical points outside of principal subspace. Figure~\ref{fig:no-ridge} shows the convergence behavior in absence of weight decay.

Finally, we outline a few limitations of our current work and discuss future directions.
\begin{itemize}[itemsep=0pt,leftmargin=15pt]
    \item Although it allows for non-convexity as well as learnability of single-index targets, the weight decay parameter $\lambda$ in our results is required to be large in general. 
    In the simplest case of linear regression, one can observe that $\lambda > 0$ is necessary and sufficient for convergence to the principal subspace. Investigating the regimes where $\lambda$ is small is left as future study.
    
    \item Extending our results to the setting where both layers are trained simultaneously is an interesting direction, which is commonly used in practice, yet poorly understood in theory.
    
    \item While our learnability guarantee is for single-index models, we believe that convergence to the principal subspace can also be leveraged to learn various multiple-index models. We leave this as an important open question to be answered in future work.
    
    \item The results in this paper rely on the input distribution being Gaussian. Generalizing our statements to other distributions is left for future work, possibly using zero-biased transformations, see e.g.~\cite{goldstein1997stein,goldstein2019non}.
\end{itemize}

\section*{Acknowledgments}
The authors would like to thank Denny Wu for generating the figures, and both DW and Matthew S. Zhang 
for valuable feedback on the manuscript.
This project was mainly funded by the CIFAR AI Catalyst grant.
The authors also acknowledge the following funding sources: SP was supported by Institute of Information \& communications Technology Planning \& Evaluation (IITP) grant funded by the Korea government (MSIT) (No. 2019-0-00079, Artificial Intelligence Graduate School Program, Korea University).
MG was supported by NSERC Grant [2022-04106],
IM was supported by a Samsung grant and CIFAR AI Chairs program.
Finally, MAE was supported by NSERC Grant [2019-06167] and CIFAR AI Chairs program.

\bibliographystyle{amsalpha}
\bibliography{ref}

\newpage
\appendix
\section{Proofs of Section \ref{sec:population_gd}}
\label{appendix:proofs_of_gd}
In the appendix, we will prove the statements of the main text in a more general formulation. In particular, for smooth activations, we assume $\sup \abs{\sigma'} \leq \beta_1$ and $\sup \abs{\sigma''} \leq \beta_2$ for some $\beta_1,\beta_2 \in \R_+$, and we denote $\sup \abs{\sigma} \leq \beta_0$, $\beta_0 \in (0,\infty]$. We will consider the following general case for the  bias vector $\bb \in \reals^m$: $ b_j \overset{\text{iid}}{\sim} \mathcal{D}_b$, such that $|b_j| \geq b^* >0$, for some $b^*>0$. This setting clearly covers the case of $b_j = \pm 1$ from the initialization of Assumption \ref{assump:init}. Throughout the appendix, $C$ will denote a generic positive absolute constant (e.g.\ 10), whose value may change in each appearance.

Here, we detail the proofs for results on population gradient descent. We use the shorthand notation $\mysigma$ to denote $\ba \circ \sigma(\bW\bx + \bb)$, and similarly for $\mysigmap$ and $\mysigmapp$. We use the notations $\vect(\boldsymbol{A}) \in \reals^{mn}$ for the vectorized representation of a matrix $\boldsymbol{A}\in \reals^{m \times n}$, and $\boldsymbol{A} \otimes \boldsymbol{B}$ for the Kronecker product of two matrices $\boldsymbol{A} \in \reals^{m \times n}$ and $\boldsymbol{B} \in \reals^{p \times q}$; we recall that the Kronecker product is an $mp \times nq$ block matrix comprised of $m \times n$ blocks of shape $p \times q$, where block $(i,j)$ is given by $A_{ij}\boldsymbol{B}$.

\subsection{Proof of Lemma \ref{lem:population_gradient}}
\label{appendix:proof_lemma_pgd}
In what follows, $\grad^\top$ is the Jacobian matrix and $\grad$ is the transpose of Jacobian for vector valued functions, which is the same as gradient for real-valued functions.

When $\sigma$ is twice differentiable (Assumption \ref{assump:smooth}), standard matrix calculations yield
\begin{align}
    \grad_{\bW} \Earg{\Risk(\bW)} &\stackrel{\text{(a)}}{=} \Earg{\grad_{\bW} \ell(\hat{y}(\bx;\bW),y)} \nonumber \\
    &= \Earg{\partial_1\ell(\hat{y}(\bx;\bW),y)\grad_{\bW}\hat{y}(\bx;\bW)} \nonumber\\
    &= \Earg{\partial_1\ell(\hat{y}(\bx;\bW),y)\mysigmap \bx^\top} \nonumber\\
    &= \Earg{\Earg{\partial_1\ell(\hat{y}(\bx;\bW),y)\mysigmap \bx^\top \mid \epsilon}} \nonumber\\
    &\stackrel{\text{(b)}}{=} \Earg{\Earg{\grad^\top_{\bx}\left\{\partial_1\ell(\hat{y}(\bx;\bW), g_\epsilon(\bU\bx))\mysigmap\right\} \mid \epsilon}} \nonumber\\
    &= \Earg{\partial^2_1\ell\, \mysigmap\grad^\top_{\bx}\hat{y}(\bx;\bW) + \partial_1\ell\, \grad^\top_{\bx}\mysigmap +\partial^2_{12}\ell \, \mysigmap\grad^\top_{\bx} g_\epsilon(\bU\bx)} \nonumber\\
    &= \mathbb E \left[ \left\{\partial^2_1\ell\, \mysigmap\mysigmap^\top + \partial_1\ell\, \diag(\mysigmapp)\right\}\bW \right] + \nonumber \\
    & \qquad \qquad + \mathbb E\left[ \partial^2_{12}\ell\, \mysigmap\grad g_\epsilon(\bU\bx)^\top \bU \right]  \nonumber \\
    &= \H(\bW)\bW + \D(\bW)\bU,
\end{align}
where (a) follows from the dominated convergence theorem and (b) follows from the Stein's lemma, and $\grad g_\epsilon$ is the weak derivative of $g_\epsilon$ w.r.t.\ its inputs. Combining the above calculations with the gradient of the regularization term,
with
\begin{equation}
    \D(\bW) = \Earg{\partial^2_{12}\ell(\hat{y},y)\left(\ba \circ \sigma'(\bW\bx + \bb)\right)\grad g_\epsilon^\top},\label{eq:def_D}
\end{equation}
where $\grad g_\epsilon$ is the weak derivative of $g_\epsilon$ w.r.t.\ its inputs, and
\begin{align}
    \H(\bW) = &\Earg{\left(\ba \circ \sigma'(\bW\bx + \bb)\right)\left(\ba \circ \sigma'(\bW\bx + \bb)\right)^\top} + \Earg{(\hat{y}-y)\diag\left((\ba \circ \sigma''(\bW\bx + \bb))\right)}\label{eq:def_H},
\end{align}
the proof is complete for smooth activations.

For ReLU activations and $\ell$ satisfying Assumption \ref{assump:relu}, we introduce the following smooth approximation
$$\sigma_\iota(z) = \frac{1}{\iota}\log(1+e^{\iota z})\ , \quad \iota >0\ .$$ %
Then we have
\begin{align*}
    \H_\iota(\bW) &= \Earg{\partial^2_1\ell\, (\ba\circ \sigma'_\iota(\bW\bx + \bb))(\ba\circ \sigma'_\iota(\bW\bx + \bb))^\top} + \Earg{\partial_1\ell\, \diag\left(\ba\circ \sigma''_\iota(\bW\bx + \bb)\right)}\\
    &\succeq -\infnorm{\ba}\max_{1\leq j\leq m}\Earg{\abs{\sigma''_\iota(\binner{\bw_j}{\bx} + b_j)}}\ident_m.
\end{align*}
As $\sigma''_\tau \geq 0$, the critical step is to show $\lim_{\iota \to \infty}\Earg{\sigma''_\iota(\binner{\bw}{\bx} + b)} < \infty$, uniformly for all $\bw$. Let $z = \binner{\bw}{\bx} + b$. Then $z \sim \mathcal{N}(b,\twonorm{\bw}^2)$, and
\begin{align*}
    \int_0^\infty \sigma''_\iota(z)\frac{e^{-\frac{(z-b)^2}{2\twonorm{\bw}^2}}}{\sqrt{2\pi}\twonorm{\bw}}\dee z &\leq \iota\int_0^\infty \frac{e^{-\iota z -\frac{(z-b)^2}{2\twonorm{\bw}^2}}}{\sqrt{2\pi}\twonorm{\bw}}\dee z\\
    &= \iota e^{-\frac{b^2}{2\twonorm{\bw}^2}+\frac{(\iota\twonorm{\bw} - \tfrac{b}{\twonorm{\bw}})^2}{2}}\int_0^\infty\frac{e^{-\frac{1}{2}(\tfrac{z}{\twonorm{\bw}} + \iota\twonorm{\bw} - \tfrac{b}{\twonorm{\bw}})^2}}{\sqrt{2\pi}\twonorm{\bw}}\dee z.\\
    &= \iota e^{-\frac{b^2}{2\twonorm{\bw}^2}+\frac{(\iota\twonorm{\bw} - \tfrac{b}{\twonorm{\bw}})^2}{2}}(1-\Phi(\iota\twonorm{\bw} - \frac{b}{\twonorm{\bw}})).\\
    &\stackrel{\text{(a)}}{\leq} \frac{\iota e^{-\tfrac{b^2}{2\twonorm{\bw}^2}}}{\sqrt{2\pi}\twonorm{\bw}(\iota - \tfrac{b}{\twonorm{\bw}^2})}\\
    &\stackrel{\text{(b)}}{\leq} \sqrt{\frac{2}{\pi}}\frac{e^\frac{-b^2}{2\twonorm{\bw}^2}}{\twonorm{\bw}}\\
    &\stackrel{\text{(c)}}{\leq} \frac{1}{\abs{b}}\sqrt{\frac{2}{e\pi}},
\end{align*}
where (a) follows from the Gaussian tail bound $1-\Phi(x) \leq \frac{e^{-x^2/2}}{\sqrt{2\pi}x}$, where $\Phi$ is the standard Gaussian CDF; (b) holds for large enough $\iota$; and (c) holds by considering supremum over $\twonorm{\bw}$. Thus $\Earg{\sigma''_\iota\left(\binner{\bw_j}{\bx} + b_j\right)} \leq \frac{2}{\abs{b_j}}\sqrt{\frac{2}{e\pi}}$ and consequently,
$$\frac{-2\infnorm{\ba}}{b^*}\sqrt{\frac{2}{e\pi}}\ident_m \preceq \H_\iota(\bW) \preceq \left(\twonorm{\ba}^2 + \frac{2\infnorm{\ba}}{b^*}\sqrt{\frac{2}{e\pi}}\right)\ident_m $$
where $b^* = \min_{1\leq j\leq m}\abs{b_j}$.
Moreover, as $\sigma'_\iota(\bW\bx + \bb)$ converges a.s.\ (i.e.\ except when $\binner{\bw_j}{\bx} + b_j = 0$ for some $j$) to $\sigma'(\bW\bx + \bb)$, by the dominated convergence theorem,
$$\grad \Risk(\bW) = \lim_{\iota \to \infty} \H_\iota(\bW)\bW +\lim_{\iota \to \infty} \D_\iota(\bW)\bU$$

We can immediately observe from the dominated convergence theorem that $\D_\iota(\bW) \to \D(\bW)$ as $\iota \to \infty$ with $\D(\bW)$ given in \eqref{eq:def_D}. Moreover, we let $\H(\bW) = \lim_{\iota\to\infty}\H_\iota(\bW)$, and observe that
\begin{equation}
    \frac{-2\infnorm{\ba}}{b^*}\sqrt{\frac{2}{e\pi}}\ident_m \preceq \H(\bW) \preceq \left(\twonorm{\ba}^2 + \frac{2\infnorm{\ba}}{b^*}\sqrt{\frac{2}{e\pi}}\right)\ident_m.\label{eq:est_H}
\end{equation} This finishes the proof of Lemma \ref{lem:population_gradient}.
\qed

In the case of smooth activations (Assumption \ref{assump:smooth}), we have the following bounds.
\begin{lemma}
\label{lem:H_bounded}
Let $\Risk(\bW) \coloneqq \Earg{\ell(\hat{y}(\bx;\bW,\ba,\bb),y)}$ be the unregularized population risk. Under Assumptions \emph{\ref{assump:general}\&\ref{assump:smooth}} we have
\begin{equation}
    -\beta_2\infnorm{\ba}\sqrt{2\Risk(\bW)}\ident_m \preceq \H(\bW) \preceq \left\{\beta_1^2\twonorm{\ba}^2 + \beta_2\infnorm{\ba}\sqrt{2\Risk(\bW)}\right\}\ident_m.
\end{equation}
\end{lemma}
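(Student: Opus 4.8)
The goal is to prove the two-sided bound on $\H(\bW)$ from equation~\eqref{eq:def_H}, namely that for smooth activations,
\[
    -\beta_2\infnorm{\ba}\sqrt{2\Risk(\bW)}\,\ident_m \;\psdle\; \H(\bW) \;\psdle\; \bigl\{\beta_1^2\twonorm{\ba}^2 + \beta_2\infnorm{\ba}\sqrt{2\Risk(\bW)}\bigr\}\,\ident_m.
\]
The plan is to start from the explicit expression $\H(\bW) = \Earg{(\ba\circ\sigma'(\bW\bx+\bb))(\ba\circ\sigma'(\bW\bx+\bb))^\top} + \Earg{(\hat{y}-y)\diag(\ba\circ\sigma''(\bW\bx+\bb))}$ and bound the two terms separately in the Loewner order. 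The first term is positive semidefinite, and its operator norm is controlled by $\Earg{\twonorm{\ba\circ\sigma'(\bW\bx+\bb)}^2} \le \beta_1^2\twonorm{\ba}^2$ using $\abs{\sigma'}\le\beta_1$; hence $0 \psdle \Earg{(\ba\circ\sigma')(\ba\circ\sigma')^\top} \psdle \beta_1^2\twonorm{\ba}^2\ident_m$.

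For the second term, the key step is to control the diagonal matrix $\Earg{(\hat{y}-y)\diag(\ba\circ\sigma''(\bW\bx+\bb))}$. Since it is diagonal, its spectrum is just its diagonal entries, so it suffices to bound $\abs{\Earg{(\hat{y}-y)a_j\sigma''(\binner{\bw_j}{\bx}+b_j)}}$ for each $j$. Here I would use $\abs{\sigma''}\le\beta_2$ and $\abs{a_j}\le\infnorm{\ba}$ to pull those out, then apply Cauchy--Schwarz (or Jensen): $\Earg{\abs{\hat{y}-y}} \le \sqrt{\Earg{(\hat{y}-y)^2}} = \sqrt{2\Risk(\bW)}$, using that the loss is $\ell(\hat{y},y)=\tfrac12(\hat{y}-y)^2$ so $\Risk(\bW) = \tfrac12\Earg{(\hat{y}-y)^2}$. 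This yields $\abs{\Earg{(\hat{y}-y)a_j\sigma''(\cdot)}} \le \beta_2\infnorm{\ba}\sqrt{2\Risk(\bW)}$, hence
\[
    -\beta_2\infnorm{\ba}\sqrt{2\Risk(\bW)}\,\ident_m \;\psdle\; \Earg{(\hat{y}-y)\diag(\ba\circ\sigma''(\bW\bx+\bb))} \;\psdle\; \beta_2\infnorm{\ba}\sqrt{2\Risk(\bW)}\,\ident_m.
\]
Adding the two bounds gives the claim.

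There is no serious obstacle here — the result is essentially an exercise in triangle inequality for the Loewner order plus Cauchy--Schwarz. The only mild subtlety is justifying that expectation and the Loewner order interact correctly (i.e.\ $A(\bx)\psdle B(\bx)$ pointwise implies $\Earg{A}\psdle\Earg{B}$, which follows by testing against fixed vectors $\bv$ and using linearity of expectation on the scalar $\bv^\top A(\bx)\bv$), and noting that all expectations are finite because $\sigma,\sigma',\sigma''$ are bounded and $\Risk(\bW)<\infty$. I would also remark that the same argument, combined with the estimate~\eqref{eq:est_H} already derived for ReLU, is what underlies the bounds used later; but for Lemma~\ref{lem:H_bounded} itself only the smooth case is needed, so the two-term split above suffices.
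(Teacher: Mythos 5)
Your proof is correct and follows essentially the same route as the paper: split $\H(\bW)$ into the PSD term $\Earg{\mysigmap\mysigmap^\top}$ bounded by $\beta_1^2\twonorm{\ba}^2$, and the diagonal term bounded entrywise by $\beta_2\infnorm{\ba}\Earg{\abs{\hat y-y}}$, then apply Jensen to get $\Earg{\abs{\hat y - y}}\le\sqrt{2\Risk(\bW)}$. The only cosmetic difference is that you bound the diagonal matrix entry by entry while the paper sandwiches it as a whole; these are the same computation.
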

\begin{proof}
Assumption \ref{assump:smooth} requires $\ell(\hat{y},y) = \tfrac{1}{2}(\hat{y}-y)^2$. Hence by definition of $\H$,
$$\H(\bW) = \Earg{\mysigmap\mysigmap^\top} + \Earg{(\hat{y}(\bx;\bW) - y)\diag(\mysigmapp)}.$$
The first term is positive semi-definite and it can be easily bounded:
$$0 \leq \bv^\top \Earg{\mysigmap\mysigmap^\top}\bv \leq \Earg{\twonorm{\mysigmap^2}}\twonorm{\bv}^2 \leq \beta_1^2\twonorm{\ba}^2\twonorm{\bv}^2 $$
for an arbitrary vector $\bv \in \reals^m$.  For the second term, we have
$$-\beta_2\infnorm{\ba}\Earg{\abs{\hat{y} - y}}\ident_m \preceq \Earg{(\hat{y}(\bx;\bW) - y)\diag(\mysigmapp)} \preceq \beta_2\infnorm{\ba}\Earg{\abs{\hat{y} - y}}\ident_m$$
and $\Earg{\abs{\hat{y}(\bx;\bW) - y}} \leq \sqrt{2\Risk(\bW)}$ by Jensen's inequality.
\end{proof}

\subsection{Proof of Proposition \ref{prop:gd_general_m}}
\label{appendix:proof_thm_gd_general_m}

In order to present the proof of Proposition \ref{prop:gd_general_m}, we need a uniform control over the eigenspectrum of $\H(\bW)$. In the case of ReLU (Assumption \ref{assump:relu}), this follows from \eqref{eq:est_H}.
For smooth activations (Assumption \ref{assump:smooth}), we need to establish the boundedness of $\Risk(\bW^t)$ along the trajectory. The first step towards achieving this goal is to obtain an estimate of $\RiskReg_\lambda(\bW^{t+1}) - \RiskReg_\lambda(\bW^{t})$, which depends on the local smoothness of $\RiskReg_\lambda(\bW)$.

We denote by $\grad^2 \RiskReg_\lambda(\bW)$ the full Hessian of the risk function, an $md\times md$ matrix comprised of $d\times d$ blocks $(\grad^2_{\bw_i,\bw_j}\RiskReg_\lambda(\bW))_{1\leq i,j\leq m}$ where $(\grad^2_{\bw_i,\bw_j}\RiskReg_\lambda(\bW))_{pq} = \frac{\partial^2 \RiskReg_\lambda(\bW)}{\partial (\bw_i)_p\partial (\bw_j)_q}$.

\begin{lemma}
\label{lem:hess_bounded}
Let $\Risk(\bW) \coloneqq \Earg{\ell(\hat{y}(\bx;\bW,\ba,\bb),y)}$ be the unregularized population risk. Under Assumptions \emph{\ref{assump:general}}\&\emph{\ref{assump:smooth}}, we have the following estimate for the eigenspectrum of the Hessian
\begin{equation}
    \left(\lambda - \beta_2\infnorm{\ba}\sqrt{6\Risk(\bW)}\right)\ident_{md} \preceq \grad^2 \RiskReg_\lambda(\bW) \preceq \left(\lambda + \beta_1^2\twonorm{\ba}^2 + \beta_2\infnorm{\ba}\sqrt{6\Risk(\bW)}\right)\ident_{md}.
\end{equation}
\end{lemma}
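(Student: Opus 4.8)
The plan is to differentiate $\RiskReg_\lambda(\bW)=\Risk(\bW)+\tfrac{\lambda}{2}\fronorm{\bW}^2$ twice in $\bW$, peel off the trivial Hessian $\lambda\ident_{md}$ of the regularizer, and control the Hessian of the unregularized risk by reducing each second-order term to a scalar Gaussian moment. Writing $\hat y(\bx;\bW)=\sum_i a_i\sigma(\binner{\bw_i}{\bx}+b_i)$ and differentiating under the expectation (justified by dominated convergence, using the boundedness of $\sigma,\sigma',\sigma''$ and the square loss exactly as in the proof of Lemma~\ref{lem:population_gradient}), the blockwise computation $\grad^2_{\bw_i,\bw_j}\Risk(\bW)=\Earg{(v_iv_j+(\hat y-y)(\ba\circ\sigma''(\bW\bx+\bb))_i\,[i=j])\,\bx\bx^\top}$ assembles into the Kronecker form
\[
\grad^2\Risk(\bW)=\Earg{\big(\bv\bv^\top+(\hat y-y)\diag(\ba\circ\sigma''(\bW\bx+\bb))\big)\otimes\bx\bx^\top},\qquad \bv\coloneqq\ba\circ\sigma'(\bW\bx+\bb),
\]
so that $\grad^2\RiskReg_\lambda(\bW)=\lambda\ident_{md}+\grad^2\Risk(\bW)$, and it suffices to bound $\mbf z^\top\grad^2\Risk(\bW)\mbf z$ for an arbitrary unit vector $\mbf z\in\reals^{md}$, split into $d$-dimensional blocks $\mbf z=(\mbf z_1,\dots,\mbf z_m)$ indexed by the neurons.

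Expanding with the Kronecker identity gives
\[
\mbf z^\top\grad^2\Risk(\bW)\mbf z=\Earg{\Big(\textstyle\sum_i v_i\binner{\mbf z_i}{\bx}\Big)^2}+\Earg{(\hat y-y)\textstyle\sum_i(\ba\circ\sigma''(\bW\bx+\bb))_i\binner{\mbf z_i}{\bx}^2}.
\]
The first term is nonnegative, and by Cauchy--Schwarz in $\reals^m$ together with the pointwise bound $\twonorm{\bv}\le\beta_1\twonorm{\ba}$ it is at most $\beta_1^2\twonorm{\ba}^2\,\Earg{\sum_i\binner{\mbf z_i}{\bx}^2}=\beta_1^2\twonorm{\ba}^2$, using $\Earg{\binner{\mbf z_i}{\bx}^2}=\twonorm{\mbf z_i}^2$ and $\sum_i\twonorm{\mbf z_i}^2=\twonorm{\mbf z}^2=1$. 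For the second term, the pointwise bound $\abs{(\ba\circ\sigma''(\bW\bx+\bb))_i}\le\beta_2\infnorm{\ba}$ and then Cauchy--Schwarz in $L_2$ give an absolute value of at most $\beta_2\infnorm{\ba}\sqrt{\Earg{(\hat y-y)^2}}\,\sqrt{\Earg{Q^2}}$, where $Q\coloneqq\sum_i\binner{\mbf z_i}{\bx}^2=\bx^\top\boldsymbol{Q}\bx$ with $\boldsymbol{Q}\coloneqq\sum_i\mbf z_i\mbf z_i^\top\succeq 0$ and $\Tr\boldsymbol{Q}=1$.

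The only step requiring a little care — and essentially the only nontrivial ingredient — is the fourth-moment bound on the Gaussian quadratic form $Q$: using $\Earg{(\bx^\top\boldsymbol{Q}\bx)^2}=(\Tr\boldsymbol{Q})^2+2\Tr(\boldsymbol{Q}^2)$ for $\bx\sim\Gsn(0,\ident_d)$ and $\Tr(\boldsymbol{Q}^2)\le(\Tr\boldsymbol{Q})^2$ (valid since $\boldsymbol{Q}$ is PSD) yields $\Earg{Q^2}\le 3(\Tr\boldsymbol{Q})^2=3$. Combined with $\Earg{(\hat y-y)^2}=2\Risk(\bW)$, the second term is bounded in absolute value by $\beta_2\infnorm{\ba}\sqrt{6\Risk(\bW)}$, so $-\beta_2\infnorm{\ba}\sqrt{6\Risk(\bW)}\le\mbf z^\top\grad^2\Risk(\bW)\mbf z\le\beta_1^2\twonorm{\ba}^2+\beta_2\infnorm{\ba}\sqrt{6\Risk(\bW)}$ for every unit $\mbf z$; adding $\lambda\ident_{md}$ from the regularizer gives the claim. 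I do not expect a genuine obstacle here: beyond the moment identity above, the main subtlety is that $\bv(\bx)$ and the coefficients $\ba\circ\sigma''(\bW\bx+\bb)$ are correlated with $\bx$, so they must be handled by the deterministic, $\bW$-uniform pointwise bounds $\twonorm{\bv}\le\beta_1\twonorm{\ba}$ and $\infnorm{\ba\circ\sigma''}\le\beta_2\infnorm{\ba}$ rather than by any independence, after which all remaining expectations reduce to elementary moments of the linear forms $\binner{\mbf z_i}{\bx}$.
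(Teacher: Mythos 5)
Your proof is correct and follows essentially the same strategy as the paper's: differentiate under the expectation, write the Hessian of the unregularized risk as a Kronecker form with a PSD (first-order) term and a signed (second-order) curvature term, bound the first term by $\beta_1^2\twonorm{\ba}^2$ via Cauchy--Schwarz, and bound the curvature term by $\beta_2\infnorm{\ba}\sqrt{6\Risk(\bW)}$ via Cauchy--Schwarz in $L_2$ and a Gaussian fourth-moment estimate, before adding $\lambda\ident_{md}$ from the ridge penalty. The one cosmetic difference: the paper exploits the block-diagonality of the curvature term and bounds each $d\times d$ block separately, so it only needs the scalar fourth moment $\Earg{(\bx^\top\bv)^4}=3\twonorm{\bv}^4$, whereas you bound the full $md$-dimensional quadratic form at once and therefore need the (correct) extra steps $\Earg{(\bx^\top\boldsymbol{Q}\bx)^2}=(\Tr\boldsymbol{Q})^2+2\Tr(\boldsymbol{Q}^2)$ together with $\Tr(\boldsymbol{Q}^2)\le(\Tr\boldsymbol{Q})^2$ for PSD $\boldsymbol{Q}$ — both routes yield the same constant.
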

\begin{proof}
By the chain rule for derivatives, we have
$$\grad^2_{\bw_i,\bw_j}\Risk(\bW) = \Earg{\left\{a_ia_j\sigma'(\binner{\bw_i}{\bx} + b_i)\sigma'(\binner{\bw_j}{\bx} + b_j) + (\hat{y}(\bx;\bW) - y)\delta_{ij}a_i\sigma''(\binner{\bw_i}{\bx} + b_i)\right\}\bx\bx^\top},$$
where $\delta_{ij}$ is the Kronecker delta. As a result, in matrix form, the Hessian reads
$$\grad^2 \Risk(\bW) = \Earg{\mysigmap\mysigmap^\top \otimes \bx\bx^\top} + \Earg{(\hat{y}(\bx;\bW) - y)\diag(\mysigmapp)\otimes \bx\bx^\top}.$$
The first term is a positive semi-definite matrix with bounded spectral norm; indeed, for any $\bV \in \reals^{m \times d}$
\begin{gather*}
 0\leq \vect(\bV)^\top \Earg{\mysigmap\mysigmap^\top  \otimes \bx\bx^\top}\vect(\bV) = \Earg{\frobinner{\mysigmap \bx^\top}{\bV}^2} = \\
= \Earg{\binner{\mysigmap}{\bV\bx}^2} \leq \Earg{\twonorm{\mysigmap}^2\twonorm{\bV\bx}^2}\leq \beta_1^2\twonorm{\ba}^2\fronorm{\bV}^2.
\end{gather*}
The second term is bounded by the following:
\begin{gather*}
    \left| \bv^\top \Earg{(\hat{y}(\bx;\bW) - y)a_j\sigma''(\binner{\bw_j}{\bx})\bx\bx^\top } \bv \right| = \left|\Earg{(\hat{y}(\bx;\bW) - y)a_j\sigma''(\binner{\bw_j}{\bx} + b_j)(\bx^\top \bv)^2} \right|\\
    \leq \beta_2\infnorm{\ba}\Earg{(\hat{y}(\bx;\bW) - y)^2}^\frac{1}{2}\Earg{(\bx^\top \bv)^4}^\frac{1}{2} = \beta_2\infnorm{\ba}\sqrt{2\Risk(\bW)}\sqrt{3\norm{\bv}^4},
\end{gather*}
for all $1 \leq j \leq m$ and for any $\bv \in \reals^d$, which completes the proof.
\end{proof}

\begin{lemma}
\label{lem:smoothness_bound}
In the same setting as the previous Lemma, for any $\bW,\bW' \in \reals^{m \times d}$ we have
\begin{align*}
    \RiskReg_\lambda(\bW') \leq \RiskReg_\lambda(\bW) + \frobinner{\grad \RiskReg_\lambda(\bW)}{\bW'-\bW} &+ \frac{(\lambda + \beta_1^2\twonorm{\ba}^2 + 2\beta_2\infnorm{\ba}\sqrt{3\Risk(\bW)})}{2}\fronorm{\bW'-\bW}^2\\
    &+\frac{\sqrt{6}\beta_2\beta_1\infnorm{\ba}\twonorm{\ba}}{2}\fronorm{\bW'-\bW}^3.
\end{align*}
\end{lemma}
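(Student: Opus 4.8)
The plan is a second-order Taylor expansion of $\RiskReg_\lambda$ along the segment joining $\bW$ and $\bW'$, with the integral form of the remainder, fed by the Hessian bound of Lemma~\ref{lem:hess_bounded}. The only subtlety is that this Hessian bound is not uniform in the base point---it grows like $\sqrt{\Risk(\cdot)}$---so the cubic term in the statement arises from controlling how much $\Risk$ can change along the segment. Concretely, set $\bW_s \coloneqq \bW + s(\bW'-\bW)$ and $\phi(s) \coloneqq \RiskReg_\lambda(\bW_s)$ for $s \in [0,1]$. Under Assumption~\ref{assump:smooth} the risk is $C^2$ (differentiation under the expectation is justified since $\sigma$ is $C^2$ with bounded derivatives and $\abs{y}\le K$ a.s.), so Taylor's theorem gives $\phi(1) = \phi(0) + \phi'(0) + \int_0^1 (1-s)\phi''(s)\,ds$, where $\phi'(0) = \frobinner{\grad\RiskReg_\lambda(\bW)}{\bW'-\bW}$ and $\phi''(s) = \vect(\bW'-\bW)^\top \grad^2\RiskReg_\lambda(\bW_s)\vect(\bW'-\bW) \le \maxeig{\grad^2\RiskReg_\lambda(\bW_s)}\fronorm{\bW'-\bW}^2$. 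By Lemma~\ref{lem:hess_bounded},
\[
\maxeig{\grad^2\RiskReg_\lambda(\bW_s)} \le \lambda + \beta_1^2\twonorm{\ba}^2 + \beta_2\infnorm{\ba}\sqrt{6\Risk(\bW_s)}.
\]

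\textbf{Key step: controlling $\Risk$ along the segment.} Since $\sigma$ is $\beta_1$-Lipschitz and the biases cancel, $\abs{\hat{y}(\bx;\bW_s) - \hat{y}(\bx;\bW)} \le \beta_1\sum_i \abs{a_i}\,\abs{\binner{(\bw_s)_i - \bw_i}{\bx}} \le \beta_1\twonorm{\ba}\,\twonorm{(\bW_s - \bW)\bx}$ by Cauchy--Schwarz; taking expectations and using $\Earg{\twonorm{\boldsymbol{A}\bx}^2} = \fronorm{\boldsymbol{A}}^2$ for $\bx\sim\mathcal{N}(0,\ident_d)$ yields the $L^2$-Lipschitz estimate $\staticnorm{\hat{y}(\cdot;\bW_s) - \hat{y}(\cdot;\bW)}_{L^2} \le \beta_1\twonorm{\ba}\,\fronorm{\bW_s - \bW} = s\,\beta_1\twonorm{\ba}\,\fronorm{\bW'-\bW}$. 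Combining the triangle inequality in $L^2$ with $(a+b)^2 \le 2a^2 + 2b^2$ then gives $\Risk(\bW_s) \le 2\Risk(\bW) + \beta_1^2\twonorm{\ba}^2 s^2 \fronorm{\bW'-\bW}^2$, hence $\sqrt{6\Risk(\bW_s)} \le 2\sqrt{3\Risk(\bW)} + \sqrt{6}\,\beta_1\twonorm{\ba}\,\fronorm{\bW'-\bW}$ using $s\le 1$. Substituting into the bound on $\phi''(s)$ gives, uniformly in $s\in[0,1]$,
\[
\phi''(s) \le \big(\lambda + \beta_1^2\twonorm{\ba}^2 + 2\beta_2\infnorm{\ba}\sqrt{3\Risk(\bW)}\big)\fronorm{\bW'-\bW}^2 + \sqrt{6}\,\beta_2\beta_1\infnorm{\ba}\twonorm{\ba}\fronorm{\bW'-\bW}^3,
\]
and integrating against $(1-s)$ over $[0,1]$, which contributes the factor $\int_0^1(1-s)\,ds = \tfrac12$, produces exactly the claimed inequality.

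\textbf{Main obstacle.} The crux is the $L^2$-Lipschitz estimate for $\bW \mapsto \hat{y}(\cdot;\bW)$ and, more precisely, its interaction with the risk-dependent Hessian bound: since $\grad^2\RiskReg_\lambda$ has no global operator-norm bound, one cannot close the remainder integral without showing that $\Risk$ does not blow up along the segment. Everything else is bookkeeping of constants. It is also worth noting that the cubic term is genuinely necessary---the variation of $\maxeig{\grad^2\RiskReg_\lambda(\bW_s)}$ along the segment is itself linear in $\fronorm{\bW_s-\bW}$---so this is a bona fide refinement of the textbook descent lemma rather than a direct application of it.
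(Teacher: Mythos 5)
Your proof is correct and takes essentially the same route as the paper: a second-order Taylor expansion together with the risk-dependent Hessian bound from Lemma~\ref{lem:hess_bounded}, and the same control of $\Risk(\bW_s)$ along the segment via the $L^2$-Lipschitz estimate $\staticnorm{\hat y(\cdot;\bW_s)-\hat y(\cdot;\bW)}_{L^2}\le \beta_1\twonorm{\ba}\fronorm{\bW_s-\bW}$, which is exactly the paper's step $(\ast)$. The only cosmetic difference is that you use the integral form of the Taylor remainder while the paper invokes the Lagrange (mean-value) form with a single intermediate point $\bW_\alpha$; both are valid and yield the same constants.
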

\begin{proof}
By  Taylor's theorem,
\begin{gather}
    \RiskReg_\lambda(\bW') = \RiskReg_\lambda(\bW) + \frobinner{\grad \RiskReg_\lambda(\bW)}{\bW'-\bW} + \frac{1}{2}\binner{\vect(\bW'-\bW)}{\grad^2 \RiskReg_\lambda(\bW_\alpha)\vect(\bW'-\bW)}
\end{gather}
for some $\bW_\alpha = \bW + \alpha(\bW'-\bW)$, $\alpha \in [0,1]$. The last term can be estimated using Lemma \ref{lem:hess_bounded}:
\begin{gather}
    \binner{\vect(\bW'-\bW)}{\grad^2 \RiskReg_\lambda({\bW}_\alpha)\vect(\bW'-\bW)} \leq \twonorm{\grad^2\RiskReg_\lambda(\bW_\alpha)}\fronorm{\bW'-\bW}^2 \leq \nonumber\\
    \leq  \left(\lambda + \beta_1^2\twonorm{\ba}^2 + \beta_2\infnorm{\ba}\sqrt{6\Risk(\bW_\alpha)})\right) \fronorm{\bW'-\bW}^2,\label{eq:descent}
\end{gather}
Next, we provide an upper bound for $\Risk(\bW_\alpha)$:
\begin{align}
    2(\Risk({\bW}_\alpha) - \Risk(\bW)) &= \Earg{(\hat{y}(\bx;{\bW}_\alpha) - y)^2 - (\hat{y}(\bx;\bW)-y)^2} \nonumber\\
    &= \Earg{\le(\hat{y}(\bx;{\bW}_\alpha) - \hat{y}(\bx;\bW)\ri)^2} + 2\Earg{\le(\hat{y}(\bx;{\bW}_\alpha)-\hat{y}(\bx;\bW)\ri) \left(\hat{y}(\bx;\bW) - y\right)} \nonumber \\
    &\leq \Earg{\le(\hat{y}(\bx;{\bW}_\alpha) - \hat{y}(\bx;\bW)\ri)^2} + 2\Earg{\le(\hat{y}(\bx;{\bW}_\alpha) - \hat{y}(\bx;\bW)\ri)^2}^\frac{1}{2}\sqrt{2\Risk(\bW)} \nonumber \\
    &\stackrel{(\ast)}{\leq} \beta_1^2\twonorm{\ba}^2\fronorm{{\bW}_\alpha - \bW}^2 + 2\beta_1\twonorm{\ba}\fronorm{{\bW}_\alpha - \bW}\sqrt{2\Risk(\bW)} \nonumber \\
    &\leq \beta_1^2\twonorm{\ba}^2\fronorm{\bW' - \bW}^2 + 2\beta_1\twonorm{\ba}\fronorm{\bW' - \bW}\sqrt{2\Risk(\bW)} \nonumber \\
    &\leq 2\beta_1^2\twonorm{\ba}^2\fronorm{\bW' - \bW}^2 + 2\Risk(\bW), \label{eq:estimate_R}
\end{align}
where the last inequality follows from Young's inequality and $(\ast)$ is due to the estimate below:
\begin{align}
    \Earg{(\hat{y}(x;{\bW}_\alpha))- \hat{y}(\bx;\bW))^2} &= \Earg{\left(\sum_{j=1}^{m}a_j\left\{\sigma(\binner{(\bw_{\alpha})_j}{\bx} + b_j) - \sigma(\binner{\bw_j}{\bx} + b_j)\right\}\right)^2}\nonumber\\
    &\leq \sum_{j=1}^{m}a_j^2\Earg{\left(\sigma(\binner{(\bw_\alpha)_j}{\bx} + b_j) - \sigma(\binner{\bw_j}{\bx} + b_j)\right)^2}\nonumber\\
    &\leq \sum_{j=1}^{m}a_j^2\beta_1^2\Earg{\sum_{j=1}^{m}\binner{(\bw_\alpha)_j - \bw_j}{\bx}^2}\nonumber\\
    &= \beta_1^2\twonorm{\ba}^2\fronorm{{\bW}_\alpha - \bW}^2
     .\label{eq:diff_y_bound},
\end{align}
Plugging \eqref{eq:estimate_R} into \eqref{eq:descent} completes the proof.
\end{proof}

In order to prove Proposition \ref{prop:gd_general_m} we will additionally need a bound on the norm of the iterates $\{\bW^t\}_{t \geq 0}$ of the trajectory.
\begin{lemma}
\label{lem:gd_bounded_iterates}
Let $\{\bW^t\}_{t \geq 0}$ be the sequence of PGD iterates \eqref{eq:gd}. Suppose that there exists $T \geq 1$ such that $\RiskReg_\lambda(\bW^t)$ is non-increasing in $t=0,1,\ldots, T$. Under Assumptions \emph{\ref{assump:general}\&\ref{assump:smooth}}, for  $\eta=m\tilde{\eta}$
\begin{gather}
\lambda > \beta_2\infnorm{\ba}\sqrt{2\RiskReg_\lambda(\bW^0)} \quad \text{and} \quad \eta < \left(\lambda + \beta_1^2\twonorm{\ba}^2 + \beta_2\infnorm{\ba}\sqrt{2\RiskReg_\lambda(\bW^0)}\right)^{-1} \ ,\label{app_eq:lambda_eta}
\end{gather}
we have
\begin{equation}\label{app_eq:PGD_iterates}
    \fronorm{\bW^t} \leq (1-\tilde{\eta}\gamma)^t\fronorm{\bW^0} + \frac{\beta_1m\twonorm{\ba}\Earg{\twonorm{\grad g_\epsilon^\top \bU}}}{\gamma} \qquad \forall \, t\leq T \ ,
\end{equation}
where $\gamma/m = \lambda - \beta_2\infnorm{\ba}\sqrt{2\RiskReg_\lambda(\bW^0)}$.
\end{lemma}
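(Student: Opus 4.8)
The plan is to unroll the PGD recursion \eqref{eq:gd} using the gradient decomposition of Lemma~\ref{lem:population_gradient} and bound the two resulting terms. Substituting $\grad\RiskReg_\lambda(\bW^t) = (\H(\bW^t)+\lambda\ident_m)\bW^t + \D(\bW^t)\bU$ into \eqref{eq:gd}, and applying the triangle inequality together with sub-multiplicativity of the operator norm, gives
\begin{equation*}
\fronorm{\bW^{t+1}} \leq \twonorm{\ident_m - \eta(\H(\bW^t)+\lambda\ident_m)}\,\fronorm{\bW^t} + \eta\fronorm{\D(\bW^t)\bU}.
\end{equation*}
It then suffices to show the operator-norm prefactor is at most $1-\tilde\eta\gamma \in (0,1)$ for every $t \leq T$ (recall $\eta = m\tilde\eta$), and that the additive term is bounded uniformly in $t$; the estimate \eqref{app_eq:PGD_iterates} follows by iterating this inequality and summing a geometric series.

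For the contraction prefactor, I would first invoke the hypothesis that $t \mapsto \RiskReg_\lambda(\bW^t)$ is non-increasing on $\{0,\dots,T\}$ to deduce $\Risk(\bW^t) \leq \RiskReg_\lambda(\bW^t) \leq \RiskReg_\lambda(\bW^0)$ for all such $t$, and then feed this into Lemma~\ref{lem:H_bounded} to obtain the trajectory-uniform bounds
\begin{equation*}
\tfrac{\gamma}{m}\ident_m \psdle \H(\bW^t)+\lambda\ident_m \psdle \bigl(\lambda + \beta_1^2\twonorm{\ba}^2 + \beta_2\infnorm{\ba}\sqrt{2\RiskReg_\lambda(\bW^0)}\bigr)\ident_m,
\end{equation*}
where $\gamma/m = \lambda - \beta_2\infnorm{\ba}\sqrt{2\RiskReg_\lambda(\bW^0)}$. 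Condition \eqref{app_eq:lambda_eta} guarantees $\gamma > 0$ and makes $\eta$ times the upper spectral bound strictly less than $1$; consequently every eigenvalue $\mu$ of $\H(\bW^t)+\lambda\ident_m$ satisfies $\eta\mu \in (0,1)$, so $\abs{1-\eta\mu} = 1-\eta\mu \leq 1-\eta\gamma/m = 1-\tilde\eta\gamma$, and maximizing over $\mu$ bounds the operator norm by $1-\tilde\eta\gamma \in (0,1)$.

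For the additive term, under Assumption~\ref{assump:smooth} we have $\ell(\hat y,y) = \tfrac12(\hat y - y)^2$, hence $\partial^2_{12}\ell \equiv -1$, so by \eqref{eq:def_D} the matrix $\D(\bW^t)\bU$ is an expectation of rank-one matrices of the form $(\ba\circ\sigma'(\bW^t\bx+\bb))(\grad g_\epsilon^\top\bU)$; Jensen's inequality, the identity $\fronorm{\bu\bv^\top} = \twonorm{\bu}\twonorm{\bv}$, and $\abs{\sigma'}\leq\beta_1$ then give $\fronorm{\D(\bW^t)\bU} \leq \beta_1\twonorm{\ba}\,\Earg{\twonorm{\grad g_\epsilon^\top\bU}}$, a bound independent of $t$. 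Plugging both estimates into the displayed recursion and iterating gives $\fronorm{\bW^t} \leq (1-\tilde\eta\gamma)^t\fronorm{\bW^0} + m\tilde\eta\beta_1\twonorm{\ba}\Earg{\twonorm{\grad g_\epsilon^\top\bU}}\sum_{s=0}^{t-1}(1-\tilde\eta\gamma)^s$, and bounding the partial geometric sum by $(\tilde\eta\gamma)^{-1}$ yields exactly \eqref{app_eq:PGD_iterates}.

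All of these steps are routine once Lemmas~\ref{lem:population_gradient} and~\ref{lem:H_bounded} are in hand; the one point requiring care is that the contraction constant $1-\tilde\eta\gamma$ must be valid \emph{simultaneously} at every step $t \leq T$, which is exactly why the monotonicity hypothesis on $\RiskReg_\lambda$ is needed --- it upgrades the state-dependent spectral bound of Lemma~\ref{lem:H_bounded} into the trajectory-uniform one used above. (In the ReLU setting one would instead use the already trajectory-independent estimate \eqref{eq:est_H}, but that is not required here.)
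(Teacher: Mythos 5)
Your proof is correct and follows essentially the same route as the paper's: unroll the PGD recursion via the decomposition of Lemma~\ref{lem:population_gradient}, use the monotonicity hypothesis together with Lemma~\ref{lem:H_bounded} to get the trajectory-uniform spectral sandwich $\tfrac{\gamma}{m}\ident_m \psdle \H(\bW^t)+\lambda\ident_m \psdle \varrho\ident_m$ and hence the contraction factor $1-\tilde\eta\gamma$, bound $\fronorm{\D(\bW^t)\bU}$ by $\beta_1\twonorm{\ba}\Earg{\twonorm{\grad g_\epsilon^\top\bU}}$, and iterate with a geometric-sum bound. No gaps.
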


\begin{proof}
The update rule of PGD reads
\begin{equation}
    \label{eq:gd_theta_rec}
    \bW^{t+1} = (\ident_m - \eta(\H(\bW^t) + \lambda \ident_m))\bW^t - \eta\D(\bW^t)\bU.
\end{equation}
Since $\Risk(\bW^t) \leq \RiskReg_\lambda(\bW^t) \leq \RiskReg_\lambda(\bW^0)$, for all $ t\leq T$, we obtain from Lemma \ref{lem:H_bounded}
\begin{gather}\label{app_eq:H+lbI_estimates}
\left(\lambda-\beta_2\infnorm{\ba}\sqrt{2\RiskReg_\lambda(\bW^0)}\right)\ident_m \preceq \H(\bW^t) + \lambda \ident_m\preceq \left(\lambda + \beta_1^2\twonorm{\ba}^2+\beta_2\infnorm{\ba}\sqrt{2\RiskReg_\lambda(\bW^0)}\right)\ident_m,
\end{gather}
and for $\eta$ as in \eqref{app_eq:lambda_eta} we have
$$0 \preceq \ident_m-\eta(\H(\bW^t) + \lambda \ident_m) \preceq (1-\tilde{\eta}\gamma)\ident_m.$$ Therefore,
\begin{equation}
    \label{eq:theta_norm_rec}
    \fronorm{\bW^{t+1}} \leq (1-\tilde{\eta}\gamma)\fronorm{\bW^t} + m\tilde{\eta}\fronorm{\D(\bW^t)\bU}.
\end{equation}
and we can easily bound the last term
\begin{align}
    \label{eq:dnorm}
    \fronorm{\D(\bW^t)\bU} &= \fronorm{\Earg{\sigma'_{\ba,\bb}(\bW^t\bx)\grad g_\epsilon^\top \bU}} \leq \Earg{\fronorm{\sigma'_{\ba,\bb}(\bW^t\bx)\grad g_\epsilon^\top \bU}}\nonumber \\
    &\leq \beta_1\twonorm{\ba}\Earg{\twonorm{\grad g_\epsilon^\top \bU}}.
\end{align}
The statement of the lemma then follows by plugging the above bound back into \eqref{eq:theta_norm_rec} and expanding the recursion.
\end{proof}

We are now ready to prove Proposition \ref{prop:gd_general_m}.

\begin{proof}[{{\bf Proposition \ref{prop:gd_general_m}}}]
Along the proof, we set  $\varrho \coloneqq \lambda + \beta_1^2\twonorm{\ba}^2 + \beta_2\infnorm{\ba}\sqrt{2\RiskReg_\lambda(\bW^0)}$. Notice that in the setting of Proposition \ref{prop:gd_general_m}, we have $\varrho \asymp \tfrac{\tilde{\lambda}}{m} = \lambda$. We will consider the event where $\fronorm{\bW^0} \leq \sqrt{2m}$, which happens with probability at least $1-\exp(-Cmd)$.

\textit{Smooth activations}. We begin by considering the following condition
\begin{equation}
    \lambda \geq \frac{\gamma}{m} + \beta_2\infnorm{\ba}\sqrt{2\RiskReg_\lambda(\bW^0)}.\label{eq:lambda_condition}
\end{equation}
Solving the quadratic equation to find the range of $\lambda$ where the above condition is satisfied yields
$$\lambda \geq \frac{\gamma}{m} + \frac{\beta_2^2\infnorm{\ba}^2\fronorm{\bW^0}^2}{2} + \sqrt{\frac{\beta_2^4\infnorm{\ba}^4\fronorm{\bW^0}^4}{4} + \tfrac{\gamma}{m}\beta_2^2\infnorm{\ba}^2\fronorm{\bW^0}^2 + 2\beta_2^2\infnorm{\ba}^2\Risk(\bW^0)}$$
In the setting of Proposition \ref{prop:gd_general_m} with $\infnorm{\ba} \lesssim m^{-1}$, the above simplifies to
$$\lambda \geq \frac{1 + \gamma + \sqrt{1 + 2\gamma + 2\Risk(\bW^0)}}{m},$$
which is satisfied in Proposition \ref{prop:gd_general_m}. Thus we will assume \eqref{eq:lambda_condition} holds in the rest of the proof for smooth activations.

We will use induction on $t$ to show that $\RiskReg_\lambda(\bW^t)$ is non-increasing. The base case is trivial, and assuming the claim holds up to time $t$, Lemma \ref{lem:smoothness_bound} implies
\begin{align}
\RiskReg_\lambda(\bW^{t+1}) \leq \RiskReg_\lambda(\bW^t) &- \eta\fronorm{\grad \RiskReg_\lambda(\bW^t)}^2 + C\eta^2\varrho\fronorm{\grad \RiskReg_\lambda(\bW^t)}^2 \nonumber \\
&+C\beta_1\beta_2\infnorm{\ba}\twonorm{\ba}\eta^3\fronorm{\grad \RiskReg_\lambda(\bW^t)}^3.
\end{align}
Moreover, we have the following upper bound on gradient norm
\begin{align}
    \fronorm{\grad \RiskReg_\lambda(\bW^t)} &\stackrel{\text{(a)}}{\leq} \twonorm{\H(\bW^t) + \lambda\ident_m}\fronorm{\bW^t} + \fronorm{\D(\bW^t)\bU}\nonumber\\
    &\stackrel{\text{(b)}}{\lesssim} \varrho\fronorm{\bW^0} + \beta_1\varsigma\twonorm{\ba}\left(\frac{m\varrho}{\gamma} + 1\right)\label{eq:gradient_bound}
\end{align}
where (a) follows from the closed form of the gradient \eqref{eq:population_gradient} and (b) follows from \eqref{app_eq:PGD_iterates}, \eqref{app_eq:H+lbI_estimates} and \eqref{eq:dnorm}. Thus with a choice of $\eta \lesssim \left(\varrho\fronorm{\bW^0}\twonorm{\ba} + \beta_1\varsigma\twonorm{\ba}^2(\tfrac{m\varrho}{\gamma} + 1)\right)^{-1}$ we have $\eta\twonorm{\ba}\fronorm{\grad \RiskReg_\lambda(\bW)} \leq 1$. Consequently, with $\eta\twonorm{\ba}\fronorm{\grad \RiskReg_\lambda(\bW)} \leq 1$,
$$\RiskReg_\lambda(\bW^{t+1}) \leq \RiskReg_\lambda(\bW^t) - \eta\fronorm{\grad \RiskReg_\lambda(\bW^t)}^2 + C\eta^2(\varrho + \beta_1\beta_2\infnorm{\ba})\fronorm{\grad \RiskReg_\lambda(\bW^t)}^2.$$
Therefore, with a choice of $\eta \lesssim  (\varrho + \beta_1\beta_2\infnorm{\ba})^{-1}$, we will have
$$\RiskReg_\lambda(\bW^{t+1}) \leq \RiskReg_\lambda(\bW^t) - C\eta\fronorm{\grad \RiskReg_\lambda(\bW^t)}^2.$$
As $\twonorm{\ba} \lesssim m^{-1/2}$, $\fronorm{\bW^0} \lesssim m^{1/2}$, and $\varrho \asymp \tfrac{\lambda}{m}$, we can simplify the two conditions to $\eta \lesssim m(\tilde{\lambda} + \tfrac{\tilde{\lambda}\varsigma}{\gamma} + \varsigma)^{-1}$ and $\eta \lesssim m\tilde{\lambda}^{-1}$ respectively,
hence proof of the induction is complete.

Finally, recall the update rule of PGD \eqref{eq:gd_theta_rec}:
$$\bW^{t+1} = (\ident_m - \eta(\H(\bW^t) + \lambda \ident_m))\bW^t - \eta\D(\bW^t)\bU.$$
By projecting each row of this recursion onto the orthogonal complement of the principal subspace, we have
$$\bW^{t+1}_\perp = (\ident_m - \eta(\H(\bW^t) + \lambda \ident_m))\bW^t_\perp\ .$$
Again from \eqref{app_eq:H+lbI_estimates}, we have that for $\eta < \varrho^{-1}$,
$$0 \preceq \ident_m-\eta(\H(\bW) + \lambda \ident_m) \preceq (1-\tilde{\eta}\gamma)\ident_m\ ,$$
therefore
$$\fronorm{\bW^{t+1}_\perp} \leq (1-\eta\gamma)\fronorm{\bW^t_\perp}.$$
We have shown \eqref{eq:pgd-conv} and the proof for smooth activations is complete.

\textit{ReLU activation}.
By \eqref{eq:est_H}, for  $\lambda \geq \gamma + \frac{2\infnorm{\ba}}{b^*}\sqrt{\frac{2}{e\pi}}$ and $\eta < (\lambda + \twonorm{\ba}^2 + \frac{2\infnorm{\ba}}{b^*}\sqrt{\frac{2}{e\pi}})^{-1}$ we have
$$0 \preceq \ident_m - \eta(\H(\bW) + \lambda\ident_m) \preceq (1-\eta\gamma)\ident_m.$$
Notice that in the setting of Proposition \ref{prop:gd_general_m}, $\twonorm{\ba}^2 + \infnorm{\ba} \lesssim \lambda$, thus $\eta \lesssim \lambda^{-1}$ suffices for the above inequality to hold. The rest of the proof follows similarly to the smooth case.
\end{proof}

\section{Proofs of Section \ref{sec:sgd}}
\label{appendix:proofs_sgd}
We begin by characterizing the tail behavior of the stochastic gradient noise in the SGD updates \eqref{eq:sgd} through the following lemma.

\begin{lemma}
\label{lem:sgd_noise_subg}
For any fixed $\bW \in \reals^{m \times k}$, let 
\[\bGamma := \grad \ell(\hat{y}(\bx;\bW), y) - \Earg{\grad \ell(\hat{y}(\bx;\bW),y)}
\]
denote the zero-mean stochastic noise in the gradient of the loss function $\ell$ when $(\bx,y)$ are generated according to Assumption \emph{\ref{assump:general}}, and recall that
$$\grad \ell(\hat{y}(\bx;\bW),y) = \partial_1\ell(\hat{y}(\bx;\bW),y)\mysigmap \bx^\top.$$
Suppose $\sup_{\hat{y},y}\abs{\partial_1\ell(\hat{y},y)} \leq \varkappa$. Then for any $\bV \in \reals^{m\times d}$, the zero-mean random variable $\frobinner{\bV}{\bGamma}$ is $C\beta_1\varkappa\twonorm{\ba}\fronorm{\bV}$-sub-Gaussian.
\end{lemma}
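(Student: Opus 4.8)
The plan is to bound the sub-Gaussian Orlicz norm (equivalently, the MGF) of $\frobinner{\bV}{\bGamma}$ by first analyzing the \emph{uncentered} quantity $Y \coloneqq \frobinner{\bV}{\grad \ell(\hat{y}(\bx;\bW),y)}$ — with $\bW$ fixed, all randomness residing in $(\bx,y)$ — and then using that subtracting the mean from a sub-Gaussian random variable costs only an absolute constant. From the given form $\grad \ell(\hat{y}(\bx;\bW),y) = \partial_1\ell(\hat{y}(\bx;\bW),y)\,\mysigmap\,\bx^\top$ we have $Y = \partial_1\ell(\hat{y}(\bx;\bW),y)\,\binner{\mysigmap}{\bV\bx}$. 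Using $\sup_{\hat{y},y}\abs{\partial_1\ell(\hat{y},y)}\le\varkappa$ and $\sup_z\abs{\sigma'(z)}\le\beta_1$ (so $\twonorm{\mysigmap}\le\beta_1\twonorm{\ba}$), Cauchy--Schwarz gives the pointwise bound $\abs{Y}\le\varkappa\beta_1\twonorm{\ba}\,\twonorm{\bV\bx}$. This absorbs all the dependence on $\hat{y}$, $y$ (hence on $\epsilon$) into the constant $\varkappa$, so it remains only to control the single random variable $\twonorm{\bV\bx}$ with $\bx\sim\mathcal{N}(0,\ident_d)$.

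The key point is that $\bx\mapsto\twonorm{\bV\bx}$ is Lipschitz with constant $\twonorm{\bV}$ (the operator norm), since $\abs{\twonorm{\bV\bx}-\twonorm{\bV\bx'}}\le\twonorm{\bV(\bx-\bx')}\le\twonorm{\bV}\twonorm{\bx-\bx'}$. By Gaussian concentration for Lipschitz functions, $\twonorm{\bV\bx}-\Earg{\twonorm{\bV\bx}}$ is $\twonorm{\bV}$-sub-Gaussian, while $\Earg{\twonorm{\bV\bx}}\le\sqrt{\Earg{\twonorm{\bV\bx}^2}}=\fronorm{\bV}$. Since $\twonorm{\bV}\le\fronorm{\bV}$, this shows $\twonorm{\bV\bx}$ is itself sub-Gaussian with parameter $\lesssim\fronorm{\bV}$; combined with the pointwise bound, $Y$ is sub-Gaussian with parameter $\lesssim\varkappa\beta_1\twonorm{\ba}\fronorm{\bV}$. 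It is worth noting that the cruder estimate $\twonorm{\bV\bx}\le\twonorm{\bV}\twonorm{\bx}$ would be too lossy, as $\twonorm{\bx}\asymp\sqrt d$ with high probability; the gain comes precisely from the Lipschitz/concentration argument, which replaces $\twonorm{\bV}\sqrt d$ by $\fronorm{\bV}$.

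Finally, $\abs{\Earg{Y}}\le\Earg{\abs{Y}}\lesssim\varkappa\beta_1\twonorm{\ba}\fronorm{\bV}$, and it is standard that centering preserves sub-Gaussianity up to an absolute constant; hence $\frobinner{\bV}{\bGamma}=Y-\Earg{Y}$ is $C\beta_1\varkappa\twonorm{\ba}\fronorm{\bV}$-sub-Gaussian, as claimed. I do not anticipate a genuine obstacle beyond bookkeeping: the only substantive ingredient is Gaussian Lipschitz concentration applied to $\twonorm{\bV\bx}$, and the only delicate point is keeping the constant $C$ absolute — independent of $m$, $d$, $\bW$, and $\bV$ — when passing through the centering step and through the equivalence between the MGF and the $\psi_2$-norm formulations of sub-Gaussianity.
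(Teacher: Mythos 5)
Your proof is correct and is essentially the same argument as the paper's: both hinge on the pointwise domination $\abs{\frobinner{\bV}{\grad\ell}} \leq \varkappa\beta_1\twonorm{\ba}\twonorm{\bV\bx}$ together with Gaussian Lipschitz concentration of $\twonorm{\bV\bx}$ around $\fronorm{\bV}$ (which is precisely the content of the paper's Lemma \ref{lem:guaissan_moment_bound}). The only difference is cosmetic framing: the paper runs the argument through $L^p$-moment bounds, centers via Minkowski/Jensen at the level of moments, and invokes the moment-growth criterion (Lemma \ref{lem:moment_subg}) to conclude sub-Gaussianity, whereas you work directly with the $\psi_2$/tail characterization and center at the end; these are equivalent routes to the same bound.
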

\begin{proof}
We use the shorthand notation $\grad \ell \coloneqq \grad_{\bW} \ell(\bW\bx, y)$ and $\grad R \coloneqq \grad_{\bW} \Risk(\bW)$. We compute the following
\begin{align*}
    \Earg{\abs{\frobinner{\bV}{\grad \ell - \grad R}}^p}^\frac{1}{p} &\stackrel{\text{(a)}}{\leq} \Earg{\abs{\frobinner{\bV}{\grad \ell}}^p}^\frac{1}{p} + \Earg{\abs{\frobinner{\bV}{\grad R}}^p}^\frac{1}{p}\\
    &\stackrel{\text{(b)}}{\leq} 2\Earg{\abs{\frobinner{\bV}{\grad \ell}}^p}^\frac{1}{p}\\
    &\leq 2\varkappa\Earg{\abs{\frobinner{\bV}{\mysigmap\bx^\top}}^{2p}}^\frac{1}{2p}.
\end{align*}
where (a) and (b) follow from the Minkowski and Jensen inequalities respectively. Furthermore, we have
\begin{align*}
    \Earg{\abs{\frobinner{\bV}{\mysigmap \bx^\top}}^{2p}}^\frac{1}{2p} &= \Earg{\abs{\binner{\bV\bx}{\mysigmap}}^{2p}}^\frac{1}{2p}\\
    &\leq \beta_1\twonorm{\ba}\Earg{\twonorm{\bV\bx}^{2p}}^\frac{1}{2p}\\
    &\leq \beta_1\twonorm{\ba}\left(\fronorm{\bV} + C\twonorm{\bV}\sqrt{p}\right),
\end{align*}
where the last inequality follows from Gaussianity of $\bV\bx$ and Lemma \ref{lem:guaissan_moment_bound}. Hence
$$\Earg{\abs{\frobinner{\bV}{\grad \ell - \grad R}}^p}^\frac{1}{p} \leq C\beta_1\varkappa\twonorm{\ba}\fronorm{\bV}\sqrt{p}.$$
Invoking Lemma \ref{lem:moment_subg} implies sub-Gaussianity of $\frobinner{\bV}{\grad \ell - \grad R}$ and completes the proof.
\end{proof}

We proceed by presenting a lemma which constitutes the main part of the proof of Theorem \ref{thm:sgd} via establishing a recursive bound on the moment generating function (MGF) of $\fronorm{\bW^t_\perp}^2$, which will in turn be used to prove high probability statements for $\fronorm{\bW^t_\perp}^2$.

\begin{lemma}
\label{lem:principal_rec_mgf}
Consider running the iterates of SGD \eqref{eq:sgd}, under either Assumptions \emph{\ref{assump:general}\&\ref{assump:smooth}} or \emph{\ref{assump:general}\&\ref{assump:relu}}, with stepsize sequence $\{\eta_t\}_{t\geq 0}$ that is either constant $\eta_t=\eta$ or decreasing $\eta_t = m\frac{2(t^* + t) + 1}{\gamma(t^* + t + 1)^2}$ \emph{(cf.\ \cite[Theorem 3.2]{gower2019sgd})}. Let $\varkappa \coloneqq \sup \abs{\partial_1\ell(\hat{y},y)}$, $\kappa \coloneqq \beta_1\twonorm{\ba}\varkappa$, and $\tilde{\varrho} \coloneqq \lambda + \beta_1^2\twonorm{\ba}^2 + \beta_2\varkappa\infnorm{\ba}$. Suppose $\eta_0 \lesssim \tilde{\varrho}^{-1}$. Let $\mathcal{F}_t$ denote the sigma algebra generated by $\{\bW^j\}_{j=0}^t$, and let $\{A_t\}_{t \geq 0}$ be a sequence of decreasing events (i.e.\ $A_{t+1} \subseteq A_t$), such that $A_t \in \mathcal{F}_t$ and on $A_t$ we have $\H(\bW^t) + \lambda\ident_m \succeq \tfrac{\gamma}{m}\ident_m$. Then, for every $t \geq 0$, with probability at least $\Parg{A_t} - \delta$,
\begin{equation}
    \fronorm{\bW^t_\perp}^2 \lesssim \prod_{j=0}^{t-1}(1-\tfrac{\eta_j\gamma}{m})\fronorm{\bW^0_\perp}^2 + \frac{m\eta_t\kappa^2(d+\log(1/\delta))}{\gamma}.\label{eq:central_bound}
\end{equation}
\end{lemma}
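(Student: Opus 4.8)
The plan is to turn the SGD recursion~\eqref{eq:sgd} into a stochastic one-step contraction for $V_t\coloneqq\fronorm{\bW^t_\perp}^2$ and then control the accumulated gradient noise by a moment generating function (MGF) argument run along the decreasing events $A_t$. By Lemma~\ref{lem:population_gradient}, the stochastic gradient splits into its $\mathcal F_t$-conditional mean and its noise, $\grad\ell(\hat y(\bW^t\bx^{(t)}+\bb),y^{(t)})=\H(\bW^t)\bW^t+\D(\bW^t)\bU+\bGamma^t$ with $\Earg{\bGamma^t\mid\mathcal F_t}=0$, so $\bW^{t+1}=\bM^t\bW^t-\eta_t\D(\bW^t)\bU-\eta_t\bGamma^t$ with $\bM^t\coloneqq\ident_m-\eta_t(\H(\bW^t)+\lambda\ident_m)$. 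Projecting each row onto $\spn(\bu_1,\dots,\bu_k)^\perp$ annihilates $\D(\bW^t)\bU$ (whose rows lie in $\spn(\bu_1,\dots,\bu_k)$), and since this row projection commutes with left multiplication by the $m\times m$ matrix $\bM^t$, one gets $\bW^{t+1}_\perp=\bM^t\bW^t_\perp-\eta_t\bGamma^t_\perp$. On $A_t$ --- hence, as $A_t$ is decreasing, on $A_t$ for all $j\le t$ --- the hypothesis $\H(\bW^j)+\lambda\ident_m\succeq\tfrac{\gamma}{m}\ident_m$, the matching upper bound $\H(\bW^j)+\lambda\ident_m\preceq\tilde\varrho\ident_m$ (from Lemma~\ref{lem:H_bounded} and $\Earg{|\hat y-y|}\le\beta_0\onenorm{\ba}+\Earg{|y|}$ for smooth $\sigma$, and from~\eqref{eq:est_H} for ReLU), and $\eta_j\le\eta_0\lesssim\tilde\varrho^{-1}$ give $0\preceq\bM^j\preceq(1-\tfrac{\eta_j\gamma}{m})\ident_m$, so $\fronorm{\bM^j\bW^j_\perp}\le(1-\tfrac{\eta_j\gamma}{m})\fronorm{\bW^j_\perp}$. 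Squaring the recursion and using $\frobinner{\bM^t\bW^t_\perp}{\bGamma^t_\perp}=\frobinner{\bM^t\bW^t_\perp}{\bGamma^t}$ (the first factor already has rows in $\spn(\bu_1,\dots,\bu_k)^\perp$), on $A_t$
\[
V_{t+1}\ \le\ \bigl(1-\tfrac{\eta_t\gamma}{m}\bigr)^{2}V_t\ -\ 2\eta_t\,\frobinner{\bM^t\bW^t_\perp}{\bGamma^t}\ +\ \eta_t^{2}\,\fronorm{\bGamma^t_\perp}^{2}.
\]

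The second step is a conditional MGF bound for this increment. Given $\mathcal F_t$ the sample $(\bx^{(t)},y^{(t)})$ is fresh, so $\bW^t$ is frozen and Lemma~\ref{lem:sgd_noise_subg} applies: $\frobinner{\bM^t\bW^t_\perp}{\bGamma^t}$ is $C\kappa\fronorm{\bM^t\bW^t_\perp}\le C\kappa(1-\tfrac{\eta_t\gamma}{m})\sqrt{V_t}$-sub-Gaussian. Moreover, deterministically $\fronorm{\bGamma^t_\perp}^{2}\le2\fronorm{\grad\ell}^{2}+2\fronorm{\grad R(\bW^t)}^{2}\le2\kappa^{2}\bigl(\twonorm{\bx^{(t)}}^{2}+d\bigr)$, where the sum over neurons has been absorbed into $\twonorm{\ba}^{2}$, so the effective dimension is $d$ rather than $md$; hence its conditional MGF is dominated by that of a $\chi^2_d$ variable. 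Combining the linear and quadratic parts (for instance by Cauchy--Schwarz) yields, on $A_t$ and for every $\theta$ with $\theta\eta_t^{2}\kappa^{2}\lesssim1$,
\[
\Earg{e^{\theta V_{t+1}}\mid\mathcal F_t}\ \le\ \exp\!\Bigl(\theta\bigl(1-\tfrac{\eta_t\gamma}{m}\bigr)^{2}\bigl(1+C\theta\eta_t^{2}\kappa^{2}\bigr)V_t+C\theta\eta_t^{2}\kappa^{2}d\Bigr).
\]

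Third, I iterate this along a \emph{time-varying} temperature $\theta_t\asymp\gamma/(m\kappa^{2}\eta_t)$ --- constant in the constant-step case, increasing in the decaying case --- with a small enough absolute constant that $C\theta_{t+1}\eta_t^{2}\kappa^{2}\le\tfrac{\eta_t\gamma}{m}$ (which forces $(1-\tfrac{\eta_t\gamma}{m})^{2}(1+C\theta_{t+1}\eta_t^{2}\kappa^{2})\le1-\tfrac{\eta_t\gamma}{m}$) and $\theta_{t+1}(1-\tfrac{\eta_t\gamma}{m})\le\theta_t$; for the schedule $\eta_t=m\frac{2(t^*+t)+1}{\gamma(t^*+t+1)^{2}}$ both inequalities are the elementary step-size estimates underlying~\cite[Theorem~3.2]{gower2019sgd}. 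Using $A_{t+1}\subseteq A_t\in\mathcal F_t$, the tower property, the conditional bound above, and H\"older's inequality to lower the temperature from $\theta_{t+1}(1-\tfrac{\eta_t\gamma}{m})$ to $\theta_t$ (the leftover power of $\Parg{A_t}$ being at most $1$), an induction gives, with $P_t\coloneqq\prod_{j=0}^{t-1}(1-\tfrac{\eta_j\gamma}{m})$,
\[
\Earg{e^{\theta_t V_t}\,\mathbf 1_{A_t}}\ \le\ \exp\!\Bigl(\theta_t\Bigl[P_t\fronorm{\bW^0_\perp}^{2}+C\kappa^{2}d\textstyle\sum_{j=0}^{t-1}\eta_j^{2}\,P_t/P_{j+1}\Bigr]\Bigr).
\]
A Chernoff bound at temperature $\theta_t$ then shows that $V_t\le P_t\fronorm{\bW^0_\perp}^{2}+C\kappa^{2}d\sum_{j=0}^{t-1}\eta_j^{2}P_t/P_{j+1}+\theta_t^{-1}\log(1/\delta)$ with probability at least $\Parg{A_t}-\delta$. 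Finally $\sum_{j=0}^{t-1}\eta_j^{2}P_t/P_{j+1}\lesssim\tfrac{m\eta_t}{\gamma}$ (trivial for constant step; for the decaying step it telescopes, since $1-\tfrac{\eta_l\gamma}{m}=\tfrac{(t^*+l)^{2}}{(t^*+l+1)^{2}}$ and $\eta_l^{2}\lesssim\tfrac{m^{2}}{\gamma^{2}(t^*+l+1)^{2}}$) and $\theta_t^{-1}\lesssim\tfrac{m\eta_t\kappa^{2}}{\gamma}$, which reduces the bound to exactly~\eqref{eq:central_bound}.

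The main obstacle is this last step: the MGF has to be propagated with a temperature growing at the rate $1/\eta_t$, so that the sub-Gaussian cross term does not erode the $(1-\tfrac{\eta_t\gamma}{m})$-contraction of the $\fronorm{\bW^0_\perp}^{2}$ term while the residual $\theta_t^{-1}\log(1/\delta)$ still scales as $\tfrac{m\eta_t\kappa^{2}\log(1/\delta)}{\gamma}$ and not with the initial $\eta_0$; carrying this out along the decreasing events $A_t$ --- so the spectral control of $\H(\bW^j)$ is in force and the indicators are dropped only where they are $\mathcal F_t$-measurable --- is the main bookkeeping, the underlying step-size inequalities being those of~\cite{gower2019sgd}. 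A lesser nuisance is the dependence between the linear and quadratic noise terms in the one-step increment, handled by Cauchy--Schwarz.
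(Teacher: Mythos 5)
Your proposal is correct and takes essentially the same route as the paper: project the SGD recursion onto the orthogonal complement of $\spn(\bu_1,\dots,\bu_k)$, use the operator bound $0\preceq\bM_t\preceq(1-\tfrac{\eta_t\gamma}{m})\ident_m$ on $A_t$, split the noise contribution to the MGF via Cauchy--Schwarz into a sub-Gaussian cross term (Lemma~\ref{lem:sgd_noise_subg}) and a $\chi^2$-type quadratic term, propagate the MGF recursion at temperature scaling like $\gamma/(m\kappa^2\eta_t)$, and finish with a Chernoff bound. Your "time-varying temperature" plus Hölder framing is just a restatement of the paper's device of proving the MGF bound for every $s\lesssim\gamma/(m\eta_t\kappa^2)$ and invoking~\eqref{eq:stepsize_decay} to see that $s(1-\tfrac{\eta_t\gamma}{m})$ stays within the admissible range at each earlier step.
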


\begin{proof}
Let $\mathcal{F}_t$ denote the sigma algebra generated by $\{\bW^j\}_{j=0}^t$. Recall from Lemma \ref{lem:sgd_noise_subg} that we define
\[\bGamma^t = \grad \ell(\hat{y}(\bx^{(t)};\bW^t), y^{(t)}) - \Earg{\grad \ell(\hat{y}(\bx^{(t)};\bW^t),y^{(t)})}
\]
with
$$\grad \ell(\hat{y}(\bx^{(t)};\bW^t),y^{(t)}) = \partial_1\ell(\hat{y}(\bx^{(t)};\bW^t), y^{(t)})\sigma'_{a,b}(\bW^t \bx^{(t)})(\bx^{(t)})^\top.$$
Then for the SGD updates we have
$$\bW^{t+1} = \bW^t - \eta_t\grad \RiskReg_\lambda(\bW^t) - \eta_t\bGamma_t.$$
By projecting the iterates onto the orthogonal complement of the principal subspace,
$$\bW^{t+1}_\perp = \left(\ident_m - \eta_t(\H(\bW^t) + \lambda\ident_m)\right)\bW^t_\perp -\eta_t\bGamma^t_\perp.$$
Let $\bM_t \coloneqq \ident_m - \eta_t(\H(\bW^t) + \lambda\ident_m)$. Then, by observing that $\boldone_{A_{t+1}} \leq \boldone_{A_t}$, for any $0 \leq s \lesssim \frac{\gamma}{m\eta_t\kappa^2}$ we have
\begin{align}
    \Earg{\boldone_{A_{t+1}}e^{s\fronorm{\bW^{t+1}_\perp}^2} \,|\, \mathcal{F}_0} &\leq \Earg{\boldone_{A_t}e^{s\fronorm{\bM_t\bW^t_\perp}^2 + s\eta^2_t\fronorm{\bGamma^t_\perp}^2 + \frobinner{-2s\eta_t\bM_t\bW^t_\perp}{\bGamma^t_\perp}} \,|\, \mathcal{F}_0}\nonumber\\
    &= \Earg{\boldone_{A_t}e^{s\fronorm{\bM_t\bW^t_\perp}^2}\Earg{e^{s\eta^2_t\fronorm{\bGamma^t_\perp}^2}e^{\frobinner{-2s\eta_t\bM_t\bW^t_\perp}{\bGamma^t_\perp}} \,|\, \mathcal{F}_t}\,|\, \mathcal{F}_0}\nonumber\\
    &\leq \Earg{\boldone_{A_t}e^{s\fronorm{\bM_t\bW^t_\perp}^2}\Earg{e^{2s\eta_t^2\fronorm{\bGamma^t_\perp}^2} \,|\, \mathcal{F}_t}^\frac{1}{2}\Earg{e^{\frobinner{-4s\eta_t\bM_t\bW^t_\perp}{\bGamma^t_\perp}} \,|\, \mathcal{F}_t}^\frac{1}{2} \,|\, \mathcal{F}_0}\label{eq:conv_mgf_bound},
\end{align}
where the last inequality follows from the Cauchy-Schwartz inequality for conditional expectation.

Moreover, it is straightforward to observe that
$\fronorm{\grad \ell(\hat{y}(\bx^{(t)};\bW^t), y^{(t)})}^2 \leq \kappa^2\twonorm{\bx}^2$, hence
$$\Earg{\fronorm{\grad \ell(\hat{y}(\bx^{(t)};\bW^t), y^{(t)})}^2} \leq \kappa^2d.$$
Note that by Jensen's inequality 
$$\fronorm{\bGamma^t_\perp}^2 \leq 2\fronorm{\grad \ell(\bW^t\bx^{(t)},y^{(t)})}^2 + 2\Earg{\fronorm{\grad \ell(\bW^t\bx^{(t)},y^{(t)})}^2}.$$
Consequently
\begin{align*}
    \Earg{\exp\left(2s\eta_t^2\fronorm{\bGamma^t_\perp}^2\right) \smiddle \mathcal{F}_t} &\leq \exp\left(4s\eta_t^2\kappa^2d\right)\Earg{\exp\left(4s\eta_t^2\kappa^2\twonorm{\bx}^2\right) \smiddle \mathcal{F}_t}\\
    &\leq \exp\left(4s\eta_t^2\kappa^2d\right)\exp\left(8s\eta_t^2\kappa^2d\right),
\end{align*}
where the second inequality follows from Lemma \ref{lem:gaussian_square_mgf} for $4s\eta_t^2\kappa^2 \leq 1/4$. Since $s \lesssim \tfrac{\gamma}{m\eta_t\kappa^2}$, in order to satisfy the condition of Lemma \ref{lem:gaussian_square_mgf} we need to ensure $\eta_t\gamma/m \lesssim 1$, which is guaranteed by our $\eta_t\tilde{\varrho} \lesssim 1$ assumption for a suitably small absolute constant, as $\gamma/m \leq \lambda \leq \tilde{\varrho}$.

Next, we bound the last term in \eqref{eq:conv_mgf_bound}. Let $\bV \coloneqq -4s\eta_t\bM_t\bW^t_\perp$. Then by Lemma \ref{lem:sgd_noise_subg} we have
\begin{align*}
    \Earg{\exp\left(\frobinner{\bV}{\bGamma^t_\perp}\right) \,|\, \mathcal{F}_t} \leq \exp\left(Cs^2\eta_t^2\kappa^2\fronorm{\bM_t\bW^t_\perp}^2\right)
\end{align*}
Putting things back together in \eqref{eq:conv_mgf_bound} and using the tower property of expectation, we have
\begin{equation}
    \Earg{\boldone_{A_{t+1}}e^{s\fronorm{\bW^{t+1}_\perp}^2} \,|\, \mathcal{F}_0} \leq \Earg{\boldone_{A_t}e^{s(1+Cs\eta_t^2\kappa^2)\fronorm{\bM_t\bW^t_\perp}^2 + Cs\eta_t^2\kappa^2d} \,|\, \mathcal{F}_0}.\label{eq:mgf_interm_bound}
\end{equation}
Next, we bound $\twonorm{\bM_t}$. By definition of $A_t$, we can already ensure $\H(\bW^t) \succeq \tfrac{\gamma}{m}\ident_m$ in \eqref{eq:mgf_interm_bound}. Recall the definition of $\H(\bW^t)$
$$\H(\bW^t) = \Earg{\partial^2_1\ell(\hat{y}(\bx;\bW^t),y)\sigma'_{\ba,\bb}(\bW^t \bx)\sigma'_{\ba,\bb}(\bW^t\bx)^\top + \partial_1\ell(\hat{y}(\bx;\bW^t),y)\diag(\sigma''_{\ba,\bb}(\bW^t\bx))}.$$
Notice that $0 \leq \partial^2_1\ell (\hat{y}(\bx;\bW),y) \leq 1$ under either Assumption \ref{assump:smooth} or Assumption \ref{assump:relu}. Moreover we have, $\abs{\partial_1\ell(\hat{y},y)} \leq \varkappa$. Thus,
$$\H(\bW^t) + \lambda\ident_m \preceq \left(\lambda + \beta_1^2\twonorm{\ba}^2 + \beta_2\varkappa\infnorm{\ba}\right)\ident_m = \tilde{\varrho}\ident_m.$$
Therefore,
$$0 \preceq \ident_m - \eta_t(\H(\bW^t) + \lambda\ident_m) \preceq (1-\tfrac{\eta_t\gamma}{m})\ident_m.$$
As a result $\twonorm{\bM_t} \leq 1-\tfrac{\eta_t\gamma}{m}$. Combined with \eqref{eq:mgf_interm_bound} we have
\begin{align}
    \Earg{\boldone_{A_{t+1}}\exp\left(s\fronorm{\bW^{t+1}_\perp}^2\right) \,|\, \mathcal{F}_0} &\leq \Earg{\boldone_{A_t}\exp\left(s(1+Cs\eta_t^2\kappa^2)(1-\tfrac{\eta_t\gamma}{m})^2\fronorm{\bW^t_\perp}^2 + Cs\eta_t^2d\kappa^2\right) \,|\, \mathcal{F}_0}\nonumber\\
    &\leq \exp\left(Cs\eta_t^2\kappa^2d\right)\Earg{\boldone_{A_t}\exp\left(s(1-\tfrac{\eta_t\gamma}{m})\fronorm{\bW^t_\perp}^2\right) \,|\, \mathcal{F}_0}\label{eq:eq_mgf_rec_bound}
\end{align}
where the second inequality holds by the fact that $Cs\eta_t^2\kappa^2 \leq \eta_t\gamma/m$, which in turn holds when a small enough absolute constant is chosen in $0 \leq s \lesssim \frac{\gamma}{m\eta_t\kappa^2}$. Also notice that for decreasing stepsize,
\begin{equation}
    \label{eq:stepsize_decay}
    1-\tfrac{\gamma\eta_t}{m} = \frac{(t+t^*)^2}{(t+t^*+1)^2} \leq \frac{1-\frac{(t+t^*)^2}{(t+t^*+1)^2}}{1-\frac{(t+t^*-1)^2}{(t+t^*)^2}} = \frac{\eta_t}{\eta_{t-1}},
\end{equation}
(and the above holds trivially for constant step size), thus when $s \leq \frac{C_1\gamma}{\eta_t\kappa^2}$ for some absolute constant $C_1$, we have $s(1-\eta_t\gamma) \leq \frac{C_1\gamma}{\eta_{t-1}\kappa^2}$ with the same absolute constant. Hence we are allowed to expand the recursion \eqref{eq:eq_mgf_rec_bound}, which implies
$$\Earg{\boldone_{A_{t}}\exp\left(s\fronorm{\bW^t_\perp}^2\right) \,|\, \mathcal{F}_0} \leq \exp\left(s\prod_{j=0}^{t-1}(1-\tfrac{\eta_j\gamma}{m})\fronorm{\bW^0_\perp}^2 + Cs\kappa^2d\sum_{i=0}^{t-1}\eta_i^2\prod_{j=i+1}^{t-1}(1-\tfrac{\eta_j\gamma}{m})
\right)$$
for all $0 \leq s \lesssim \tfrac{\gamma}{m\eta_{t-1}\kappa^2}$. Moreover, direct calculation implies that with both constant and decreasing step sizes of Lemma \ref{lem:principal_rec_mgf}, we have $\sum_{i=0}^{t-1}\eta_i^2\prod_{j=i+1}^{t-1}(1-\tfrac{\eta_j\gamma}{m}) \leq \tfrac{Cm\eta_t}{\gamma}$ (with $C=1$ for constant stepsize). Thus, for all $0 \leq s \lesssim \frac{\gamma}{m\eta_{t-1}\kappa^2}$
$$\Earg{\boldone_{A_{t}}\exp\left(s\fronorm{\bW^t_\perp}^2\right) \,|\, \mathcal{F}_0} \leq \exp\left(s\prod_{j=0}^{t-1}(1-\tfrac{\eta_j\gamma}{m})\fronorm{\bW^0_\perp}^2 + \tfrac{Csm\eta_t\kappa^2d}{\gamma}
\right).$$

Finally, we can apply a Chernoff bound to obtain
$$\Parg{A_t \cap \{\fronorm{\bW^t_\perp}^2 \geq \varepsilon\} \smiddle \mathcal{F}_0} \leq \exp\left(s\left\{\prod_{j=0}^{t-1}(1-\eta_j\gamma)\fronorm{\bW^0_\perp}^2 + \tfrac{Cm\eta_t\kappa^2d}{\gamma} - \varepsilon\right\}\right)$$
By choosing
$$\varepsilon = \prod_{j=0}^{t-1}(1-\tfrac{\eta_j\gamma}{m})\fronorm{\bW^0_\perp}^2 + \frac{Cm\eta_t\kappa^2(d+\log(1/\delta))}{\gamma}.$$
and the largest possible $s \lesssim \frac{\gamma}{m\eta_t\kappa^2}$, we obtain
$$\Parg{\fronorm{\bW^t_\perp}^2 \geq \varepsilon \smiddle \mathcal{F}_0} \leq \Parg{A_t\cap \{\fronorm{\bW^t_\perp} \geq \varepsilon\}} + \Parg{A_t^C} \leq \delta + \Parg{A_t^C}.$$
Taking another expectation to remove conditioning on initialization completes the proof.
\end{proof}

The proof of Theorem \ref{thm:sgd} for decreasing stepsize follows by a direct computation of the quantities in Lemma \ref{lem:principal_rec_mgf} and is presented below. On the other hand, in order to get a better dependence on $\lambda$, choosing the events $A_t$ for constat stepsize is more subtle and is presented in Section \ref{appendix:proof_thm_sgd_bounded_const}.
\medskip

\subsection{Proof of Theorem \ref{thm:sgd} for decreasing stepsize} 
This part is directly implied by Lemma \ref{lem:principal_rec_mgf}. The following argument holds on an event where $\fronorm{\bW} \lesssim \sqrt{m}$, which happens with probability at least $1-\bigO(\delta)$. In order to see this connection, we will first present an improved statement over Lemma \ref{lem:H_bounded} for the case of smooth activations. Recall the definition of $\H(\bW)$ for the squared error loss $\ell(\hat{y},y) = \tfrac{(\hat{y}-y)^2}{2}$,
$$\H(\bW) = \Earg{\mysigmap\mysigmap^\top} + \Earg{(\hat{y}(\bx;\bW,\ba,\bb)-y)\diag(\mysigmapp)}.$$
Notice that under Assumption \ref{assump:smooth} we have $\abs{\hat{y}} \leq \beta_0\onenorm{\ba}$. Then basic matrix algebra similar to that of Lemma \ref{lem:H_bounded} along with the triangle inequality shows
$$-\beta_2\infnorm{\ba}(\beta_0\onenorm{\ba} + \Earg{\abs{y}})\ident_m \prec \H(\bW) \preceq \left(\beta_1^2\twonorm{\ba}^2 + \beta_2\infnorm{\ba}(\beta_0\onenorm{\ba} + \Earg{\abs{y}})\right)\ident_m.$$
Therefore, with $\lambda \geq \gamma/m +  \beta_2\infnorm{\ba}(\onenorm{\ba}\beta_0 + \Earg{\abs{y}})$, we have $\H(\bW) + \lambda\ident_m \succeq \gamma/m\ident_m$ for all $\bW$. In addition, $\abs{\partial_1\ell(\hat{y},y)} \leq \beta_0\onenorm{\ba} + K$ by the triangle inequality. Thus we can invoke Lemma \ref{lem:principal_rec_mgf} with $\eta_t = \tfrac{m}{\gamma}\left(1-\tfrac{(t^*+t)^2}{(t^* + t + 1)^2}\right)$, $\varkappa = \beta_0\onenorm{\ba} + K$, and $\boldone_{A_t} = 1$. Recall that in the statement of the theorem, $\beta_0=\beta_1=\beta_2=1$, $K \lesssim 1$, $\infnorm{\ba} \leq 1/m$, $\twonorm{\ba} \leq 1/\sqrt{m}$, and $\onenorm{\ba} \leq 1$, hence $\tilde{\varrho} \asymp \lambda $, and with $t^* \asymp \tfrac{\tilde{\lambda}}{\gamma}$ we can guarantee $\eta_t\lambda \lesssim 1$. As the step size condition of Lemma \ref{lem:principal_rec_mgf} is satisfied, the desired result follows.

Similarly, for ReLU we have $\abs{\partial_1\ell(\hat{y},y)} \leq 1$ by Assumption \ref{assump:relu}, and for $\lambda \geq \gamma/m + \tfrac{2\infnorm{\ba}}{b^*}\sqrt{\tfrac{2}{e\pi}}$ (recall $b^* = 1$ in the statement of the theorem), we have $\H(\bW) + \lambda\ident_m \succeq \gamma/m\ident_m$. Hence this time, we can invoke Lemma \ref{lem:principal_rec_mgf} with the same decreasing $\eta_t$, $\boldone_{A_t} = 1$, and $\varkappa = 1$.
\qed
\medskip

\subsection{Proof of Theorem \ref{thm:sgd} for constant stepsize}
\label{appendix:proof_thm_sgd_bounded_const}
In order to improve the condition on $\lambda$, we will specifically look at the events $A_t$ on which $\max_{0\leq j\leq t}\RiskReg_\lambda(\bW^j)$ is bounded. The following lemma indicates that these events occur with high probability.

\begin{lemma}
\label{lem:sgd_fixedstep_risk}
Under Assumptions \emph{\ref{assump:general}\&\ref{assump:smooth}}, consider the setting of Lemma \ref{lem:principal_rec_mgf} with constant stepsize $\eta \lesssim \tilde{\varrho}^{-1}$. Then we have with probability at least $1 - T\exp(-CT\eta\tilde{\varrho}d)$,
\begin{equation}
    \max_{0 \leq t\leq T}\RiskReg_\lambda(\bW^t) \leq \RiskReg_\lambda(\bW^0) + CT\eta^2\kappa^2\tilde{\varrho}d.
\end{equation}
\end{lemma}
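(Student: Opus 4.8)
The plan is to turn the SGD recursion~\eqref{eq:sgd} into a high-probability approximate-descent inequality for $\RiskReg_\lambda$ and telescope it. Two structural facts make this work under Assumption~\ref{assump:smooth}. First, boundedness of $\sigma$ and of $y$ forces $\abs{\hat y(\bx;\bW,\ba,\bb)-y}\le\beta_0\onenorm{\ba}+K$ almost surely for \emph{every} $\bW$, hence $\Risk(\bW)\le\tfrac12(\beta_0\onenorm{\ba}+K)^2$ uniformly; substituting this bound for $\sqrt{\Risk(\bW)}$ in Lemma~\ref{lem:hess_bounded} yields a \emph{global} Hessian estimate $\grad^2\RiskReg_\lambda(\bW)\preceq C\tilde\varrho\,\ident_{md}$, so $\RiskReg_\lambda$ is $C\tilde\varrho$-smooth and the cubic remainder of Lemma~\ref{lem:smoothness_bound} never enters. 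Second, the gradient noise $\bGamma^t$ is conditionally sub-Gaussian as in Lemma~\ref{lem:sgd_noise_subg}, and $\fronorm{\bGamma^t}^2\le 2\kappa^2\twonorm{\bx^{(t)}}^2+2\kappa^2 d$ pointwise, so $\sum_{j<T}\fronorm{\bGamma^j}^2$ concentrates around $O(T\kappa^2 d)$.

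Feeding $\bW^{t+1}-\bW^t=-\eta(\grad\RiskReg_\lambda(\bW^t)+\bGamma^t)$ into the $C\tilde\varrho$-smooth descent inequality and discarding the nonpositive squared-gradient term (legitimate since $\eta\lesssim\tilde\varrho^{-1}$) gives
\begin{equation*}
\RiskReg_\lambda(\bW^{t+1})\le\RiskReg_\lambda(\bW^t)-\eta\frobinner{\grad\RiskReg_\lambda(\bW^t)}{\bGamma^t}+C\eta^2\tilde\varrho\fronorm{\bGamma^t}^2,
\end{equation*}
and telescoping gives $\RiskReg_\lambda(\bW^t)\le\RiskReg_\lambda(\bW^0)+M_t+C\eta^2\tilde\varrho\sum_{j<t}\fronorm{\bGamma^j}^2$ with $M_t:=-\eta\sum_{j<t}\frobinner{\grad\RiskReg_\lambda(\bW^j)}{\bGamma^j}$ a martingale with respect to $\{\mathcal F_t\}$. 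A $\chi^2_{Td}$ tail bound on $\sum_{j<T}\twonorm{\bx^{(j)}}^2$ controls the energy term, $C\eta^2\tilde\varrho\sum_{j<t}\fronorm{\bGamma^j}^2\lesssim T\eta^2\kappa^2\tilde\varrho d$, off an event of probability $\le e^{-cTd}$.

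The term $M_t$ is the crux. By Lemma~\ref{lem:sgd_noise_subg} (with $\bV=\grad\RiskReg_\lambda(\bW^j)$ and $\varkappa=\sup\abs{\partial_1\ell}$) its increments are conditionally sub-Gaussian with parameter $C\eta\kappa\fronorm{\grad\RiskReg_\lambda(\bW^j)}$, but $\fronorm{\grad\RiskReg_\lambda(\bW^j)}\le\kappa\sqrt d+\lambda\fronorm{\bW^j}$ depends on the iterate norm, which is bounded only on the very event we wish to certify --- this self-reference is the main obstacle. I would resolve it as foreshadowed by Lemma~\ref{lem:principal_rec_mgf}: with the decreasing $\mathcal F_t$-measurable events $A_t:=\{\max_{j\le t}\RiskReg_\lambda(\bW^j)\le B\}$, $B:=\RiskReg_\lambda(\bW^0)+C'T\eta^2\kappa^2\tilde\varrho d$, one has $\lambda\fronorm{\bW^j}^2\le 2B$ on $A_j$, hence $\fronorm{\grad\RiskReg_\lambda(\bW^j)}^2\lesssim\kappa^2 d+B\tilde\varrho$. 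Then establish the one-step MGF recursion
\begin{equation*}
\Earg{e^{s\RiskReg_\lambda(\bW^{t+1})}\boldone_{A_{t+1}}\mid\mathcal F_0}\le \exp\!\big(Cs\eta^2\kappa^2\big(s(\kappa^2 d+B\tilde\varrho)+\tilde\varrho d\big)\big)\,\Earg{e^{s\RiskReg_\lambda(\bW^{t})}\boldone_{A_{t}}\mid\mathcal F_0},
\end{equation*}
obtained by conditioning on $\mathcal F_t$, Cauchy--Schwarz, Lemma~\ref{lem:sgd_noise_subg} for the sub-Gaussian factor and Lemma~\ref{lem:gaussian_square_mgf} for the $\fronorm{\bGamma^t}^2$ factor (valid for $s\lesssim(\eta^2\kappa^2(\kappa^2 d+B\tilde\varrho))^{-1}$); expand it over $t$ steps and apply a Chernoff bound at level $B$ with $s$ chosen near the top of its admissible range so that the gain $sB$ beats the accumulated variance, obtaining $\Parg{A_{t-1}\setminus A_t}\le e^{-CT\eta\tilde\varrho d}$ for each $t$. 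A union bound over $t\le T$ then gives $\Parg{A_T^c}\le Te^{-CT\eta\tilde\varrho d}$, which for a large enough $C'$ in $B$ is precisely the claim. The delicate point is keeping $s$ simultaneously small enough for both MGF estimates and large enough that the slack $C'T\eta^2\kappa^2\tilde\varrho d$ dominates the accumulated sub-Gaussian and sub-exponential contributions; it is this constraint --- not the $\chi^2$ concentration, which by itself would give $e^{-cTd}$ --- that forces the $\eta\tilde\varrho$ factor into the exponent.
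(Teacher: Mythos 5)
Your preliminary steps (global smoothness from boundedness, pointwise control of $\fronorm{\bGamma^t}^2$, Cauchy--Schwarz on the conditional MGF, Lemma~\ref{lem:gaussian_square_mgf} for the quadratic noise, and the union bound over $t$) match the paper. But at the crucial point you take a different route from the paper, and the route you propose does not deliver the claimed probability.

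The gap is created at the moment you ``discard the nonpositive squared-gradient term'' before taking the moment generating function. Once that term is gone, the martingale increment $-\eta\frobinner{\grad\RiskReg_\lambda(\bW^t)}{\bGamma^t}$ is sub-Gaussian with a variance proxy proportional to $\fronorm{\grad\RiskReg_\lambda(\bW^t)}^2$, which you then have to bound by bootstrapping through the events $A_t$. The problem is that the threshold $B\asymp T\eta^2\kappa^2\tilde\varrho d$ grows with $T$, so the admissible range $s\lesssim\big(\eta^2\kappa^2(\kappa^2 d+B\tilde\varrho)\big)^{-1}$ shrinks with $T$: for large $T$ this forces $s\lesssim (T\eta^4\kappa^4\tilde\varrho^2 d)^{-1}$, whereas hitting the claimed exponent $e^{-CT\eta\tilde\varrho d}$ in the Chernoff step requires taking $s$ up to order $(\eta\kappa^2)^{-1}$. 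Running your scheme through the Chernoff bound gives an exceedance exponent of roughly $\min\{d,\,T\eta^2\tilde\varrho^2 d\}$, which is strictly weaker than the target $T\eta\tilde\varrho d$ whenever $\eta\tilde\varrho<1$. So the events-based bootstrap, while it removes the self-reference, does not close the argument.

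The paper's resolution is simpler and avoids the self-reference entirely: it keeps the $-\eta(1-2\eta\tilde\varrho)\fronorm{\grad\RiskReg_\lambda(\bW^t)}^2$ term from the smoothness inequality inside the MGF computation. After applying Lemma~\ref{lem:sgd_noise_subg} to the linear noise term, the variance proxy is $Cs^2\eta^2\kappa^2\fronorm{\grad\RiskReg_\lambda(\bW^t)}^2$, so the combined coefficient of $\fronorm{\grad\RiskReg_\lambda(\bW^t)}^2$ in the exponent is $-s\eta(1-2\eta\tilde\varrho)(1-Cs\eta\kappa^2)$, which is $\le 0$ for any $s\lesssim(\eta\kappa^2)^{-1}$ together with $\eta\lesssim\tilde\varrho^{-1}$ (sufficiently small absolute constants). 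That term can then be dropped with no loss, leaving only the accumulated $Cs\eta^2\kappa^2\tilde\varrho d$ per step, with no dependence on the iterate norm and no need for the events $A_t$. Taking $s$ near $(\eta\kappa^2)^{-1}$ in the Chernoff bound then gives exactly the $e^{-CT\eta\tilde\varrho d}$ tail, and a union bound finishes. In short: do not discard the gradient-squared term pointwise; carry it into the MGF so it absorbs the sub-Gaussian variance proxy.
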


\begin{proof}
First, recall from Lemma \ref{lem:hess_bounded} together with $\sqrt{\RiskReg_\lambda(\bW)} \lesssim \beta_0\onenorm{\ba} + K = \varkappa$, that
$$\twonorm{\grad^2 \RiskReg_\lambda(\bW)} \lesssim \lambda + \beta_1^2\twonorm{\ba}^2 + \beta_2\varkappa\infnorm{\ba} = \tilde{\varrho}.$$
We will first prove that for any $t \leq T$ and any $s \lesssim (\eta\kappa^2)^{-1}$, we have
$$\Earg{e^{s\RiskReg_\lambda(\bW^t)} \,|\, \bW^0} \leq \Earg{e^{s\RiskReg_\lambda(\bW^0) + Cst\eta^2\kappa^2\tilde{\varrho}d}}.$$
Recall that $\mathcal{F}_t$ is the sigma algebra generated by $\{\bW^j\}_{j=0}^t$, and $\Gamma^t \coloneqq \grad \ell(\bW^t) - \grad \Risk(\bW^t)$. By Taylor's theorem and Young's inequality
\begin{align*}
    \RiskReg_\lambda(\bW^{t+1}) &\leq \RiskReg_\lambda(\bW^t) - \eta\frobinner{\grad \RiskReg_\lambda(\bW^t)}{\grad \RiskReg_\lambda(\bW^t) + \bGamma^t} + \tfrac{\tilde{\varrho}\eta^2}{2}\fronorm{\grad \RiskReg_\lambda(\bW^t) + \bGamma^t}^2\\
    &\leq \RiskReg_\lambda(\bW^t) - \eta(1-\eta\tilde{\varrho})\fronorm{\grad \RiskReg_\lambda(\bW^t)}^2 - \eta\frobinner{\grad \RiskReg_\lambda(\bW^t)}{\bGamma^t} + \tilde{\varrho}\eta^2\fronorm{\bGamma^t}^2,
\end{align*}
and
\begin{align*}
    \Earg{e^{s\RiskReg_\lambda(\bW^{t+1})} \,|\, \mathcal{F}_0} &\leq \Earg{e^{s\RiskReg_\lambda(\bW^t) - s\eta(1-\eta\tilde{\varrho})\fronorm{\grad \RiskReg_\lambda(\bW^t)}^2 - s\eta\frobinner{\grad \RiskReg_\lambda(\bW^t)}{\bGamma^t} + s\tilde{\varrho}\eta^2\fronorm{\bGamma^t}^2} \,|\, \mathcal{F}_0}\\
    &= \Earg{e^{s\RiskReg_\lambda(\bW^t) - s\eta(1-\eta\tilde{\varrho})\fronorm{\grad \RiskReg_\lambda(\bW^t)}^2}\Earg{e^{-s\eta\frobinner{\grad \RiskReg_\lambda(\bW^t)}{\bGamma^t} + s\tilde{\varrho}\eta^2\fronorm{\bGamma^t}^2}\,|\, \mathcal{F}_t} \, | \, \mathcal{F}_0}\\
    &\stackrel{\text{(a)}}{\leq}\Earg{e^{s\RiskReg_\lambda(\bW^t) - s\eta(1-\eta\tilde{\varrho})\fronorm{\grad \RiskReg_\lambda(\bW^t)}^2}\Earg{e^{-2s\eta\frobinner{\grad \RiskReg_\lambda(\bW^t)}{\bGamma^t}} \, | \, \mathcal{F}_t}^\frac{1}{2}\Earg{e^{2s\tilde{\varrho}\eta^2\fronorm{\bGamma^t}^2} \, | \, \mathcal{F}_t}^\frac{1}{2} \,|\, \mathcal{F}_0},
\end{align*}
where (a) follows from the Cauchy-Schwartz inequality for conditional expectation. Moreover, in this setting we have $\abs{\partial_1\ell(\hat{y},y)} = \abs{\hat{y} - y} \leq \beta_0\onenorm{\ba} + K$, thus letting $\varkappa = \beta_0\onenorm{\ba} + K$ in Lemma \ref{lem:sgd_noise_subg}, by the sub-Gaussianity of $\Gamma^t$ we have
$$\Earg{e^{\frobinner{-2s\eta\grad \RiskReg_\lambda(\bW^t)}{\bGamma^t}} \, | \, \mathcal{F}_t} \leq e^{Cs^2\eta^2\kappa^2\fronorm{\grad \RiskReg_\lambda(\bW^t)}^2}.$$
Furthermore, we have the following upper bound
\begin{align*}
    \Earg{e^{Cs\tilde{\varrho}\eta^2\fronorm{\grad \ell(\bW^t)}^2}\, | \, \mathcal{F}_t} &\leq \Earg{e^{Cs\tilde{\varrho}\eta^2\kappa^2\twonorm{\bx}^2}\, | \, \mathcal{F}_t}\\
    &\leq e^{Cs\tilde{\varrho}\eta^2\kappa^2d}
\end{align*}
where the second inequality holds for $s \lesssim \tfrac{1}{\tilde{\varrho}\eta^2\kappa^2}$ with a sufficiently small absolute constant by Lemma \ref{lem:gaussian_square_mgf}. Similar to the argument in Lemma \ref{lem:principal_rec_mgf}, as we choose $s \lesssim (\eta\kappa^2)^{-1}$, in order to satisfy the condition of Lemma \ref{lem:principal_rec_mgf} it suffices to have $\eta\tilde{\varrho} \lesssim 1$ for a sufficiently small absolute constant. Putting the above bounds back together, we have
$$\Earg{e^{s\RiskReg_\lambda(\bW^{t+1})} \,|\, \mathcal{F}_0} \leq \Earg{e^{s\RiskReg_\lambda(\bW^t) -s\eta(1-\eta\tilde{\varrho} - Cs\eta\kappa^2)\fronorm{\grad \RiskReg_\lambda(\bW^t)}^2+ Cs\eta^2\kappa^2\tilde{\varrho}d} \,|\, \mathcal{F}_0}.$$
Expanding the recursion yields, for any $s \lesssim (\eta\kappa^2)^{-1}$ (with a sufficiently small absolute constant chosen)
\begin{align*}
    \Earg{e^{s\RiskReg_\lambda(\bW^t)} \,|\, \mathcal{F}_0} \leq e^{s\RiskReg_\lambda(\bW^0) + Cst\eta^2\kappa^2\tilde{\varrho}d}.
\end{align*}
As a result, by applying Markov's inequality at time $t \leq T$, we have
$$\Parg{\RiskReg_\lambda(W^t) \geq \RiskReg_\lambda(\bW^0) + CT\eta^2\kappa^2\tilde{\varrho}d} \leq e^{-CsT\eta^2\kappa^2\tilde{\varrho}d} \leq e^{-CT\eta\tilde{\varrho}d}.$$
Consequently, with a union bound we have
$$\Parg{\max_{0\leq t\leq T}\RiskReg_\lambda(\bW^t) \geq \RiskReg_\lambda(\bW^0) + CT\eta^2\kappa^2\tilde{\varrho}d} \leq Te^{-CT\eta\tilde{\varrho}d},$$
which completes the proof.
\end{proof}

As depicted by the following proposition, the rest of the proof is analogous to the decreasing stepsize case.

\begin{proposition}
\label{prop:sgd_bounded_fixedstep}
Consider the setting of Lemma \ref{lem:sgd_fixedstep_risk} with constant stepsize $\eta \lesssim \tilde{\varrho}^{-1}$ and $\lambda$ sufficiently large such that $\lambda \geq \gamma/m + \beta_2\infnorm{\ba}\sqrt{2(\RiskReg_\lambda(\bW^0) + CT\eta^2\kappa^2\tilde{\varrho}d)}$. Then with probability at least $1 - Te^{-CT\eta\tilde{\varrho}d}-\delta$ we have
\begin{equation}
    \fronorm{\bW^T_\perp}^2 \leq (1-\tfrac{\eta\gamma}{m})^T\fronorm{\bW^0}^2 + \frac{Cm\eta\kappa^2(d+\log(1/\delta))}{\gamma}
\end{equation}
\end{proposition}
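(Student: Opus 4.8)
The plan is to run the conditional moment-generating-function recursion of Lemma~\ref{lem:principal_rec_mgf} along the sequence of events on which the regularized risk stays bounded, borrowing the probability of those events from Lemma~\ref{lem:sgd_fixedstep_risk}, and then to combine the two failure probabilities by a union bound. Concretely, I would fix the decreasing sequence of events
\begin{equation*}
A_t \coloneqq \Big\{\, \max_{0 \leq j \leq t}\RiskReg_\lambda(\bW^j) \leq \RiskReg_\lambda(\bW^0) + CT\eta^2\kappa^2\tilde{\varrho}d \,\Big\}, \qquad t = 0,1,\dots,T,
\end{equation*}
which satisfy $A_{t+1}\subseteq A_t$ and $A_t \in \mathcal{F}_t$ since $A_t$ is determined by $\bW^0,\dots,\bW^t$. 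Lemma~\ref{lem:sgd_fixedstep_risk} then gives $\Parg{A_T}\geq 1 - Te^{-CT\eta\tilde{\varrho}d}$, and by nesting the same lower bound holds for each $A_t$ with $t\leq T$.

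Next I would check that on $A_t$ the coercivity hypothesis of Lemma~\ref{lem:principal_rec_mgf} holds. Combining the refined estimate $\H(\bW^t)\succeq -\beta_2\infnorm{\ba}\sqrt{2\Risk(\bW^t)}\,\ident_m$ (the sharpening of Lemma~\ref{lem:H_bounded} already used in the decreasing-step-size proof) with $\Risk(\bW^t)\leq \RiskReg_\lambda(\bW^t)\leq \RiskReg_\lambda(\bW^0) + CT\eta^2\kappa^2\tilde{\varrho}d$ valid on $A_t$, the standing assumption $\lambda \geq \gamma + \beta_2\infnorm{\ba}\sqrt{2(\RiskReg_\lambda(\bW^0) + CT\eta^2\kappa^2\tilde{\varrho}d)}$ yields $\H(\bW^t)+\lambda\ident_m \succeq \gamma\,\ident_m$ on $A_t$, which is exactly the $\mathcal{F}_t$-measurable coercivity required by Lemma~\ref{lem:principal_rec_mgf} (up to the $m$-normalization of $\gamma$ used in this section). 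Since the step size also satisfies $\eta\lesssim\tilde{\varrho}^{-1}$ by assumption, Lemma~\ref{lem:principal_rec_mgf} applies with the constant schedule $\eta_t\equiv\eta$.

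Reading off its conclusion at time $T$, with probability at least $\Parg{A_T}-\delta$,
\begin{equation*}
\fronorm{\bW^T_\perp}^2 \lesssim \prod_{j=0}^{T-1}(1-\eta\gamma)\,\fronorm{\bW^0_\perp}^2 + \frac{\eta\kappa^2(d+\log(1/\delta))}{\gamma} = (1-\eta\gamma)^T\fronorm{\bW^0_\perp}^2 + \frac{\eta\kappa^2(d+\log(1/\delta))}{\gamma},
\end{equation*}
and bounding $\fronorm{\bW^0_\perp}\leq\fronorm{\bW^0}$ and taking a union bound over $A_T^c$ (probability $\leq Te^{-CT\eta\tilde{\varrho}d}$) and the Chernoff failure event inside Lemma~\ref{lem:principal_rec_mgf} (probability $\leq\delta$) gives the claimed bound with total failure probability at most $Te^{-CT\eta\tilde{\varrho}d}+\delta$.

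I expect the main difficulty to be bookkeeping rather than a single sharp estimate: the threshold defining $A_t$ grows with $T$ through the accumulated-gradient-noise term $CT\eta^2\kappa^2\tilde{\varrho}d$ coming from Lemma~\ref{lem:sgd_fixedstep_risk}, so one must verify that the induced lower bound on $\lambda$ is still consistent with the scaling $\lambda\asymp\tilde{\lambda}/m$ and with the constant step size $\eta$ — this is exactly the source of the extra $\tfrac{C\log(T)^2 d}{\gamma^2T}$ correction inside the lower bound on $\tilde{\lambda}$ in Theorem~\ref{thm:sgd}. One must also keep both high-probability statements conditioned on the same $\sigma$-algebra (here $\mathcal{F}_0$, the initialization) for the union bound to be valid; Lemma~\ref{lem:principal_rec_mgf} is stated in the conditional-on-$A_t$ form precisely so that this composition goes through cleanly.
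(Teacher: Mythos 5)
Your proposal is correct and follows the paper's proof essentially verbatim: same events $A_t$, same invocation of Lemma~\ref{lem:sgd_fixedstep_risk} to control $\Parg{A_T^c}$, same coercivity check via Lemma~\ref{lem:H_bounded} on $A_t$, and same application of Lemma~\ref{lem:principal_rec_mgf} followed by a union bound. Your side remarks about the $m$-normalization of $\gamma$ and the bound $\fronorm{\bW^0_\perp}\leq\fronorm{\bW^0}$ correctly flag the minor bookkeeping that the paper leaves implicit.
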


\begin{proof}
Let $A_t = \{\max_{0 \leq i\leq t} \RiskReg_\lambda(\bW^i) \leq \RiskReg_\lambda(\bW^0) + CT\eta^2\kappa^2\tilde{\varrho}d
\}$. Notice that $A_t$ is $\{\bW^j\}_{j=0}^t$ measurable and $A_{t+1} \subseteq A_t$. By the bound established on $\H(\bW)$ in Lemma \ref{lem:H_bounded}, on $A_t$ we have $\H(\bW^i) + \lambda\ident_m \succeq \gamma/m\ident_m$ for all $0 \leq i \leq T$. Moreover, from Lemma \ref{lem:sgd_fixedstep_risk}, we have $\Parg{A^C_T} \leq Te^{-CT\eta\tilde{\varrho}d}$. Invoking Lemma \ref{lem:principal_rec_mgf} finishes the proof.
\end{proof}

The above proposition immediately implies the statement of Theorem \ref{thm:sgd} for constant stepsize, which we repeat here as a corollary of Proposition \ref{prop:sgd_bounded_fixedstep}.

\begin{corollary}[Proof of Theorem \ref{thm:sgd} for constant stepsize]
\label{cor:sgd_bounded_fixedstep}
Consider the setting of Lemma \ref{lem:sgd_fixedstep_risk} with $\lambda$ given in Proposition \ref{prop:sgd_bounded_fixedstep}, for constant stepsize $\eta = \frac{2m\log(T)}{\gamma T}$ with $T \geq (1/\delta)^{C/d}$. Then with probability at least $1-\delta$ we have
\begin{equation}
    \fronorm{\bW^T_\perp} \lesssim \frac{\fronorm{\bW^0_\perp}}{T} + \frac{m\kappa}{\gamma}\sqrt{\frac{\log(T)(d+\log(1/\delta))}{T}}.
\end{equation}
\end{corollary}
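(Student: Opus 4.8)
The corollary is a direct specialization of Proposition~\ref{prop:sgd_bounded_fixedstep}, so the plan is to recall why that proposition applies in the present setting and then substitute the prescribed step size. I would unwind the two ingredients behind the proposition in order. Lemma~\ref{lem:sgd_fixedstep_risk} supplies a high-probability event $A_T$, with $\Parg{A_T^C}\le Te^{-CT\eta\tilde\varrho d}$, on which the regularized risk stays controlled along the whole trajectory, $\max_{0\le t\le T}\RiskReg_\lambda(\bW^t)\le \RiskReg_\lambda(\bW^0)+CT\eta^2\kappa^2\tilde\varrho d$. On the decreasing, $\mathcal{F}_t$-measurable truncations $A_t$ of this event, the lower bound on $\lambda$ stipulated in the corollary is exactly the threshold under which the bound on $\H(\bW)$ from Lemma~\ref{lem:H_bounded} yields $\H(\bW^t)+\lambda\ident_m\succeq\tfrac{\gamma}{m}\ident_m$, which is precisely the hypothesis needed to run the moment-generating-function recursion of Lemma~\ref{lem:principal_rec_mgf}. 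Feeding the sequence $\{A_t\}$ into that recursion is what produces Proposition~\ref{prop:sgd_bounded_fixedstep}: with probability at least $1-Te^{-CT\eta\tilde\varrho d}-\delta$,
\[
\fronorm{\bW^T_\perp}^2 \;\le\; (1-\eta\gamma)^T\,\fronorm{\bW^0_\perp}^2 \;+\; \frac{C\eta\kappa^2\bigl(d+\log(1/\delta)\bigr)}{\gamma}.
\]

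With this bound in hand I would plug in the constant step size $\eta=\tfrac{2\log(T)}{\gamma T}$ (up to the overall mean-field $m$-scaling carried by $\eta$ throughout the appendix), which requires $T\gtrsim\tfrac{\tilde\lambda}{\gamma}\log\tfrac{\tilde\lambda}{\gamma}$ so that $\eta\tilde\varrho\lesssim 1$. Then $\eta\gamma=\tfrac{2\log T}{T}$, so $(1-\eta\gamma)^T\le e^{-\eta\gamma T}=e^{-2\log T}=T^{-2}$, which collapses the geometric term to $\fronorm{\bW^0_\perp}^2/T^2$, while the stochastic term becomes $\tfrac{2C\kappa^2\log(T)(d+\log(1/\delta))}{\gamma^2 T}$. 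Taking square roots and using $\sqrt{a+b}\le\sqrt a+\sqrt b$ yields $\fronorm{\bW^T_\perp}\lesssim \fronorm{\bW^0_\perp}/T+\tfrac{\kappa}{\gamma}\sqrt{\log(T)(d+\log(1/\delta))/T}$, which is the claim. For the probability, the only contribution beyond $\delta$ is the term $Te^{-CT\eta\tilde\varrho d}$ inherited from Lemma~\ref{lem:sgd_fixedstep_risk}; with $\eta$ of order $\log(T)/T$ this is of the form $T\cdot e^{-cd\log T}=T^{1-cd}$ for a constant $c>0$, so the hypothesis $T\gtrsim(1/\delta)^{C/d}$ pushes it below $\delta$, and after a harmless constant rescaling of $\delta$ the total failure probability is at most $\delta$.

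At the level of the corollary there is no genuine obstacle: the substantive work — the supermartingale/MGF argument bounding $\fronorm{\bW^t_\perp}^2$ (Lemma~\ref{lem:principal_rec_mgf}) and the analogous argument bounding $\RiskReg_\lambda(\bW^t)$ along the trajectory (Lemma~\ref{lem:sgd_fixedstep_risk}) — is already done, and the remainder is the bookkeeping above. The one point I would check carefully is internal consistency of the constants: the trajectory-dependent additive term $CT\eta^2\kappa^2\tilde\varrho d$ entering the required lower bound on $\lambda$ must, after multiplying through by $m$ and using $\infnorm{\ba}\asymp 1/m$, $\kappa\asymp\twonorm{\ba}\asymp 1/\sqrt m$ and $\tilde\varrho\asymp\lambda$, reduce to the term $\tfrac{C\log(T)^2 d}{\gamma^2 T}$ that appears inside the square root in the statement of Theorem~\ref{thm:sgd}, so that the range of $\tilde\lambda$ asserted there indeed suffices; and one should note that $\fronorm{\bW^0_\perp}\le\fronorm{\bW^0}\lesssim\sqrt m$ on the good initialization event, so the residual $\fronorm{\bW^0_\perp}/T$ term is of the same subdominant order as the $1/T$ term dropped in Theorem~\ref{thm:sgd}.
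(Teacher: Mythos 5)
Your argument is correct and is essentially the paper's: the corollary is obtained by substituting $\eta \asymp \log(T)/(\gamma T)$ into Proposition~\ref{prop:sgd_bounded_fixedstep}, using $(1-\eta\gamma)^T \leq T^{-2}$ and $\sqrt{a+b}\leq\sqrt{a}+\sqrt{b}$, and absorbing the $Te^{-CT\eta\tilde{\varrho}d}$ failure probability into $\delta$ via the hypothesis $T\gtrsim(1/\delta)^{C/d}$. The constant-checking you flag at the end (the role of $\tilde{\varrho}\asymp\lambda$, $\kappa\asymp 1/\sqrt{m}$, and the bound $\fronorm{\bW^0_\perp}\lesssim\sqrt{m}$) is exactly the residual bookkeeping the paper leaves implicit.
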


\section{Proofs of Section \ref{sec:apps}}
\subsection{Proof of Theorem \ref{thm:generalization}}
\label{appendix:proofs_sgd_generalization}
As our arguments are based on the Rademacher complexity of a two-layer neural network, we require the knowledge of the norm of $\bW^t$. We prove a high probability bound for this norm in the following lemma.

\begin{lemma}
\label{lem:sgd_singleparam_bounded}
Under Assumptions \emph{\ref{assump:general}\&\ref{assump:smooth}} or \emph{\ref{assump:general}\&\ref{assump:relu}} with either decreasing or constant stepsize as in Theorem \ref{thm:sgd}, let $\varkappa = \sup_{\hat{y},y}\abs{\partial_1\ell(\hat{y},y)} < \infty$ and $\kappa_\infty \coloneqq \beta_1\infnorm{\ba}\varkappa$. Then for any $t \geq 1$, with probability at least $1-m\exp\left(\frac{-\gamma td}{m\varphi\lambda}\right)$ we have for all $1 \leq j \leq m$
\begin{equation}
    \twonorm{\bw^t_j} \leq \prod_{i=0}^{t-1}(1-\eta_i\lambda)\twonorm{\bw^0_j} + \frac{3\kappa_\infty\sqrt{d}}{\lambda},
\end{equation}
where $\varphi = 1$ for decreasing step size and $\varphi = \log(T)$ for constant step size.
\end{lemma}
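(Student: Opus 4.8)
The plan is to bound each row $\bw^t_j$ of the iterate separately via a one-dimensional recursion that is then unrolled. Writing out row $j$ of the SGD update \eqref{eq:sgd}, one has $\bw^{t+1}_j = (1-\eta_t\lambda)\bw^t_j - \eta_t\,\partial_1\ell(\hat y^{(t)},y^{(t)})\,a_j\,\sigma'(\binner{\bw^t_j}{\bx^{(t)}}+b_j)\,\bx^{(t)}$, with the shorthand $\hat y^{(t)}:=\hat y(\bx^{(t)};\bW^t,\ba,\bb)$. Since $\abs{\partial_1\ell}\le\varkappa$, $\abs{\sigma'}\le\beta_1$, and $\abs{a_j}\le\infnorm{\ba}$ under either Assumption~\ref{assump:smooth} or \ref{assump:relu}, the stochastic-gradient row has Euclidean norm at most $\kappa_\infty\twonorm{\bx^{(t)}}$, uniformly in $j$. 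Under the step-size choices of Theorem~\ref{thm:sgd} one has $\eta_t\lambda\le\eta_0\lambda\lesssim\tilde\varrho^{-1}\lambda\le 1$ (choosing the absolute constant in $\eta_0\lesssim\tilde\varrho^{-1}$ small enough, using $\tilde\varrho\ge\lambda$), so that $1-\eta_t\lambda\in[0,1]$ and the triangle inequality yields the scalar recursion $\twonorm{\bw^{t+1}_j}\le (1-\eta_t\lambda)\twonorm{\bw^t_j}+\eta_t\kappa_\infty\twonorm{\bx^{(t)}}$.

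Unrolling this from $0$ to $t$ gives $\twonorm{\bw^t_j}\le \bigl(\prod_{i=0}^{t-1}(1-\eta_i\lambda)\bigr)\twonorm{\bw^0_j}+\kappa_\infty S_t$, where $S_t:=\sum_{i=0}^{t-1}w_i\twonorm{\bx^{(i)}}$ and $w_i:=\eta_i\prod_{l=i+1}^{t-1}(1-\eta_l\lambda)\ge 0$; the first term is exactly the leading term of the claimed bound, so it suffices to show $S_t\le 3\sqrt d/\lambda$ with the stated probability. The telescoping identity $\eta_i\lambda\prod_{l=i+1}^{t-1}(1-\eta_l\lambda)=\prod_{l=i+1}^{t-1}(1-\eta_l\lambda)-\prod_{l=i}^{t-1}(1-\eta_l\lambda)$ gives $\sum_{i=0}^{t-1}w_i=\tfrac1\lambda\bigl(1-\prod_{i=0}^{t-1}(1-\eta_i\lambda)\bigr)\le\tfrac1\lambda$. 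Since $\bx\mapsto\twonorm{\bx}$ is $1$-Lipschitz and $\bx^{(i)}\sim\mathcal N(0,\ident_d)$ are independent, each $\twonorm{\bx^{(i)}}$ has mean at most $\sqrt{\Earg{\twonorm{\bx^{(i)}}^2}}=\sqrt d$ and $1$-sub-Gaussian fluctuations by Gaussian concentration; hence $\Earg{S_t}\le\sqrt d\sum_i w_i\le\sqrt d/\lambda$ and $S_t-\Earg{S_t}$ is sub-Gaussian with variance proxy $\sum_i w_i^2$, so a Chernoff bound gives $\Parg{S_t\ge 3\sqrt d/\lambda}\le\Parg{S_t-\Earg{S_t}\ge 2\sqrt d/\lambda}\le\exp\bigl(-2d/(\lambda^2\textstyle\sum_i w_i^2)\bigr)$.

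It then remains to control $\sum_i w_i^2$ for each schedule. For the constant step size, $\sum_i w_i^2=\eta^2\sum_i(1-\eta\lambda)^{2(t-1-i)}\le\eta/\lambda$; for the decreasing step size one can use $w_i\le\eta_0$ together with $\sum_i w_i\le 1/\lambda$ to get $\sum_i w_i^2\le\eta_0/\lambda$, and more sharply exploit how fast $\prod_{l=i+1}^{t-1}(1-\eta_l\lambda)$ decays in $t-i$. Plugging $\eta$ (resp. $\eta_0$) from the theorem turns the exponent into one of the same order as the claimed $\tfrac{\gamma td}{2m\lambda}$. On the resulting event, $\twonorm{\bw^t_j}\le\bigl(\prod_{i=0}^{t-1}(1-\eta_i\lambda)\bigr)\twonorm{\bw^0_j}+3\kappa_\infty\sqrt d/\lambda$ simultaneously; the controlling event for $S_t$ is common to all rows, so taking a union bound over $j\in\{1,\dots,m\}$ (only to match the form of the statement) produces the $1-m\exp(-\gamma td/(2m\lambda))$ probability.

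I expect the only delicate point to be the variance-proxy estimate for the decreasing step size: the weights $w_i$ are neither geometric nor uniform, so squeezing $\sum_i w_i^2$ down far enough that the exponent scales linearly in $t$ (rather than collapsing to a fixed multiple of $d$) requires carefully using the contraction factors $\prod_{l>i}(1-\eta_l\lambda)$ via the same telescoping structure that controls $\sum_i w_i$. A minor but necessary bookkeeping step is checking $1-\eta_t\lambda\ge 0$ along the whole trajectory so that the scalar recursion is genuinely a contraction; this follows from $t^*\asymp\tilde\lambda/\gamma$ and $\eta_0\lesssim\tilde\varrho^{-1}\asymp\lambda^{-1}$ with the absolute constants chosen appropriately.
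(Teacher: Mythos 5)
Your route is the same as the paper's in substance: the row-wise scalar contraction $\twonorm{\bw^{t+1}_j}\le(1-\eta_t\lambda)\twonorm{\bw^t_j}+\eta_t\kappa_\infty\twonorm{\bx^{(t)}}$, the observation that $\twonorm{\bx^{(t)}}$ is a $1$-Lipschitz function of a Gaussian (hence $1$-sub-Gaussian with mean $\le\sqrt{d}$), and the telescoping identity $\sum_i w_i\le 1/\lambda$. The paper propagates these facts through an \emph{inductive MGF bound} on $\twonorm{\bw^t_j}$, keeping the admissible moment parameter in the window $s\in[0,2\sqrt{d}/(\kappa_\infty\eta_t)]$ so that the per-step quadratic term $s^2\kappa_\infty^2\eta_t^2/2$ is always absorbed by the per-step telescoping gain $s\eta_t\kappa_\infty\sqrt{d}$ and never accumulates; you instead unroll first and apply a single Hoeffding-type bound to $S_t=\sum_i w_i\twonorm{\bx^{(i)}}$. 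These are two bookkeepings of the same computation, and for the constant step size they close identically: $\sum_i w_i^2\le\eta/\lambda$ gives $e^{-2d/(\eta\lambda)}$, matching the paper's $s^\star=2\sqrt d/(\kappa_\infty\eta)$ Chernoff exponent. Your observation that the bounding event for $S_t$ is common to all rows (so the union bound over $j$ is cosmetic) is also a small tightening over the paper's presentation.

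The genuine gap — which you flag as the delicate point — is the variance proxy for the decreasing schedule. Your crude estimate $\sum_i w_i^2\le\eta_0\sum_i w_i\le\eta_0/\lambda$, combined with $\eta_0\lambda\asymp 1$ (since $t^*\asymp\tilde\lambda/\gamma$), yields an exponent $\asymp d$ with no $t$-dependence, and hence does \emph{not} reproduce the claimed $\gamma td/(2m\lambda)$; the sentence ``plugging $\eta_0$ \ldots turns the exponent into one of the same order'' is therefore wrong and contradicted by your own closing caveat. Closing the gap requires the decay estimate $\prod_{l=i+1}^{t-1}(1-\eta_l\lambda)\lesssim\bigl(\tfrac{i+t^*}{t+t^*}\bigr)^{2\tilde\lambda/\gamma}$ — note the exponent $2\tilde\lambda/\gamma$, not merely $2$, which is what you get by using only $\lambda\ge\gamma/m$ and which loses a factor $\tilde\lambda/\gamma$ in the final rate. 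With the sharper bound one gets $\sum_i w_i^2\lesssim m^2/(\gamma\tilde\lambda(t+t^*))$, so $\lambda^2\sum_i w_i^2\lesssim\eta_t\lambda$ and the exponent recovers $\Omega(\gamma td/\tilde\lambda)$, matching the paper. This is precisely the computation that the paper's per-step recursion sidesteps: the step-size identity $1-\eta_t\gamma/m\le\eta_t/\eta_{t-1}$ (equation \eqref{eq:stepsize_decay}) lets it propagate the $s$-window backwards without ever materializing $\sum_i w_i^2$, which is the concrete advantage of its formulation over the unroll-then-concentrate route.
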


\begin{proof}
First, we prove that for any $t > 0$ and $0 \leq s \leq \frac{2\sqrt{d}}{\kappa_\infty\eta_{t-1}}$, we have
\begin{equation}
    \label{eq:mgf_bound}
    \Earg{\exp(s \twonorm{\bw^{t}_j}) \,|\, \bW^0} \leq \exp\left(s\prod_{i=0}^{t-1}(1-\eta_i\lambda)\twonorm{\bw^0_j} + \frac{2s\kappa_\infty\sqrt{d}}{\lambda}\right),
\end{equation}
The base case of $t=0$ is trivial, and for the induction step, for any $0 \leq s \leq \frac{2\sqrt{d}}{\kappa_\infty\eta_t}$ we have
\begin{align*}
    \Earg{\exp\left(s\twonorm{\bw^{t+1}_j}\right) \,|\, \bW^0} &= \Earg{\exp\left(s\twonorm{(1-\eta_t\lambda)\bw^t_j-\eta_t\grad_{\bw_j} \ell(\hat{y}(\bx;\bW^t), y)}\right) \,|\, \bW^0}\\
    &\leq \Earg{\exp\left(s(1-\eta_t\lambda)\twonorm{\bw^t_j} + s\eta_t\twonorm{\grad_{\bw_j} \ell(\hat{y}(\bx;\bW^t), y)}\right) \,|\, \bW^0}\\
    &= \Earg{\exp\left(s(1-\eta_t\lambda)\twonorm{\bw^t_j} + s\eta_t\kappa_\infty\twonorm{\bx}\right) \,|\, \bW^0}\\
    &= \Earg{\exp\left(s(1-\eta_t\lambda)\twonorm{\bw^t_j}\right)\Earg{\exp\left(s\eta_t\kappa_\infty\twonorm{\bx}\right) \,|\, \bW^t,\bW^0} \,|\, \bW^0}\\
    &\stackrel{\text{(a)}}{\leq} \Earg{\exp\left(s(1-\eta_t\lambda)\twonorm{\bw^t_j}\right)\exp\left(s\eta_t\kappa_\infty\sqrt{d} + \frac{s^2\kappa_\infty^2\eta_t^2}{2}\right) \,|\, \bW^0}\\
    &\stackrel{\text{(b)}}{\leq} \exp\left(s\prod_{i=0}^{t}(1-\eta_i\lambda)\twonorm{\bw^0_j} + \frac{2s\kappa_\infty\sqrt{d}}{\lambda}\right)
\end{align*}
where (a) holds since $\twonorm{\bx}$ is a 1-Lipschitz function of a standard Gaussian random vector, thus it is sub-Gaussian with parameter 1 (Lemma \ref{lem:lip_concentrate}) and additionally $\Earg{\twonorm{\bx}} \leq \sqrt{d}$, and (b) holds by the induction hypothesis (notice that for decreasing stepsize $s(1-\eta_t\lambda) \leq \frac{2\sqrt{d}}{\kappa_\infty\eta_{t-1}}$ by \eqref{eq:stepsize_decay}). Next, we apply the following Chernoff bound,
$$\Parg{\twonorm{\bw^t_j} > \prod_{i=0}^{t-1}(1-\eta_i\lambda)\twonorm{\bw^0_j} + \frac{3\kappa_\infty\sqrt{d}}{\lambda} \mid \bW^0} \leq \exp\left(-\frac{s\kappa_\infty\sqrt{d}}{\lambda}\right),$$
which holds for any $0 \leq s \leq \frac{2\sqrt{d}}{\kappa_\infty\eta_{t-1}}$. Choosing the largest $s$ possible and noting that $\eta_{t-1} \leq \frac{2m\varphi}{\gamma t}$ yields an $\exp\left(\frac{-\gamma td}{m\varphi\lambda}\right)$ upper bound on the conditional probability, which followed by taking expectation removes the randomness of conditioning on $\bw^0_j$. Finally applying a union bound gives us the desired bound.
\end{proof}

In addition, we would like to approximate $\RiskTr_\tau(\bW^T)$ and $\hat{\RiskTr}_\tau(\bW^T)$ with $\RiskTr_\tau(\bW^T_\parallel)$ and $\hat{\RiskTr}_\tau(\bW^T_\parallel)$ respectively. As a result, we will investigate the Lipschitzness of the population and empirical risk in the next lemma.
\begin{lemma}
\label{lem:risk_lip}
Under either Assumptions \emph{\ref{assump:general}\&\ref{assump:smooth}} or \emph{\ref{assump:general}\&\ref{assump:relu}}, the truncated risk $\bW \mapsto \RiskTr_\tau(\bW)$ is $\sqrt{2}\tau\beta_1\twonorm{\ba}$-Lipschitz. Moreover, for $T \geq d + \log(1/\delta)$ with probability at least $1-\delta$ over the stochasticity of $\{\bx^{(t)}\}_{0\leq t\leq T-1
}$, the truncated empirical risk $\bW \mapsto \hat{\RiskTr}_\tau(\bW)$ is $C\tau\beta_1\twonorm{\ba}$-Lipschitz for some absolute constant $C$.
\end{lemma}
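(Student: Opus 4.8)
The plan is to reduce both statements to two elementary Lipschitz estimates — one for the (truncated) loss in its first argument, one for the network output in $\bW$ — and then, for the empirical statement, to a standard concentration bound on the sample covariance.

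First I would record the two building blocks. \emph{(i)} For fixed $y$, the map $\hat y \mapsto \ell_\tau(\hat y,y)$ is $L_\ell$-Lipschitz with $L_\ell \le \sqrt{2\tau}$. Under Assumption~\ref{assump:smooth} the squared loss $\tfrac12(\hat y-y)^2$ has slope $|\hat y-y|$, which is $\le\sqrt{2\tau}$ on the region $\{\ell \le \tau\}$, while $\ell_\tau$ is constant on $\{\ell>\tau\}$; under Assumption~\ref{assump:relu} we have $|\partial_1\ell|\le 1$ and truncation does not increase the Lipschitz constant, so $L_\ell\le 1$. In both cases $L_\ell\le\sqrt2\,\tau$ since $\tau\ge1$. (This is exactly where truncation is needed: the untruncated squared loss is not globally Lipschitz in $\hat y$.) \emph{(ii)} For fixed $\bx$ and any $\bW,\bW'\in\reals^{m\times d}$, using $\sup|\sigma'|\le\beta_1$ and Cauchy--Schwarz over the $m$ neurons,
\[
 |\hat y(\bx;\bW)-\hat y(\bx;\bW')| \;\le\; \sum_{i=1}^m |a_i|\,\beta_1\,|\binner{\bw_i-\bw'_i}{\bx}| \;\le\; \beta_1\twonorm{\ba}\,\twonorm{(\bW-\bW')\bx},
\]
where $(\bW-\bW')\bx\in\reals^m$ has $i$-th coordinate $\binner{\bw_i-\bw'_i}{\bx}$.

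Chaining \emph{(i)} and \emph{(ii)} gives, for every data point, $|\ell_\tau(\hat y(\bx;\bW),y)-\ell_\tau(\hat y(\bx;\bW'),y)| \le \sqrt2\,\tau\beta_1\twonorm{\ba}\,\twonorm{(\bW-\bW')\bx}$. For the population risk I would take $\Earg{\cdot}$ over $(\bx,y)$ and use $\Earg{\twonorm{(\bW-\bW')\bx}}\le \Earg{\twonorm{(\bW-\bW')\bx}^2}^{1/2}=\fronorm{\bW-\bW'}$, valid since $\bx\sim\mathcal{N}(0,\ident_d)$ gives $\Earg{\twonorm{M\bx}^2}=\fronorm{M}^2$; this yields the claimed $\sqrt2\,\tau\beta_1\twonorm{\ba}$-Lipschitz bound for $\RiskTr_\tau$. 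For the empirical risk I would average the pointwise bound over the $T$ samples and apply Jensen in $t$:
\[
 |\hat{\RiskTr}_\tau(\bW)-\hat{\RiskTr}_\tau(\bW')| \;\le\; \sqrt2\,\tau\beta_1\twonorm{\ba}\Bigl(\tfrac1T\textstyle\sum_{t=0}^{T-1}\twonorm{(\bW-\bW')\bx^{(t)}}^2\Bigr)^{1/2} \;=\; \sqrt2\,\tau\beta_1\twonorm{\ba}\,\Tr\!\bigl[(\bW-\bW')\widehat\Sigma(\bW-\bW')^\top\bigr]^{1/2},
\]
with $\widehat\Sigma\coloneqq\tfrac1T\sum_t\bx^{(t)}(\bx^{(t)})^\top$, and this is at most $\opnorm{\widehat\Sigma}^{1/2}\fronorm{\bW-\bW'}$.

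It then remains to show $\opnorm{\widehat\Sigma}\lesssim1$ with probability at least $1-\delta$ once $T\gtrsim d+\log(1/\delta)$, and this is the only nontrivial ingredient and the main obstacle. It follows from standard concentration for sample covariance matrices of isotropic Gaussian vectors: an $\varepsilon$-net over the unit sphere combined with a Bernstein-type tail bound for the sub-exponential quadratic forms $\binner{\bv}{\bx^{(t)}}^2$ (the same estimate underlying Lemma~\ref{lem:gaussian_square_mgf}), giving $\opnorm{\widehat\Sigma}\le 1 + C\sqrt{(d+\log(1/\delta))/T} + C(d+\log(1/\delta))/T$ on the desired event; under $T\ge d+\log(1/\delta)$ the right-hand side is bounded by an absolute constant, and absorbing it together with $\sqrt2$ into $C$ gives the $C\tau\beta_1\twonorm{\ba}$-Lipschitz bound for $\hat{\RiskTr}_\tau$. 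Everything apart from this covariance estimate is a deterministic chaining of \emph{(i)} and \emph{(ii)}.
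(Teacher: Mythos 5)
Your proposal is correct and follows essentially the same route as the paper's proof: bound the Lipschitz constant of the truncated loss in $\hat y$ by $\sqrt{2}\tau$, chain with the $\beta_1\twonorm{\ba}$-Lipschitz bound of $\hat y$ in $\bW$ via Cauchy--Schwarz, reduce the empirical case to $\opnorm{\widehat\Sigma}^{1/2}$, and invoke standard concentration of the sample covariance. The paper simply cites the Wainwright result for that last step rather than sketching the $\varepsilon$-net argument.
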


\begin{proof}
We begin by the simple observation that $\hat{y} \mapsto \ell(\hat{y},y)\land \tau$ is $\sqrt{2\tau}$-Lipschitz when $\ell(\hat{y},y) = 1/2(\hat{y}-y)^2$ and $1$-Lipschitz when $\abs{\partial_1\ell(\hat{y},y)} \leq 1$. As $\tau \geq 1$, we can consider both of them as $\sqrt{2}\tau$ Lipschitz. Thus by Jensen's inequality
\begin{align}
    \abs{\RiskTr_\tau(\bW) - \RiskTr_\tau(\bW')} &\leq \sqrt{2}\tau\Earg{\abs{\hat{y}(\bx;\bW) - \hat{y}(\bx;\bW')}}\nonumber\\
    &\leq \sqrt{2}\tau\Earg{\left(\sum_{j=1}^ma_j\sigma(\binner{\bw_j}{\bx} + b_j) - \sum_{j=1}^ma_j\sigma(\binner{\bw'_j}{\bx} + b_j)\right)^2}^\frac{1}{2}\nonumber\\
    &\stackrel{\text{(a)}}{\leq}\sqrt{2}\tau\twonorm{\ba}\sqrt{\sum_{j=1}^{m}\Earg{\left(\sigma(\binner{\bw_j}{\bx} + b_j)-\sigma(\binner{\bw'_j}{\bx} + b_j)\right)^2}}\nonumber\\
    &\leq \sqrt{2}\tau\beta_1\twonorm{\ba}\sqrt{\sum_{j=1}^{m}\Earg{\binner{\bw_j-\bw'_j}{\bx}^2}}\label{eq:risk_lip_step}\\
    &\leq \sqrt{2}\tau\beta_1\twonorm{\ba}\fronorm{\bW - \bW'}\nonumber
\end{align}
where (a) follows from the Cauchy-Schwartz inequality. Note that Equation \eqref{eq:risk_lip_step} also holds for $\abs{\hat{\RiskTr}_\tau(\bW) - \hat{\RiskTr}_\tau(\bW')}$ when expectation is over the empirical distribution given by the training samples, meaning
\begin{equation}
    \abs{\hat{\RiskTr}_\tau(\bW) - \hat{\RiskTr}_\tau(\bW')} \leq \sqrt{2}\tau\beta_1\twonorm{\ba}\sqrt{\sum_{j=1}^{m}(\bw_j - \bw'_j)^\top\left(\frac{1}{T}\sum_{t=0}^{T-1}\bx^{(t)}{\bx^{(t)}}^\top\right)(\bw_j - \bw'_j)}.
    \label{eq:emprisk_lip_step}
\end{equation}
By Lemma \ref{lem:covariance_concentrate}, with probability at least $1 - \delta$, we have
$$
\left\| \frac{1}{T}\sum_{t=0}^{T-1}\bx^{(t)}{\bx^{(t)}}^\top - \ident_d \right\|_2 \lesssim 1,$$
which completes the proof.
\end{proof}

\begin{lemma}
\label{lem:rademacher}
Suppose either Assumptions \emph{(\ref{assump:general},\ref{assump:smooth})} or \emph{(\ref{assump:general},\ref{assump:relu})}  hold. Denote the loss with $\ell(\hat{y},y) = \ell(\hat{y}-y)$,
$$\tilde{S} = \left\{\tilde{\bW} \in \reals^{m \times k}, \ba,\bb \in \reals^m \; : \; \twonorm{\ba} \leq \frac{r_a}{\sqrt{m}}\ , \quad \infnorm{\bb} \leq r_b\ ,\quad  \twonorm{\tilde{\bw}_j} \leq r_{\tilde{\bw}}\ , \ \forall\, 1\leq j\leq m \right\}$$
and
$$\mathcal{G} = \left\{(\tilde{\bx},y) \mapsto \ell(\hat{y}(\tilde{\bx};\tilde{\bW},\ba,\bb),y)\land \tau \; : \; (\tilde{\bW},\ba,\bb) \in \tilde{S}\right\}$$
for $\tilde{\bx} \in \reals^k$ and $y \in \reals$. Let $\Radem(G)$ denote the Rademacher complexity of the function class $\mathcal{G}$ (see Lemma \ref{lem:rademacher} for definition). Then with $\tilde{\bx} \sim \mathcal{N}(0,\bU\bU^\top)$ for some $\bU \in \reals^{k \times d}$ we have
$$\Radem(\mathcal{G}) \leq 2\tau\beta_1(r_{\tilde{\bw}}\fronorm{\bU} + r_b)r_a\sqrt{\frac{2}{T}}\ ,$$
where $T$ is the number of samples.
\end{lemma}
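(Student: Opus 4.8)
The plan is to strip the class $\mathcal{G}$ down to an affine class one layer at a time, alternating the Ledoux--Talagrand contraction inequality with the fact that Rademacher complexity is unchanged under convex hulls, and then to evaluate the Rademacher complexity of the resulting affine class by a direct second-moment computation. Throughout, let $\xi_1,\dots,\xi_T$ denote i.i.d.\ Rademacher signs, and recall $\Radem(\cdot)$ is the expectation over both the sample $(\tilde\bx^{(t)})_{t}$ and the signs.

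First I would peel off the loss. By Lemma~\ref{lem:risk_lip} (and the elementary observation in its proof), $\hat y \mapsto \ell(\hat y - y)\land\tau$ is $\sqrt2\,\tau$-Lipschitz uniformly in $y$ under either Assumption~\ref{assump:smooth} or \ref{assump:relu}; after subtracting its value at $\hat y = 0$ — a function of the sample only, which leaves Rademacher complexity unchanged — it vanishes at the origin, so the contraction inequality gives $\Radem(\mathcal{G}) \le \sqrt2\,\tau\,\Radem(\mathcal{F})$, where $\mathcal{F} \coloneqq \{\tilde\bx\mapsto\hat y(\tilde\bx;\tilde\bW,\ba,\bb) : (\tilde\bW,\ba,\bb)\in\tilde S\}$ is the class of network outputs.

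Next I would reduce $\mathcal{F}$ to a single-neuron class. Since $\twonorm{\ba}\le r_a/\sqrt m$ forces $\onenorm{\ba}\le r_a$, every $f\in\mathcal{F}$ lies in $r_a\cdot\conv\bigl(\{0\}\cup\pm\mathcal{H}_\sigma\bigr)$ with $\mathcal{H}_\sigma \coloneqq \{\tilde\bx\mapsto\sigma(\binner{\tilde\bw}{\tilde\bx}+b) : \twonorm{\tilde\bw}\le r_{\tilde\bw},\ \abs b\le r_b\}$. Using that the Rademacher complexity of a convex hull equals that of the generating set, and that $\Radem(\pm\mathcal{H}_\sigma)\le 2\Radem(\mathcal{H}_\sigma)$ (the source of the factor $2$ in the final bound, via the passage from the one-sided supremum to $\sup_h\lvert\sum_t\xi_t h(\tilde\bx^{(t)})\rvert$), we get $\Radem(\mathcal{F})\le 2 r_a\,\Radem(\mathcal{H}_\sigma)$. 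Writing $\sigma = \sigma(0) + \tilde\sigma$ with $\tilde\sigma(0)=0$ and $\tilde\sigma$ still $\beta_1$-Lipschitz (unnecessary for ReLU, where $\sigma(0)=0$), a second contraction step yields $\Radem(\mathcal{H}_\sigma)\le\beta_1\,\Radem(\mathcal{H})$, where $\mathcal{H} \coloneqq \{\tilde\bx\mapsto\binner{\tilde\bw}{\tilde\bx}+b : \twonorm{\tilde\bw}\le r_{\tilde\bw},\ \abs b\le r_b\}$. Finally I would bound the affine class directly: conditionally on the sample,
\[
\mathbb{E}_{\xi}\Bigl[\sup_{\tilde\bw, b}\ \binner{\tilde\bw}{\sum_{t}\xi_t\tilde\bx^{(t)}} + b\sum_{t}\xi_t\Bigr] \le r_{\tilde\bw}\,\mathbb{E}_{\xi}\twonorm{\sum_{t}\xi_t\tilde\bx^{(t)}} + r_b\,\mathbb{E}_{\xi}\abs{\sum_{t}\xi_t},
\]
and then, taking the expectation over the sample and applying Jensen together with independence of the signs and $\Earg{\twonorm{\tilde\bx^{(t)}}^2} = \Tr(\bU\bU^\top) = \fronorm{\bU}^2$ for $\tilde\bx\sim\mathcal N(0,\bU\bU^\top)$, the first term is at most $r_{\tilde\bw}\sqrt{T}\,\fronorm{\bU}$ and the second at most $r_b\sqrt T$, so $\Radem(\mathcal{H})\le(r_{\tilde\bw}\fronorm{\bU}+r_b)/\sqrt T$. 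Chaining the three estimates gives $\Radem(\mathcal{G})\le \sqrt2\,\tau\cdot 2r_a\beta_1\cdot(r_{\tilde\bw}\fronorm{\bU}+r_b)/\sqrt T = 2\tau\beta_1(r_{\tilde\bw}\fronorm{\bU}+r_b)r_a\sqrt{2/T}$.

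The main obstacle is the middle step: one must be careful that the $\ell_2$ budget on $\ba$ is exploited only through the induced $\ell_1$ bound, that the symmetric-convex-hull representation of $\mathcal{F}$ is exact (padding with the zero function when $\onenorm{\ba}<r_a$), and that the identity ``$\Radem(\conv B)=\Radem(B)$'' together with the symmetrization inequality is applied in exactly the form that produces the constant $2$. A secondary, more routine point is checking that the contraction inequality is legitimately applicable to the truncated loss (handled by the shift above) and to the non-centered smooth activations; for ReLU both are immediate.
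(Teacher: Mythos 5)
Your proof is correct and takes essentially the same route as the paper's: peel off the truncated loss by contraction (Lipschitz constant $\sqrt{2}\tau$), reduce from the full network to a single neuron using the $\ell_1$ budget $\onenorm{\ba}\leq r_a$, peel off the activation by a second contraction that introduces the factor $2\beta_1$, and close with the second-moment bound on the affine class. The only cosmetic difference is the middle step: you invoke the symmetric convex hull identity $\Radem(\conv B)=\Radem(B)$, whereas the paper applies H\"older's inequality to write $\ba^\top\sigma(\tilde\bW\tilde\bx+\bb)\leq r_a\infnorm{\sigma(\tilde\bW\tilde\bx+\bb)}$ and then bounds the $\ell_\infty$-norm by the supremum over a single neuron; these are two phrasings of the same reduction, and the paper in fact places its factor of $2$ in the absolute-value form of Talagrand's contraction rather than in the symmetrization step as you do, but the constants land in the same place.
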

\begin{proof}
Let $\mathcal{F} = \{(\tilde{\bx},y) \mapsto f_{\ba,\tilde{\bW}}(\tilde{\bx},y) \; : \; (\tilde{\bW},\ba,\bb) \in \tilde{S}\}$ for $f_{\ba,\tilde{\bW}}(\tilde{\bx},y) = \hat{y}(\tilde{\bx};\tilde{\bW},\ba,\bb)-y$. Define $g(z) \coloneqq \ell(z)\land \tau$, and notice $\mathcal{G} = \{(\tilde{\bx},y) \mapsto g(f_{\ba,\tilde{\bW}}(\tilde{\bx},y)) \; : \; f_{\ba,\tilde{\bW}} \in \mathcal{F}\}$, and that $g$ is a $\sqrt{2\tau}$-Lipschitz (thus $\sqrt{2}\tau$-Lipschitz as well, for $\tau >1$) function. Then by Talagrand's contraction principle we have $\Radem(\mathcal{G}) \leq \sqrt{2}\tau\Radem(\mathcal{F})$. Moreover, let $\{\xi_t\}_{0\leq t\leq T-1}$ be a sequence of i.i.d.\ Rademacher random variables. Then similar to the Rademacher bound of \cite{damian2022neural}
\begin{align*}
    \Radem(\mathcal{F}) &= \Earg{\sup_{(\tilde{\bW},\ba,\bb) \in \tilde{S}}\frac{1}{T}\sum_{t=0}^{T-1}\xi_t\left(\ba^\top\sigma(\tilde{\bW}\tilde{\bx}^{(t)} + \bb)-y^{(i)}\right)}\\
    &= \Earg{\sup_{(\tilde{\bW},\ba,\bb) \in \tilde{S}}\frac{1}{T}\sum_{t=0}^{T-1}\xi_t\ba^\top\sigma(\tilde{\bW}\tilde{\bx}^{(t)} + \bb)}\\
    &\stackrel{\text{(a)}}{\leq} \frac{r_a}{T}\Earg{\sup_{(\tilde{\bW},\bb) \in \tilde{S}}\infnorm{\sum_{t=0}^{T-1}\xi_t\sigma(\tilde{\bW}\tilde{\bx}^{(t)} + \bb)}}\\
    &\leq \frac{r_a}{T}\Earg{\sup_{\twonorm{\tilde{\bw}} \leq r_{\tilde{\bw}},\abs{\tilde{b}} \leq r_b}\abs{\sum_{t=0}^{T-1}\xi_t\sigma\left(\binner{\tilde{\bw}}{\tilde{\bx}^{(t)}} + \tilde{b}\right)}}\\
    &\stackrel{\text{(b)}}{\leq}\frac{2\beta_1r_a}{n}\Earg{\sup_{\twonorm{\tilde{\bw}} \leq r_{\tilde{\bw}},\abs{b} \leq r_b}\abs{\sum_{t=0}^{T-1}\xi_t\left(\binner{\tilde{\bw}}{\tilde{\bx}^{(t)}} + b\right)}}\\
    &\leq \frac{2\beta_1r_a}{T}\Earg{\sup_{\twonorm{\tilde{\bw}} \leq r_{\tilde{\bw}}}\abs{\sum_{t=0}^{T-1}\xi_t\binner{\tilde{\bw}}{\tilde{\bx}}} + \sup_{\abs{\tilde{b}} \leq r_b}\abs{\sum_{t=0}^{T-1}\xi_tb}}\\
    &\leq \frac{2\beta_1r_a}{T}\left(r_{\tilde{\bw}}\Earg{\twonorm{\sum_{t=0}^{T-1}\xi_t\tilde{\bx}^{(t)}}} + r_b\sqrt{T}\right)\\
    &\leq \frac{2\beta_1(r_{\tilde{\bw}}\fronorm{\bU} + r_b)r_a}{\sqrt{T}},
\end{align*}
where (a) holds by H\"older's inequality and the fact that $\onenorm{\ba} \leq \sqrt{m}\twonorm{\ba} \leq r_a$, and (b) follows from the fact that $\sigma$ is $\beta_1$ Lipschitz, thus another application of Talagrand's contraction principle.
\end{proof}

\medskip
\begin{proof}[{{\bf Proof of Theorem \ref{thm:generalization}}}]
Let $\mathcal{E}_1$ denote the event of Lemma \ref{lem:risk_lip}. We begin with the following decomposition for generalization error which holds on $\mathcal{E}_1$,
\begin{align*}
    \RiskTr_\tau(\bW^T) - \hat{\RiskTr}_\tau(\bW^T) &= \RiskTr_\tau(\bW^T) - \RiskTr_\tau(\bW^T_\parallel) + \RiskTr_\tau(\bW^T_\parallel) - \hat{\RiskTr}_\tau(\bW^T_\parallel) + \hat{\RiskTr}_\tau(\bW^T_\parallel) - \hat{\RiskTr}_\tau(\bW^T)\nonumber\\
    &\leq C\tau\beta_1\twonorm{\ba}\fronorm{\bW^T_\perp} + \RiskTr_\tau(\bW^T_\parallel) - \hat{\RiskTr}_\tau(\bW^T_\parallel).
\end{align*}
where the upper bound follows from Lemma \ref{lem:risk_lip}. Consequently,
\begin{equation}
    \label{eq:risk_decomp}
    \sup_{\ba,\bb}\RiskTr_\tau(\bW^T,\ba,\bb) - \hat{\RiskTr}_\tau(\bW^T,\ba,\bb) \leq \tfrac{C\tau\beta_1r_a}{\sqrt{m}}\fronorm{\bW^T_\perp} + \sup_{\ba,\bb}\RiskTr_\tau(\bW^T_\parallel,\ba,\bb) - \hat{\RiskTr}_\tau(\bW^T_\parallel,\ba,\bb).
\end{equation}

We begin by upper bounding the first term. From Theorem \ref{thm:sgd}, on an event $\mathcal{E}_2$ we have with probability at least $1-\bigO(\delta)$
$$\frac{\fronorm{\bW^T_\perp}}{\sqrt{m}} \lesssim \kappa\sqrt{\frac{d+\log(1/\delta)}{\gamma^2 T}}.$$
Next, we bound the second term in \eqref{eq:risk_decomp}. For each $\bW$, define $\tilde{\bW} \coloneqq \bU^\dagger \bW_\parallel$, where $\bU^\dagger$ is the Moore–Penrose pseudo-inverse of $\bU$. Then, since we have the representation $\bW_\parallel = \bM\bU$ for some $\bM \in \reals^{m \times k}$,
$$\tilde{\bW}\bU = \bW_\parallel\bU^\dagger\bU = \bM\bU\bU^\dagger\bU = \bM\bU= \bW_\parallel.$$ Thus, $\bW\bx = \tilde{\bW}\tilde{\bx}$ and $\ell(\hat{y}(\bx;\bW,\ba,\bb),y) = \ell(\hat{y}(\tilde{\bx};\tilde{\bW},\ba,\bb),y)$ for $\tilde{\bx} = \bU\bx$, when $\bW$ is in the principal subspace, i.e.\ $\bW = \bW_\parallel$. Let $\mathcal{E}_3$ denote the event of Lemma \ref{lem:sgd_singleparam_bounded}, on which
$$\twonorm{\bw^T_j} \leq \prod_{i=0}^{T-1}(1-\tfrac{\eta_i\gamma}{m})\twonorm{\bw^0_j} + \frac{3\kappa_\infty\sqrt{d}}{\lambda}$$
and consequently
$$\twonorm{\tilde{\bw}^T_j} \leq \twonorm{\bU^\dagger}\left(\prod_{i=0}^{T-1}(1-\tfrac{\eta_i\gamma}{m})\twonorm{\bw^0_j} + \frac{3\kappa_\infty\sqrt{d}}{\lambda}\right)$$
for any $1\leq j\leq m$. Define $r_{\tilde{\bw}^T}$ as the RHS bound above. Then on $\mathcal{E}_3$
$$\sup_{\ba,\bb}\RiskTr_\tau(\bW^T_\parallel) - \hat{\RiskTr}_\tau(\bW^T_\parallel) \leq \sup_{(\tilde{\bW},\ba,\bb) \in \tilde{S}}\RiskTr_\tau(\tilde{\bW},\ba,\bb) - \hat{\RiskTr}_\tau(\tilde{\bW},\ba,\bb),$$
where we recall
$$\tilde{S} := \left\{\tilde{\bW} \in \reals^{m \times k}, \ba,\bb\in\reals^m \; : \; \twonorm{\ba} \leq \tfrac{r_a}{\sqrt{m}}\ , \infnorm{\bb} \leq r_b\ , \quad \twonorm{\tilde\bw_j} \leq r_{\tilde{\bw}^T}\ , \forall \, 1\leq j\leq m\right\}.$$
Additionally define
$$\mathcal{G} = \{(\tilde{\bx},y) \mapsto \ell(\hat{y}(\tilde{\bx};\tilde{\bW},\ba,\bb),y)\land \tau\; : \; (\tilde{\bW},\ba,\bb) \in \tilde{S}\}.$$
Then Lemma \ref{lem:rademacher_Mohri} and Lemma \ref{lem:rademacher} yield
$$\Earg{\sup_{(\tilde{\bW},\ba,\bb) \in \tilde{S}}\RiskTr_\tau(\tilde{\bW}) - \hat{\RiskTr}_\tau(\tilde{\bW})} \leq 2\Radem(\mathcal G) \lesssim  \tau\beta_1(r_{\tilde{\bw}^T}+r_b)r_a\fronorm{\bU}\sqrt{\frac{1}{T}}.$$
Besides, as the loss is bounded by $\tau$, by McDiarmid's inequality, on an event $\mathcal{E}_4$ which happens with probability at least $1-\bigO(\delta)$ we have
$$\sup_{(\tilde{\bW},\ba,\bb) \in \tilde{S}}\RiskTr_\tau(\tilde{\bW}) - \hat{\RiskTr}_\tau(\tilde{\bW}) \leq \Earg{\sup_{(\tilde{\bW},\ba,\bb) \in \tilde{S}}\Risk(\tilde{\bW})} + C\tau\sqrt{\frac{\log(1/\delta)}{T}}.$$
and consequently on $\cap_{i=1}^{4}\mathcal{E}_i$
$$\sup_{\ba,\bb}\RiskTr_\tau(\bW^T_\parallel,\ba,\bb) - \hat{\RiskTr}_\tau(\bW^T_\parallel,\ba,\bb) \lesssim \tau\beta_1(r_{\tilde{\bw}^T} + r_b)r_a\fronorm{\bU}\sqrt{\frac{1}{T}} + \tau\sqrt{\frac{\log(1/\delta)}{T}}.$$
Finally, observe that $\onenorm{\ba} \leq \sqrt{m}\twonorm{\ba} \leq r_a$, and without loss of generality assume $\bU$ is orthonormal, hence $\twonorm{\bU^\dagger} = 1$ and $\fronorm{\bU} = \sqrt{k}$, thus with probability at least $1-o(\delta)$,
\begin{align}
    \sup_{\ba,\bb}\RiskTr_\tau(\bW^T,\ba,\bb) - \hat{\RiskTr}_\tau(\bW^T,\ba,\bb) \lesssim &\tau\beta_1r_a\kappa\sqrt{\frac{d + \log(1/\delta)}{\gamma^2 T}}\nonumber\\
    &+\tau\beta_1r_a\left\{\left(\frac{t^*}{t^*+T}\right)^2r_w + \frac{\kappa_\infty}{\lambda} + r_b\right\}\sqrt{\frac{dk}{T}}\nonumber\\
    &+ \tau\sqrt{\frac{\log(1/\delta)}{T}}.\label{eq:generalization_bound_explicit}
\end{align}
We remark that in the setting of Theorem \ref{thm:sgd} which is adapted in Theorem \ref{thm:generalization}, $\infnorm{\ba} \lesssim m^{-1}$, thus $\kappa_\infty \lesssim m^{-1}$. Finally, we observe that $r_w \leq \sqrt{2m}$ with probability at least $1-\bigO(\delta)$ over initialization, which completes the proof.
\end{proof}

\subsection{Proof of Theorem \ref{thm:learnability}}
\label{appendix:proof_learn}
Note that due to the special symmetry in the initialization of Algorithm \ref{alg:train}, while training the first layer, all neurons have an identical value, i.e.\ $\bw^t_j = \bw^t$ for all $j$, and that the stochastic gradient with respect to any neuron can be denote by $\grad \ell = a\partial_1\ell(\hat{y},y)\sigma'(\binner{\bw}{\bx} + b)\bx$. Furthermore, $\grad_{\bw_j} \RiskReg_\lambda(\bW)$ will also be identical for all $j$, which due to the population gradient formula \eqref{eq:population_gradient}, we denote by
$$\grad \RiskReg_\lambda(\bw) = (h(\bw) + \lambda)\bw + \mathfrak{d}(\bw)\bu,$$
where $h(\bw) = \sum_{j=1}^{m}\H_{ij}(W)$ and $\mathfrak{d}(\bw) = a\Earg{\partial^2_{12}\ell(\hat{y},y)\sigma'(\binner{\bw}{\bx} + b)f'(\binner{\bu}{\bx})}$. Additionally, via the arguments in the proof of Lemma \ref{lem:population_gradient}, it is not difficult to observe $\gamma/m \leq h(\bw) + \lambda \lesssim m^{-1}$. Furthermore, similar to the arguments of Lemma \ref{lem:sgd_noise_subg}, $\binner{\grad \ell}{\bv}$ is $Ca\twonorm{\bv}$-sub-Gaussian for any $\bv \in \reals^d$. Next, we will derive a lower bound for $\binner{\bw^t}{\bu}$ to argue that useful features are learned, which first requires obtaining a sharper upper bound on $\twonorm{\bw^t}$ than that of Lemma \ref{lem:sgd_singleparam_bounded}. This improvement is possible due to considering the special case of $\bw^t_j = \bw^t$ here.

\begin{lemma}
\label{lem:sgd_singleparam_improved}
Suppose $t \geq d$. Then,
$$\twonorm{\bw^t} \leq \left(\frac{t^*}{t^* + t}\right)\twonorm{\bw^0} + \frac{Cma}{\gamma}$$
with probability at least $1-\exp(-C(t^* + t))$. In particular, using the union bound, we have
$$\sup_{t \geq t_0}\twonorm{\bw^t} \leq \twonorm{\bw^0} + \frac{Cma}{\gamma} \lesssim 1$$
with probability at least $1-\exp(-C(t^* + t_0)) - \exp(-Cd)$.
\end{lemma}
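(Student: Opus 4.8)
The plan is to reduce to a one-neuron recursion, exploit the exact tuning of the step size, and then treat the components of $\bw^t$ orthogonal to and along $\bu$ separately, since only the orthogonal component requires a second-moment argument to remove the $\sqrt d$ factor present in Lemma~\ref{lem:sgd_singleparam_bounded}. By the symmetric initialization all neurons stay equal, so $\bw^t_j\equiv\bw^t$, and with $\boldsymbol\xi^t:=\grad\ell-\grad\Risk(\bw^t)$ the mean-zero gradient noise and the decomposition $\grad\Risk(\bw)=\grad\RiskReg_\lambda(\bw)-\lambda\bw=h(\bw)\bw+\mathfrak d(\bw)\bu$ (with $\gamma/m\le h(\bw)+\lambda\lesssim m^{-1}$ and $\abs{\mathfrak d(\bw)}\lesssim a$, the latter since $f'$ has a finite first Gaussian moment), the SGD update becomes $\bw^{t+1}=M_t\bw^t-\eta_t\mathfrak d(\bw^t)\bu-\eta_t\boldsymbol\xi^t$ with $M_t:=1-\eta_t(h(\bw^t)+\lambda)$. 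Because $t^*\asymp\gamma^{-1}$ is large enough that $\eta_t(h(\bw^t)+\lambda)\le 1$, we get $0\le M_t\le 1-\tfrac{\eta_t\gamma}{m}=\bigl(\tfrac{t^*+t}{t^*+t+1}\bigr)^2$, and in particular $\prod_{i<t}M_i\le\bigl(\tfrac{t^*}{t^*+t}\bigr)^2$. Projecting each of the updates onto $\spn(\bu)^\perp$ and onto $\bu$ (here $k=1$, $\twonorm\bu=1$) gives $\bw^{t+1}_\perp=M_t\bw^t_\perp-\eta_t\boldsymbol\xi^t_\perp$ and $\binner{\bw^{t+1}}{\bu}=M_t\binner{\bw^t}{\bu}-\eta_t\mathfrak d(\bw^t)-\eta_t\binner{\boldsymbol\xi^t}{\bu}$.

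For the orthogonal component there is no drift, and the bound $\H(\bW^t)+\lambda\ident_m\succeq\tfrac{\gamma}{m}\ident_m$ holds surely here, so I repeat the moment-generating-function recursion in the proof of Lemma~\ref{lem:principal_rec_mgf}, now for the single vector $\bw^t_\perp$, using that $\binner{\boldsymbol\xi^t}{\bv}$ is $Ca\twonorm\bv$-sub-Gaussian and $\twonorm{\boldsymbol\xi^t}^2\le 2a^2\twonorm{\bx^{(t)}}^2+2a^2d$ together with Lemma~\ref{lem:gaussian_square_mgf}. This yields, with probability at least $1-\delta$, $\twonorm{\bw^t_\perp}^2\lesssim\bigl(\tfrac{t^*}{t^*+t}\bigr)^2\twonorm{\bw^0_\perp}^2+\tfrac{m\eta_t a^2(d+\log(1/\delta))}{\gamma}$. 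Since $\eta_t\le\tfrac{2m}{\gamma(t^*+t)}$ and $t\ge d$, choosing $\log(1/\delta)\asymp t^*+t$ makes the fluctuation term $\lesssim(ma/\gamma)^2$, whence $\twonorm{\bw^t_\perp}\lesssim\tfrac{t^*}{t^*+t}\twonorm{\bw^0_\perp}+\tfrac{ma}{\gamma}$ with probability at least $1-e^{-C(t^*+t)}$; this is exactly where the hypothesis $t\ge d$ enters and where the improvement over Lemma~\ref{lem:sgd_singleparam_bounded} comes from.

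For the parallel component I unroll the scalar recursion, $\abs{\binner{\bw^t}{\bu}}\le\bigl(\prod_{i<t}M_i\bigr)\abs{\binner{\bw^0}{\bu}}+\sum_{i<t}\bigl(\prod_{i<j<t}M_j\bigr)\eta_i(\abs{\mathfrak d(\bw^i)}+\abs{\binner{\boldsymbol\xi^i}{\bu}})$. Here $\bigl(\prod_{i<j<t}M_j\bigr)\eta_i\le\tfrac{(t^*+i+1)^2}{(t^*+t)^2}\cdot\tfrac{m(2(t^*+i)+1)}{\gamma(t^*+i+1)^2}\lesssim\tfrac{m(t^*+i)}{\gamma(t^*+t)^2}$, so the drift contributes $\lesssim\tfrac{ma}{\gamma(t^*+t)^2}\sum_{i<t}(t^*+i)\lesssim\tfrac{ma}{\gamma}$, and the noise term $\sum_{i<t}c_i\binner{\boldsymbol\xi^i}{\bu}$ with $c_i\lesssim\tfrac{m(t^*+i)}{\gamma(t^*+t)^2}$ is a sum of conditionally independent, mean-zero, $Ca$-sub-Gaussian scalars — crucially with no dimension factor, since $\binner\bu\bx\sim\mathcal N(0,1)$ — hence $Ca\sqrt{\sum_i c_i^2}$-sub-Gaussian with $\sum_i c_i^2\lesssim\tfrac{m^2}{\gamma^2(t^*+t)}$; a Chernoff bound at scale $t^*+t$ bounds it by $\tfrac{ma}{\gamma}$ with probability at least $1-2e^{-(t^*+t)}$. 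Thus $\abs{\binner{\bw^t}{\bu}}\le\bigl(\tfrac{t^*}{t^*+t}\bigr)^2\abs{\binner{\bw^0}{\bu}}+\tfrac{Cma}{\gamma}$, and intersecting the two events, $\twonorm{\bw^t}\le\twonorm{\bw^t_\perp}+\abs{\binner{\bw^t}{\bu}}\le\tfrac{\sqrt2\,t^*}{t^*+t}\twonorm{\bw^0}+\tfrac{Cma}{\gamma}$ with probability at least $1-e^{-C(t^*+t)}$. For the supremum statement I union-bound over $t\ge t_0$ (the tails $e^{-C(t^*+t)}$ sum geometrically to $\lesssim e^{-C(t^*+t_0)}$) and intersect with $\{\twonorm{\bw^0}\lesssim1\}$, which has probability $1-e^{-Cd}$ because $\bw^0\sim\mathcal N(0,\tfrac1d\ident_d)$; on this event $\sup_{t\ge t_0}\twonorm{\bw^t}\le\twonorm{\bw^0}+\tfrac{Cma}{\gamma}\lesssim1$ since $a\lesssim1/m$ and $\gamma\asymp1$.

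The main obstacle is doing two things simultaneously: (i) eliminating the $\sqrt d$ that a naive triangle-inequality argument (as in Lemma~\ref{lem:sgd_singleparam_bounded}) produces, which forces the second-moment MGF recursion for the $d$-dimensional orthogonal fluctuation and uses $t\ge d$ to absorb the dimension into $\eta_t\lesssim m/(\gamma t)$; and (ii) keeping the contraction exponent sharp at $\bigl(t^*/(t^*+t)\bigr)^2$ rather than $t^*/(t^*+t)$, which is why the drift $\mathfrak d(\bw)\bu$ must be peeled off and handled additively in the $\bu$-direction instead of being folded into the norm-squared recursion via AM--GM (the latter would only yield an $\sqrt{t^*/(t^*+t)}$ decay). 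A minor point to check is $\eta_t(h(\bw^t)+\lambda)\le1$, i.e.\ $M_t\ge0$, which follows from $t^*\asymp\gamma^{-1}$ and the upper bound $h(\bw)+\lambda\lesssim m^{-1}$.
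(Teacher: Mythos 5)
Your proof is correct in its essentials but takes a genuinely different route from the paper's. The paper makes no $\perp/\parallel$ decomposition at all: it runs a single moment-generating-function recursion on $\twonorm{\bw^t}^2$, writing $\bw^{t+1}=\bomega^t-\eta_t\mathfrak{e}^t$ with $\bomega^t=\bw^t-\eta_t\grad\RiskReg_\lambda(\bw^t)$, and then handling the cross term $2\eta_t\abs{\mathfrak{d}(\bw^t)}\twonorm{\bw^t}$ with Young's inequality at scale $\epsilon\asymp\eta_t\gamma/m$ (\emph{time-varying}, not fixed), which yields $\twonorm{\bomega^t}^2\leq(1-\tfrac{3\eta_t\gamma}{2m})\twonorm{\bw^t}^2+Cm\eta_ta^2/\gamma+Ca^2\eta_t^2$ and thus still contracts at least as fast as $(1-\eta_t\gamma/m)$. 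This means your stated motivation (ii) is off: folding the drift into the norm-squared recursion via AM--GM does \emph{not} degrade the contraction to $\sqrt{t^*/(t^*+t)}$, precisely because the Young parameter scales with $\eta_t$; the paper's bound retains the full $(t^*/(t^*+t))^2$ on $\twonorm{\bw^t}^2$. Your motivation (i) is correct and is used identically in both proofs: both absorb the $d$-factor in the noise term using $\eta_t\lesssim m/(\gamma t)$ and $t\geq d$. What your decomposition buys is a cleaner separation of the two mechanisms — a drift-free MGF recursion in the $(d-1)$-dimensional orthogonal space (Lemma~\ref{lem:principal_rec_mgf} applies verbatim) and an elementary scalar martingale estimate along $\bu$ — at the cost of having to handle the $\parallel$-component separately; the paper's one-shot argument is shorter.

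One point in your $\parallel$-component argument deserves care. After taking absolute values you have $\sum_{i<t}\bigl(\prod_{i<j<t}M_j\bigr)\eta_i\,\abs{\binner{\boldsymbol\xi^i}{\bu}}$, where the prefactors $\prod_{i<j<t}M_j$ are $\mathcal{F}_{t-1}$-measurable, not $\mathcal{F}_i$-measurable, and the summands $\abs{\binner{\boldsymbol\xi^i}{\bu}}$ are nonnegative rather than mean-zero. So the sum is not directly a martingale of mean-zero sub-Gaussian increments with deterministic coefficients, as your sentence suggests. This is fixable: first bound $M_j\leq 1-\eta_j\gamma/m$ (which you already do, and is legitimate since $M_j\geq 0$), so the coefficients $c_i$ become deterministic; then split $\abs{\binner{\boldsymbol\xi^i}{\bu}}=\E\abs{\binner{\boldsymbol\xi^i}{\bu}}+\bigl(\abs{\binner{\boldsymbol\xi^i}{\bu}}-\E\abs{\binner{\boldsymbol\xi^i}{\bu}}\bigr)$, absorb the mean (of order $a$) into the drift term you already bounded, and apply the sub-Gaussian martingale-sum bound to the centered part. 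With this patch the argument goes through and reproduces the lemma's statement.
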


\begin{proof}
Let $\mathfrak{e}^t \coloneqq \grad_w\ell - \grad_w R$.
Then we have
$$\bw^{t+1} = \bw^t - \eta_t\grad_{\bw}\RiskReg_\lambda - \eta_t\mathfrak{e}^t.$$
Recall that $\binner{\mathfrak{e}^t}{\bv}$ is $Ca\twonorm{\bv}$-sub-Gaussian, and $\mathcal{F}_t$ is the sigma algebra generated by $\{w^j\}_{0 \leq j \leq t}$. Let $\bomega^t \coloneqq \bw^t - \eta_t\grad_{\bw}{\RiskReg_\lambda}$. Then, for any $0 \leq s \lesssim \frac{\gamma}{m\eta_ta^2}$,
\begin{align*}
    \Earg{\exp\left(s\twonorm{\bw^{t+1}}^2\right) \,|\, \mathcal{F}_0} &= \Earg{\exp\left(s\twonorm{\bomega^t}^2-2s\eta_t\binner{\bomega^t}{\mathfrak{e}^t} + s\eta_t^2\twonorm{\mathfrak{e}^t}^2\right) \,|\, \mathcal{F}_0}\\
    &\leq \Earg{\exp\left(s\twonorm{\bomega^t}^2\right)\Earg{\exp\left(-4s\eta_t\binner{\bomega^t}{\mathfrak{e}^t}\right) \,|\, \mathcal{F}_t}^\frac{1}{2}\Earg{\exp\left(2s\eta_t^2\twonorm{\mathfrak{e}^t}^2\right) \,|\, \mathcal{F}_t}^\frac{1}{2} \,|\, \mathcal{F}_0}.
\end{align*}
By sub-Gaussianity of $\binner{\bomega^t}{\mathfrak{e}^t}$ we have $\Earg{\exp(-4s\eta_t\binner{\bomega^t}{\mathfrak{e}^t}) \,|\, \mathcal{F}_t} \leq \exp(Cs^2\eta_t^2a^2\twonorm{\bomega^t}^2)$. Moreover, as $\twonorm{\grad \ell} \leq \abs{a}\twonorm{\bx}$, by Jensen's inequality
$$\twonorm{\mathfrak{e}^t}^2 \leq 2\twonorm{\grad \ell}^2 + 2\Earg{\twonorm{\grad \ell}^2} \leq 2a^2(\twonorm{\bx}^2 + d).$$
Thus $\Earg{\exp(2s\eta_t^2\twonorm{\mathfrak{e}^t}^2) \,|\, \mathcal{F}_t} \leq \exp(Cs\eta_t^2a^2d)$ for $s \lesssim \tfrac{1}{\eta_t^2a^2}$ (which holds by $s \lesssim \frac{\gamma}{m\eta_ta^2}$, see the proof of Lemma \ref{lem:principal_rec_mgf} for more details), i.e. we have
$$\Earg{\exp\left(s\twonorm{\bw^{t+1}}^2\right) \,|\, \mathcal{F}_0} \leq \Earg{\exp\left(s(1+Cs\eta_t^2a^2)\twonorm{\bomega^t}^2 + Cs\eta_t^2a^2d\right) \,|\, \mathcal{F}_0}.$$
Recall that by our choice of $\eta_t$, $0 \leq (1-\eta_t(h(\bw^t) + \lambda)) \leq 1-\tfrac{\eta_t\gamma}{m}$ (cf.\ proof of Lemma \ref{lem:principal_rec_mgf}), and
$\bomega^t = (1-\eta_t(h(\bw^t) + \lambda))\bw^t - \eta_t\mathfrak{d}(\bw^t)\bu$. As $\twonorm{\bu} = 1$ and $\abs{\mathfrak{d}(\bw^t)} \lesssim \abs{a}$, we have
\begin{align*}
    \twonorm{\bomega^t}^2 &\leq (1-\tfrac{\eta_t\gamma}{m})^2\twonorm{\bw^t}^2 + Ca^2\eta_t^2 + 2\eta_tCa(1-\tfrac{\eta_t\gamma}{m})\twonorm{\bw^t}\\
    &\stackrel{\text{(a)}}{\leq} (1-\tfrac{\eta_t\gamma}{m})^2\twonorm{\bw^t}^2 + \eta_t\left(\frac{4Cma^2}{\gamma} + \frac{\gamma}{4m}(1-\tfrac{\eta_t\gamma}{m})^2\twonorm{\bw^t}^2\right) + Ca^2\eta_t^2\\
    &\stackrel{\text{(b)}}{\leq} (1-\tfrac{3\eta_t\gamma}{2m})\twonorm{\bw^t}^2 + \frac{Cm\eta_ta^2}{\gamma} + Ca^2\eta_t^2.
\end{align*}
where (a) holds by Young's inequality and (b) holds for $\eta_t\gamma/m \lesssim 1$ with a sufficiently small absolute constant. Therefore, for $s \lesssim \frac{\gamma}{m\eta_ta^2}$,
$$\Earg{\exp\left(s\twonorm{\bw^{t+1}}^2\right) \,|\, \mathcal{F}_0} \leq \Earg{\exp\left(s(1-\tfrac{\eta_t\gamma}{m})\twonorm{\bw^t}^2 + \frac{Csm\eta_ta^2}{\gamma} + Cs\eta_t^2a^2d\right) \,|\, \mathcal{F}_0}.$$
Notice that we can expand the recursion since $s(1-\tfrac{\eta_t\gamma}{m}) \lesssim \tfrac{\gamma}{m\eta_{t-1}a^2}$ (cf.\ proof of Lemma \ref{lem:principal_rec_mgf}, Eq.~\eqref{eq:stepsize_decay}). Expanding the recursion yields,
$$\Earg{\exp\left(s\twonorm{\bw^t}^2\right) \,|\, \mathcal{F}_0} \leq \exp\left(s\left(\frac{t^*}{t^* + t}\right)^2\fronorm{\bw^0}^2 + \frac{Csm^2a^2(t + d)}{\gamma^2(t^* + t)}\right).$$
Finally, we apply a Chernoff bound with the maximum choice of $s \lesssim \tfrac{\gamma}{m\eta_t{a}^2}$, and combine it with the fact that $\twonorm{\bw^0} \lesssim 1$ with probability at least $1-\exp(-Cd)$.
\end{proof}

\begin{lemma}
\label{lem:innerp_positive}
Suppose $mab < 1 - \abs{f(0)}$. Then, we have $\abs{\binner{\bw^t}{\bu}} \gtrsim 1$ with probability at least $1-2\exp(-Ct) - \exp(-Cd)$.
\end{lemma}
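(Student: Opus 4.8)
The plan is to collapse the lemma to a one-dimensional recursion for the scalar $\rho_t := \binner{\bw^t}{\bu}$ and to show this scalar has a strictly positive drift dominating the noise. Since the initialization of Algorithm~\ref{alg:train} is symmetric across neurons, all first-layer rows coincide with a common $\bw^t$; using the population-gradient form $\grad\RiskReg_\lambda(\bw)=(h(\bw)+\lambda)\bw+\mathfrak{d}(\bw)\bu$ recalled just before Lemma~\ref{lem:sgd_singleparam_improved}, the update $\bw^{t+1}=\bw^t-\eta_t\grad\RiskReg_\lambda(\bw^t)-\eta_t\mathfrak{e}^t$ projected onto the unit vector $\bu$ gives
\begin{equation*}
\rho_{t+1}=\bigl(1-\eta_t(h(\bw^t)+\lambda)\bigr)\rho_t-\eta_t\mathfrak{d}(\bw^t)-\eta_t\binner{\mathfrak{e}^t}{\bu}.
\end{equation*}
Here $\theta_t:=h(\bw^t)+\lambda\in[\gamma/m,C/m]$ and, by the chosen $\eta_t,t^*$, $0\le 1-\eta_t\theta_t\le 1-\tfrac{\eta_t\gamma}{m}<1$ for all $t$ (as in the proof of Lemma~\ref{lem:principal_rec_mgf}); moreover $\binner{\mathfrak{e}^t}{\bu}$ is a martingale difference that is conditionally $Ca$-sub-Gaussian (since $\twonorm{\bu}=1$). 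Assume without loss of generality that $f$ is nondecreasing (else replace $\bu$ by $-\bu$), so $f'\ge 0$; since for the Huber loss $\partial^2_{12}\ell=-\Huber''\le 0$, $\sigma'\ge 0$, and $a>0$, we have $-\mathfrak{d}(\bw^t)=a\,q(\bw^t)\ge 0$ with $q(\bw):=\E\bigl[\Huber''(\hat y-y)\,\sigma'(\binner{\bw}{\bx}+b)\,f'(\binner{\bu}{\bx})\bigr]$, so $\rho_t$ has a nonnegative drift in the $+\bu$ direction at every step.

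The technical heart is a uniform lower bound on the drift coefficient. By Lemma~\ref{lem:sgd_singleparam_improved}, for $t\ge d$ one has $\twonorm{\bw^t}\le R$ for an absolute constant $R$ with probability at least $1-e^{-C(t^*+t)}-e^{-Cd}$, so it suffices to bound $q$ from below on $\{\twonorm{\bw}\le R\}$ — except that $q(\bw)\to 0$ when $\bw$ becomes strongly aligned with $\bu$ and $\binner{\bw}{\bu}$ is large, which is precisely the regime where the conclusion $|\binner{\bw}{\bu}|\gtrsim 1$ already holds. I would therefore establish the dichotomy: for $\twonorm{\bw}\le R$, either $|\binner{\bw}{\bu}|\ge c_*$, or else $q(\bw)\ge c_0$. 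For the second alternative one exhibits a positive-Gaussian-measure event $E$ on which all three factors of $q$ are simultaneously bounded away from zero: localizing $\binner{\bw}{\bx}\in[-b,L]$ forces $\sigma'(\binner{\bw}{\bx}+b)=1$ and keeps $\hat y=ma\,\sigma(\binner{\bw}{\bx}+b)\le ma(L+b)$ bounded; choosing a short interval $S$ on which $f$ strictly increases and $|f|\le|f(0)|+\delta_1$ — such $S$ has positive Gaussian measure because $f$ is monotone, nonconstant and $C^1$, so $f'\ge c>0$ on a positive-measure subset of any interval of strict increase, and one can take such an interval adjacent to the maximal flat piece of $f$ through the origin so that $|f|$ stays within $\delta_1$ of $|f(0)|$ there — gives $f'(\binner{\bu}{\bx})\ge c$; and restricting also $|\epsilon|\le\delta_1$, the hypothesis $mab<1-|f(0)|$ yields $|\hat y-y|\le ma(L+b)+|f(0)|+2\delta_1<1$ on $E$ for $L,\delta_1$ small enough, i.e.\ $\Huber''(\hat y-y)=1$ there. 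Finally $\P(E)\ge c_1>0$ uniformly over $\{\twonorm{\bw}\le R\}\cap\{|\binner{\bw}{\bu}|<c_*\}$ by two-dimensional Gaussian anti-concentration: on this set the conditional variance $\twonorm{\bw}^2-\binner{\bw}{\bu}^2$ of $\binner{\bw}{\bx}$ given $\binner{\bu}{\bx}$ is either $\gtrsim 1$, or else $\twonorm{\bw}$ is small and $\binner{\bw}{\bx}\approx 0$ almost surely, and in both cases $\{\binner{\bw}{\bx}\in[-b,L],\ \binner{\bu}{\bx}\in S,\ |\epsilon|\le\delta_1\}$ has probability $\gtrsim 1$. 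Hence $q(\bw)\ge c\,c_1=:c_0$. I expect this bookkeeping — making the Huber quadratic regime, the active-ReLU regime and a positive lower bound on $f'$ coexist on one positive-measure event uniformly in $\bw$ — to be the main obstacle.

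With the drift bound in hand, I would unroll the recursion and control the noise exactly as in Lemma~\ref{lem:principal_rec_mgf}. Introduce $\tau:=\inf\{t\ge d:\rho_t\ge c_*\}$; for $d\le t<\tau$ each visited $\bw^t$ satisfies $|\binner{\bw^t}{\bu}|<c_*$, hence $q(\bw^t)\ge c_0$, while for $t\ge\tau$ one argues $\rho_t$ cannot fall back below $c_*/2$ because the drift is nonnegative and the accumulated noise, bounded below, is $o(1)$ on the good event. Before $\tau$ (or on $\{\tau=\infty\}$),
\begin{equation*}
\rho_t=\prod_{j<t}(1-\eta_j\theta_j)\,\rho_0+\sum_{s=0}^{t-1}\eta_s\,a\,q(\bw^s)\prod_{j=s+1}^{t-1}(1-\eta_j\theta_j)-\sum_{s=0}^{t-1}\eta_s\binner{\mathfrak{e}^s}{\bu}\prod_{j=s+1}^{t-1}(1-\eta_j\theta_j),
\end{equation*}
and: the first term is at most $\tfrac{(t^*)^2}{(t^*+t)^2}|\rho_0|$ by the telescoping identity $\prod_{j<t}(1-\tfrac{\eta_j\gamma}{m})=\tfrac{(t^*)^2}{(t^*+t)^2}$, which is negligible for $t\ge d$ since $|\rho_0|\lesssim\twonorm{\bw^0}\lesssim 1$; the drift term, dropping all $s<d$ (nonnegative) terms and using $q(\bw^s)\ge c_0$ for $d\le s<t$ together with $\theta_j\le C/m$, is $\gtrsim a c_0 m=:\rho_\infty>0$ for $t$ large, by the same manipulation of weighted geometric-type sums as in Lemma~\ref{lem:principal_rec_mgf}; and the noise term, via a Chernoff/MGF bound with conditionally $Ca$-sub-Gaussian increments and $\sum_{s<t}\eta_s^2\prod_{j>s}(1-\eta_j\theta_j)\lesssim\tfrac{m\eta_t}{\gamma}\lesssim\tfrac{m^2}{\gamma^2 t}$, is at most $C\tfrac{ma}{\gamma}\sqrt{\log(1/\delta)/t}$ with probability $1-\delta$; taking $\delta=e^{-ct}$ renders this $C'\sqrt{c}\,\tfrac{ma}{\gamma}<\tfrac12\rho_\infty$ for a small enough absolute constant $c$. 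Combining the three contributions gives $\rho_t\ge\tfrac12\rho_\infty$ for $t\ge d$, hence $\binner{\bw^t}{\bu}\ge\tfrac12\rho_\infty\gtrsim 1$; the nonincreasing-$f$ case yields $\binner{\bw^t}{\bu}\le-\tfrac12\rho_\infty$ symmetrically, so $|\binner{\bw^t}{\bu}|\gtrsim 1$. A union bound over the events invoked — Lemma~\ref{lem:sgd_singleparam_improved}, the noise bound with $\delta=e^{-ct}$, and $\twonorm{\bw^0}\lesssim 1$ — gives the stated probability $1-2e^{-Ct}-e^{-Cd}$.
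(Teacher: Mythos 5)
Your overall strategy — project the SGD recursion onto $\bu$, use the population‐gradient decomposition $\grad\RiskReg_\lambda(\bw)=(h(\bw)+\lambda)\bw+\mathfrak{d}(\bw)\bu$, show the scalar drift $-\mathfrak{d}$ is positive, and close with an MGF/Chernoff bound against the $Ca$-sub-Gaussian noise $\binner{\mathfrak{e}^t}{\bu}$ — is the same as the paper's. But two of your structural choices introduce genuine problems.

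First, the dichotomy (``either $|\binner{\bw}{\bu}|\ge c_*$, or $q(\bw)\ge c_0$'') is both unnecessary and misguided. You motivate it by the worry that $q(\bw)\to 0$ when $\bw$ is strongly aligned with $\bu$ and $\binner{\bw}{\bu}$ is large. But after Lemma~\ref{lem:sgd_singleparam_improved} you are working on the event $\twonorm{\bw}\lesssim 1$, so $|\binner{\bw}{\bu}|\le\twonorm{\bw}$ is bounded by an absolute constant $R$; and the width of the interval on which $\sigma'=1$ and the Huber loss is quadratic, namely $|\binner{\bw}{\bx}|\le\bigl(\tfrac{1-|f(0)|-mab}{2ma}\bigr)\land b$, is itself $\asymp 1$ (since $ma\asymp 1$, $b\asymp 1$). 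Writing $\binner{\bw}{\bx}=\binner{\bw}{\bu}z+z'$ with $z=\binner{\bu}{\bx}\sim\mathcal N(0,1)$ and $z'\sim\mathcal N(0,\twonorm{\bw_\perp}^2)$ independent, the constraint $|\binner{\bw}{\bx}|\le c_1$ with $|\binner{\bw}{\bu}|\le R$ and $\twonorm{\bw_\perp}\le R$ still leaves a positive-probability region for $(z,z')$, uniformly over this ball. This is exactly what the paper uses: $\mathfrak{d}(\bw)\lesssim -a$ uniformly over $\{\twonorm{\bw}\lesssim 1\}$, with no further case split. Your two-dimensional anti-concentration bookkeeping is reinventing this uniform bound, but in a more convoluted way and with the wrong mental model of where it could fail.

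Second, and more seriously, the stopping-time argument after $\tau$ is broken. You claim that once $\rho_\tau\ge c_*$, the iterates cannot fall back below $c_*/2$ ``because the drift is nonnegative and the accumulated noise is $o(1)$.'' This ignores the contraction factor. The recursion is $\rho_{t+1}=(1-\eta_t\theta_t)\rho_t-\eta_t\mathfrak{d}(\bw^t)-\eta_t\binner{\mathfrak{e}^t}{\bu}$ with $\theta_t=h(\bw^t)+\lambda\ge\gamma/m$; since $\eta_t\theta_t\asymp 1/t$, the product $\prod_{j\ge\tau}(1-\eta_j\theta_j)$ tends to zero. If the drift $-\mathfrak{d}(\bw^t)$ were allowed to vanish for $t\ge\tau$ (as your dichotomy permits when $\rho_t\ge c_*$), the deterministic part of the recursion would drive $\rho_t\to 0$, so the conclusion for $t\ge\tau$ does not follow from nonnegative drift alone. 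The paper sidesteps this entirely by maintaining the uniform lower bound $-\mathfrak{d}(\bw^t)\gtrsim a$ on the bounded-norm event $A_t=\{\sup_{t_0\le t'\le t}\twonorm{\bw^{t'}}\lesssim 1\}$ \emph{for all} $t'$ in $[t_0,t]$, and then expanding the MGF recursion from $t_0=t/2$ to $t$; the drift sum is then $\gtrsim 1$ by the telescoping product, and no stopping time is needed. (You also implicitly use $q(\bw^s)\ge c_0$ for all $d\le s<t$ in your ``drift term'' display, which contradicts the stopping-time setup that guaranteed this only for $s<\tau$.) If you drop the dichotomy and use the uniform bound instead, your argument collapses to the paper's.
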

\begin{proof}
We will only prove for the case where $f$ is increasing as the case for decreasing $f$ is similar. We begin by proving an upper bound for $\mathfrak{d}(\bw)$ when $\twonorm{\bw} \lesssim 1$. By the triangle inequality,
$$\abs{\hat{y} - y} \leq \abs{\hat{y}} + \abs{f(0)} + \abs{f(\binner{\bu}{\bx}) - f(0)} + \abs{\epsilon}.$$
Furthermore, $\abs{\hat{y}} \leq ma(\abs{\binner{\bw}{\bx}} + b)$. Thus, for
$$\abs{\binner{\bw}{\bx}} \leq \left(\frac{1-\abs{f(0)}-mab}{2ma}\right)\land b,$$
$\abs{\binner{\bu}{\bx}} \lesssim 1$ and $\abs{\epsilon} \lesssim 1$ for sufficiently small absolute constants, we have $\abs{\hat{y} - y} \leq 1$ hence $\partial^2_{12}\ell(\hat{y},y) = -1$. Then we have,
\begin{align*}
    \mathfrak{d}(\bw) &= a\Earg{\partial^2_{12}\ell(\hat{y},y)\sigma'(\binner{\bw}{\bx} + b)f'(\binner{\bu}{\bx})}\\
    &\lesssim -a\Earg{\boldone(\abs{\epsilon} \lesssim 1)\boldone(\abs{\binner{\bw}{\bx}} \lesssim 1)\boldone(\abs{\binner{\bu}{\bx}} \lesssim 1)f'(\binner{\bu}{\bx})}\\
    &=-a\Earg{\boldone(\abs{\epsilon} \lesssim 1)}\Earg{\boldone(\abs{\binner{\bw}{\bx}} \lesssim 1)\boldone(\abs{\binner{\bu}{\bx}} \lesssim 1)f'(\binner{\bu}{\bx})}\\
    &\lesssim -a.
\end{align*}
where the last line is obtained by considering supremum over $\twonorm{\bw} \lesssim 1$.

Let $A_t = \{\sup_{t_0 \leq t' \leq t}\twonorm{\bw^{t'}} \lesssim 1\}$. Then,
\begin{align*}
    \Earg{\exp\left(-s\binner{\bw^{t+1}}{\bu}\right)\boldone_{A_{t+1}}} &\leq \Earg{\exp\left(-s\binner{\bw^{t+1}}{\bu}\right)\boldone_{A_t}}\\
    &=\Earg{\exp\left(-s\binner{\bw^t}{\bu}+s\eta_t\binner{\grad \ell + \lambda \bw^t}{\bu}\right)\boldone_{A_t}}\\
    &\stackrel{\text{(a)}}{\leq} \Earg{\exp\left(-s\binner{\bw^t}{\bu} + s\eta_t\binner{\grad \RiskReg_\lambda}{\bu} + Cs^2\eta_t^2a^2\right)\boldone_{A_t}}\\
    &\stackrel{\text{(b)}}{=} \Earg{\exp\left(-s(1-\eta_t(h(\bw^t) + \lambda))\binner{\bw^t}{\bu} + s\eta_t(\mathfrak{d}(\bw^t) + Cs\eta_ta^2)\right)\boldone_{A_t}}\\
    &\stackrel{\text{(c)}}{\leq} \exp\left(-Cs\eta_ta\right)\Earg{\exp\left(-s(1-\eta_t(h(\bw^t) + \lambda))\binner{\bw^t}{\bu}\right)\boldone_{A_t}},
\end{align*}
where (a) follows from the sub-Gausianity of the stochastic noise in the gradient, (b) follows since $\binner{\grad \RiskReg_\lambda(\bw^t)}{\bu} = \mathfrak{d}(\bw^t)$ by definition, and (c) holds for $s \lesssim (\eta_ta)^{-1}$ with a sufficiently small absolute constant. Notice that by the condition on $t^*$ inherited from Theorem \ref{thm:sgd}, $1-\eta_t(h(\bw^t) + \lambda)) \geq 1 - \tfrac{C\eta_t\gamma}{m} > 0$, and since $s(1-\eta_t(h(\bw^t) + \lambda)) \leq s(1-\tfrac{\eta_t\gamma}{m})$, we can expand the recursion,
\begin{align*}
    \Earg{\exp\left(-s\binner{\bw^t}{\bu}\right)\boldone_{A_t}} &\leq \Earg{\exp\left(-Csa\sum_{i=t_0}^{t-1}\eta_i\prod_{j=i+1}^{t-1}(1-\tfrac{C\eta_j\gamma}{m}) + s\prod_{i=t_0}^{t-1}(1-\tfrac{\eta_i\gamma}{m})\abs{\binner{\bw^{t_0}}{\bu}}\right)\boldone_{A_{t_0}}}\\
    &\leq \Earg{\exp\left(-Cs\left(1-\left(\frac{t^* + t_0}{t^* + t}\right)^C\right) + Cs\left(\frac{t^* + t_0}{t^* + t}\right)^2\right)}.
\end{align*}
where in the second inequality we used $a \asymp m^{-1}$ and $\gamma \asymp 1$. Applying the Chernoff bound implies that $\binner{\bw^t}{\bu} \gtrsim 1$
with probability at least $1-\Parg{A^C_t} - \exp(-Ct) \geq 1-\exp(-C(t^* + t_0)) - \exp(-Cd) - \exp(-C t)$. Finally the result follows by letting $t_0 = Ct$ for a sufficiently small absolute constant $C$.
\end{proof}

We have proven that $\abs{\binner{\bw^t}{\bu}} \gtrsim 1$ while $\twonorm{\bw^t_\perp} \to 0$. This fact shows that the features learned in the first layer are useful. What remains to be shown is an approximation result, such that for a carefully constructed second layer, the network can approximate polynomials of the desired type. This type of approximation using random biases has been adopted from \cite{damian2022neural}. We first present an approximation result using infinite neurons.

\begin{lemma}
\label{lem:integral_approx}
Let $0 < \abs{\alpha} \leq r$ and $b \sim \Unif(-2r\Delta,2r\Delta)$. For any smooth $f : \mathbb{R} \to \mathbb{R}$, let $\tilde{f}_\alpha : \mathbb{R} \to \mathbb{R}$ be a smooth function such that $\tilde{f}_\alpha(z) = f(z)$ for $\abs{z} \leq \tfrac{r\Delta}{\abs{\alpha}}$ and $\tilde{f}_\alpha(-\tfrac{2r\Delta}{\alpha}) = \tilde{f}'_\alpha(-\tfrac{2r\Delta}{\alpha}) = 0$. Then, for $\abs{z} \leq \Delta$ we have
$$\E_b\left[\frac{4r\Delta}{\alpha^2} \tilde{f}_\alpha''\left(-\frac{b}{\alpha}\right)\sigma(\alpha z + b)\right] = f(z).$$
\end{lemma}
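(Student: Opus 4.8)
The plan is to remove the randomness first and then recognize what is left as the second-order fundamental theorem of calculus, made to close by the positive homogeneity of $\sigma$. Since $b$ has constant density $\tfrac{1}{4r\Delta}$ on $(-2r\Delta,2r\Delta)$, the factor $4r\Delta$ exactly cancels it, so the claim is equivalent to the deterministic identity $\int_{-2r\Delta}^{2r\Delta}\tilde f_\alpha''(-b/\alpha)\,\sigma(\alpha z+b)\,db=f(z)$ for every $|z|\le\Delta$.

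First I would substitute $u=-b/\alpha$. Using the $1$-homogeneity of $\sigma$ we have $\sigma(\alpha z+b)=\sigma\big(\alpha(z-u)\big)=|\alpha|\,\big(\sign(\alpha)(z-u)\big)_+$, and the change of variables contributes another $|\alpha|$ from the Jacobian; after ordering the limits this rewrites the integral, for $\alpha>0$, as an $\alpha$-dependent scalar times $\int_{-2r\Delta/|\alpha|}^{\,2r\Delta/|\alpha|}\tilde f_\alpha''(u)\,(z-u)_+\,du$, the scalar being the one the normalization $4r\Delta$ (together with the uniform density) is tuned to cancel (the case $\alpha<0$ is the mirror image, with $(u-z)_+$ in place of $(z-u)_+$ and with the anchor point $-2r\Delta/\alpha$ sitting at the opposite end of the interval). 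Writing $a:=-2r\Delta/\alpha$, the hypotheses $|\alpha|\le r$ and $|z|\le\Delta$ guarantee $a\le -2\Delta\le z$ and $z\le\Delta\le 2r\Delta/|\alpha|$, so $z$ lies strictly inside the interval of integration and the truncation $(z-u)_+$ chops it at $u=z$, leaving $\int_a^z\tilde f_\alpha''(u)(z-u)\,du$. Now I apply the fundamental theorem twice (equivalently, integrate by parts once, or use Fubini on $\int_a^z\!\int_a^u$): from $\tilde f_\alpha'(a)=0$ we get $\tilde f_\alpha'(u)=\int_a^u\tilde f_\alpha''(v)\,dv$, and then from $\tilde f_\alpha(a)=0$ we get $\tilde f_\alpha(z)=\int_a^z\tilde f_\alpha'(u)\,du=\int_a^z\tilde f_\alpha''(v)(z-v)\,dv$; hence the integral equals $\tilde f_\alpha(z)$. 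Finally, $|z|\le\Delta\le r\Delta/|\alpha|$, so $\tilde f_\alpha(z)=f(z)$ by the defining property of $\tilde f_\alpha$, which completes the argument.

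There is no genuine analytic difficulty here; the points that need care are purely in the bookkeeping. One must track the powers of $\alpha$ coming from the homogeneity of $\sigma$ and from the Jacobian of $u=-b/\alpha$ and confirm that, together with $4r\Delta$ and the uniform density, they combine into the stated normalization. One must check that $z$ and the anchor point $-2r\Delta/\alpha$ lie in the integration interval and on the correct sides of one another, so that $(z-u)_+$ truncates precisely at $z$ — this is the only place the quantitative bounds $|\alpha|\le r$ and $|z|\le\Delta$ are used. And one should spell out both signs of $\alpha$, which are honest mirror images (or reduce one to the other via $z\mapsto-z$); the repeated use of the fundamental theorem is valid because $\tilde f_\alpha$ is $C^2$, which is exactly what the standard smooth-extension construction of such a $\tilde f_\alpha$ from $f$ provides.
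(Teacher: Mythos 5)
Your argument mirrors the paper's proof: cancel the $4r\Delta$ prefactor against the uniform density, substitute $u=-b/\alpha$, exploit the ReLU truncation at $u=z$, and apply the second-order fundamental theorem of calculus using $\tilde f_\alpha(a)=\tilde f'_\alpha(a)=0$ at the anchor $a=-2r\Delta/\alpha$. The gap is precisely in the bookkeeping step you flag but do not actually carry out. The $4r\Delta$ in front and the uniform density $\tfrac{1}{4r\Delta}$ cancel \emph{each other} exactly, so there is nothing left over to absorb the $\alpha$-dependent scalar. The $1$-homogeneity of $\sigma$ contributes one factor of $|\alpha|$ and the Jacobian of $u=-b/\alpha$ contributes another; tracking them through (say for $\alpha>0$) gives
$$
\mathbb{E}_b\!\left[4r\Delta\,\tilde f_\alpha''\!\left(-\tfrac{b}{\alpha}\right)\sigma(\alpha z+b)\right]
=\alpha^2\!\int_{-2r\Delta/\alpha}^{z}\tilde f_\alpha''(u)\,(z-u)\,du
=\alpha^2\,\tilde f_\alpha(z)=\alpha^2 f(z),
$$
not $f(z)$. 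So the claim that the normalization is ``tuned to cancel'' the $\alpha$-scalar is false, and the conclusion as written does not follow.

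For what it's worth, this is not only your gap: the paper's own display drops the same $\alpha^2$ when passing from the $b$-integral $\int_{-\alpha z}^{2r\Delta}\tilde f''_\alpha(-b/\alpha)(\alpha z+b)\,db$ to the relabeled $u$-integral on the next line, so the lemma as stated is off by a factor of $\alpha^2$. The natural repair is to divide the amplitude by $\alpha^2$, e.g.\ take $a_j\propto\tilde f''_{\alpha_j}(-b_j/\alpha_j)/\alpha_j^2$ in Lemma~\ref{lem:finite_neuron_approx}; since $\alpha_j=\binner{\bw^T_j}{\bu}\asymp 1$ on the good event of Lemma~\ref{lem:good_event}, the downstream bounds on $\twonorm{\ba^*}$ and on the approximation error change only by constants, so the use of this lemma in the proof of Theorem~\ref{thm:learnability} survives the fix. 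If you submit a corrected proof, you should state the identity with the $\alpha^2$ factor explicitly rather than appeal to a cancellation that is not there.
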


\begin{proof}
Using integration by parts, we have
\begin{align*}
    \E_b\left[\frac{4r\Delta}{\alpha^2}\tilde{f}_\alpha''\left(-\frac{b}{\alpha}\right)\sigma(\alpha z + b)\right] &= \int_{-\alpha z}^{2r\Delta} \tilde{f}''_\alpha(-\frac{b}{\alpha})(z + \frac{b}{\alpha})\frac{\dee b}{\alpha}\\
    &= -\tilde{f}'_\alpha(-\frac{2r\Delta}{\alpha})(z+\frac{2r\Delta}{\alpha}) + \int_{-\tfrac{2r\Delta}{\alpha}}^z \tilde{f}'_\alpha(b)\dee b\\
    &= \tilde{f}_\alpha(z) = f(z).
\end{align*}
\end{proof}

Now, by a concentration argument, we state an approximation result with finitely many neurons.
\begin{lemma}
\label{lem:finite_neuron_approx}
Let $r^* \leq \abs{\alpha_j} \leq r$ and $b_j \sim \Unif(-2r\Delta,2r\Delta)$. Let
\begin{equation}\label{eq:delta_star}
    \Delta_* \coloneqq \Delta \sup_{j, \abs{z} \leq \frac{2r\Delta}{r^*}}\abs{\tilde{f}''_{\alpha_j}(z)},
\end{equation}
where $\tilde{f}_{\alpha_j}$ is the extension of $f_{\alpha_j}$ introduced in Lemma \ref{lem:integral_approx}. Then there exists $a(\alpha_j,b_j)$ such that for any fixed $z \in [-\Delta,\Delta]$, with probability at least $1-\delta$ over the choice of $(b_j)$, we have
$$\left|{\sum_{j=1}^ma(\alpha_j,b_j)\sigma(\alpha_j z + b_j) - f(z)}\right| \lesssim \frac{r^2\Delta\Delta_*}{{r^*}^2}\sqrt{\frac{\log(1/\delta)}{m}}.$$
Moreover, $\twonorm{\ba} \lesssim \tfrac{r\Delta_*}{{r^*}^2\sqrt{m}}$.
\end{lemma}

\begin{proof}
Let $\tilde{f}_\alpha(z)$ be a candidate in Lemma \ref{lem:integral_approx}, which can be obtained by e.g.\ extending $f$ with suitable polynomials (notice that $\tilde{f}_\alpha$ only needs to be twice differentiable on its domain). Now choose $a_j = 4\tfrac{r\Delta}{\alpha_j^2m} \tilde{f}''_{\alpha_j}(-\frac{b_j}{\alpha_j})$. Then Lemma \ref{lem:integral_approx} ensures that
$$\E_{b_j}\left[a(\alpha_j,b_j)\sigma(\alpha_jz + b_j)\right] = f(z).$$
It immediately follows that $\twonorm{\ba} \leq \tfrac{Cr\Delta_*}{{r^*}^2\sqrt{m}}$ and $\abs{a_j\sigma(\alpha z + b_j)} \leq \tfrac{Cr^2\Delta\Delta_*}{{r^*}^2m}$. Applying the Hoeffding's inequality finishes the proof.
\end{proof}

In the following lemma, we will briefly record useful properties of $\bW^T$ which will be of help for invoking the above approximation results and providing guarantees when the second layer is optimized by SGD. Through the rest of the proof, we will add the mild assumption that $d \gtrsim \log(1/\delta)$. Otherwise, we need to add $e^{-Cd}$ to the probability of failure in Theorem \ref{thm:learnability}.

\begin{lemma}
\label{lem:good_event}
Suppose $T \gtrsim d + \log(1/\delta)$. Then with probability at least $1-\delta$ over the choice of $(b_j)_{1\leq j\leq m}$ and $\{(\bx^{(t)},y^{(t)})\}_{t=0}^{T-1}$, the following statements hold:
\begin{enumerate}
    \item $\twonorm{\bw^T_j} \asymp \abs{\binner{\bw^T_j}{\bu}} \asymp 1$ for all $1 \leq j \leq m$.
    \item $\twonorm{\frac{1}{T}\sum_{t=0}^{T-1}\bx^{(t)}(\bx^{(t)})^\top} \lesssim 1$.
    \item $\fronorm{\bW^T_\perp} \lesssim \sqrt{\frac{m(d + \log(1/\delta))}{T}}$.
    \item $\abs{\binner{\bu}{\bx^{(t)}}} \lesssim \Delta$ for all $0 \leq t \leq T-1$.
    \item $\twonorm{\bW^Tx^{(t)}} \lesssim \sqrt{m}(\sqrt{d} + \Delta)$ for all $0 \leq t \leq T-1$.
\end{enumerate}
\end{lemma}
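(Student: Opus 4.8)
The plan is to establish each of the five items as a separate high-probability event using estimates already proved above, and then take a union bound, absorbing the residual $e^{-Cd}$ and $e^{-CT}$ terms into $\delta$ via the standing assumptions $T \gtrsim d + \log(1/\delta)$ and $d \gtrsim \log(1/\delta)$ (after rescaling $\delta$ by the constant $5$). Since the symmetric initialization of Algorithm~\ref{alg:train} forces $\bw^T_j = \bw^T$ for all $j$, item~1 reduces to $\twonorm{\bw^T} \asymp \abs{\binner{\bw^T}{\bu}} \asymp 1$: the upper bound $\twonorm{\bw^T} \lesssim 1$ is Lemma~\ref{lem:sgd_singleparam_improved} at time $T$ (applicable since $T \gtrsim d$, $\gamma \asymp 1$, $a \lesssim 1/m$), while the lower bound $\abs{\binner{\bw^T}{\bu}} \gtrsim 1$ is Lemma~\ref{lem:innerp_positive} at time $T$, whose hypothesis $mab < 1 - \abs{f(0)}$ holds once the absolute constants hidden in $a \lesssim 1/m$ and $b \lesssim 1$ are taken small enough (recall $\abs{f(0)} < 1$); combining with the trivial inequality $\abs{\binner{\bw^T}{\bu}} \leq \twonorm{\bw^T}$ (as $\twonorm{\bu} = 1$) gives item~1. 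Item~3 is immediate from Theorem~\ref{thm:sgd} (decreasing step size), whose hypotheses are exactly the parameter choices in Theorem~\ref{thm:learnability} with $\gamma \asymp 1$.

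For item~2, the empirical-covariance concentration bound (Example~6.3 of~\cite{wainwright2019high-dimensional}, already used in Lemma~\ref{lem:risk_lip}) gives, with probability at least $1-\delta$,
\begin{equation*}
\Bigl\| \tfrac{1}{T}\sum_{t=0}^{T-1}\bx^{(t)}(\bx^{(t)})^\top - \ident_d \Bigr\|_2 \lesssim \sqrt{\tfrac{d}{T}} + \sqrt{\tfrac{\log(1/\delta)}{T}} + \tfrac{d + \log(1/\delta)}{T} \lesssim 1,
\end{equation*}
where the last step uses $T \gtrsim d + \log(1/\delta)$; the triangle inequality then yields item~2. For item~4, $\binner{\bu}{\bx^{(t)}} \sim \mathcal{N}(0,1)$ since $\twonorm{\bu}=1$ and $\bx^{(t)} \sim \mathcal{N}(0,\ident_d)$, so $\Parg{\abs{\binner{\bu}{\bx^{(t)}}} > s} \leq 2e^{-s^2/2}$; taking $s \asymp \sqrt{\log(T/\delta)} \asymp \Delta$ and a union bound over $0 \leq t \leq T-1$ yields $\abs{\binner{\bu}{\bx^{(t)}}} \lesssim \Delta$ for all $t$ with probability at least $1-\delta$ (the index $j$ appearing in the statement is vacuous).

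Item~5 then follows from item~1 together with Gaussian norm concentration: since $\bW^T = \boldone_m (\bw^T)^\top$, we have $\twonorm{\bW^T\bx^{(t)}} = \sqrt{m}\,\abs{\binner{\bw^T}{\bx^{(t)}}} \leq \sqrt{m}\,\twonorm{\bw^T}\,\twonorm{\bx^{(t)}}$; item~1 gives $\twonorm{\bw^T}\lesssim 1$, and Lipschitz concentration of the norm of a standard Gaussian vector (Lemma~\ref{lem:lip_concentrate}) gives $\Parg{\twonorm{\bx^{(t)}} \geq \sqrt{d}+s} \leq e^{-s^2/2}$, so a union bound over $0 \leq t \leq T-1$ with $s \asymp \Delta$ yields $\twonorm{\bx^{(t)}} \lesssim \sqrt{d}+\Delta$ for all $t$, whence $\twonorm{\bW^T\bx^{(t)}} \lesssim \sqrt{m}(\sqrt{d}+\Delta)$. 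Routing the estimate through the product $\twonorm{\bw^T}\,\twonorm{\bx^{(t)}}$ neatly sidesteps the fact that $\bw^T$ depends on $\bx^{(0)},\dots,\bx^{(T-1)}$, since the bound on $\twonorm{\bx^{(t)}}$ concerns only the marginal of $\bx^{(t)}$ and the bound on $\twonorm{\bw^T}$ is pathwise. Accordingly, the lemma presents no genuine obstacle: it is a bookkeeping exercise of collecting and union-bounding prior estimates, and the only points needing a little care are verifying the hyperparameter conditions (the inequality $mab < 1-\abs{f(0)}$, and the hypotheses of Theorem~\ref{thm:sgd}, Lemma~\ref{lem:sgd_singleparam_improved} and Lemma~\ref{lem:innerp_positive}) and checking that the leftover $e^{-Cd}$ and $e^{-CT}$ failure probabilities are $\lesssim \delta$, which is precisely why $d \gtrsim \log(1/\delta)$ and $T \gtrsim d + \log(1/\delta)$ are assumed.
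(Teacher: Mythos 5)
Your proof is correct, and for items 1--4 it follows the paper's argument essentially line-for-line (Lemma~\ref{lem:sgd_singleparam_improved} and Lemma~\ref{lem:innerp_positive} for item~1, Lemma~\ref{lem:covariance_concentrate} for item~2, Theorem~\ref{thm:sgd} for item~3, the Gaussian tail of $\binner{\bu}{\bx^{(t)}}$ plus a union bound for item~4).

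For item~5 you take a mildly different route. The paper decomposes $\bW^T = \bW^T_\parallel + \bW^T_\perp$, bounds $\twonorm{\bW^T_\parallel\bx^{(t)}}$ via $\sqrt{m}\abs{\binner{\bw^T}{\bu}}\abs{\binner{\bu}{\bx^{(t)}}} \lesssim \sqrt{m}\Delta$ using items~1 and~4, and bounds $\twonorm{\bW^T_\perp\bx^{(t)}} \leq \fronorm{\bW^T_\perp}\twonorm{\bx^{(t)}}$ using item~3 and the Gaussian norm concentration. You instead exploit the rank-one structure $\bW^T = \boldone_m(\bw^T)^\top$ directly, writing $\twonorm{\bW^T\bx^{(t)}} = \sqrt{m}\abs{\binner{\bw^T}{\bx^{(t)}}}$ and then applying Cauchy--Schwarz together with the pathwise bound $\twonorm{\bw^T}\lesssim 1$ from item~1 and the same union bound on $\twonorm{\bx^{(t)}}$. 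Both yield $\sqrt{m}(\sqrt{d}+\Delta)$ under $T\gtrsim d+\log(1/\delta)$; your version is shorter and does not invoke item~3, while the paper's split would become strictly sharper (the $\bW^T_\perp$ contribution decaying like $\sqrt{m/T}\cdot\twonorm{\bx^{(t)}}$) if $T$ were taken much larger than $d$, which is irrelevant to the stated conclusion here but consistent with how the decomposition is reused elsewhere in the paper. Your remark that bounding $\twonorm{\bw^T}$ pathwise ``sidesteps'' the dependence of $\bw^T$ on the data applies equally to both approaches and is not a genuine advantage, but it is correct reasoning.
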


\begin{proof}
We will show that each of the events holds with probability (w.p.) at least $1-\bigO(\delta)$. Recall from Lemma \ref{lem:sgd_singleparam_improved} that $\twonorm{\bw^T_j} \lesssim 1$ for all $j$ w.p.\ $\geq 1 - \bigO(\delta)$, which implies the same for $\binner{\bw^T_j}{\bu}$. On the other hand, from Lemma \ref{lem:innerp_positive}, $\abs{\binner{\bw^T_j}{\bu}} \gtrsim 1$ for all $j$ w.p.\ $\geq 1 - \bigO(\delta)$. Combining these events implies that $\abs{\binner{\bw^T_j}{\bu}} \asymp 1$. The fact that $\twonorm{\tfrac{1}{T}\sum_{t=0}^{T-1}\bx^{(t)}(\bx^{(t)})^\top} \lesssim 1$ w.p.\ $\geq 1 - \bigO(\delta)$ for $T \gtrsim d + \log(1/\delta)$ follows from the statement of Lemma \ref{lem:covariance_concentrate}. Furthermore, $\fronorm{\bW^T_\perp} \lesssim \sqrt{\frac{m(d+\log(1/\delta))}{T}}$ w.p.\ $1-\bigO(\delta)$ follows from Theorem \ref{thm:sgd}. Note that as $\binner{\bu}{\bx^{(t)}} \sim \mathcal{N}(0,1)$, by the choice of $\Delta$, we have $\binner{\bu}{\bx^{(t)}} \gtrsim \Delta$ w.p.\ $\leq \bigO(\delta/T)$, thus $\abs{\binner{\bu}{\bx^{(t)}}} \lesssim \Delta$ for any $0 \leq t \leq T-1$ w.p.\ $\geq 1 - \bigO(\delta)$ by a union bound. Finally, we have
\begin{align*}
    \twonorm{\bW^T\bx^{(t)}} \leq \twonorm{\bW^T_\parallel\bx^{(t)}} + \twonorm{\bW^T_\perp\bx^{(t)}} &\lesssim \sqrt{m}\abs{\binner{\bu}{\bx^{(t)}}} + \sqrt{\frac{m(d+\log(1/\delta))}{T}}\twonorm{\bx^{(t)}}\\
    &\lesssim \sqrt{m}\abs{\binner{\bu}{\bx^{(t)}}} + \sqrt{m}\twonorm{\bx^{(t)}}
\end{align*}
The first term is already bounded by $\sqrt{m}\Delta$ with probability at least $1-\bigO(\delta)$. Moreover, recall that $\twonorm{\bx^{(t)}} - \Earg{\twonorm{\bx^{(t)}}}$ is 1-sub-Gaussian, thus by the union bound $\twonorm{\bx^{(t)}} - \sqrt{d} \lesssim  \sqrt{\log(T/\delta)} \lesssim \Delta$ for all $0\leq t\leq T-1$.
Thus w.p.\ $\geq 1 - \bigO(\delta)$ we have $\twonorm{\bW^T\bx^{(t)}} \lesssim \sqrt{m}(\sqrt{d}+\Delta)$ which completes the proof.
\end{proof}

From this point onwards, we will denote the Huber loss with $ \Huber(\hat{y},y) = \ell(\hat{y}-y)$. Notice that $\Huber$ is 1-Lischitz.
\begin{lemma}
\label{lem:existence}
Recall
$$\hat{\Risk}(\bW^T,\ba,\bb) = \frac{1}{T}\sum_{t=0}^{T-1}\Huber\left(\sum_{j=1}^ma_j\sigma(\binner{\bw^T_j}{\bx^{(t)}} + b_j) - f(\binner{\bu}{\bx^{(t)}}) - \epsilon^{(t)}\right),$$
the empirical risk of $\bW^T$ given by Algorithm \ref{alg:train}. Let $\Delta \asymp \sqrt{\log(\tfrac{T}{\delta})}$, $\Delta_*$ as defined in \eqref{eq:delta_star}, and $b_j \stackrel{\text{i.i.d.}}{\sim} \Unif(-\Delta,\Delta)$. Then, with probability at least $1 - \delta$ (over the randomness of $(b_j)_{1\leq j\leq m}$ and $\{\bx^{(t)},y^{(t)}\}_{t=0}^{T-1}$ hence $W^T$), for $T \gtrsim d + \log(1/\delta)$, there exists $\ba^*$ with $\twonorm{\ba^*} \lesssim \tfrac{\Delta^*}{\sqrt{m}}$ such that
$$\hat{\Risk}(\bW^T,\ba^*,\bb) - \Earg{\Huber(\epsilon)} \lesssim \Delta_*\left(\Delta\sqrt{\frac{\log(T/\delta)}{m}} +\Delta_*\sqrt{\frac{d+\log(1/\delta)}{T}}\right) + \nu\sqrt{\frac{\log(1/\delta)}{T}}.$$
\end{lemma}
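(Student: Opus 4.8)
The goal is to exhibit a second-layer vector $\ba^*$ whose empirical risk on the samples used in Algorithm~\ref{alg:train} is close to the Bayes-optimal value $\Earg{\ell_{\text{H}}(\epsilon)}$. The strategy follows \cite{damian2022neural}: condition on the high-probability event of Lemma~\ref{lem:good_event}, so that all neurons satisfy $\twonorm{\bw^T_j}\asymp\abs{\binner{\bw^T_j}{\bu}}\asymp1$ and $\fronorm{\bW^T_\perp}\lesssim\sqrt{m(d+\log(1/\delta))/T}$, and then use the approximation machinery of Lemmas~\ref{lem:integral_approx}--\ref{lem:finite_neuron_approx} to build $\ba^*$ so that $\sum_j a^*_j\sigma(\binner{\bw^T_j}{\bx}+b_j)\approx f(\binner{\bu}{\bx})$ pointwise for all $\abs{\binner{\bu}{\bx}}\lesssim\Delta$.

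First I would identify the parameters for Lemma~\ref{lem:finite_neuron_approx}: write $\binner{\bw^T_j}{\bx^{(t)}}=\alpha_j\binner{\bu}{\bx^{(t)}}+\binner{(\bw^T_j)_\perp}{\bx^{(t)}}$ with $\alpha_j=\binner{\bw^T_j}{\bu}$, and set $r,r^*\asymp1$ by item~1 of Lemma~\ref{lem:good_event}. Lemma~\ref{lem:finite_neuron_approx} then yields coefficients $a^*_j=a(\alpha_j,b_j)$ with $\twonorm{\ba^*}\lesssim\Delta_*/\sqrt{m}$ (since $r\Delta\sup_{\abs{z}\le r\Delta/r^*}\abs{f''(z)}\asymp\Delta_*$) such that, for each fixed $\abs{z}\le\Delta$, with probability $1-\delta'$ over the biases,
\[
\Bigl|\textstyle\sum_j a^*_j\sigma(\alpha_j z+b_j)-f(z)\Bigr|\lesssim\Delta_*\sqrt{\log(1/\delta')/m}.
\]
Applying this at $z=\binner{\bu}{\bx^{(t)}}$ for each $t$ and union bounding over the $T$ samples (with $\delta'\asymp\delta/T$, which costs only a $\log(T/\delta)$ factor absorbed into the definition of $\Delta$), I get that the ``parallel part'' of the network approximates $f(\binner{\bu}{\bx^{(t)}})$ uniformly over the training set up to error $\Delta_*\sqrt{\log(1/\delta)/m}$, using that each $\binner{\bu}{\bx^{(t)}}$ lies in $[-\Delta,\Delta]$ by item~4 of Lemma~\ref{lem:good_event}.

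Next I would account for the perpendicular perturbation: replacing $\binner{\bw^T_j}{\bx^{(t)}}$ by $\alpha_j\binner{\bu}{\bx^{(t)}}$ changes the argument of each ReLU by $\binner{(\bw^T_j)_\perp}{\bx^{(t)}}$, so by $1$-Lipschitzness of $\sigma$ the network output shifts by at most $\sum_j\abs{a^*_j}\abs{\binner{(\bw^T_j)_\perp}{\bx^{(t)}}}\le\twonorm{\ba^*}\,\twonorm{\bW^T_\perp\bx^{(t)}}$. Using $\twonorm{\ba^*}\lesssim\Delta_*/\sqrt{m}$, $\twonorm{\bW^T_\perp}\lesssim\sqrt{m(d+\log(1/\delta))/T}$ (item~3), and $\twonorm{\bx^{(t)}}\lesssim\sqrt{d}+\Delta\lesssim\sqrt d$ (absorbing logs), this contributes $O\bigl(\Delta_*\sqrt{(d+\log(1/\delta))/T}\bigr)$. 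Combining, on the good event the network with weights $(\bW^T,\ba^*,\bb)$ predicts $f(\binner{\bu}{\bx^{(t)}})$ at every $t$ up to total error $E_t$ with $\frac1T\sum_t E_t\lesssim\Delta_*(\Delta\sqrt{\log(1/\delta)/m}+\Delta_*\sqrt{(d+\log(1/\delta))/T})$. Since $\ell_{\text{H}}$ is $1$-Lipschitz, $\Huber(\hat y^{(t)},y^{(t)})\le\Huber(f(\binner{\bu}{\bx^{(t)}})+\epsilon^{(t)},y^{(t)})+E_t=\Huber(\epsilon^{(t)})+E_t$, because $y^{(t)}=f(\binner{\bu}{\bx^{(t)}})+\epsilon^{(t)}$. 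Averaging gives $\hat\Risk(\bW^T,\ba^*,\bb)\le\frac1T\sum_t\Huber(\epsilon^{(t)})+\frac1T\sum_t E_t$. Finally, $\frac1T\sum_t\Huber(\epsilon^{(t)})-\Earg{\ell_{\text{H}}(\epsilon)}$ is controlled by a sub-exponential concentration inequality for the i.i.d.\ bounded-increment (in fact sub-Gaussian-tailed, since $\epsilon$ is $1$-sub-Gaussian and $\Huber$ grows linearly) summands, contributing $O(\sqrt{\log(1/\delta)/T})$, which is dominated by the $\Delta_*^2\sqrt{(d+\log(1/\delta))/T}$ term.

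The main obstacle is the bookkeeping around the pointwise-to-uniform step: Lemma~\ref{lem:finite_neuron_approx} gives a guarantee for a \emph{fixed} input, but the biases $b_j$ are shared across all $T$ training points, so I cannot naively union-bound a continuum — I must union bound only over the $T$ realized values $\binner{\bu}{\bx^{(t)}}$, and crucially argue that these all lie in $[-\Delta,\Delta]$ and that the approximation $\tilde f_{\alpha_j}$ is valid on the relevant range $\abs{z}\le r\Delta/\abs{\alpha_j}$, which requires $\abs{\alpha_j}\gtrsim r^*$ uniformly — exactly item~1 of Lemma~\ref{lem:good_event}. Care is also needed that $\sup_{\abs{z}\lesssim\Delta}\abs{f''(z)}$ in the statement matches $\sup_{\abs z\le r\Delta/r^*}\abs{f''}$ from the approximation lemma, which holds since $r,r^*\asymp1$. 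Everything else is routine.
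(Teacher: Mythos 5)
Your overall strategy matches the paper's: condition on the good event of Lemma~\ref{lem:good_event}, construct $\ba^*$ via Lemma~\ref{lem:finite_neuron_approx} with $\alpha_j = \binner{\bw^T_j}{\bu} \asymp 1$, decompose the prediction error into a parallel (approximation) part and a perpendicular (perturbation) part, invoke $1$-Lipschitzness of the Huber loss, and concentrate $\frac1T\sum_t \Huber(\epsilon^{(t)})$. The handling of the approximation term and the noise concentration are fine.

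However, your bound on the perpendicular perturbation has a genuine $\sqrt{d}$ gap. You bound each term \emph{pointwise} as
\[
\twonorm{\ba^*}\,\twonorm{\bW^T_\perp\bx^{(t)}} \leq \twonorm{\ba^*}\,\twonorm{\bW^T_\perp}\,\twonorm{\bx^{(t)}} \lesssim \frac{\Delta_*}{\sqrt m}\sqrt{\frac{m(d+\log(1/\delta))}{T}}\cdot\sqrt d ,
\]
which is $\Delta_*\sqrt{d(d+\log(1/\delta))/T}$, \emph{not} $\Delta_*\sqrt{(d+\log(1/\delta))/T}$ as you claim; the factor $\sqrt d$ from $\twonorm{\bx^{(t)}}$ does not cancel. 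This cannot be repaired by tightening constants: for a fixed $t$ the bound $\twonorm{\bW^T_\perp\bx^{(t)}}\leq\twonorm{\bW^T_\perp}\twonorm{\bx^{(t)}}$ really does cost $\sqrt d$. The paper avoids this by \emph{not} bounding pointwise: it applies Jensen's inequality to move the average inside the square root and then uses item~2 of Lemma~\ref{lem:good_event} (the empirical covariance bound $\twonorm{\frac1T\sum_t\bx^{(t)}(\bx^{(t)})^\top}\lesssim 1$), namely
\[
\frac1T\sum_t\twonorm{\bW^T_\perp\bx^{(t)}} \leq \sqrt{\frac1T\sum_t\twonorm{\bW^T_\perp\bx^{(t)}}^2} = \sqrt{\Tr\bigl(\bW^T_\perp\,\hat\Sigma\,(\bW^T_\perp)^\top\bigr)} \leq \sqrt{\twonorm{\hat\Sigma}}\,\fronorm{\bW^T_\perp} \lesssim \fronorm{\bW^T_\perp},
\]
which yields $\twonorm{\ba^*}\fronorm{\bW^T_\perp}\lesssim\Delta_*\sqrt{(d+\log(1/\delta))/T}$ with no extra $\sqrt d$. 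You should replace your pointwise operator-norm bound with this averaged argument; this is precisely why item~2 appears in Lemma~\ref{lem:good_event}.
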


\begin{proof}
We will condition the following discussion on the event of Lemma \ref{lem:good_event}. Let $\alpha_j = \binner{\bw^T_j}{\bu}$, and let $a^*$ be constructed according to Lemma \ref{lem:finite_neuron_approx}. By the Lipschitzness of the Huber loss, for an inividual sample $(\bx,y)$ we have
\begin{align*}
\Huber(\hat{y}(\bx;\bW^T,\ba^*,\bb) - f(\binner{\bu}{\bx}) - \epsilon) &\leq \Huber(\epsilon) + \abs{\hat{y}(\bx;\bW^T,\ba^*,\bb) - f(\binner{\bu}{\bx})}\\
&\leq \Huber(\epsilon) + \abs{\hat{y}(\bx;\bW^T,\ba^*,\bb) - \hat{y}(\bx;\bW^T_\parallel,\ba^*,\bb)}\\
& \quad + \abs{\hat{y}(x;\bW^T_\parallel,\ba^*,\bb) - f(\binner{\bu}{\bx})}.
\end{align*}
Moreover, by the Cauchy-Schwartz inequality
\begin{align*}
    \abs{\hat{y}(\bx;\bW^T,\ba^*,\bb) - \hat{y}(\bx;\bW_\parallel^T,\ba^*,\bb)} &\leq \twonorm{\ba^*}\sqrt{\sum_{j=1}^m\left(\sigma(\binner{\bw^T_j}{\bx} + b_j) - \sigma(\binner{(\bw^T_j)_\parallel}{\bx} + b_j)\right)^2}\\
    &\leq \twonorm{\ba^*}\sqrt{\sum_{j=1}^{m}\binner{(\bw^T_j)_\perp}{\bx}^2}.
\end{align*}
Additionally, since $\twonorm{\tfrac{1}{T}\sum_{t=0}^{T-1}\bx^{(t)}(\bx^{(t)})^\top} \lesssim 1$, by Jensen's inequality,
\begin{align*}
    \sum_{t=0}^{T-1}\frac{1}{T}\abs{\hat{y}(\bx;\bW^T,\ba^*,\bb) - \hat{y}(\bx;\bW^T_\parallel,\ba^*,\bb)} &\leq \twonorm{\ba^*}\sqrt{\frac{1}{T}\sum_{t=0}^{T-1}\fronorm{\bW^T_\perp x^{(t)}}^2}\\
    &\lesssim \twonorm{\ba^*}\fronorm{\bW^T_\perp}\\
    &\lesssim \Delta_*\sqrt{\frac{d+\log(1/\delta)}{T}}
\end{align*}
On the other hand, let $z^{(t)} \coloneqq \binner{\bu}{\bx^{(t)}} \lesssim \Delta$. Then, we can apply Lemma \ref{lem:finite_neuron_approx} along with a union bound, which states that with probability $1-\bigO(\delta)$ over the choice of $(b_j)_{1\leq j\leq m}$,
\begin{align*}
    \frac{1}{T}\sum_{t=0}^{T-1}\abs{\hat{y}(\bx^{(t)};\bW^T_\parallel,\ba^*,\bb) - f(\binner{\bu}{\bx^{(t)}})} &\leq \frac{1}{T}\sum_{t=0}^{T-1}\abs{\sum_{j=1}^ma^*_j\sigma(\alpha_jz^{(t)}+b_j)-f(z^{(t)})}\\
    &\lesssim \Delta\Delta_*\sqrt{\frac{\log(T/\delta)}{m}}.
\end{align*}
Combining the events above, we have with probability at least $1-\delta$,
$$\hat{\Risk}(\bW^T,\ba^*,\bb) - \frac{1}{T}\sum_{t=0}^{T-1}\Huber(\epsilon^{(t)}) \lesssim \Delta_*\left(\Delta\sqrt{\frac{\log(T/\delta)}{m}} + \sqrt{\frac{d + \log(1/\delta)}{T}}\right).$$
The final step is to apply a concentration bound for $\sum_{t=0}^{T-1}\Huber(\epsilon^{(t)})$. Note that as $\Huber(\epsilon) \leq \abs{\epsilon}$, if $\abs{\epsilon}$ is $\nu$-sub-Gaussian, then $\Huber(\epsilon) - \Earg{\Huber(\epsilon)}$ is also $C\nu$-sub-Gaussian (can be verified e.g.\ by Lemma \ref{lem:moment_subg}). Then a sub-Gaussian concentration bound implies that $\Earg{\Huber(\epsilon)} - \frac{1}{T}\sum_{t=0}^{T-1}\Huber(\epsilon^{(t)}) \lesssim \nu\sqrt{\frac{\log(1/\delta)}{T}}$, which finishes the proof.
\end{proof}

Let $\E_S\left[\cdot\right]$ denote expectation w.r.t.\ the random sampling of SGD used to train $a$, hence conditioned on $\{\bx^{(t)},y^{(t)}\}_{t=0}^{T-1}$. Also, define the stochastic noise in the gradient w.r.t.\ $a$ as
$$\mathfrak{e}^t_a = \grad_{\ba} \ell(\hat{y}(\bx^{(i_t)};\bW^T,\ba^t,\bb) - y^{(i_t)}) - \grad_{\ba} \hat{\Risk}(\bW^T,\ba^t,\bb).$$
Notice that $\Esarg{\mathfrak{e}^t_a} = 0$.

\begin{lemma}
\label{lem:sgd_second_layer_property}
On the event of Lemma \ref{lem:good_event} and with $(b_j) \stackrel{\text{i.i.d.}}{\sim} \Unif(-\Delta,\Delta)$, consider the mapping $\ba \mapsto \hat{\RiskReg}_{\lambda'}(\ba)$. Then, $\nabla^2_{\ba} \hat{\RiskReg}_{\lambda'}(\ba) \precsim m\Delta^2 + \lambda'$, and $\twonorm{\mathfrak{e}^t_a} \lesssim \sqrt{m}(\sqrt{d} + \Delta)$.
\end{lemma}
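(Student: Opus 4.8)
Both bounds follow from a direct differentiation, carried out throughout on the high-probability event of Lemma~\ref{lem:good_event}. The plan is as follows. Write $\boldsymbol{\phi}^{(t)} \coloneqq \sigma(\bW^T\bx^{(t)} + \bb) \in \reals^m$ for the frozen feature map at the $t$-th sample, so that $\hat y(\bx^{(t)};\bW^T,\ba,\bb) = \binner{\ba}{\boldsymbol{\phi}^{(t)}}$ is \emph{linear} in $\ba$ and $\hat{\RiskReg_{\lambda'}}(\ba) = \tfrac1T\sum_{t=0}^{T-1}\Huber\big(\binner{\ba}{\boldsymbol{\phi}^{(t)}} - y^{(t)}\big) + \tfrac{\lambda'}{2}\twonorm{\ba}^2$. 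Differentiating twice in $\ba$,
\begin{equation*}
\nabla^2_\ba \hat{\RiskReg_{\lambda'}}(\ba) = \frac1T\sum_{t=0}^{T-1}\Huber''\big(\binner{\ba}{\boldsymbol{\phi}^{(t)}} - y^{(t)}\big)\,\boldsymbol{\phi}^{(t)}(\boldsymbol{\phi}^{(t)})^\top + \lambda'\ident_m ,
\end{equation*}
and since $0 \le \Huber'' \le 1$ (Assumption~\ref{assump:relu} applies to the Huber loss), the Hessian is positive semidefinite — so $\ba \mapsto \hat{\RiskReg_{\lambda'}}(\ba)$ is $\lambda'$-strongly convex — and it is dominated in Loewner order by $\big(\max_{0\le t\le T-1}\twonorm{\boldsymbol{\phi}^{(t)}}^2 + \lambda'\big)\ident_m$. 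Thus the first claim reduces to an upper bound on $\twonorm{\boldsymbol{\phi}^{(t)}}$.

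For the feature norm I would use that, by the symmetric initialization of Algorithm~\ref{alg:train}, all rows of $\bW^T$ coincide, $\bw^T_j = \bw^T$, hence $\bW^T\bx^{(t)} = \binner{\bw^T}{\bx^{(t)}}\boldone_m$ and $\twonorm{\boldsymbol{\phi}^{(t)}}^2 = \sum_{j=1}^m \sigma\big(\binner{\bw^T}{\bx^{(t)}} + b_j\big)^2 \le m\big(\abs{\binner{\bw^T}{\bx^{(t)}}} + \Delta\big)^2$, using $\abs{\sigma(z)}\le\abs{z}$ and $\abs{b_j}\le\Delta$. Splitting $\binner{\bw^T}{\bx^{(t)}} = \binner{\bw^T}{\bu}\binner{\bu}{\bx^{(t)}} + \binner{\bw^T_\perp}{\bx^{(t)}}$, items~1 and~4 of Lemma~\ref{lem:good_event} give $\abs{\binner{\bw^T}{\bu}\binner{\bu}{\bx^{(t)}}} \lesssim \Delta$, while the orthogonal part is $\le \twonorm{\bw^T_\perp}\twonorm{\bx^{(t)}}$ with $\twonorm{\bw^T_\perp} = \fronorm{\bW^T_\perp}/\sqrt m$ made small by item~3 (i.e.\ by Theorem~\ref{thm:sgd}); this yields $\twonorm{\boldsymbol{\phi}^{(t)}} \lesssim \sqrt m\,\Delta$ and hence $\nabla^2_\ba \hat{\RiskReg_{\lambda'}}(\ba) \precsim (m\Delta^2 + \lambda')\ident_m$. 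For the noise term, the per-step stochastic gradient in $\ba$ is $\grad_\ba\ell\big(\hat y(\bx^{(i_t)};\bW^T,\ba^t,\bb) - y^{(i_t)}\big) = \Huber'\big(\binner{\ba^t}{\boldsymbol{\phi}^{(i_t)}} - y^{(i_t)}\big)\,\boldsymbol{\phi}^{(i_t)}$, whose norm is $\le \twonorm{\boldsymbol{\phi}^{(i_t)}}$ since $\Huber$ is $1$-Lipschitz; Jensen's inequality gives the same bound for $\grad_\ba\hat\Risk(\bW^T,\ba^t,\bb)$, so $\twonorm{\mathfrak{e}^t_a} \le 2\max_t\twonorm{\boldsymbol{\phi}^{(t)}}$, and plugging in the (cruder) bound of item~5 of Lemma~\ref{lem:good_event} together with $\twonorm{\bb}\le\sqrt m\,\Delta$ produces $\twonorm{\mathfrak{e}^t_a} \lesssim \sqrt m\, r(\sqrt d + \Delta)$, $r$ denoting the per-row norm bound $\twonorm{\bw^T_j}\asymp 1$ from Lemma~\ref{lem:good_event}.

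The only point requiring care is the sharpness of the Hessian estimate: obtaining $m\Delta^2$ (rather than the weaker $m(d+\Delta^2)$ that item~5 alone yields) hinges on the fact that the common pre-activation $\binner{\bw^T}{\bx^{(t)}}$ is, up to a lower-order term, its projection onto $\spn(\bu)$ — which is of size $O(\Delta)$ by the alignment guarantees of Lemma~\ref{lem:good_event} — with the orthogonal remainder suppressed by the convergence $\fronorm{\bW^T_\perp}/\sqrt m \to 0$ of Theorem~\ref{thm:sgd}; this is precisely where the lower bound on $T$ in the hypothesis is consumed. Everything else is routine bookkeeping over the event of Lemma~\ref{lem:good_event}.
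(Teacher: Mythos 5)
Your starting point is correct: the second layer is a linear model in frozen features $\boldsymbol{\phi}^{(t)} = \sigma(\bW^T\bx^{(t)}+\bb)$, the Hessian is $\tfrac1T\sum_t \Huber''(\cdot)\,\boldsymbol{\phi}^{(t)}(\boldsymbol{\phi}^{(t)})^\top + \lambda'\ident_m$, and the noise bound via item~5 of Lemma~\ref{lem:good_event} (giving $\twonorm{\mathfrak{e}^t_a}\lesssim\sqrt m(\sqrt d + \Delta)$) matches the paper. The problem is the Hessian upper bound.

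You pass from the average $\tfrac1T\sum_t(\bv^\top\boldsymbol{\phi}^{(t)})^2$ to the worst case $\max_t\twonorm{\boldsymbol{\phi}^{(t)}}^2$, and then try to show the worst-case preactivation $\abs{\binner{\bw^T}{\bx^{(t)}}}$ is $\lesssim\Delta$ by decomposing into $\binner{\bw^T}{\bu}\binner{\bu}{\bx^{(t)}}$ plus $\binner{\bw^T_\perp}{\bx^{(t)}}$ and invoking items~1, 3, 4. This does not close: the orthogonal term can only be controlled by Cauchy--Schwarz, since $\bw^T_\perp$ is data-dependent and not independent of $\bx^{(t)}$, which gives
\[
\abs{\binner{\bw^T_\perp}{\bx^{(t)}}} \leq \twonorm{\bw^T_\perp}\twonorm{\bx^{(t)}} \lesssim \sqrt{\frac{d+\log(1/\delta)}{T}}\,\bigl(\sqrt d + \Delta\bigr),
\]
and under the actual hypothesis $T\gtrsim d + \log(1/\delta)$ the right-hand side is $\lesssim\sqrt d + \Delta$, not $\lesssim\Delta$. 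You would need $T\gtrsim d^2/\Delta^2$ for your estimate, which is not assumed. So your per-sample argument only yields the weaker bound $m(d+\Delta^2)+\lambda'$ that you yourself identify as insufficient; the ``suppression'' you appeal to does not happen at the claimed rate.

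The paper avoids this by \emph{not} taking a max over $t$: it keeps the average and uses item~2 (the empirical covariance bound $\twonorm{\tfrac1T\sum_t\bx^{(t)}(\bx^{(t)})^\top}\lesssim1$). Concretely, for any unit $\bv$,
\[
\frac1T\sum_t (\bv^\top\boldsymbol{\phi}^{(t)})^2
\lesssim \frac1T\sum_t\twonorm{\bW^T\bx^{(t)}}^2 + \twonorm{\bb}^2
= \tr\!\Bigl((\bW^T)^\top\bW^T\,\widehat{\bGamma}_T\Bigr) + \twonorm{\bb}^2
\lesssim \fronorm{\bW^T}^2 + \twonorm{\bb}^2 \lesssim m + m\Delta^2,
\]
with $\widehat{\bGamma}_T := \tfrac1T\sum_t\bx^{(t)}(\bx^{(t)})^\top$. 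The averaging is what makes the $\sqrt d$ disappear; your pointwise bound discards exactly this. To repair the proposal, replace the $\max_t$ step by the trace argument using item~2.
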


\begin{proof}
For $\nabla^2_{\ba} \hat{\Risk}(\ba)$, and any $\bv \in \reals^m$ with $\twonorm{\bv} = 1$, we have the following computation:
\begin{align*}
    \binner{\bv}{\nabla^2_{\ba} \hat{\RiskReg}_{\lambda'}(\ba)\bv} &= \frac{1}{T}\sum_{t=0}^{T-1}\partial^2_{1}\ell(\hat{y},y)\bv^\top\sigma(\bW^T\bx^{(t)} + \bb)\sigma(\bW^T\bx^{(t)} + \bb)^\top\bv + \lambda'\\
    &\leq \frac{1}{T}\sum_{t=0}^{T-1}\norm{\sigma(\bW^T\bx^{(t)} + \bb)}^2 + \lambda'\\
    &\stackrel{\text{(a)}}{\lesssim} \fronorm{\bW^T} + \twonorm{\bb}^2 + \lambda'\\
    &\lesssim m\Delta^2 + \lambda'
\end{align*}
where (a) holds since $\twonorm{\tfrac{1}{T}\sum_{t=0}^{T-1}\bx^{(t)}{\bx^{(t)}}^\top} \lesssim 1$. Thus $\nabla^2_{\ba} \hat{\RiskReg}_{\lambda'}(\ba) \precsim m\Delta^2 + \lambda'$. On the other hand, as $\twonorm{\bW^T\bx^{(t)}} \lesssim \sqrt{m}(\sqrt{d} + \Delta)$ for all $0 \leq t \leq T-1$, we have
$$\twonorm{\mathfrak{e}^t_a} \leq 2\twonorm{\grad_{\ba}\ell} \leq 2\twonorm{\bW^T\bx^{(t)} + \bb} \lesssim \sqrt{m}(\sqrt{d} + \Delta).$$
\end{proof}

Now we can analyze the SGD run on the second layer $\ba$ to give a high probability statement for $\hat{\RiskReg}_{\lambda'}(\ba^T)$. As $\hat{\RiskReg}_{\lambda'}(\ba)$ is a smooth and strongly convex function of $a$, we will state the following well-known elementary convergence result of SGD for smooth and strongly convex functions with bounded noise, which we present in a high-probability framework suitable for our analysis.

\begin{lemma}
\label{lem:sgd_strongly_convex}
Let $\mathcal{R} : \reals^m \to \reals$ be a $\mu$-strongly convex function satisfying $\mu\ident_m \preceq \nabla^2_{\ba} \mathcal{R}(\ba) \preceq L\ident_m$. Suppose we run the SGD iterates $\ba^{t+1} = \ba^t - \eta_t\mathfrak{g}^t$ with $\Earg{\mathfrak{g}^t \,|\, \ba^t} = \grad_{\ba}\mathcal{R}(\ba^t)$ and $\twonorm{\mathfrak{g}^t} \leq G$. Choose $\eta_t = \frac{2t + 1}{\mu(t+1)^2}$. Then with probability at least $1-\delta$
$$\mathcal{R}(\ba^T) - \mathcal{R}^* \leq \frac{\mathcal{R}^0}{T^2} + \frac{CLG^2}{\mu^2T} + \frac{CG^2\log(1/\delta)}{\mu T},$$
where $\mathcal{R}^* = \argmin_{\ba} \mathcal{R}(\ba)$.
\end{lemma}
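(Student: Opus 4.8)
The plan is to run the classical strongly-convex SGD analysis, but carried out at the level of a moment generating function so that the high-probability statement drops out directly, exactly in the spirit of the proof of Lemma~\ref{lem:principal_rec_mgf}. Let $\mathcal F_t$ be the $\sigma$-algebra generated by $\ba^0,\dots,\ba^t$, write the stochastic gradient as $\mathfrak g^t = \grad_{\ba}\mathcal R(\ba^t) + \mathfrak e^t$ with $\Earg{\mathfrak e^t\mid\mathcal F_t}=0$, and set $r_t\coloneqq\twonorm{\ba^t-\ba^*}^2$ (writing $\ba^*$ for the minimizer and $\mathcal R^*$ for the minimum value). Since $\twonorm{\mathfrak g^t}\le G$ and $\twonorm{\grad_{\ba}\mathcal R(\ba^t)}=\twonorm{\Earg{\mathfrak g^t\mid\mathcal F_t}}\le G$ by Jensen, we have $\twonorm{\mathfrak e^t}\le 2G$, so conditionally on $\mathcal F_t$ the scalar $\binner{\mathfrak e^t}{\ba^t-\ba^*}$ is zero-mean and bounded in absolute value by $2G\sqrt{r_t}$, hence sub-Gaussian with parameter $\lesssim G\sqrt{r_t}$. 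Expanding the update and using $\mu$-strong convexity in the form $\binner{\grad_{\ba}\mathcal R(\ba^t)}{\ba^t-\ba^*}\ge \mathcal R(\ba^t)-\mathcal R^* + \tfrac{\mu}{2}r_t$ together with $\twonorm{\mathfrak g^t}^2\le G^2$ yields the one-step recursion
\[
r_{t+1}\;\le\;(1-\mu\eta_t)\,r_t+\eta_t^2G^2-2\eta_t\binner{\mathfrak e^t}{\ba^t-\ba^*}-2\eta_t\bigl(\mathcal R(\ba^t)-\mathcal R^*\bigr),
\]
in which the last term is dropped when only $r_T$ is needed and retained for the sharper function-value bound.

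The algebraic crux is that for $\eta_t=\tfrac{2t+1}{\mu(t+1)^2}$ one has $1-\mu\eta_t=\tfrac{t^2}{(t+1)^2}\in[0,1)$ and $(t+1)^2\eta_t=\tfrac{2t+1}{\mu}$, so multiplying the recursion by $(t+1)^2$ and setting $s_t\coloneqq t^2r_t$ flattens the contraction into
\[
s_{t+1}\;\le\;s_t+\frac{4G^2}{\mu^2}-\frac{2(2t+1)}{\mu}\binner{\mathfrak e^t}{\ba^t-\ba^*},
\]
using $(2t+1)^2\le 4(t+1)^2$; the martingale increment is conditionally zero-mean and sub-Gaussian with parameter $\lesssim\tfrac{G}{\mu}\sqrt{s_t}$ (since $(2t+1)\sqrt{r_t}=(2t+1)\sqrt{s_t}/t\lesssim\sqrt{s_t}$ for $t\ge 1$, while $t=0$ is trivial because $\eta_0=1/\mu$ annihilates $r_0$). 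I would then iterate an MGF bound: by Hoeffding's lemma, for $0\le\lambda\lesssim\mu^2/(G^2T)$,
\[
\Earg{e^{\lambda s_{t+1}}\mid\mathcal F_t}\;\le\;\exp\!\Bigl(\lambda s_t\bigl(1+C\lambda G^2/\mu^2\bigr)+C\lambda G^2/\mu^2\Bigr),
\]
and expanding this over $t=1,\dots,T$ just as in Lemma~\ref{lem:principal_rec_mgf} (the exponent on $s_t$ grows only by the harmless geometric factor $1+C\lambda G^2/\mu^2$, which stays $O(1)$ for the stated range of $\lambda$) gives $\Earg{e^{\lambda s_T}}\lesssim 1$ with $\lambda\asymp\mu^2/(G^2T)$. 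A Chernoff bound then yields $s_T\lesssim\tfrac{G^2T}{\mu^2}\bigl(1+\log(1/\delta)\bigr)$, i.e. $r_T=s_T/T^2\lesssim\tfrac{G^2(1+\log(1/\delta))}{\mu^2T}$, with probability at least $1-\delta$.

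To pass to function values, $L$-smoothness gives $\mathcal R(\ba^T)-\mathcal R^*\le\tfrac{L}{2}r_T$, which already accounts for the $\tfrac{CLG^2}{\mu^2T}$ term. For the sharper $\tfrac{G^2\log(1/\delta)}{\mu T}$ contribution (denominator $\mu T$, not $\mu^2T$) I would keep the term $-2\eta_t(\mathcal R(\ba^t)-\mathcal R^*)$ in the recursion: rearranging the rescaled version and telescoping gives $\tfrac{2}{\mu}\sum_{t=0}^{T-1}(2t+1)\bigl(\mathcal R(\ba^t)-\mathcal R^*\bigr)\le s_0-s_T+\tfrac{4G^2T}{\mu^2}+(\text{martingale sum})$, where the martingale sum is controlled by a sub-Gaussian martingale tail bound that reuses the bound on $r_t$ from the previous step; dividing by $\sum_{t=0}^{T-1}(2t+1)=T^2$ and invoking convexity of $\mathcal R$ bounds the suboptimality of the $(2t+1)$-weighted iterate average at rate $\tfrac{G^2(1+\log(1/\delta))}{\mu T}$, and the residue of the initial suboptimality shows up as the stated $\mathcal R^0/T^2$ term. (Equivalently, one may run a second MGF recursion directly on $\mathcal R(\ba^t)-\mathcal R^*$ via the descent inequality from $L$-smoothness combined with the Polyak--{\L}ojasiewicz inequality $\twonorm{\grad_{\ba}\mathcal R(\ba^t)}^2\ge 2\mu(\mathcal R(\ba^t)-\mathcal R^*)$, exactly mirroring the proof of Lemma~\ref{lem:sgd_fixedstep_risk}.)

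The main obstacle is precisely the one already met in Lemma~\ref{lem:principal_rec_mgf}: the gradient-noise term $\binner{\mathfrak e^t}{\ba^t-\ba^*}$ is not a bounded martingale difference with a deterministic variance proxy --- its conditional fluctuation scales with the current random distance $r_t$ to the optimum --- so a naive Azuma argument would be circular, and this state dependence has to be propagated through the MGF recursion. It is exactly this that produces the per-step multiplicative factor $1+C\lambda G^2/\mu^2$ and hence forces $\lambda\asymp\mu^2/(G^2T)$, which in turn is why the high-probability correction ends up as $\tfrac{G^2\log(1/\delta)}{\mu T}$. The remaining ingredients --- the elementary identities for $\eta_t$, the constant in the sub-Gaussian/Hoeffding MGF bound, and the $t=0$ edge case --- are routine and parallel computations already carried out earlier in the appendix.
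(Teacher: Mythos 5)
Your main route and the paper's differ substantively: the paper runs the MGF recursion directly on the function-value gap $\mathcal{R}^t := \mathcal{R}(\ba^t)-\mathcal{R}^*$ using the $L$-smooth descent lemma plus the Polyak--{\L}ojasiewicz inequality $\twonorm{\grad\mathcal{R}}^2\ge 2\mu\mathcal{R}^t$, exactly as you sketch only in your final parenthetical. Your primary route (MGF recursion on the rescaled distance $s_t=t^2\twonorm{\ba^t-\ba^*}^2$, then $\mathcal{R}(\ba^T)-\mathcal{R}^*\le\tfrac{L}{2}r_T$) is internally consistent but genuinely does \emph{not} prove the stated inequality: it yields
\[
\mathcal{R}(\ba^T)-\mathcal{R}^* \;\lesssim\; \frac{LG^2\bigl(1+\log(1/\delta)\bigr)}{\mu^2 T},
\]
whose concentration term is $\tfrac{LG^2\log(1/\delta)}{\mu^2T}$, a factor $L/\mu\ge 1$ worse than the claimed $\tfrac{G^2\log(1/\delta)}{\mu T}$; this is not recoverable by passing from $r_T$ to $\mathcal{R}^T$ because both translations ($\tfrac{L}{2}r_T$ from smoothness or $\tfrac{1}{2\mu}\twonorm{\grad\mathcal{R}}^2$ from PL) incur the condition number. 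You correctly spot this, but the fix you propose --- keeping $-2\eta_t(\mathcal{R}(\ba^t)-\mathcal{R}^*)$, telescoping, and invoking convexity --- bounds the suboptimality of the $(2t+1)$-weighted \emph{average} iterate, whereas the lemma (and Algorithm~\ref{alg:train}, which returns $\ba^{T'}$, not an average) needs a \emph{last-iterate} bound. Averaging plus convexity does not transfer to the final iterate for a general strongly convex $\mathcal{R}$, so this branch of your argument does not close the gap.

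The parenthetical alternative you mention is the correct one and is what the paper actually does: from
\[
\mathcal{R}^{t+1}\le\mathcal{R}^t-\eta_t\twonorm{\grad\mathcal{R}(\ba^t)}^2-\eta_t\binner{\grad\mathcal{R}(\ba^t)}{\mathfrak e^t}+\frac{L\eta_t^2G^2}{2},
\]
the martingale increment is conditionally sub-Gaussian with parameter $\lesssim \eta_tG\twonorm{\grad\mathcal{R}(\ba^t)}$, so for $s\lesssim (\eta_tG^2)^{-1}$ half of the $-\eta_t\twonorm{\grad\mathcal{R}}^2$ term absorbs the quadratic MGF penalty and PL contracts the remainder to $(1-\eta_t\mu)\mathcal{R}^t$; the step-size identity $1-\eta_t\mu\le\eta_t/\eta_{t-1}$ then keeps $s$ in the admissible range as the recursion unrolls, and a Chernoff bound with $s\asymp\mu T/G^2$ delivers both the $\tfrac{LG^2}{\mu^2T}$ bias term and the sharper $\tfrac{G^2\log(1/\delta)}{\mu T}$ concentration term for the last iterate. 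In short: your diagnostic observations (state-dependent variance proxy, need for an MGF recursion, the algebra of $\eta_t$) are all on point, but the argument you put forward as primary proves a weaker statement, while the one you relegate to a parenthesis is the proof the lemma actually requires.
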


\begin{proof}
Let $\mathfrak{e}^t = \mathfrak{g}^t - \grad_{\ba}\mathcal{R}(\ba^t)$ denote the stochastic noise. By the smoothness property of $\mathcal{R}$, we have
\begin{align*}
    \mathcal{R}(\ba^{t+1}) - \mathcal{R}^* &\leq \mathcal{R}(\ba^t) - \mathcal{R}^* - \eta_t\binner{\grad_{\ba} \mathcal{R}(\ba^t)}{\grad_{\ba}\mathcal{R}(\ba^t) + \mathfrak{e}^t} + \frac{L\eta_t^2}{2}\twonorm{\mathfrak{g}^t}^2\\
    &\leq \mathcal{R}(\ba^t) - \mathcal{R}^* - \eta_t\twonorm{\grad_{\ba}\mathcal{R}(\ba^t)}^2 -\eta_t\binner{\grad_{\ba}\mathcal{R}(\ba^t)}{\mathfrak{e}^t} + \frac{L\eta_t^2G^2}{2}.
\end{align*}
Notice that by Jensen's inequality, $\twonorm{\grad_{\ba}\mathcal{R}(\ba^t)} \leq G$, thus $\twonorm{\mathfrak{e}^t} \leq 2G$ and the zero-mean random variable $\binner{\grad_{\ba}\mathcal{R}(\ba^t)}{\mathfrak{e}^t}$ is $2G\twonorm{\grad_{\ba}\mathcal{R}(\ba^t)}$-sub-Gaussian conditioned on $\ba^t$. Now, we can establish the following recursive bound on the MGF of $\mathcal{R}^t \coloneqq \mathcal{R}(\ba^t) - \mathcal{R}^*$. For $0 \leq s \leq \frac{1}{4\eta_tG^2}$ we have
\begin{align*}
    \Earg{e^{s\mathcal{R}^{t+1}}} &\leq \Earg{\exp\left({s\mathcal{R}^t - s\eta_t\twonorm{\grad_{\ba}\mathcal{R}(\ba^t)}^2 - s\eta_t\binner{\grad_{\ba}\mathcal{R}(\ba^t)}{\mathfrak{e}^t} + \frac{\eta_t^2LG^2}{2}}\right)}\\
    &\leq \Earg{\exp\left({s\mathcal{R}^t - s\eta_t(1 - 2s\eta_tG^2)\twonorm{\grad_{\ba}\mathcal{R}(\ba^t)}^2 + \frac{LG^2\eta_t^2}{2}}\right)}\\
    &\stackrel{\text{(a)}}{\leq} \Earg{\exp\left({s(1-\eta_t\mu)\mathcal{R}^t + \frac{LG^2\eta_t^2}{2}}\right)}
\end{align*}
where (a) follows since $\mathcal{R}(\ba)$ is strongly convex thus satisfies the Polyak-{\L}ojasiewicz inequality $2\mu(\mathcal{R}(\ba) - \mathcal{R}^*) \leq \twonorm{\grad_{\ba}\mathcal{R}(\ba)}^2$. As $s(1-\eta_t\mu) \leq \frac{1}{4\eta_{t-1}G^2}$ (cf. \eqref{eq:stepsize_decay}), we can expand the recursion and have
$$\Earg{\exp\left(s\mathcal{R}^{t}\right)} \leq \exp\left(s\left(\frac{t^*}{t^* + t}\right)^2\mathcal{R}^0 + \frac{16LG^2}{\mu^2(t^* + t)}\right).$$
Finally, applying a Chernoff bound using $s = ({4\eta_{t-1}G^2})^{-1}$ concludes the proof.
\end{proof}

We are finally in a position to complete the proof of Theorem \ref{thm:learnability}.

\begin{proof}[\textbf{Proof of Theorem \ref{thm:learnability}}]
We will consider the event of Lemma \ref{lem:good_event} on which from Lemma \ref{lem:existence} we know with probabilility at least $1-\delta$ over the dataset and $(b_j)_{1\leq j\leq m}$ we have
$$\min_{\ba : \twonorm{\ba} \lesssim \frac{\Delta_*}{\sqrt{m}}} \hat{\Risk}(\bW^T,a,b) - \Earg{\Huber(\epsilon)} \lesssim \Delta_*^2\left(\sqrt{\frac{\log(T/\delta)}{m}}+ \sqrt{\frac{d + \log(1/\delta)}{T}}\right) + \nu\sqrt{\frac{\log(1/\delta)}{T}}.$$
Notice that $\ba \mapsto \hat{\Risk}(\bW,\ba,\bb)$ is a convex function. Thus by strong duality, there exists $\lambda' > 0$ such that the value of the above constrained minimization problem is equal to the value of the following regularized minimization problem,
$$\min_{\ba} \hat{\RiskReg}_{\lambda'}(\bW^T,\ba,\bb) - \Earg{\Huber(\epsilon)} \lesssim \Delta_*^2\left(\sqrt{\frac{\log(T/\delta)}{m}}+ \sqrt{\frac{d + \log(1/\delta)}{T}}\right) + \nu\sqrt{\frac{\log(1/\delta)}{T}}.$$

Explicitly, this $\lambda'$ can be chosen such that the unique solution to
\begin{equation}
    \grad_a\hat{\Risk}(\bW^T,\ba^*,\bb) + \lambda'\ba^* = 0\label{eq:lambdap}
\end{equation}
has $\twonorm{\ba^*} \lesssim \tfrac{\Delta_*}{\sqrt{m}}$. Notice that this $\ba^*$ is the unique solution to $\argmin_{\ba}\hat{\RiskReg}_{\lambda'}(\bW^T,\ba,\bb)$.

Moreover, from Lemma \ref{lem:sgd_strongly_convex} we have
$$\hat{\RiskReg}_{\lambda'}(\bW^T,\ba^{T'},\bb) - \hat{\RiskReg}_{\lambda'}(\bW^T,\ba^*,\bb) \lesssim \frac{\hat{\Risk}(\bW^T,\ba^0,\bb)}{{T'}^2} + \frac{(d+\Delta^2)(\Delta^2 + \lambda'/m + \log(1/\delta))}{(\lambda'/m)^2T'},$$
and by strong convexity
$$\twonorm{\ba^{T'}-\ba^*}^2 \leq \frac{2}{m}\left(\hat{\RiskReg}_{\lambda'}(\bW^T,\ba^{T'},\bb) - \hat{\RiskReg}_{\lambda'}(\bW^T,\ba^*,\bb)\right).$$
Thus, with sufficiently large $T'$ such that
\begin{equation}
    \frac{\hat{\Risk}(\bW^T,\ba^0,\bb)}{{T'}^2} + \frac{(d+\Delta^2)(\Delta^2 + \lambda'/m + \log(1/\delta))}{(\lambda'/m)^2T'} \lesssim \Delta_*^2\sqrt{\frac{d + \log(1/\delta)}{T}} \land \frac{\lambda'\Delta_*}{\sqrt{m}},\label{eq:Tp}
\end{equation}
we have $\twonorm{\ba^{T'}} \lesssim \tfrac{\Delta_*}{\sqrt{m}}$ and
$$\hat{\RiskReg}_{\lambda'}(\ba^{T'}) - \Earg{\Huber(\epsilon)} \lesssim \Delta_*^2\left(\sqrt{\frac{\log(T/\delta)}{m}}+ \sqrt{\frac{d + \log(1/\delta)}{T}}\right).$$
Finally, we invoke Theorem \ref{thm:generalization}, to close the generalization gap and get
$$\RiskTr_\tau(\bW^T,\ba^{T'},\bb) - \Earg{\Huber(\epsilon)} \lesssim \Delta_*^2\left(\sqrt{\frac{\log(T/\delta)}{m}} + \sqrt{\frac{d + \log(1/\delta)}{T}}\right) + \nu\sqrt{\frac{\log(1/\delta)}{T}}.$$
\end{proof}

\section{Example of Non-Convex \texorpdfstring{$\RiskReg_\lambda(\bW)$}{R(W)}}
\label{appendix:example}
Here, we outline examples for which $\RiskReg_\lambda(\bW)$ is non-convex on a neighborhood around $\bW={\bf 0}$ while $\lambda = \frac{\tilde{\lambda}}{m}$ satisfies the condition in Proposition \ref{prop:gd_general_m} or Theorem \ref{thm:sgd}. For simplicity of exposition, in both examples we fix $\ba = \tfrac{\boldone_m}{m}$ where $\boldone_m$ is the vector of all ones. It is easy to observe that the results hold with high probability when $\ba$ follows the initialization of Assumption \ref{assump:init} as well. Furthermore, we work on the event where $\fronorm{\bW^0} \leq 2\sqrt{m}$, which happens with probability at least $1-\exp(-md/2)$.

We begin by constructing a non-convex example for Proposition \ref{prop:gd_general_m}. For this example, we choose $\sigma$ such that $\beta_1 \leq 1$, $\sigma(1) = \sigma(-1) = 0$, $\sigma'(1) = \sigma'(-1) = 0$, and $\sigma''(-1) = \sigma''(1) = \beta_2 = 1$. An example of such a function is $\sigma(z) = \tfrac{\cos(\pi z) + 1}{\pi^2}$. Then, using the computations of Lemma \ref{lem:hess_bounded} we have
$$\grad_{\bW}^2 \RiskReg_\lambda(\bW) = \Earg{\left(\mysigmap\mysigmap^\top + (\hat{y}(\bx;\bW,\ba,\bb) - y)\diag(\mysigmapp)\right)\otimes \bx\bx^\top} + \frac{\tilde{\lambda}}{m} \ident_{md}.$$
Which simplifies to
$$\grad_{\bW}^2 \RiskReg_\lambda({\bf 0}) = \frac{-\ident_m}{m}\otimes\Earg{y\bx\bx^\top} + \lambda \ident_{md} = \ident_m \otimes \left(\frac{-\Earg{y\bx\bx^\top}}{m} + \frac{\tilde{\lambda}}{m} \ident_d\right).$$
Therefore, $\grad^2 \RiskReg_\lambda({\bf 0})$ is not positive semi-definite (PSD) if and only if $\frac{-1}{m}\Earg{y\bx\bx^\top} + \frac{\tilde{\lambda}}{m} \ident_d$ is not PSD. Moreover, by Jensen's inequality
$$\hat{y}^2(\bx;\bW^0,\ba,\bb) \leq \frac{1}{m}\sum_{i=1}^{m}(\sigma(\binner{\bw^0_j}{\bx} + b_j) - \sigma(b_j))^2 \leq \frac{\fronorm{\bW^0\bx}^2}{m},$$
hence
$$2R(\bW^0) \leq 2\Earg{\hat{y}^2} + 2\Earg{y^2} \leq 8 + 2\Earg{y^2}.$$
Now, let $y=\mathcal{K}\binner{\bw}{\bx}^2$ for some $\bw$ with $\twonorm{\bw} = 1$. Then $\Earg{y^2} = 3\mathcal{K}^2$, and choosing $\tilde{\lambda} = 1 + \sqrt{9 + 6\mathcal{K}^2} + \vartheta$ for arbitrarily small $\vartheta$ suffices to satisfy the condition of Proposition \ref{prop:gd_general_m}. 
Then we have
\begin{align*}
    \bw^\top\grad_{\bW}^2 \RiskReg_\lambda(0)\bw &= \bw^\top \left(\frac{-\Earg{y\bx\bx^\top} + \tilde{\lambda}}{m}\right)\bw = \frac{-\Earg{\mathcal{K}\binner{\bw}{\bx}^4} + \tilde{\lambda}}{m}\\
    &= \frac{-3\mathcal{K} + 1 + \sqrt{9 + 6\mathcal{K}^2} + \vartheta}{m} < 0
\end{align*}
where the above inequality holds for sufficiently large $\mathcal{K}$, hence $\RiskReg_\lambda(\bW)$ is non-convex at least on a neighborhood around zero.

Next, we construct a non-convex example for the smooth and decaying step size case of Theorem \ref{thm:sgd}. This time, we require $\sigma(\pm1) = -\beta_0 = -1$ (which automatically implies $\sigma'(\pm1) = 0$ as $\sigma$ attains its minimum) and $\sigma''(\pm1) = \beta_2 = 1$. For instance, we can choose $\sigma(z) = \tfrac{\cos(\pi z) - \pi^2 + 1}{\pi^2}$. Then simplifying $\grad^2 \RiskReg_\lambda({\bf 0})$ yields
$$\grad_{\bW}^2 \RiskReg_\lambda({\bf 0}) = \frac{\ident_m\otimes(-\ident_d - \Earg{y\bx\bx^\top})}{m} + \frac{\tilde{\lambda}}{m}\ident_{md} = \ident_m\otimes\left(\frac{\tilde{\lambda}-1}{m}\ident_d - \frac{\Earg{y\bx\bx^\top}}{m}\right).$$
Thus, we need to show that $\frac{\tilde{\lambda}-1}{m}\ident_d - \frac{\Earg{y\bx\bx^\top}}{m}$ is not PSD. Let $y=\frac{1}{2}(1+\tanh(\binner{\bw}{\bx}^2-\twonorm{\bw}^2))$ and $\tilde{\lambda} = 1 + \Earg{y} + \gamma$ (notice that $y \geq 0$ thus $\tilde{\lambda}$ indeed satisfies the assumption in Theorem \ref{thm:sgd}) with
$$\gamma = \frac{1}{4}\Earg{(\binner{\bw}{\bx}^2-\twonorm{\bw}^2)\tanh(\binner{\bw}{\bx}^2-\twonorm{\bw}^2)} > 0.$$
Then we have
\begin{align*}
    \bw^\top\left(\frac{\tilde{\lambda}-1}{m}\ident_d - \frac{\Earg{y\bx\bx^\top}}{m}\right)\bw &= \frac{\gamma\twonorm{\bw}^2 + \Earg{y(\twonorm{\bw}^2-\binner{\bw}{\bx}^2)}}{m}\\
    &= \frac{\gamma\twonorm{\bw}^2 - \frac{1}{2}\Earg{(\binner{\bw}{\bx}^2-\twonorm{\bw}^2)\tanh(\binner{\bw}{\bx}^2-\twonorm{\bw}^2)}}{m} < 0.
\end{align*}
Therefore, once again we have shown that $\RiskReg_\lambda(\bW)$ is not convex on a neighborhood around zero.

\section{Auxiliary Lemmas}
In order to be explicit, we state the following definitions and lemmas that will be used in the proof of Theorem \ref{thm:generalization}. We only state the next definitions and lemmas and refer the reader to \cite{wainwright2019high-dimensional} and \cite{vershynin2018high} for proof and more details.

\begin{definition}
\label{def:subG}
\emph{\bf \cite[Definitions 2.2 and 2.7]{wainwright2019high-dimensional}} A real-valued random variable $z$ is said to be $\nu$-sub-Gaussian if for all $s \in \reals$ we have $\Earg{\exp(sz)} \leq \exp(s\Earg{z} + \frac{s^2\nu^2}{2})$, and is said to be $u$-sub-exponential if for all $\abs{s} \leq \frac{1}{\nu}$ we have $\Earg{\exp(sz)} \leq \Earg{\exp(s\Earg{z} + \frac{s^2\nu^2}{2})}$.
\end{definition}

\begin{lemma}
\label{lem:moment_subg}
\emph{\bf \cite[Propositions 2.5.2 and 2.7.1]{vershynin2018high}} Suppose $z$ is a zero-mean random variable and $\Earg{\abs{z}^p}^\frac{1}{p} \leq L\sqrt{p}$ for all $p \geq 1$. Then $z$ is $cL$-sub-Gaussian for an absolute constant $c > 0$, i.e.\ $\Earg{\exp(sz)} \leq \exp(\tfrac{s^2c^2L^2}{2})$ for all $s \in \reals$. Similarly, suppose $\Earg{\abs{z}^p}^\frac{1}{p} \leq Lp$. Then $z$ is $cL$-sub-exponential for an absolute constant $c > 0$.
\end{lemma}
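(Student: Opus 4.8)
Both claims reduce to expanding the moment generating function as a power series and bounding its coefficients via the hypothesized moment growth together with the Stirling-type inequality $p!\geq (p/e)^p$; the zero-mean hypothesis is used exactly to kill the $p=1$ term. For the sub-Gaussian statement I would treat moderate and large values of $s$ separately, the large-$s$ regime being handled through the square-exponential moment $\Earg{\exp(\lambda z^2)}$, which is itself obtained from the same series computation. The sub-exponential statement requires only the moderate-$s$ analysis, since by definition the inequality $\Earg{\exp(sz)}\leq \exp(s^2\nu^2/2)$ is only demanded for $\abs{s}\leq 1/\nu$.

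For the sub-Gaussian case with $\abs{s}$ moderate: from $\Earg{z}=0$ we have $\Earg{\exp(sz)}=1+\sum_{p\geq 2}s^p\Earg{z^p}/p!$, and $\abs{\Earg{z^p}}\leq\Earg{\abs{z}^p}\leq (L\sqrt p)^p=L^p p^{p/2}$. Stirling gives $p^{p/2}/p!\leq (e/\sqrt p)^p$, so $\Earg{\exp(sz)}\leq 1+\sum_{p\geq 2}(eL\abs{s}/\sqrt p)^p$. When $eL\abs{s}\leq 1$ each summand is at most $(eL\abs{s})^2 p^{-p/2}$ and $\sum_{p\geq 2}p^{-p/2}$ is a finite absolute constant, so $\Earg{\exp(sz)}\leq \exp(c_0 L^2 s^2)$ on this range of $s$.

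For the sub-Gaussian case with $\abs{s}$ large, I would first apply the same expansion to $\exp(\lambda z^2)=\sum_{k\geq 0}(\lambda z^2)^k/k!$: using $\Earg{z^{2k}}\leq (L\sqrt{2k})^{2k}=L^{2k}(2k)^k$ and $k!\geq (k/e)^k$ one gets $\Earg{\exp(\lambda z^2)}\leq\sum_{k\geq 0}(2e\lambda L^2)^k\leq 2$ as soon as $\lambda\leq 1/(4eL^2)$. Then Young's inequality $\abs{sz}\leq s^2/(4\lambda)+\lambda z^2$ yields $\Earg{\exp(sz)}\leq 2\exp(s^2/(4\lambda))$; since $\abs{s}\geq 1/(eL)$ throughout this regime (so $L^2s^2\gtrsim 1$), the factor $2$ is absorbed into the exponent after enlarging the constant. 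Combining the two regimes gives $\Earg{\exp(sz)}\leq \exp(c^2L^2s^2/2)$ for all $s\in\reals$ with an absolute $c$, i.e.\ $cL$-sub-Gaussianity.

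For the sub-exponential case, the expansion of the second paragraph with the stronger bound $\Earg{\abs{z}^p}\leq (Lp)^p$ and $p^p/p!\leq e^p$ gives $\Earg{\exp(sz)}\leq 1+\sum_{p\geq 2}(eL\abs{s})^p$, which for $eL\abs{s}\leq 1/2$ is a convergent geometric series bounded by $1+2e^2L^2s^2\leq\exp(2e^2L^2s^2)$; hence $z$ is $cL$-sub-exponential with an absolute $c$ and threshold $1/\nu\asymp 1/L$. The only genuinely fiddly point is tracking the absolute constants and checking that the moderate-$s$ and large-$s$ ranges together exhaust $\reals$ in the sub-Gaussian argument; the remaining manipulations of the exponential series are routine (and one may alternatively simply cite \cite[Propositions~2.5.2 and~2.7.1]{vershynin2018high}).
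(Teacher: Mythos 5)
Your proof is correct, and you have essentially reconstructed the standard argument from the cited reference (Vershynin, Propositions~2.5.2 and~2.7.1): expand the MGF as a power series, control each coefficient via the assumed moment growth and Stirling's bound $p!\geq(p/e)^p$, and in the sub-Gaussian case patch in the large-$s$ regime through the sub-Gaussian norm characterization $\Earg{\exp(\lambda z^2)}\leq 2$ combined with Young's inequality. The paper itself gives no proof of this lemma — it is stated purely as a citation to Vershynin — so there is nothing in the paper to compare against beyond the reference, which your argument faithfully mirrors. One minor bookkeeping remark: in the sub-exponential case you should confirm at the end that the range $eL\abs{s}\leq 1/2$ on which your bound holds indeed contains the interval $\abs{s}\leq 1/(cL)$ demanded by the definition once $c$ is chosen large enough (e.g.\ $c\geq 2e$), which you implicitly acknowledge by writing "threshold $1/\nu\asymp 1/L$"; making that explicit would close the loop cleanly.
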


\begin{lemma}
\label{lem:lip_concentrate}
\emph{\bf \cite[Theorem 2.26]{wainwright2019high-dimensional}} Let $\bx \sim \mathcal{N}(0,\ident_d)$. If $f : \reals^d \to \reals$ is $L$-Lipschitz, then $f(x)$ is sub-Gaussian with parameter $L$.
\end{lemma}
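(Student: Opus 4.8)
The statement is the classical Gaussian concentration inequality for Lipschitz functions, so the plan is to establish the sharp bound $\mathbb{E}\big[\exp\!\big(s(f(\bx)-\mathbb{E}[f(\bx)])\big)\big]\le \exp(s^2 L^2/2)$ for every $s\in\reals$ and then, if a tail bound is wanted, apply Chernoff. The main analytic input I would invoke as a black box is the Gaussian logarithmic Sobolev inequality, $\mathrm{Ent}_\gamma(h^2)\le 2\,\mathbb{E}[\twonorm{\grad h}^2]$ for the standard Gaussian measure $\gamma$ on $\reals^d$. Before running the argument I would reduce to smooth $f$: convolving $f$ against a narrow Gaussian kernel gives $f_\varepsilon\in C^\infty(\reals^d)$ that is still $L$-Lipschitz (convolution does not increase the Lipschitz constant), with $f_\varepsilon\to f$ pointwise as $\varepsilon\to 0$; proving the MGF bound for each $f_\varepsilon$ and passing to the limit with dominated convergence — legitimate because the crude envelope $|f(\bx)|\le |f(0)|+L\twonorm{\bx}$ forces $\mathbb{E}[\exp(\lambda f(\bx))]<\infty$ for all $\lambda$ and supplies a uniform dominating function — transfers the bound to $f$.

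For smooth $L$-Lipschitz $f$, I would run the Herbst argument. Apply the log-Sobolev inequality to $h=\exp(\lambda f/2)$; since $\twonorm{\grad h}^2=\tfrac{\lambda^2}{4}\exp(\lambda f)\,\twonorm{\grad f}^2\le \tfrac{\lambda^2 L^2}{4}\exp(\lambda f)$, writing $\psi(\lambda)\coloneqq\mathbb{E}[\exp(\lambda f)]$ turns it into $\lambda\psi'(\lambda)-\psi(\lambda)\log\psi(\lambda)\le \tfrac{\lambda^2 L^2}{2}\psi(\lambda)$. Dividing by $\lambda^2\psi(\lambda)$ this is exactly $\tfrac{d}{d\lambda}\big(\tfrac{1}{\lambda}\log\psi(\lambda)\big)\le \tfrac{L^2}{2}$; integrating from $0$, where $\tfrac{1}{\lambda}\log\psi(\lambda)\to\psi'(0)=\mathbb{E}[f]$, yields $\log\psi(\lambda)\le \lambda\,\mathbb{E}[f]+\tfrac{\lambda^2 L^2}{2}$, which is the claimed sub-Gaussianity of $f(\bx)-\mathbb{E}[f(\bx)]$ with parameter $L$ (the case $\lambda<0$ being handled identically, since $\lambda^2\psi(\lambda)>0$). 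A Chernoff bound then gives $\mathbb{P}\big[|f(\bx)-\mathbb{E}[f(\bx)]|\ge t\big]\le 2\exp(-t^2/(2L^2))$.

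There is no genuine obstacle here: once the Gaussian LSI is granted, the computation is short, and the only point requiring care is the non-differentiable case, which the mollification step above dispatches. For completeness I would remark that one can bypass log-Sobolev via Gaussian interpolation — with $\boldsymbol{y}$ an independent copy of $\bx$ and $\bx(\theta)\coloneqq\bx\sin\theta+\boldsymbol{y}\cos\theta$ one has $f(\bx)-f(\boldsymbol{y})=\int_0^{\pi/2}\binner{\grad f(\bx(\theta))}{\bx'(\theta)}\,d\theta$ with $(\bx(\theta),\bx'(\theta))\eqd(\bx,\boldsymbol{y})$ for each $\theta$, so Jensen applied to $u\mapsto e^{su}$ followed by integrating out $\boldsymbol{y}$ gives an MGF bound — but this route only produces the parameter $\tfrac{\pi}{2}L$, so I would present the log-Sobolev proof to match the sharp constant $L$ stated in the lemma.
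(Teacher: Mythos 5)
The paper does not prove this lemma; it is cited as a black-box auxiliary fact from Wainwright's textbook, so there is no paper proof to compare against. Your proof is a correct and complete rendition of the standard Herbst argument: the computation of $\mathrm{Ent}_\gamma(h^2)=\lambda\psi'(\lambda)-\psi(\lambda)\log\psi(\lambda)$ for $h=e^{\lambda f/2}$ is right, the gradient bound $\twonorm{\grad h}^2\le\tfrac{\lambda^2L^2}{4}e^{\lambda f}$ uses Lipschitzness correctly, and the identity $\tfrac{d}{d\lambda}(\tfrac{1}{\lambda}\log\psi(\lambda))=\tfrac{\psi'}{\lambda\psi}-\tfrac{\log\psi}{\lambda^2}$ together with $\lim_{\lambda\to0}\tfrac{1}{\lambda}\log\psi(\lambda)=\mathbb{E}[f]$ closes the argument with the sharp constant. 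The mollification reduction is handled carefully — you correctly note the envelope $|f(\bx)|\le|f(0)|+L\twonorm{\bx}$ gives the domination needed — and your remark that the $\lambda<0$ case works out is right, though it deserves one extra sentence: you integrate over $[\lambda,0]$ and then multiply through by the negative $\lambda$, which flips the inequality and recovers the same bound, rather than being literally identical. Your side comment about the $\tfrac{\pi}{2}L$ constant from Gaussian interpolation is also accurate and explains correctly why you chose the log-Sobolev route; this is in fact the same route Wainwright takes to obtain the sharp constant stated in the cited theorem.
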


\begin{lemma}
\label{lem:covariance_concentrate}
\emph{\bf \cite[Example 6.3]{wainwright2019high-dimensional}} Let $\{\bx^{(i)}\}_{1\leq i\leq n}$ be a sequence of i.i.d.\ standard Gaussian random vectors $x_i \sim \mathcal{N}(0,\ident_d)$. It holds with probability at least $1-\delta$ that
$$\twonorm{\frac{1}{n}\sum_{i=1}^{n}\bx^{(i)}{\bx^{(i)}}^\top - \ident_d} \leq C\left(\sqrt{\frac{d}{n}} + \sqrt{\frac{\log(1/\delta)}{n}} + \frac{d + \log(1/\delta)}{n}\right),$$
where $C$ is an absolute constant.
\end{lemma}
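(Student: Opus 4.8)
This is the classical operator-norm concentration bound for the sample covariance of isotropic Gaussian vectors; the plan is a covering-net argument on the sphere combined with a Bernstein-type tail bound for sums of centered $\chi^2_1$ variables. Write $\bM := \frac1n\sum_{i=1}^n \bx^{(i)}(\bx^{(i)})^\top - \ident_d$, which is symmetric, so $\twonorm{\bM} = \sup_{v \in S^{d-1}} \abs{v^\top \bM v}$. First I would fix a $\tfrac14$-net $\mathcal{N}$ of the unit sphere $S^{d-1}$ with $\abs{\mathcal{N}} \le 9^d$; a standard argument for symmetric matrices gives $\twonorm{\bM} \le 2\max_{v \in \mathcal{N}} \abs{v^\top \bM v}$, reducing the problem to controlling a single quadratic form uniformly over a finite set.

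\textbf{Pointwise bound.} For a fixed $v \in S^{d-1}$, set $Z_i := \binner{\bx^{(i)}}{v}^2 - 1$. Since $\binner{\bx^{(i)}}{v} \sim \mathcal{N}(0,1)$, the $Z_i$ are i.i.d., centered, and sub-exponential with an absolute parameter: this follows from Lemma \ref{lem:moment_subg} because $\Earg{\abs{Z_i}^p}^{1/p} \lesssim p$ for all $p \ge 1$. Bernstein's inequality then yields, for an absolute constant $c>0$ and every $t>0$,
$$\Parg{\abs{\tfrac1n\textstyle\sum_{i=1}^n Z_i} > t} \le 2\exp\!\big(-cn\min(t^2, t)\big),$$
and one notes $v^\top \bM v = \tfrac1n\sum_{i=1}^n Z_i$.

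\textbf{Union bound and conclusion.} Taking a union over $\mathcal{N}$ gives $\Parg{\max_{v\in\mathcal{N}}\abs{v^\top\bM v} > t} \le 2\cdot 9^d \exp(-cn\min(t^2,t))$. Choosing $t = C_0\max\!\big(\sqrt{(d+\log(1/\delta))/n},\ (d+\log(1/\delta))/n\big)$ for a large enough absolute constant $C_0$ makes the right-hand side at most $\delta$; combining with the factor $2$ from the net reduction and using $\sqrt{a+b}\le\sqrt a+\sqrt b$ to split $\sqrt{(d+\log(1/\delta))/n}$ into $\sqrt{d/n}+\sqrt{\log(1/\delta)/n}$ (the linear term already matches the claimed form) yields the stated bound with a suitably enlarged absolute constant $C$. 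The only subtle point — and the step I would flag as the main obstacle — is tracking the two regimes $t^2$ versus $t$ in the Bernstein exponent, since it is precisely the sub-exponential (rather than sub-Gaussian) tail of $\chi^2_1$ that forces the extra linear term $(d+\log(1/\delta))/n$ to appear alongside the square-root term; once that regime split is handled, the rest is routine. (Alternatively, one may simply invoke \cite[Example 6.3]{wainwright2019high-dimensional} or the general isotropic sub-Gaussian covariance estimate in \cite{vershynin2018high}.)
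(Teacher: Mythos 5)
Your proof is correct; the covering-net reduction, sub-exponential Bernstein bound for the centered $\chi^2_1$ variables, union bound over the $9^d$-size net, and the regime split between $t^2$ and $t$ are all handled properly and do yield the stated inequality. Note, however, that the paper does not actually prove this lemma — it simply cites \cite[Example 6.3]{wainwright2019high-dimensional} as a known result. Your argument is the standard one underlying that reference (and the analogous statement in \cite{vershynin2018high}), so you have in effect reconstructed the cited proof rather than taken a different route; the one thing worth being explicit about is that the net reduction $\twonorm{\bM}\le 2\max_{v\in\mathcal{N}}\abs{v^\top\bM v}$ relies on $\bM$ being symmetric, which holds here since $\bM$ is a centered sum of rank-one symmetric matrices.
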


The next lemma is the well-known symmetrization argument that upper bounds the expected value of an empirical process with Rademacher complexity.

\begin{lemma}\label{lem:rademacher_Mohri}
\emph{\bf \cite[Theorem 3.3]{mohri2018foundations}}
Let $\mathcal{F}$ be a class functions $f : \reals^p \to \reals$ for some $p > 0$. For a number of samples $T$ and a probability distribution $\mathcal{P}$ on $\reals^p$, define the Rademacher complexity of $\mathcal{F}$ as
\begin{equation}
    \Radem(\mathcal{F}) = \Earg{\sup_{f \in \mathcal{F}}\frac{1}{T}\sum_{t=0}^{T-1}\xi_tf(\bx^{(t)})},
\end{equation}
where $\{\bx^{(t)}\}_{t=0}^{T-1} \stackrel{\text{i.i.d.}}{\sim} \mathcal{P}$ and $\{\xi_t\}_{t=0}^{T-1}$ are independent Rademacher random variables (i.e.\ $\pm1$ equiprobably). Then the following holds,
$$\Earg{\sup_{f\in\mathcal{F}}\abs{\frac{1}{T}\sum_{t=0}^{T-1}f(\bx^{(t)}) - \Earg{f(\bx)}}} \leq 2\Radem(\mathcal{F}).$$
\end{lemma}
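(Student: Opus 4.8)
The plan is to prove this by the classical \emph{symmetrization} (ghost-sample) argument, exactly as in \cite[Theorem 3.3]{mohri2018foundations}; since the statement is a standard textbook fact restated here for convenience, I would keep the proof to a short sketch. First, fix the sample $\{\bx^{(t)}\}_{t=0}^{T-1}$ and introduce an independent \emph{ghost sample} $\{\widetilde{\bx}^{(t)}\}_{t=0}^{T-1}$, also i.i.d.\ from $\mathcal P$ and independent of everything else (including the Rademacher variables introduced below). For each fixed $f$ one has $\Earg{f(\bx)} = \Earg{\tfrac1T\sum_{t=0}^{T-1} f(\widetilde{\bx}^{(t)})}$, so
$$\Earg{\sup_{f\in\mathcal F}\Big|\tfrac1T\sum_{t=0}^{T-1} f(\bx^{(t)}) - \Earg{f(\bx)}\Big|} = \Earg{\sup_{f\in\mathcal F}\Big|\,\Earg{\tfrac1T\sum_{t=0}^{T-1}\big(f(\bx^{(t)}) - f(\widetilde{\bx}^{(t)})\big) \mid \{\bx^{(t)}\}}\Big|}.$$
Next I would apply Jensen's inequality to pull the inner (ghost) expectation out past the convex maps $\sup_{f}$ and $|\cdot|$, obtaining the upper bound $\Earg{\sup_{f\in\mathcal F}\big|\tfrac1T\sum_t (f(\bx^{(t)}) - f(\widetilde{\bx}^{(t)}))\big|}$, where the expectation is now over both samples jointly.

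Then comes the symmetrization step. For each index $t$, exchanging $\bx^{(t)}\leftrightarrow\widetilde{\bx}^{(t)}$ leaves the joint law of the two samples invariant while simultaneously flipping the sign of $f(\bx^{(t)}) - f(\widetilde{\bx}^{(t)})$ for every $f$; hence, letting $\{\xi_t\}_{t=0}^{T-1}$ be i.i.d.\ Rademacher variables independent of both samples, the array $\big(\xi_t(f(\bx^{(t)}) - f(\widetilde{\bx}^{(t)}))\big)_{t,f}$ has the same law as $\big(f(\bx^{(t)}) - f(\widetilde{\bx}^{(t)})\big)_{t,f}$. Consequently the bound equals $\Earg{\sup_{f\in\mathcal F}\big|\tfrac1T\sum_t \xi_t(f(\bx^{(t)}) - f(\widetilde{\bx}^{(t)}))\big|}$, and a triangle inequality splits this into two terms; by the i.i.d.\ structure of the two samples each term equals $\Earg{\sup_{f\in\mathcal F}\big|\tfrac1T\sum_t \xi_t f(\bx^{(t)})\big|}$, which, using that $-\xi_t$ has the same law as $\xi_t$ (so the outer absolute value is removed after taking expectations, up to the routine bookkeeping for a bounded class $\mathcal F$), is $\Radem(\mathcal F)$. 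Adding the two terms gives the claimed factor $2$.

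The only point requiring genuine care — and the one I would flag as the main obstacle in a fully rigorous write-up — is measurability: the suprema $\sup_{f\in\mathcal F}(\cdots)$ must be measurable and the interchange of expectation with supremum justified. This is automatic when $\mathcal F$ is countable, and more generally when $\mathcal F$ is pointwise-measurable/separable, which holds in all of our applications (where $\mathcal F$ is a family of continuous functions indexed by a separable parameter set), so I would simply invoke this regularity. Everything else is the routine symmetrization bookkeeping recalled above; for the paper I would therefore state the lemma, cite \cite[Theorem 3.3]{mohri2018foundations}, and present the paragraph above as the accompanying sketch.
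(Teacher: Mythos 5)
Your sketch is the standard ghost-sample symmetrization argument, which is exactly the proof behind the cited \cite[Theorem~3.3]{mohri2018foundations}; the paper itself does not reprove the lemma, so this is the right thing to recall, and the ghost-sample/Jensen/sign-flip/triangle steps are all correct.

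The one place where your reasoning actually goes wrong is the last step: after the triangle inequality you are left with $2\,\Earg{\sup_{f}\bigl|\frac{1}{T}\sum_t \xi_t f(\bx^{(t)})\bigr|}$, and you assert this \emph{equals} $2\Radem(\mathcal F)$ ``using that $-\xi_t$ has the same law as $\xi_t$.'' That symmetry only says $\sup_f \sum_t \xi_t f(\bx^{(t)})$ and $\sup_f\bigl(-\sum_t \xi_t f(\bx^{(t)})\bigr)$ are equidistributed; it does not let you drop the outer absolute value. Pointwise $\sup_f |X_f| \geq \sup_f X_f$, so what you have is $\Earg{\sup_f|\cdots|} \geq \Radem(\mathcal F)$ --- the \emph{wrong} direction. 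Indeed, with $\Radem$ defined as in the lemma (no absolute value inside), the two-sided bound as stated is false: take $\mathcal F=\{f\}$ a singleton with $\Earg{f(\bx)}=0$ and $f$ non-degenerate; then $\Radem(\mathcal F)=0$ while the left side is $\Earg{|\tfrac1T\sum_t f(\bx^{(t)})|}>0$. The standard fixes are either to define $\Radem$ with $|\cdot|$ inside the supremum (the convention used in several texts, and the quantity your argument actually controls), to pass to the symmetrized class $\mathcal F\cup(-\mathcal F)$, or to state only the one-sided bound $\Earg{\sup_f(\tfrac1T\sum_t f(\bx^{(t)})-\Earg{f})}\leq 2\Radem(\mathcal F)$, which is what \cite[Theorem~3.3]{mohri2018foundations} proves. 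Note that in the proof of Theorem~\ref{thm:generalization} only the one-sided inequality is ever invoked (the quantity $\mathcal E(\bW^T)$ is defined without an absolute value), so the looseness in the lemma statement is harmless for the paper; but your stated justification for removing the absolute value should be replaced by one of the fixes above rather than by the distributional symmetry of $\xi$.
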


Furthermore, we have the following fact for standard normal random vectors.

\begin{lemma}
\label{lem:guaissan_moment_bound}
Let $x \sim \mathcal{N}(0,\ident_d)$. There exists an absolute constant $C > 0$ such that for any $V \in \reals^{m \times k}$ and $p \geq 1$ we have
$$\Earg{\twonorm{\bV\bx}^p}^\frac{1}{p} \leq \fronorm{\bV} + C\twonorm{\bV}\sqrt{p}.$$
\end{lemma}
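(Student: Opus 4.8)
The plan is to reduce the claim to two facts: (i) an $L^p$ bound for a single Gaussian linear form, and (ii) the Gaussian concentration of Lipschitz functions (Lemma \ref{lem:lip_concentrate}) applied to $\bx \mapsto \twonorm{\bV\bx}$. First I would observe that $\twonorm{\bV\bx}$ is a $\twonorm{\bV}$-Lipschitz function of $\bx$ (since $\abs{\twonorm{\bV\bx} - \twonorm{\bV\by}} \leq \twonorm{\bV(\bx-\by)} \leq \twonorm{\bV}\twonorm{\bx-\by}$), so by Lemma \ref{lem:lip_concentrate} the centered variable $Z \coloneqq \twonorm{\bV\bx} - \Earg{\twonorm{\bV\bx}}$ is $\twonorm{\bV}$-sub-Gaussian. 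Standard sub-Gaussian moment estimates (e.g. the converse direction of Lemma \ref{lem:moment_subg}, or a direct computation from the MGF bound) then give $\Earg{\abs{Z}^p}^{1/p} \leq C\twonorm{\bV}\sqrt{p}$ for an absolute constant $C$ and all $p \geq 1$.

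Next I would control the mean $\Earg{\twonorm{\bV\bx}}$. By Jensen's inequality, $\Earg{\twonorm{\bV\bx}} \leq \Earg{\twonorm{\bV\bx}^2}^{1/2} = \bigl(\sum_{j} \Earg{\binner{\bv_j}{\bx}^2}\bigr)^{1/2} = \bigl(\sum_j \twonorm{\bv_j}^2\bigr)^{1/2} = \fronorm{\bV}$, where $\bv_j$ denotes the $j$th row of $\bV$ and we used $\binner{\bv_j}{\bx} \sim \mathcal{N}(0,\twonorm{\bv_j}^2)$. Combining this with the previous step via Minkowski's inequality yields
\[
  \Earg{\twonorm{\bV\bx}^p}^{1/p} \leq \Earg{\twonorm{\bV\bx}} + \Earg{\abs{Z}^p}^{1/p} \leq \fronorm{\bV} + C\twonorm{\bV}\sqrt{p},
\]
which is exactly the claimed bound.

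There is no serious obstacle here; the only point requiring a little care is making the sub-Gaussian-to-moment implication quantitative with an absolute constant, which is routine — one integrates the tail bound $\Parg{\abs{Z} \geq t} \leq 2\exp(-t^2/(2\twonorm{\bV}^2))$ against $p t^{p-1}\dt$ and uses $\Gamma(p/2+1)^{1/p} \lesssim \sqrt{p}$. Alternatively, one can bypass centering entirely: write $\twonorm{\bV\bx}^2 = \sum_j \binner{\bv_j}{\bx}^2$, a sum of (correlated, in general rank-bounded) squared Gaussians, and bound its $p/2$-th moment directly using the Hanson–Wright-type estimate $\Earg{\abs{\twonorm{\bV\bx}^2 - \fronorm{\bV}^2}^{p/2}}^{2/p} \lesssim \sqrt{p}\,\fronorm{\bV}\twonorm{\bV} + p\,\twonorm{\bV}^2$, then take square roots; but the Lipschitz-concentration route above is cleaner and self-contained given the lemmas already cited.
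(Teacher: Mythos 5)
Your argument is essentially identical to the paper's: both rely on the $\twonorm{\bV}$-Lipschitzness of $\bx\mapsto\twonorm{\bV\bx}$ together with Lemma~\ref{lem:lip_concentrate} to get sub-Gaussianity of the centered variable, then convert to an $L^p$ bound, and finally control $\Earg{\twonorm{\bV\bx}}\le\fronorm{\bV}$ via Jensen. Your remark that what is actually needed is the converse direction of Lemma~\ref{lem:moment_subg} (sub-Gaussian $\Rightarrow$ moment growth $\lesssim\sqrt p$) is a fair and slightly more careful reading than the paper's terse citation, but the substance of the proof is the same.
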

\begin{proof}
First of all, $\twonorm{\bV\bx}$ is a $\twonorm{\bV}$-Lipschitz function of $x$, thus Lemma \ref{lem:lip_concentrate} applies and $\twonorm{\bV\bx}$ is sub-Gaussian. Furthermore, by applying Lemma \ref{lem:moment_subg} to $\twonorm{\bV\bx} - \Earg{\twonorm{\bV\bx}}$ and Minkowski's inequality, we have
\begin{align*}
    C\twonorm{\bV}\sqrt{p} &\geq \Earg{\twonorm{\bV\bx}^p}^\frac{1}{p} - \Earg{\twonorm{\bV\bx}}\\
    &\geq \Earg{\twonorm{\bV\bx}^p}^\frac{1}{p} - \fronorm{\bV},
\end{align*}
where the last inequality follows from Jensen's inequality.
\end{proof}

\begin{lemma}
\label{lem:gaussian_square_mgf}
Let $\bx \sim \mathcal{N}(0,\ident_d)$. Then $\Earg{\exp(c\twonorm{\bx}^2)} \leq \exp(2cd)$ for $c \leq 1/4$.
\end{lemma}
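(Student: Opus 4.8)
The plan is to reduce the multivariate bound to a one-dimensional moment generating function computation by exploiting that the coordinates of $\bx$ are independent. Writing $\bx=(x_1,\dots,x_d)$ with $x_i\overset{\text{iid}}{\sim}\mathcal{N}(0,1)$, we have $\twonorm{\bx}^2=\sum_{i=1}^d x_i^2$, so by independence $\Earg{\exp(c\twonorm{\bx}^2)}=\prod_{i=1}^d\Earg{\exp(cx_i^2)}=\Earg{\exp(cx_1^2)}^d$. Thus everything comes down to controlling the scalar quantity $\Earg{\exp(cx_1^2)}$.

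First I would recall the standard chi-square MGF identity: completing the square in the Gaussian integral gives, for $c<1/2$,
\[
\Earg{\exp(cx_1^2)}=\frac{1}{\sqrt{2\pi}}\int_{\reals}e^{cz^2-z^2/2}\dz=\frac{1}{\sqrt{1-2c}},
\]
so that $\Earg{\exp(c\twonorm{\bx}^2)}=(1-2c)^{-d/2}$. It then remains to verify the elementary inequality $(1-2c)^{-1/2}\le e^{2c}$ for $0\le c\le 1/4$ (which is the regime in which the lemma is applied), equivalently $-\tfrac12\log(1-2c)\le 2c$. For this I would use $-\log(1-u)\le u+u^2$ for $u\in[0,1/2]$ — which follows by bounding the tail of the series $\sum_{k\ge1}u^k/k$ by the geometric series $\tfrac{u^2/2}{1-u}\le u^2$ — applied with $u=2c\le 1/2$: this gives $-\tfrac12\log(1-2c)\le c+2c^2=c(1+2c)\le\tfrac32 c\le 2c$. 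Raising to the $d$-th power yields $\Earg{\exp(c\twonorm{\bx}^2)}=(1-2c)^{-d/2}\le e^{2cd}$, which is the claim.

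There is essentially no obstacle here: the argument is a direct computation. The only mild subtlety is the range of $c$ — the scalar bound genuinely uses $c\ge0$, but this is exactly how the lemma is invoked elsewhere (with $c$ a nonnegative multiple of a step size squared times other bounded quantities) — and the constant $2$ in the exponent is not tight (a factor $3/2$ already suffices for $c\le 1/4$), so I would keep the clean statement with constant $2$ as above.
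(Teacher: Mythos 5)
Your proof is correct and matches the paper's argument: both compute $\Earg{\exp(cx_i^2)}=(1-2c)^{-1/2}$ coordinatewise by Gaussian integration and then verify the elementary scalar bound $(1-2c)^{-1/2}\le e^{2c}$ for $c\le 1/4$. You simply spell out the justification of that scalar inequality (via $-\log(1-u)\le u+u^2$) which the paper leaves implicit.
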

\begin{proof}
Gaussian integration yields $\Earg{\exp(cx_i^2)} = \tfrac{1}{\sqrt{1-2c}}$. Furthermore, for $c \leq \frac{1}{4}$ we have $\tfrac{1}{\sqrt{1-2c}} \leq \exp(2c)$.
\end{proof}

\end{document}